\newtheorem{theorem}{Theorem}
\newtheorem{proposition}[theorem]{Proposition}
\newtheorem{lemma}[theorem]{Lemma}
\newtheorem{corollary}[theorem]{Corollary}
\newtheorem{remark}{Remark}
\newcommand{\id}{\stackrel{\text{indep}}{\sim}}
\DeclareMathOperator*{\argmin}{arg\,min}
\newcommand{\mcA}{\mathcal{A}}
\newcommand{\mcB}{\mathcal{B}}
\newcommand{\mcC}{\mathcal{C}}
\newcommand{\mcE}{\mathcal{E}}
\newcommand{\mcG}{\mathcal{G}}
\newcommand{\mcL}{\mathcal{L}}
\newcommand{\mcM}{\mathcal{M}}
\newcommand{\mcN}{\mathcal{N}}
\newcommand{\mcP}{\mathcal{P}}
\newcommand{\mcR}{\mathcal{R}}
\newcommand{\mcV}{\mathcal{V}}
\newcommand{\mcX}{\mathcal{X}}
\newcommand{\emprisk}{\mcL_n}
\newcommand{\empriskhat}{\widehat{\mcL}_n}
\newcommand{\psamp}{\mathbb{P}\big( (i, j) \in \mcP(\mcG_n) \,|\, \mcG_n \big) }
\newcommand{\nsamp}{\mathbb{P}\big( (i, j) \in \mcN(\mcG_n) \,|\, \mcG_n \big) }
\newcommand{\embedip}{\langle u_i, v_j \rangle}
\newcommand{\psamphat}{f_{\mcP}(\lambda_i, \lambda_j)}
\newcommand{\nsamphat}{f_{\mcN}(\lambda_i, \lambda_j)}
\title{Community Detection Guarantees Using Embeddings 
Learned by Node2Vec}
\author[1]{Andrew Davison}
\author[2]{S. Carlyle Morgan}
\author[3]{Owen G. Ward}
\affil[1]{Department of Statistics, Columbia University}
\affil[2]{Department of Statistics, University of Michigan}
\affil[3]{Department of Statistics and Actuarial Science, Simon 
Fraser University}
\affil[ ]{ ad3395@columbia.edu, scmorgan@umich.edu, owen\_ward@sfu.ca}
\begin{document}

\maketitle

\begin{abstract}
Embedding the nodes of a large network into an Euclidean space is
a common objective in modern machine learning, with a variety of
tools available. 
These embeddings can then be used as features for tasks
such as community detection/node clustering or link prediction,
where they achieve state of the art performance. 
With the exception of spectral clustering methods,
there is little theoretical understanding for
commonly used approaches to learning embeddings. 
In this work
we examine the theoretical properties of 
the embeddings learned
by node2vec. Our main result shows that the use of $k$-means clustering on
the embedding vectors produced by node2vec
gives weakly consistent
community recovery for the nodes in
(degree corrected) stochastic block models.
We also discuss the use of these embeddings for node and link prediction tasks.
We demonstrate this result empirically,
and examine how this relates to other
embedding tools for network data.

\end{abstract}

% \section{INTRODUCTION}
\section{Introduction}

Within network science, a widely applicable and important inference task
is to understand how the behavior of interactions between different
units (nodes) within the network depend on their latent characteristics.
This occurs within a wide array of disciplines, from sociological \citep{freeman_development_2004}
to biological \citep{luo_modular_2007} networks.

One simple and interpretable model for such a task is the stochastic
block model (SBM) \citep{holland_stochastic_1983},
which assumes
that nodes within the network are assigned a discrete community label.
Edges between nodes in the network are then formed independently
across all pairs of edges, conditional on these community
assignments. While such a model is simplistic, 
various extensions have been proposed.
% it and various
% extensions, such as 
These include the degree corrected SBM (DCSBM), used to handle
degree heterogenity
\citep{karrer_stochastic_2011}, and mixed-membership
SBMs, used to allow for more complex community structures 
\citep{airoldi_mixed_2008}. %, have seen
These extensions have seen
a wide degree of empirical success \citep{latouche_blog_2011, legramanti_extended_2022, airoldi_mixed_2006}.

A restriction of the stochastic block model and its generalizations
is the requirement for a discrete
% (or a vector of discrete) %% do we need this line? 
community assignment as a latent 
representation of the units within the network.
While the statistical community has previously considered more 
flexible latent representations \citep{hoff_latent_2002},
over the past decade, there have been significant advancements in
general \emph{embedding methods} for networks. These produce
general vector representations of units within a network, and
can achieve start-of-the-art performance in downstream tasks
for node classification and link prediction. 

%%% cite Anna, mention the connection to latent space models here

An early example of such a method is 
spectral clustering \citep{ng_spectral_2001},
which constructs an embedding of the nodes
in the network from an eigendecomposition of the graph Laplacian.
The $k$ smallest non zero eigenvectors provides a $k$ dimensional 
representation of each of the nodes in the network.
This
has been shown to allow consistent community recovery
\citep{lei_consistency_2015}, however it
may not be computationally feasible on the large networks which are 
now common.
More recently, machine learning
methods for producing vector representations have
sought inspiration from NLP methods and the broader machine
learning literature, such as the node2vec algorithm 
\citep{grover_node2vec_2016},
graph convolutional networks \citep{zhang_graph_2019},
graph attention networks \citep{velivckovic_graph_2017} and others.
There are now a wide class of embedding methods which are available
to practitioners 
% (for example, GraphSAGE, DGI, probably a lot more
% stuff honestly) 
which can be applied across a
mixture of unsupervised and supervised settings. \citep{cui_survey_2018}
provides a survey of relatively recent developments
and \citep{ward_next_2021} reviews the connection between the embedding
procedure and the potential downstream task.

% (write a paragraph giving an example of the node2vec loss,
% and how those embeddings can be used for downstream tasks - 
% e.g we for community detection we can use k-means, 
% for node classification a multinomial classsifier, and for link
% prediction we can use $u_i \cdot u_j$ and build a classifier that
% way)

Embedding methods such as Deepwalk \citep{perozzi_deepwalk_2014}
and node2vec \citep{grover_node2vec_2016}
consider random 
walks on the graph, where the probability
of such a walk is a function of the embedding of the associated 
nodes. Given embedding vectors $\widehat{\omega}_u,\omega_v \in \mathbb{R}^d$
of nodes $u$ and $v$ respectively, from graph $\mathcal{G}$ with
vertex set $\mathcal{V}$, the probability of a random 
walk from node $u$ to node $v$ is modeled as
\begin{equation}
P(v|u) = \frac{\exp(\langle \omega_v, \widehat{\omega}_u\rangle)}
{\sum_{l\in \mathcal{V}}\exp(\langle\omega_l,\widehat{\omega}_u\rangle)},
\label{n2v_loss}
\end{equation}
where $\langle x,y\rangle$ is the inner product of $x$ and $y$.
This leads
to a representation of each of the nodes in the network as a vector
in $d$ dimensional Euclidean space. This representation is 
then amenable to potential downstream tasks about the network.
For example, if we wish to cluster the nodes in the network we can simply
cluster their embedding vectors. Or, if we wish to classify
the nodes in the network, we can use these embeddings 
to construct a multinomial classifier. We 
note that the sampling schemes introduced by DeepWalk and node2vec 
motivate more complex models such as GraphSAGE \citep{hamilton_inductive_2017}
and Deep Graph Infomax \citep{velickovic_deep_2018},
which utilise similar node sampling schemes
for learning embeddings of networks. 

As such, one of the key goals of learning vector 
representations of the units within networks 
is to allow for easy use for a multitude of downstream tasks.
However, there is little theoretical understanding to
what information is carried within these representations, and
whether they can be applied successfully and efficiently to
downstream tasks. This paper aims to 
address this gap by examining whether learned embeddings can 
facilitate community detection tasks in an unsupervised setting.
% fill this gap, where we 
% focus on understanding whether learned embeddings can be
% used for community detection 
% % and node classification (??)
% tasks in an unsupervised setting, where
% we utilise only the adjacency information within the network 
% for learning the embeddings. 
% We also want to 
% understand how these embeddings
% can then be used for certain classification tasks, such as node
% classification and link prediction tasks.

\subsection{Summary of main results}

Our main contribution is to describe the asymptotic distribution
of the embeddings learned by the node2vec procedure, and to then
use this to give consistency guarantees when these embeddings
are used for community detection. A simple and informal form of our
results, in the scenario of a balanced two block stochastic block model (SBM), 
is given below:

\begin{theorem}
    (Informal) Suppose we observe a sequence of graphs $\mcG_n$ on $n$
    vertices arising 
    from a two-dimensional stochastic block model: for each vertex $u
    \in [n]$ we assign a community label $c(u) \in \{0, 1\}$ with equal probability,
    and then we form edges in the graph independently with probability
    \begin{equation}
        \mathbb{P}\big(
            \text{$u$ and $v$ are connected}
        \big) = \begin{cases}
            \tilde{p} & \text{ if } c(u) = c(v) \\
            \tilde{q} & \text{ otherwise}
        \end{cases}
    \end{equation}
    where $\tilde{p} \neq \tilde{q}$. 
    Suppose that $(\widehat{\omega}_u)$ are two-dimensional
    embeddings learned by node2vec on the above graph (where we hide
    the dependence on $n$). Then there 
    exists some distinct vectors
    $\eta_{c(u)} \in \mathbb{R}^2$ such that 
    \begin{equation}
    \frac{1}{n} \sum_{u} \| \widehat{\omega}_u - \eta_{c(u)} \|_2^2 \to 0 \text{ in probability as }
    n \to \infty.
    \end{equation}
    Consequently, if we apply a k-means algorithm to the embeddings
    learned via node2vec, as $n \to \infty$ we will classify at least
    $100(1-\epsilon)$\% of vertices to the correct community 
    (up to permutation) with asymptotic probability $1$, for any $\epsilon > 0$.
\end{theorem}

We give formal theorem statements, complete with full
conditions, in Section~\ref{sec:results}; we note that our results
extend to graph models beyond SBMs and are not limited to the dense regime. 
To give some brief intuition for the
method of proof, we show that the probability that a pair $(u, v)$ is 
positively or negatively sampled within node2vec 
concentrates around a function which depends only on the underlying communities 
$c(u)$ and $c(v)$ of $u$ and $v$. With this, we are able to argue that 
the node2vec loss concentrates uniformly (in a neighborhood of their minima) 
around a function whose minima $M^*$ is such that
$M^*_{u,v} = \widetilde{M}_{c(u), c(v)}$ for some matrix $\widetilde{M}$. 
This allows us to show that any set of embeddings
which minimize the node2vec loss will converge (up to rotation) 
to vectors which depend only on the community label, which 
consequently allows us to give consistency guarantees for clustering algorithms 
such as k-means.

We highlight that while the theoretical properties of 
spectral clustering are well studied in the literature,
there are relatively few theoretical guarantees provided 
for more modern embedding procedures such as node2vec. 
Our work provides some of the
first theoretical results for models of this form. Our
main contributions are the following:
\begin{enumerate}[label=\roman*)]
    \item We give convergence guarantees for embeddings learned via node2vec, 
    under various sparsity regimes of (degree corrected) stochastic block models. We
    then use this to give weak consistency guarantees for community detection, when using
    the embeddings as features within a k-means clustering algorithm.
    
    \item We verify the theoretical guarantees for simulated networks and examine the the performance of this
    procedure on real networks. We also empirically investigate important extensions of these theoretical 
    results, relating to rates of recovery for community detection between node2vec and spectral clustering methods. We identify that as these networks grow the sampling parameters
    in node2vec have little impact on the performance of the proposed procedure.
\end{enumerate}

% {\color{red} Can probably flesh this out a bit more.}

% (talk about the novelty here:
% can argue that spectral clustering is already well understood,
% and that methods such as graphSAGE or GCNs use the node2vec loss
% for learning their encoders, and so this is a first step to
% understand those methods. even more advanced methods depend on
% not-necessarily non-trvial sampling schemes when learning
% the embeddings through SGD. also note that if we get around
% to the node classification part, we can use our results to talk
% about why so few samples are needed to learn a classifier
% when using the embeddings - e.g the GCN and DGI papers use 20 nodes
% per class)

The layout of the paper is as follows. 
In Section~\ref{sec:framework} we formulate the problem of constructing an 
embedding of the nodes in a network and state the criterion 
under which we consider community detection. In Section~\ref{sec:results}
we give the main
result of this paper, the conditions under
which k-means clustering of the node2vec embedding of
a network gives consistent community recovery.
In Section~\ref{sec:exps} we verify these theoretical results 
empirically and investigate potential further results.
In Section~\ref{sec:conclusion} we summarize our contributions and consider potential
extensions. 

% (give the layout - basically just
% say that we layout the framework of analysis in Section 2, results in
% Section 3, and so on....)

\subsection{Related Works}

Community detection for networks is a widely studied area with a large literature
of existing work. Several notions of theoretical guarantees for community recovery 
are provided in \citep{abbe_community_2017}, along with a survey of many 
existing approaches.  There are many existing works which consider
the embeddings obtained from the eigenvectors of the adjacency matrix of Laplacian
of a network. For example, 
\citep{lei_consistency_2015} considers spectral clustering
using the eigenvectors 
of the adjacency matrix for a stochastic block model.
Spectral clustering has provided such guarantees
for a wide variety of network models, including 
\citep{ma_determining_2021,
deng_strong_2021, rubin-delanchy_statistical_2017, levin_limit_2021, lei_network_2021}.

With the more recent development of random walk based embeddings, several
recent works have begun to examine the theoretical properties of such embeddings, 
however the treatment is limited compared to
spectral embeddings.
\citep{qiu_network_2018} study the global minimizers of the node2vec loss
in the setting where $d = n$, viewing the problem as a matrix factorization
problem. If $M^*$ is the global minimizing matrix, we highlight that their results apply 
for any $d \geq \mathrm{rank}(M^*)$. That said, this minimizer equals the entrywise logarithm of functions of the adjacency matrix $A$; we note that entrywise logarithms of matrices typically blow up their rank, and that even when "in expectation" the adjacency matrix is of low rank, the actual adjacency matrix is of full rank with high probability \citep{vu_rank_random_graph_2008}. This means that it is unlikely when $d \ll n$ that the global minimizer is the actual minimizer, which is the regime where
embedding dimensions are considered in practice. We contrast that with our results, where we can take $d =\Omega(\kappa)$ where $\kappa$ is the number of communities, and obtain rigorous guarantees for the embeddings.

\citep{zhang_consistency_2024} then studies the concentration
of the best rank $d$ approximation (with respect to the Frobenius norm) of the matrix $M^*$ about it’s expected value under SBM and DCSBM models for node2vec with $p = q =1$ only, to argue that the best rank $d$ approximation can be used for strongly consistent community detection. We note that our results can be applied to node2vec without this restriction on the hyperparameters. Otherwise, they give similar types of guarantees as our paper in similar sparsity regimes and with similar rates, but in stronger
norms. The key difference between our work and that of \citep{zhang_consistency_2024} is that we are able to give guarantees for the
the actual minimizers of the node2vec loss as soon as $d = \Omega(\kappa)$, whereas \citep{zhang_consistency_2024} use an approximation to the
global minimizer, without studying the gap between this matrix and any minimizer of the node2vec loss (which is a cross-entropy loss, and therefore difficult to relate to a Frobenius norm approximation). \citep{davison_asymptotics_2022}
and \citep{davison_asymptotics_2023} study node2vec with in the constrained setting 
(where $U=V$), and focus on giving more abstract guarantees for the gram matrix
in the setting of graphons. In \citep{davison_asymptotics_2023} the norm guarantees 
extend only to the $L_1$ norm between the gram matrix of the embeddings 
and the minimizer, which is not sufficient to give guarantees on the individual 
embeddings. In \citep{davison_asymptotics_2022} the norm guarantees are 
upgraded to the $L_2$ norm, albeit with less optimal rates of
convergence than what we show here. Our results also give guarantees for node2vec 
in full generality (no restriction on $p$ and $q$) and give 
the calculation details for SBMs and DCSBMs to explicitly describe the
asymptotic distribution in certain regimes.
\section{Framework}
\label{sec:framework}

We consider a network $\mathcal{G}$
consisting of a vertex set $\mathcal{V}$ of size $n$ and
edge set $\mathcal{E}$. We can express this also
using an $n\times n$ symmetric adjacency matrix $A$, 
where $A_{uv}=1$ indicates there is an undirected edge between node $u$
and node $v$, with $A_{uv}=0$ otherwise, where $u,v\in \mathcal{V}$.
Given a realisation of such a network, we wish to 
examine models for community structure of the nodes in the network.
We then examine the embeddings which can be 
obtained from node2vec and examine how they can be used for community
detection.

\subsection{Probabilistic models for community detection}
The most widely studied statistical model for community detection
is the Stochastic Block Model (SBM) \citep{holland_stochastic_1983}.
The SBM specifies a
distribution for the communities, placing each of the $n$ nodes into
one of $\kappa$ communities, where these community 
assignments are drawn from some categorical distribution
$\text{Categorical}(\pi)$.
Writing $c(u)\in [\kappa]$ for the community of $u$, the
connection probabilities between edges are independent,
conditional on these community assignments, with probability
\begin{equation}
\mathbb{P}(A_{uv}=1|c(u),c(v)) = \rho_n P_{c(u), c(v)},
\end{equation}
where $P$ is a $\kappa\times \kappa$ matrix of probabilities, 
and $\rho_n$ is the overall network sparsity (so that the network has $O(\rho_n n^2)$
edges on average).
As a special case, the \textit{planted-partition} model
considers $P$ as being a matrix with $\tilde{p}$ along its diagonal
and the value $\tilde{q}$ elsewhere, with
$\kappa$ equally balanced communities, so
$\pi = (\kappa^{-1}, \ldots, \kappa^{-1})$. We will denote such a
model by $\text{SBM}(n, \kappa, \tilde{p}, \tilde{q}, \rho_n)$.

The most widely studied extension of the SBM is to incorporate 
a degree correction, equipping each node with a non negative
degree parameter $\theta_u$ drawn from some distribution independently
of the community assignments \citep{airoldi_mixed_2008}.
This alters the previous model, instead giving
\begin{equation}
\mathbb{P}(A_{uv}=1|c(u), c(v), \theta_u,\theta_v) = \rho_n \theta_u \theta_v P_{c(u), c(v)}.
\end{equation}
Degree corrected SBM models can be more appropriate for modeling the 
degree heterogeneity seen within communities in real world network
data \citep{karrer_stochastic_2011}.

Performance of stochastic block models is assessed in terms 
of their ability to recover the true community assignments of the
nodes in a network, from the observed adjacency matrix $A$. 
Given an estimated community assignment vector $\hat{\mathbf{c}}\in [\kappa]^n$
and the true communities $\mathbf{z}$ then we can compute
the agreement between these two assignment vectors, up to 
a relabeling of $\mathbf{c}$, as 
\begin{equation}
    L(\widehat{\mathbf{c}}, \mathbf{c}) = \min_{\sigma \in \mathrm{S}_{\kappa}} \frac{1}{n} \sum_{i=1}^n \mathbbm{1}\big[ \widehat{c}(i) \neq \sigma(c(i)) \big]
\end{equation}
where $S_{\kappa}$ denotes the symmetric group of permutations $\sigma : 
[\kappa] \to [\kappa]$. We can also control the
worst-case misclassification rate across all the
different communities. If $\mcC_k$ is the set of nodes belonging to
community $k$, then this is defined as 
\begin{equation}
\widetilde{L}(\widehat{\mathbf{c}}, \mathbf{c}) := \max_{k \in [\kappa]} 
    \min_{\sigma \in S_\kappa} \frac{1}{|\mcC_k|} \sum_{i \in \mcC_k}
    \mathbbm{1}\big[ \widehat{c}(i) \neq \sigma(k) \big].
\end{equation}

Guarantees of the form $L(\widehat{\mathbf{c}}, 
\mathbf{c}) = o_p(1)$ as $n \to \infty$ are known as
\emph{weak consistency} 
guarantees in the community detection literature.
Strong consistency 
considers the stronger setting where 
$L(\widehat{\mathbf{c}}, 
\mathbf{c})=0$ with asymptotic probability 1.
\citep{abbe_community_2017}
provides a review of results for guarantees of these forms.
In this work we consider only the weak consistency setting; we
highlight that stricter assumptions are necessary in order
to give these type of guarantees. 

\subsection{Obtaining embeddings from node2vec}

Machine learning methods such as node2vec aim to obtain
an embedding of each node in a network. In general,
for each node $u$ two $d$-dimensional embedding vectors are learned,
a centered representation $\omega_i \in \mathbb{R}^d$ and a context representation
$\widehat{\omega}_i \in \mathbb{R}^d$.
% Following \eqref{n2v_loss} with a random walk of fixed length $k$
% % {\color{red}$m$}
% for window length $W$
% is
% considered on the graph.
% Given a walk on vertices $i_1,\ldots, i_{k+1}$,
% this results in a
% loss function of the form
% \begin{gather}
%     \mathcal{L} = \sum_{j=1}^{k+1}\sum_{i: 0<|j-i| <W}-\log P(|v_i)
%     \label{n2v_loss_2}
%     \\
%      =\sum_{j=1}^{k+1}\sum_{i: 0<|j-i| <W} \left[
%      \langle \omega_{v_j}, \widehat{\omega}_{v_i} \rangle - 
%      \log \left(
%      \sum_{l\in\mathcal{V}}\exp(\langle \omega_{v_j},\widehat{\omega}_l\rangle)
%      \right)
%      \right].
% \end{gather}
node2vec modifies the simple random walk considered 
in DeepWalk \citep{perozzi_deepwalk_2014},
incorporating tuning parameters $p,q$ which encourage the
walk to return to previously sampled nodes or transition to 
new nodes. Formally, this is defined by sampling concurrent
pairs of vertices in the second-order random walk
$(X_n)_{n \geq 1}$ defined via
\begin{equation}
    \mathbb{P}\big( X_n = u \,|\, X_{n-1} = s, X_{n-2} = v \big)
    \propto \begin{cases}
        0 & \text{ if } (u, s) \not\in \mcE, \\ 
        1/p & \text{ if } d_{u, v} = 0 \text{ and } (u, s) \in \mcE, \\ 
        1 & \text{ if } d_{u, v} = 1 \text{ and } (u, s) \in \mcE, \\
        1/q & \text{ if } d_{u, v} = 2 \text{ and } (u, s) \in \mcE.
    \end{cases}
\end{equation}
where $d_{u, s}$ denotes the length of the shortest path between $u$ and $s$,
after selecting some initial two vertices. Here we consider the case where 
$(X_0, X_1)$ is drawn uniformly from the set of edges in order to initialize the walk.
We note that when $p = q = 1$, corresponding to DeepWalk, this reduces
down to a simple random walk, in which case the initial distribution samples
a vertex proportionally to their degree.

A negative sampling approach is also used 
to approximate the
computationally intractable loss function, replacing 
$-\log(P(v|u))$ in 
\eqref{n2v_loss} with
\begin{equation}
    -\log\sigma(\langle \omega_u, \widehat{\omega}_v\rangle)
    -\sum_{l=1}^{L}\log\sigma(-\langle \omega_u, 
    \widehat{\omega}_{n_l}\rangle),
    \label{n2v_loss_3}
\end{equation}
where $\sigma(x)= (1+e^{-x})^{-1}$,
the sigmoid function.
The vertices $n_1,\ldots, n_L$ are sampled according to a negative 
sampling distribution, which we denote 
as $P_{ns}(\cdot|u)$.
This is usually chosen as the unigram
distribution,
\begin{equation}
 P(v|u) = \frac{\text{deg}(v)^{\alpha}}{\sum_{v'\in \mathcal{V}}
\text{deg}(v')^{\alpha}},
\label{uni_gram}
\end{equation}
which does not depend on the current location of 
the random walk, $u$.
This unigram distribution has
parameter $\alpha$, which is
commonly chosen as $\alpha =3/4$, as was used by
word2vec \citep{mikolov_distributed_2013}.
Given this, and using \eqref{n2v_loss_3},
the loss considered by node2vec for a random walk of
length $k$ can be written as

%%% multline for AISTATS
\begin{equation}
    =\sum_{j=1}^{k+1}\sum_{i: 0<|j-i| <W} \biggr[
     -\log\sigma(\langle \omega_{v_j}, \widehat{\omega}_{v_i}\rangle) 
     % \\
    -\sum_{l=1}^{L}\mathbb{E}_{n_l\sim P_{ns}(\cdot|v_i)}
    \log\sigma(-\langle \omega_{v_j}, 
    \widehat{\omega}_{n_l}\rangle)
    \biggr].
\end{equation}
Here we use $\mathbb{E}_{n_l\sim P_{ns}(\cdot|v_i)}$
to denote the procedure to sample a draw from the negative
sampling distribution, with $W=1$ commonly chosen.
Given this loss function, stochastic gradient updates 
are used to estimate the embedding vector for each 
node. This amounts to minimizing an 
empirical risk function (e.g 
\citep{robbins_stochastic_1951, veitch_empirical_2019}), which 
we can write as
\begin{equation}
%%% multline for aistats
    \mathcal{L}_{n}(U, V) := %%\\
    \sum_{i \neq j} \Big\{ - \mathbb{P}_n((i, j) \in \mcP) \log( \sigma(  \embedip ) ) %%\\
    - \mathbb{P}_n((i, j) \in \mcN) \log(1 - \sigma( \embedip )) \big\}.
\label{eqn:emp_risk}   
\end{equation}
where $\mathbb{P}_n(\cdot) := \mathbb{P}(\cdot \,|\, \mcG_n)$, and $\mcP = \mcP(\mcG_n) $ and $\mcN = \mcN(\mcG_n)$
are sets of positive and negative samples respectively. We consider a sequence of graphs
$\mcG_n$ with $|\mcV| = n$ and study the behavior of this loss function
when $n$ is large. 
To be explicit, $\mathbb{P}_n((i, j) \in \mcP)$ denotes the
probability (conditional on a realization of the graph) that the vertices $(i, j)$
appear concurrently within a random walk of length $k$,
and $\mathbb{P}_n((i, j) \in \mcN)$ denotes the probability
that $(i, j)$ is selected as a pair of
edges through the negative sampling scheme
(conditional on the random walk process in the first stage). 

The loss depends on two matrices $U,V\in \mathbb{R}^{n\times d}$,
with $u_i, v_j \in \mathbb{R}^d$ denoting
the $i$-th and $j$-th rows of $U$ and $V$ respectively.
The rows of $U$ correspond to the "centered representations"
of each node, while the rows of $V$ correspond to the 
"context representation" (borrowing the terminology used by e.g Word2Vec). 
In practice we can constrain the embedding vectors $u_i$ and $v_i$ to be equal 
if we wish; we will consider both approaches in this paper. (If these
are not constrained to be equal, the centered representation is commonly used 
for downstream tasks.)
We highlight Equation~\eqref{eqn:emp_risk} is defined only as a function of
$UV^T$. There are two potential approaches to
deal with this. We can regularize the objective function to enforce
$U^TU = V^T V$, which does not change the matrix $UV^T$ that we recover \citep{zhu_global_2021}.
Alternatively, if these matrices are initialized to be balanced then they will
remain balanced during the gradient descent procedure \citep{ma_beyond_2021}.
Either procedure can be used to implicitly enforce $U^T U = V^TV$, which 
reduces the symmetry group of $(U,V)\rightarrow UV^T$ to the orthogonal group.
Similarly, if we constrain $U=V$ then we obtain the same reduction. 

% In general,
% this procedure learns an embedding 
% of both 
% the context and centered representation 
% for each node, 
% though in practice the centered representations
% are only used for downstream tasks.
% It is also common that 
% both representations are assumed to be equal, 
% $\widehat{\omega}_u =\omega_u$,
% an assumption we also make here.
% Similarly, in practice, the window length
% is chosen to be $W=1$.

% \begin{itemize}
%     \item Setting up the loss function optimized in node2vec - write
%     down the loss people usually consider, and then discuss how
%     that fits into the positive/sampling framework (with a window
%     length of 1), if only to keep the notation easier to track.
%     \item Also discuss occurrences in the literature with when
%     the word/context embeddings within word2vec are separate, 
%     or are constrained to be equal.
% \end{itemize}

\subsection{Using embeddings for community detection}

Having learned embedding
vectors $\omega_i$ for each node, we seek to use them for a further task, such as node clustering
or classification.
For community detection a natural procedure is to
perform k-means clustering on the embedding vectors,
using the estimated cluster assignments
as inferred communities. k-means clustering \citep{hartigan_kmeans_1979}
aims to find $k$ vectors $x_1, \ldots, x_k \in \mathbb{R}^d$ which
minimize the within cluster sum of squares.
This can be formulated in terms of a matrix 
$X \in \mathbb{R}^{k \times d}$ and a membership matrix 
$\Theta\in\{0,1\}^{n\times k}$
where each row of $\Theta$ has exactly $k-1$ zero entries.
Then the k-means clustering objective can be written as
\begin{equation}
    \mathcal{L}_{\text{k-means}}(\Theta, X) = \frac{1}{n} \| \widehat{\Omega} - \Theta X \|_F^2
\end{equation}
where $\widehat{\Omega} \in \mathbb{R}^{n \times d}$ is the matrix
whose rows are the $\widehat{\omega}_i$.
The non-zero entries in each row of $\Theta$ gives the
estimated community assignments.
Finding exact minima 
to this minimization problem is NP-hard in general 
\citep{aloise_np-hardness_2009}.
For theoretical purposes, we will give guarantees for
any $(1+\epsilon)$-minimizer to the above problem, which returns any pair
$(\widehat{\Theta}, \widehat{X})$ for which $\mathcal{L}_{\text{k-means}}
(\widehat{\Theta}, \widehat{X}) \leq (1 + \epsilon) \min_{\Theta, X} 
\mathcal{L}_{\text{k-means}}(\Theta, X)$, and can be solved efficiently
\citep{kumar_linear_2005}.

% In particular, the 
% matrix $\Theta$ gives the community assignments, by recovering the non-zero entries from each row; given such a matrix $\Theta$,
% we write $\theta(i) = j$ where $j$, for a given $i$, is the unique value
% for which $\Theta_{ij} = 1$. Finding exact minima 
% to this minimization problem is NP-hard in general, and so for theoretical purposes, we will give guarantees for
% any $(1+\epsilon)$-minimizer to the above problem, which returns any pair
% $(\widehat{\Theta}, \widehat{X})$ for which $\mcL_{\text{k-means}}(\widehat{\Theta}, \widehat{X}) \leq (1 + \epsilon) \min_{\Theta, X} \mcL_{\text{k-means}}(\Theta, X)$.

% \section{RESULTS}
\section{Results}
\label{sec:results}

Within this section, we give theoretical results which allow us to describe
what happens when we use node2vec to learn embedding vectors for each node
in the network, and then use these as features for a k-means clustering algorithm
to perform community detection. Throughout, we assume that we observe a sequence of graphs
$(\mcG_n)_{n \geq 1}$ on $n$ vertices drawn from a probabilistic model and fit a node2vec model,
according to one of the three scenarios below:
\begin{enumerate}[label=(\roman*)]
    \item We use DeepWalk ($p = q =1$ in node2vec), and the 
    graph is drawn according to a SBM with $\rho_n \gg \log(n)/n$;
    \item We use node2vec, and the graph is drawn according to a SBM with $\rho_n = n^{-\alpha}$
    for some $\alpha < \alpha'$, where $\alpha'$ depends on node2vec's hyperparameters;
    \item We use DeepWalk and a unigram parameter of $\alpha = 1$, and the graph is drawn according to a DCSBM with $\rho_n \gg \log(n)/n$ where the degree heterogeneity
    parameters $\theta_u \in [C^{-1}, C]$ for some $C < \infty$.
\end{enumerate}
All probabilistic statements below are with respect to the joint law of $\mcG_n$ and the 
sampling which occurs to form the node2vec loss. All 
proofs are deferred to the Appendix. There we also provide extensions for the tasks of
node classification and link prediction.

\subsection{Asymptotic distribution of the embeddings}

We begin with a result which describes the asymptotic distribution
of the gram matrices formed by the embeddings which 
minimize the loss $\mcL_n(U, V)$ over matrices $U, V \in
\mathbb{R}^{n \times d}$. 

\begin{theorem}
    \label{main_paper:thm:gram_converge:gram_converge}
    % point: want to highlight that M^* does not depend
    % on the sparsity regime - only effects the rate of convergence
    There exist constants $\tilde{A}_{\infty}$ and $\tilde{A}_{2, \infty}$
    (depending on $\pi, P$ and the sampling scheme)
    and a matrix $M^* \in \mathbb{R}^{\kappa \times \kappa}$
    (also depending on $\pi, P$ and the sampling scheme) such that 
    when $d \geq \mathrm{rk}(M^*)$, 
    for any minimizer $(U^*, V^*)$ of $\mcL(U, V)$ over $X \times X$
    where
    \begin{equation*}
        X = \{ U \in \mathbb{R}^{n \times d} \,:\, \| U \|_{\infty} \leq \tilde{A}_{\infty},
        \| U \|_{2, \infty} \leq \tilde{A}_{2, \infty} \},
    \end{equation*}
    we have that
    \begin{equation*}
        \frac{1}{n^2} \sum_{i, j \in [n]}
        \big( \langle u_i^*, v_j^* \rangle - M^*_{c(i), c(j)} \big)^2
        = C \cdot \begin{cases} O_p( (\tfrac{ \max\{\log n, d\} }{n \rho_n} )^{1/2} ) 
        & \text{under scenarios (i) and (iii);} \\ 
            o_p(1) & \text{under scenario (ii);}
        \end{cases}
    \end{equation*}
    where $C$ is a constant depending on the (DC)SBM parameters, the node2vec hyperparameters, 
    $\tilde{A}_{\infty}$ and $\tilde{A}_{2, \infty}$. In the case
    where we constrain $U = V$ within node2vec, 
    the same result holds under scenarios i) and ii). Moreover, 
    under all scenarios we can allow the number of communities $\kappa$ to grow
    with $n$ - provided
    $\kappa = o(n \rho_n)$ - and still maintain consistency as $n \to \infty$. 
\end{theorem}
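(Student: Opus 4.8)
The proof will proceed by passing from the discrete, random empirical risk $\mcL_n(U, V)$ to a deterministic ``population'' objective defined on a space of limiting objects, showing that minimizers of the former concentrate around minimizers of the latter, and finally identifying the population minimizer with a matrix of the form $M^*_{c(i), c(j)}$. First I would make precise the limiting objects: under an (DC)SBM with $\kappa$ communities, the conditional sampling probabilities $\mathbb{P}_n((i,j) \in \mcP)$ and $\mathbb{P}_n((i,j)\in\mcN)$ should, after rescaling by the appropriate power of $\rho_n$, concentrate around functions of the community labels $c(i), c(j)$ (and, in scenario (iii), the degree parameters, which are bounded so contribute only bounded multiplicative factors). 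This is where the three scenarios enter: the random-walk sampling probabilities for node2vec with general $p, q$ are harder to control than for DeepWalk, which is why scenario (ii) needs the stronger sparsity restriction $\rho_n = n^{-\alpha}$ with $\alpha$ small. I would cite or re-derive the relevant concentration from \citep{davison_asymptotics_2022, davison_asymptotics_2023}, phrased as: there exist symmetric matrices $P_{\mcP}, P_{\mcN} \in \mathbb{R}^{\kappa\times\kappa}$ (depending on $\pi, P$ and the sampling hyperparameters) such that the rescaled sampling probabilities are uniformly close to $P_{\mcP, c(i)c(j)}$ and $P_{\mcN, c(i)c(j)}$, with the rate $O_p((\max\{\log n, d\}/(n\rho_n))^{1/2})$ under (i),(iii) and $o_p(1)$ under (ii).

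Next I would define the population risk. For a symmetric $\kappa\times\kappa$ ``Gram'' matrix $G$, set
\begin{equation*}
    \mathcal{L}_\infty(G) = \sum_{k, k' \in [\kappa]} \pi_k \pi_{k'} \big\{ - P_{\mcP, kk'} \log \sigma(G_{kk'}) - P_{\mcN, kk'} \log(1 - \sigma(G_{kk'})) \big\},
\end{equation*}
which decouples entrywise; minimizing over each $G_{kk'} \in \mathbb{R}$ gives the unique value $M^*_{kk'} = \log(P_{\mcP, kk'} / P_{\mcN, kk'})$ by strict convexity of $t \mapsto -a\log\sigma(t) - b\log(1-\sigma(t))$ (this is just logistic regression with a single covariate). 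The matrix $M^*$ is fixed; define $\mathrm{rk}(M^*) =: r$, and take $A_\infty \geq \|M^*\|_\infty$, $A_{2,\infty} \geq \sqrt{r}\,\|M^*\|_\infty$ or similar so that the rank-$r$ factorization of the block-constant matrix $(M^*_{c(i)c(j)})_{i,j}$ lies inside the constraint set $X \times X$ (here the entrywise and row-$\ell_2$ bounds are exactly what make $X$ nonempty and rich enough, and also what gives the Lipschitz control in the next step). When $d \geq r$ such a factorization exists; this is why the hypothesis $d \geq \mathrm{rk}(M^*)$ appears.

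The core estimate is then: for $(U, V) \in X\times X$ with Gram matrix $K_{ij} = \langle u_i, v_j\rangle$ (note $|K_{ij}| \le A_\infty^2$, bounded), $\mcL_n(U,V)/n^2$ is close to an empirical average $\frac{1}{n^2}\sum_{ij}\phi_{c(i)c(j)}(K_{ij})$ where $\phi_{kk'}(t) = -P_{\mcP,kk'}\log\sigma(t) - P_{\mcN, kk'}\log(1-\sigma(t))$, with error controlled by the sampling-probability concentration above times a Lipschitz constant for $\phi$ on $[-A_\infty^2, A_\infty^2]$ (finite since $\sigma$ is bounded away from $0,1$ there). Evaluating a minimizer $(U^*, V^*)$ and comparing against the plug-in factorization of $(M^*_{c(i)c(j)})$ gives
\begin{equation*}
    \tfrac{1}{n^2}\sum_{ij}\phi_{c(i)c(j)}(K^*_{ij}) \le \tfrac{1}{n^2}\sum_{ij}\phi_{c(i)c(j)}(M^*_{c(i)c(j)}) + (\text{rate}),
\end{equation*}
and strong convexity of $\phi_{kk'}$ near its minimizer (again on the bounded interval, where $\phi''$ is bounded below) converts the excess objective value into $\frac{1}{n^2}\sum_{ij}(K^*_{ij} - M^*_{c(i)c(j)})^2 \lesssim (\text{rate})$, which is the claimed bound. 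A standard concentration-of-empirical-community-fractions argument ($|\mcC_k|/n \to \pi_k$) handles the passage between the empirical average over $i,j$ and the $\pi$-weighted sum. The $U = V$ case: constraining $U = V$ forces $K$ symmetric, but $M^*$ is already symmetric and admits a symmetric (possibly indefinite, hence requiring $d \ge \mathrm{rk}(M^*)$ with signature room — or one absorbs signs into a diagonal) factorization, so the same plug-in competitor is available; the argument is otherwise unchanged, and scenario (iii) is excluded here only because the degree-correction breaks the clean symmetric block structure needed for the $U=V$ competitor.

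\textbf{Main obstacle.} The delicate part is the uniform concentration of the node2vec random-walk sampling probabilities around their block-constant limits — especially for general $p, q \neq 1$, where the walk is second-order (non-Markov on vertices) and the mixing/limiting analysis is genuinely harder; this is exactly what forces the restrictive sparsity regime in scenario (ii) and what I would lean most heavily on \citep{davison_asymptotics_2022, davison_asymptotics_2023} to supply. A secondary subtlety is ensuring the constraint set $X$ is simultaneously (a) large enough to contain a rank-$\le d$ factorization of $(M^*_{c(i)c(j)})$ and (b) small enough that $\phi$ stays uniformly Lipschitz and strongly convex on the range of attainable inner products; choosing $A_\infty, A_{2,\infty}$ as explicit functions of $\|M^*\|_\infty$ and $\kappa$ resolves this, but the bookkeeping must be done carefully so that the final constant $C$ depends only on the advertised quantities.
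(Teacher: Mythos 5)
Your high-level skeleton (replace the sampling weights by functions of the latent variables, pass to an averaged risk, identify the entrywise logistic minimizer $M^*_{kk'}=\log(P_{\mcP,kk'}/P_{\mcN,kk'})$, then use strong convexity of $t\mapsto -a\log\sigma(t)-b\log(1-\sigma(t))$ on a bounded interval to convert excess risk into the squared Frobenius error) is the same as the paper's. But there is a genuine gap at your ``core estimate''. The positive-sampling probability does \emph{not} concentrate entrywise around a block-constant function of the labels: the results one can actually prove (and which the paper proves) are of the form $n^2\,\mathbb{P}_n((i,j)\in\mcP)\approx a_{ij}\,f_{\mcP}(\lambda_i,\lambda_j)$, so after replacing the weights the approximate loss still carries the adjacency variables $a_{ij}$, and $\rho_n^{-1}a_{ij}$ is a Bernoulli$(\rho_n W)$ variable rescaled to be $0$ or $\rho_n^{-1}$ --- it only concentrates after summing over pairs. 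Consequently the error term you must control is a centered random form $\tfrac1{n^2}\sum_{i\neq j}(\rho_n^{-1}a_{ij}-W(\lambda_i,\lambda_j))\,(-\log\sigma(\langle u_i,v_j\rangle))$, \emph{uniformly} over $(U,V)$ in the constraint set; a per-entry Lipschitz argument cannot produce this. The paper handles it with a generic-chaining bound (Bernstein's inequality plus Talagrand $\gamma$-functional estimates over the $\|\cdot\|_F$ and $\|\cdot\|_{2,\infty}$ balls), and this step is precisely where the $(d/n\rho_n)^{1/2}$ contribution, the $\max\{\log n,d\}$ in the rate, and the $A_{2,\infty}^2$ factor in the constant come from. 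Relatedly, your reading of the role of $A_{2,\infty}$ is off: the Lipschitz/strong-convexity control only needs the entrywise bound $A_\infty$; the row-norm bound is what makes the chaining/covering-number argument work.

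Your treatment of the constrained case $U=V$ is also incorrect as stated. With $U=V$ the Gram matrix $UU^T$ is positive semidefinite, so you cannot ``absorb signs into a diagonal'': the unconstrained minimizer $M^*$ (a log-ratio matrix) is generally indefinite and is simply not feasible, so the plug-in competitor argument fails and the limiting matrix genuinely changes. The paper instead analyzes the minimizer of the averaged risk over the PSD cone via KKT conditions (after a Jensen argument reducing to block-constant $U$), and finds, e.g., for the planted-partition SBM a matrix with diagonal $\alpha^*$ and off-diagonal $-\alpha^*/(\kappa-1)$, which is the zero matrix when $\tilde p\le\tilde q$ --- so the constrained limit is not the same $M^*$ and cannot be obtained by your symmetric factorization trick. (Your guess for why scenario (iii) is excluded there is also not the paper's reason; the constrained analysis requires constant $\theta_i$ to reduce to a $\kappa\times\kappa$ problem, and the paper's constrained statement covers scenarios (i) and (ii).) To repair the proposal you would need (a) the uniform-over-$(U,V)$ concentration over the adjacency matrix, and (b) a separate identification of the PSD-cone minimizer in the $U=V$ case.
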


% \begin{remark}
%     We highlight that our theoretical results apply in the
%     sparsity regime where $\rho_n \gg \log(n)/n$. The reason
%     why we are unable to reach lower sparsity regimes
%     is due to the fact that the degree distributions of SBMs
%     stop concentrating to their average degree 
%     uniformly across all vertices. Similar as to how 
%     the concentration of adjacency and Laplacian matrices
%     can be handled via truncation of large/small degrees 
%     (e.g \citep{le_concentration_2018})
%     we expect that a modified
%     version of the sampling scheme which only samples
%     from a subset of vertices for high degree vertices would
%     allow us to use weaker sparsity conditions.
% \end{remark}

To give some intuition, we describe the form of $M^*$
when the graph arises from a SBM$(n, \kappa, \tilde{p}, \tilde{q}, \rho_n)$ model
when using DeepWalk. 
In this case, we show in the Appendix that
\begin{equation*}
    M^*_{lm} = \alpha^* \delta_{lm} + \beta^* (1 - \delta_{lm}) \text{ for } l, m \in [\kappa]
\end{equation*}
for some constants $\alpha$ and $\beta$
and $\delta_{lm}$ is the Kronecker delta. In the unconstrained case
we have that
\begin{gather}
    \alpha^* = \log\Big( \frac{1}{1 + k^{-1}} \cdot 
    \frac{\kappa \tilde{p} }{ \tilde{p} + (\kappa - 1) \tilde{q} } \Big), \quad 
    \beta^* =  \log\Big( \frac{1}{1 + k^{-1}} \cdot 
    \frac{\kappa \tilde{q} }{ \tilde{p} + (\kappa - 1) \tilde{q} } \Big).
\end{gather}
In the constrained case we instead have that
$\beta^* = -\alpha^*/(\kappa - 1)$, and that $\alpha^*$ is a function of
$p / q$ which is non-negative iff $p > q$, and equals zero when
$p \leq q$. With regards to the constants $\tilde{A}_{\infty}$ and $\tilde{A}_{2, \infty}$, 
we have that $\| M^* \|_{\infty} \leq O(|\log(p/q)|)$.
Additionally, it is possible to write $M^* = U_M^* (V_M^*)^T$
where $\| U_M^* \|_{2, \infty}$ and 
$\| V_M^* \|_{2, \infty}$ are upper bounded by
$O(|\log(p/q)|^{1/2})$. In particular, this means
that $\tilde{A}_{\infty}$ and $\tilde{A}_{2, \infty}$ do not have any
implicit dependence on $n$ or $\kappa$, and so the constant
in Theorem~\ref{main_paper:thm:gram_converge:gram_converge}
is not affecting the rate here.

% \begin{remark}
%     The case when the graph arises from a DCSBM and
%     the unigram parameter $\alpha \neq 1$ is more subtle; 
%     we investigate this empirically in Section~\ref{sec:exps} and
%     discuss what
%     we can deduce in general in Section~5 of the supplementary material.
% \end{remark}

% theorem about embeddings converging
While Theorem~\ref{main_paper:thm:gram_converge:gram_converge}
gives guarantees from the gram matrices formed by the
embeddings, in practice we want guarantees for the actual
embedding vectors themselves. For convenience we suppose
that the embedding dimension $d$ is chosen exactly to be
the rank of $M^*$; upon doing so, we can then
obtain guarantees for the embedding vectors themselves.
We recall that in the unconstrained case, we implicitly
suppose that we find embedding matrices $U^*$ and $V^*$
which are balanced in that
$(U^*)^T U^* = (V^*)^T V^*$.

\begin{theorem}
    \label{thm:gram_converge:embed_converge}
    Suppose that the conclusion of
    Theorem~\ref{main_paper:thm:gram_converge:gram_converge} holds, and further suppose that $d$
    equals the rank of the matrix $M^*$. Then there exists a matrix
    $\widetilde{U}^* \in \mathbb{R}^{\kappa \times d}$ such that
    \begin{equation}
        \min_{Q \in O(d)} \frac{1}{n} \sum_{i=1}^n 
        \| u_i^* - \widetilde{u}_{c(i)}^* Q \|_2^2 = C \cdot \begin{cases} O_p( (\tfrac{ \max\{\log n, d\} }{n \rho_n} )^{1/2} ) 
        & \text{under scenarios (i) and (iii);} \\ 
            o_p(1) & \text{under scenario (ii);}
        \end{cases}
    \end{equation}
\end{theorem}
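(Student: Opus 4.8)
The plan is to pass from the Gram-matrix convergence of Theorem~\ref{main_paper:thm:gram_converge:gram_converge} to convergence of the embedding vectors themselves via a Davis--Kahan / matrix-perturbation argument, exploiting the fact that $d = \mathrm{rk}(M^*)$ and that the balanced-factorization constraint $(U^*)^T U^* = (V^*)^T V^*$ pins down the factorization of $U^*(V^*)^T$ up to a single orthogonal transformation. First I would set up the population-side object: since $M^*$ has rank exactly $d$, write $M^* = U_M^* (V_M^*)^T$ with $U_M^*, V_M^* \in \mathbb{R}^{\kappa \times d}$, and let $Z \in \{0,1\}^{n \times \kappa}$ be the community membership matrix, so that the ``ideal'' Gram matrix is $Z M^* Z^T = (Z U_M^*)(Z V_M^*)^T$. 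Define $\widetilde U^*$ to be (a balanced version of) $U_M^*$; the claim $\tfrac1n \sum_i \|u_i^* - \widetilde u_{c(i)}^* Q\|_2^2 = \tfrac1n \|U^* - Z\widetilde U^* Q\|_F^2$ small is exactly a statement that the empirical factor $U^*$ is close, after an orthogonal alignment, to $Z$ times the population factor.

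The key steps, in order: (1) Restate Theorem~\ref{main_paper:thm:gram_converge:gram_converge} as $\tfrac{1}{n^2}\|U^*(V^*)^T - Z M^* Z^T\|_F^2 = \varepsilon_n^2$ where $\varepsilon_n^2$ is the stated rate, i.e. $\tfrac1n\|U^*(V^*)^T - ZM^*Z^T\|_F = O(\sqrt{n}\,\varepsilon_n) \cdot (1/\sqrt n)$ — more precisely $\|U^*(V^*)^T - ZM^*Z^T\|_F = n\varepsilon_n$. (2) Since all relevant matrices have rank at most $2d$, convert this Frobenius bound on the difference into a bound on the difference of the \emph{symmetrized/balanced} square roots. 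Concretely, under balancedness, $U^*$ and $V^*$ share the same column space and singular values, and one can recover $U^*$ up to $O(d)$ from the symmetric PSD matrix built from $U^*(V^*)^T$; I would apply a Procrustes-type bound of the form $\min_{Q\in O(d)} \|U^* - Z\widetilde U^* Q\|_F \le C\, \sigma_{\min}(Z\widetilde U^*)^{-1} \|U^*(V^*)^T - ZM^*Z^T\|_F$, where $\sigma_{\min}$ denotes the smallest nonzero singular value. This is the standard ``one-infinity'' consequence of Davis--Kahan for low-rank factorizations (e.g. the Abbe--Fan--Wang--Zhou or Lei-type lemmas), valid because $\mathrm{rk}(ZM^*Z^T) = \mathrm{rk}(M^*) = d$ exactly, so there is a spectral gap between the $d$-th and $(d{+}1)$-th singular values (the latter being $0$ for $ZM^*Z^T$ and small for $U^*(V^*)^T$). (3) Control $\sigma_{\min}(Z\widetilde U^*)$ from below: since communities are (asymptotically) balanced, $Z^T Z \approx (n/\kappa) I_\kappa$, so $\sigma_{\min}(Z\widetilde U^*) \ge c\sqrt{n/\kappa}\,\sigma_{\min}(\widetilde U^*)$, and $\sigma_{\min}(\widetilde U^*)$ is bounded below by a constant depending only on $M^*$ (using that $\widetilde U^*(\widetilde V^*)^T = M^*$ has full rank $d$ and the $A_\infty, A_{2,\infty}$ bounds control the scaling). (4) Combine: $\min_Q \tfrac1n\|U^* - Z\widetilde U^* Q\|_F^2 \le \tfrac1n \cdot \big(C\sqrt{\kappa/n}\cdot n\varepsilon_n\big)^2 / n = C' \kappa\, \varepsilon_n^2$, which is the claimed rate (with the understanding that under scenario (ii), $\varepsilon_n = o_p(1)$, and under (i),(iii), $\varepsilon_n^2 = O_p(\sqrt{\max\{\log n, d\}/(n\rho_n)})$). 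For the constrained case $U = V$, the same argument runs with symmetric eigendecomposition in place of the balanced SVD.

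The main obstacle is Step (2): cleanly transferring a Frobenius bound on $U^*(V^*)^T - ZM^*Z^T$ to a Procrustes bound on $U^* - Z\widetilde U^* Q$ requires that the \emph{empirical} matrix $U^*(V^*)^T$ be genuinely (approximately) rank $d$ with a quantitative lower bound on its $d$-th singular value — a spectral gap — rather than merely close to a rank-$d$ matrix in Frobenius norm. Because $U^*(V^*)^T$ is only known to be close to $ZM^*Z^T$ in (averaged) Frobenius norm, I would argue: by Weyl's inequality, $|\sigma_d(U^*(V^*)^T) - \sigma_d(ZM^*Z^T)| \le \|U^*(V^*)^T - ZM^*Z^T\|_{op} \le \|U^*(V^*)^T - ZM^*Z^T\|_F = n\varepsilon_n$, while $\sigma_d(ZM^*Z^T) \ge c\, n\, \sigma_{\min}(\widetilde U^*)\sigma_{\min}(\widetilde V^*)/\kappa \gtrsim n/\kappa$; since $\varepsilon_n \to 0$, for $n$ large the $d$-th singular value of the empirical Gram matrix is $\gtrsim n/\kappa$ too, giving the needed gap to $\sigma_{d+1} \le n\varepsilon_n = o(n/\kappa)$. (One must also track that $d$ may grow, but under the scenarios considered $d$ is either fixed $=\mathrm{rk}(M^*)$ or grows slowly enough that $d/\sqrt{n\rho_n}\to 0$, so the rate statement is unaffected.) A secondary, more bookkeeping-level obstacle is handling the replacement of the exactly-balanced community sizes by their random realizations: a routine Chernoff bound gives $|\mathcal{C}_k| = n\pi_k(1 + o_p(1))$ uniformly in $k$, which is enough for Step (3), and which I would state as a preliminary lemma rather than reprove inline.
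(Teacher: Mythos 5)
Your proposal is correct and follows essentially the same route as the paper: the paper also converts the Gram-matrix Frobenius bound of Theorem~1 into a Procrustes bound on the factors via a low-rank factorization perturbation lemma (Lemma~5.14 of Tu et al., requiring the balanced constraint $(U^*)^TU^*=(V^*)^TV^*$ and the operator-norm gap condition $\|U^*(V^*)^T-\Pi M^*\Pi^T\|_{\mathrm{op}}\le \tfrac12\sigma_d(\Pi M^*\Pi^T)$, verified exactly as you do via the $o_p(n)$ Frobenius bound), with the lower bound on $\sigma_d(\Pi M^*\Pi^T)$ coming from the assignment-matrix structure and asymptotically balanced community sizes. Your invocation of Davis--Kahan/Abbe--Fan--Wang--Zhou-type lemmas plays the same role as the paper's cited factorization lemma, and your handling of the constrained case $U=V$ matches the paper's as well.
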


% theorem about k-means clustering
\subsection{Guarantees for community detection}

With Theorem~\ref{thm:gram_converge:embed_converge}, we are
now in a position to give guarantees for machine learning
methods which use the embeddings as features for 
a downstream task. We only discuss using the embeddings for clustering;
in Appendix~\ref{app:sec:ml:node_classification} we discuss what
can be said for other downstream tasks.

\begin{theorem}
    \label{thm:ml:kmeans_embed}
    Suppose that we have embedding vectors $u_i^* \in \mathbb{R}^d$ 
    for $i \in [n]$ such that
    \begin{equation}
        \label{eq:thm:ml:kmeans_embed:condition}
        \min_{Q \in O(d)} \frac{1}{n} \sum_{i=1}^n \|u_i^* - \widetilde{u}_{c(i)}^* Q \|_2^2
        = O_p(r_n)
    \end{equation}
    for some rate function $r_n \to 0$ as $n \to \infty$ 
    and vectors $\eta_l \in \mathbb{R}^d$ for $l \in [\kappa]$.
    Moreover suppose that $\delta := \min_{l \neq k} \| \widetilde{u}^*_l - 
    \widetilde{u}^*_k \|_2 > 0$.
    Then if $\hat{\mathbf{c}}(i)$ is the community assignment of node $i$
    produced by applying
    a $(1+\epsilon)$-approximate k-means clustering 
    with $k = \kappa$ to the matrix 
    whose columns are the $u_i^*$, we have that $L(\mathbf{c}, \hat{\mathbf{c}}) = O_p(
        \delta^{-2} r_n)$
    and $\widetilde{L}(\mathbf{c}, \hat{\mathbf{c}}) = O_p(\delta^{-2} r_n)$. In the case
    where the RHS of \eqref{eq:thm:ml:kmeans_embed:condition} is only $o_p(1)$ instead,
    then instead $L(\mathbf{c}, \hat{\mathbf{c}})$ and $\widetilde{L}(\mathbf{c}, \hat{\mathbf{c}})$ 
    are $\delta^{-2} o_p(1)$.
\end{theorem}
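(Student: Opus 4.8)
The plan is to follow the standard "k-means on approximately-clustered points" argument (as used in the spectral clustering literature, e.g.\ \citep{lei_consistency_2015}), adapted to the $(1+\epsilon)$-approximate setting. Write $\Omega^* \in \mathbb{R}^{n \times d}$ for the matrix with rows $u_i^*$, and let $\Theta^0 \in \{0,1\}^{n\times\kappa}$ be the true membership matrix and $X^0 \in \mathbb{R}^{\kappa\times d}$ the matrix with rows $\widetilde{u}^*_l Q_0$, where $Q_0 \in O(d)$ is the orthogonal matrix achieving the minimum in \eqref{eq:thm:ml:kmeans_embed:condition}. By hypothesis, $\tfrac{1}{n}\|\Omega^* - \Theta^0 X^0\|_F^2 = O_p(r_n)$. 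Since $(\widehat{\Theta},\widehat{X})$ is a $(1+\epsilon)$-approximate minimizer of $\mathcal{L}_{\text{k-means}}$, and $(\Theta^0,X^0)$ is feasible, we get $\tfrac{1}{n}\|\Omega^* - \widehat{\Theta}\widehat{X}\|_F^2 \le (1+\epsilon)\tfrac{1}{n}\|\Omega^* - \Theta^0 X^0\|_F^2 = O_p(r_n)$. By the triangle inequality in Frobenius norm, $\tfrac{1}{n}\|\widehat{\Theta}\widehat{X} - \Theta^0 X^0\|_F^2 = O_p(r_n)$ as well (absorbing the factor $(1+\sqrt{1+\epsilon})^2$ into the constant).

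Next I would translate this Frobenius bound on the fitted centroids into a bound on the number of misclassified vertices. Define a vertex $i$ to be "bad" if $\|(\widehat{\Theta}\widehat{X})_i - (\Theta^0 X^0)_i\|_2 \ge \delta/2$, where $\delta = \min_{l\neq k}\|\widetilde{u}^*_l - \widetilde{u}^*_k\|_2 = \min_{l\neq k}\|X^0_l - X^0_k\|_2$ (orthogonal invariance). Then the number of bad vertices $N_{\mathrm{bad}}$ satisfies $N_{\mathrm{bad}}\cdot(\delta/2)^2 \le \|\widehat{\Theta}\widehat{X} - \Theta^0 X^0\|_F^2 = O_p(n r_n)$, so $N_{\mathrm{bad}}/n = O_p(\delta^{-2} r_n)$. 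The key combinatorial step is then the now-standard observation that, as long as $N_{\mathrm{bad}}/n$ is smaller than $\min_l |\mathcal{C}_l|/n$ (which holds eventually since the $r_n \to 0$ rate beats the constant community fractions, using balancedness or just $\min_l|\mathcal{C}_l| = \Omega(n)$), every community $l$ contains at least one "good" vertex, and any two good vertices in the same true community are assigned the same row of $\widehat{X}$ while good vertices in distinct communities are assigned distinct rows (since their fitted centroids are within $\delta/2$ of the well-separated $X^0_l$). This determines a permutation $\sigma \in S_\kappa$ matching estimated labels to true labels, and every correctly-handled good vertex is classified correctly under $\sigma$; hence the set of mislabeled vertices (under the optimal $\sigma$) is contained in the bad set. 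This gives $L(\mathbf{c},\hat{\mathbf{c}}) \le N_{\mathrm{bad}}/n = O_p(\delta^{-2} r_n)$.

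For the worst-case rate $\widetilde{L}$, the same argument applies per community: the number of mislabeled vertices within community $\mathcal{C}_k$ is at most the number of bad vertices in $\mathcal{C}_k$, which is at most $N_{\mathrm{bad}} = O_p(n r_n)$, and dividing by $|\mathcal{C}_k| = \Omega(n)$ gives $\widetilde{L}(\mathbf{c},\hat{\mathbf{c}}) = O_p(\delta^{-2} r_n)$. The $o_p(1)$ case is identical with $r_n$ replaced by a generic vanishing sequence. I expect the main obstacle to be the bookkeeping in the combinatorial step — specifically, making fully rigorous that once $N_{\mathrm{bad}}/n$ is below the smallest community fraction, the induced label map is a genuine bijection (one must rule out two distinct estimated clusters both mapping into the neighborhood of a single $X^0_l$, which uses a simple pigeonhole together with the separation $\delta$), and handling the permutation so that it is the \emph{same} $\sigma$ that nearly optimizes both $L$ and $\widetilde{L}$. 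This is routine but is the place where care is needed; the analytic content (the Frobenius comparison via approximate-optimality) is straightforward.
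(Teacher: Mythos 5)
Your proposal is correct, and it reaches the same endpoint as the paper by essentially re-deriving, rather than citing, the key black-box ingredient. The paper's own proof (Corollary in the supplement) simply restates Lemma~5.3 of Lei and Rinaldo (2015) for $(1+\epsilon)$-approximate k-means, verifies its condition $(16+8\epsilon)\|U-U^*\|_F^2/\delta^2 < n_k(\Pi^*)$ using $r_n \to 0$ together with the fact that $|\mathcal{C}_k|/n$ is bounded below with probability tending to one, and then converts the resulting bound $\sum_k |S_k|\,\delta^2 \leq (16+8\epsilon)\|U-U^*\|_F^2$ into bounds on $L$ and $\widetilde{L}$. Your route reproves that lemma with $\delta_k \equiv \delta$: the approximate-optimality plus triangle-inequality step giving $\tfrac{1}{n}\|\widehat{\Theta}\widehat{X}-\Theta^0X^0\|_F^2 = O_p(r_n)$, the Markov-type count of ``bad'' vertices, and the pigeonhole matching are exactly the internal content of Lei--Rinaldo's argument. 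One caution: your mid-proof assertion that two good vertices in the same true community are \emph{automatically} assigned the same row of $\widehat{X}$ is not a consequence of separation alone (two distinct estimated centroids could both sit within $\delta/2$ of the same $X^0_l$); it only follows from the counting argument you sketch at the end --- every community contains a good vertex, each estimated cluster can be $\delta/2$-close to at most one true centroid, and with exactly $\kappa$ estimated clusters the induced map must be a bijection. Since you identify and outline that fix, and since the same constructed permutation $\sigma$ upper-bounds both $L$ and $\widetilde{L}$ (each takes a minimum over $\sigma$, so any fixed choice suffices), your argument closes. The trade-off is only one of economy: citing the lemma keeps the paper's proof to a few lines, while your self-contained version makes the $(1+\epsilon)$-approximation and the role of the balancedness assumption explicit.
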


Within the SBM$(n, \kappa, \tilde{p}, \tilde{q}, \rho_n)$
model, we can show in the unconstrained case
that $\delta^2 = \Theta(|\log(\tilde{p}/\tilde{q})|)$, and
in the constrained case that $\delta^2 = \Theta((\tilde{p}/\tilde{q}))$.
As a result, this suggests that as $\tilde{p}/\tilde{q}$ approaches $1$,
the task of distinguishing the communities becomes more
difficult. This is inline with basic intuition, along
with our experimental results in Section~\ref{sec:exps}.
We note that, due to the nature of the embedding vectors,
for any proportion of vertices arbitrarily close to 1, the nodes will, 
with high probability for sufficiently large $n$,
be separated in the embedding space according to their community assignments.
This separation allows clustering methods, such as DBSCAN,
to accurately recover the communities of these nodes also.

% \begin{remark}
%     We highlight that in order to obtain
%     strong consistency guarantees, we require bounds of the
%     form
%     $\max_{i \in [n]} \| u_i^* - \tilde{u}_i^* Q \|_2
%     = o_p(1)$;
%     see e.g 
%     \citep{su_strong_2019}, \citep{deng_strong_2021}
%     for examples of this within the context of guarantees
%     for spectral clustering using the adjacency and Laplacian
%     matrices.
%     This is out of reach with our current approach, which
%     combines Theorem~\ref{main_paper:thm:gram_converge:gram_converge}
%     with results similar in flavor to the Davis-Kanah 
%     theorem (e.g \citep{yu_useful_2014}), which is known to be tight in general.
%     \citep{su_strong_2019}
%     and \citep{deng_strong_2021} can give tighter bounds
%     as they explicitly reference using eigenvectors of
%     matrices with a rich probabilistic structure, whereas we only
%     can work with non-explicit minimizers of a loss function.
% \end{remark}

% constrained case theorem
Recall that from the discussion before, we know that $M^*$
equals the zero matrix in the constrained regime
when $\tilde{p} \leq \tilde{q}$ (and therefore the embeddings asymptotically
contain no information about the network). As in the
case where $\tilde{p} > \tilde{q}$ we can show that $\delta > 0$,
we get the immediate corollary.

\begin{corollary}
    Under scenario (i), suppose the embedding vectors 
    learned through the node2vec loss are obtained by constraining
    the embedding matrices $U = V$. Then the embeddings can be used for
    weakly consistent recovery of the communities if and only if $\tilde{p} > \tilde{q}$.
\end{corollary}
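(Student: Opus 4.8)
(Recall that under scenario (i) node2vec specializes to DeepWalk with $p=q=1$, so the dichotomy at issue is the one between the block probabilities $\tilde p$ and $\tilde q$.) The plan is to reduce the corollary to the explicit form of the limiting matrix $M^*$ of Theorem~\ref{main_paper:thm:gram_converge:gram_converge} for the constrained ($U=V$) DeepWalk embeddings in the planted-partition model $\text{SBM}(n,\kappa,\tilde p,\tilde q,\rho_n)$: as given in the Appendix, $M^*_{lm}=\alpha^*$ for $l=m$ and $M^*_{lm}=\beta^*=-\alpha^*/(\kappa-1)$ for $l\neq m$, where $\alpha^*>0$ exactly when $\tilde p>\tilde q$ and $\alpha^*=0$ when $\tilde p\le\tilde q$. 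I would use throughout that scenario (i) forces $\rho_n\gg\log(n)/n$, hence $\log(n)/(n\rho_n)=o(1)$ and so, for any fixed embedding dimension $d$, the error rate of Theorem~\ref{main_paper:thm:gram_converge:gram_converge} is $o_p(1)$; and that by its final sentence that theorem is valid in the constrained case under scenario (i).

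For the \emph{if} direction, assume $\tilde p>\tilde q$, so $\alpha^*>0$. Then $M^*$ is positive semidefinite with eigenvalue $0$ on the all-ones vector and eigenvalue $\tfrac{\kappa}{\kappa-1}\alpha^*$ on its complement, so $\mathrm{rk}(M^*)=\kappa-1$; I would take $d=\kappa-1$. Theorem~\ref{main_paper:thm:gram_converge:gram_converge} then gives, for any minimizer $U^*$, that $\tfrac1{n^2}\sum_{i,j}(\langle u_i^*,u_j^*\rangle-M^*_{c(i),c(j)})^2=o_p(1)$; since $d=\mathrm{rk}(M^*)$, Theorem~\ref{thm:gram_converge:embed_converge} upgrades this to $\min_{Q\in O(d)}\tfrac1n\sum_{i=1}^n\|u_i^*-\widetilde u_{c(i)}^*Q\|_2^2=o_p(1)$ for a matrix $\widetilde U^*$ whose rows, by compatibility of the two convergences, satisfy $\langle\widetilde u_l^*,\widetilde u_m^*\rangle=M^*_{lm}$. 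Consequently
\[
\delta^2 := \min_{l\neq m}\|\widetilde u_l^*-\widetilde u_m^*\|_2^2 = 2\alpha^*-2\beta^* = \frac{2\kappa}{\kappa-1}\,\alpha^* > 0 ,
\]
and Theorem~\ref{thm:ml:kmeans_embed}, in its $o_p(1)$ regime with this fixed $\delta$, yields $L(\mathbf c,\widehat{\mathbf c})=\delta^{-2}o_p(1)=o_p(1)$ and $\widetilde L(\mathbf c,\widehat{\mathbf c})=o_p(1)$ for the $(1+\epsilon)$-approximate $k$-means assignment $\widehat{\mathbf c}$; i.e. the embeddings give weakly consistent recovery.

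For the \emph{only if} direction, assume $\tilde p\le\tilde q$, so $\alpha^*=0$, $M^*=0$, $\mathrm{rk}(M^*)=0$, and Theorem~\ref{main_paper:thm:gram_converge:gram_converge} applies for every fixed $d$; in the constrained case it gives $\tfrac1{n^2}\sum_{i,j}\langle u_i^*,u_j^*\rangle^2=o_p(1)$. Writing $G=U^*(U^*)^T$, a positive semidefinite matrix of rank at most $d$, Cauchy--Schwarz gives $\|G\|_F^2\ge d^{-1}(\mathrm{tr}\,G)^2=d^{-1}\big(\sum_i\|u_i^*\|_2^2\big)^2$, so $\tfrac1n\sum_i\|u_i^*\|_2^2=o_p(1)$, and since $|\mcC_l|=(1+o_p(1))\,n/\kappa$, also $\tfrac1{|\mcC_l|}\sum_{i\in\mcC_l}\|u_i^*\|_2^2=o_p(1)$ for every $l$: the embeddings collapse to a single point, uniformly across communities. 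In particular $\delta=0$, so Theorem~\ref{thm:ml:kmeans_embed} gives no guarantee, and to get a genuine impossibility statement I would argue the collapsed embeddings are asymptotically uninformative about $\mathbf c$. When $\tilde p=\tilde q$ this is immediate: the graph is then an Erd\H{o}s--R\'enyi graph $G(n,\tilde p\rho_n)$ and $\mathbf c$ is independent of $A$, so $L(\mathbf c,\widehat{\mathbf c})\to 1-1/\kappa$ for any $A$-measurable $\widehat{\mathbf c}$. When $\tilde p<\tilde q$ the graph does encode the communities, so here I would use a symmetrization argument — coupling the observed labels with a uniformly random relabeling and using that both the embedding map and $k$-means factor through $A$ alone — together with the uniform collapse above, to conclude the induced clustering is asymptotically independent of $\mathbf c$ and hence $L(\mathbf c,\widehat{\mathbf c})\not\to 0$.

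The \emph{if} direction is bookkeeping once $\delta$ has been read off from $M^*$; the main obstacle is the last step of the \emph{only if} direction when $\tilde p<\tilde q$. There the graph genuinely identifies the communities, yet constrained node2vec must discard this information, and $M^*=0$ only controls the leading-order term of the Gram matrix: one must still rule out that the $o_p(1)$ fluctuations of $U^*$ about the origin encode the partition (e.g. in their leading singular direction). The relabeling symmetry of the planted-partition model is what I would lean on, but converting it into a quantitative statement about a $(1+\epsilon)$-approximate $k$-means output is delicate; alternatively, a sharper description of the minimizer set from the Appendix, showing that all of the community signal in the loss is carried by $M^*$, would also close the gap.
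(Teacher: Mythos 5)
Your proposal follows essentially the same route as the paper. The \emph{if} direction is identical: in the constrained regime the limiting matrix has the form $M^*_{lm}=\alpha^*\delta_{lm}-\tfrac{\alpha^*}{\kappa-1}(1-\delta_{lm})$ with $\alpha^*>0$ exactly when $\tilde p>\tilde q$, the separation $\delta^2=\Theta(\alpha^*)>0$ is read off from Lemma~\ref{thm:other_results:nice_mat_lemma}, and Theorems~\ref{main_paper:thm:gram_converge:gram_converge}--\ref{thm:ml:kmeans_embed} then give weak consistency. For the \emph{only if} direction the paper's argument is precisely the collapse you derive and nothing more: when $\tilde p\le\tilde q$ the constrained minimizer is $M^*=0$, the embedding vectors converge to the zero vector on average, and the paper concludes directly that the embeddings ``asymptotically contain no information'' and hence cannot yield weakly consistent recovery. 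The symmetrization/impossibility argument you flag as a remaining obstacle (ruling out that the $o_p(1)$ fluctuations of $U^*$ about the origin still encode the partition, especially when $\tilde p<\tilde q$ and the graph itself is informative) is not carried out in the paper; the paper implicitly reads ``can be used'' as the k-means-on-the-learned-embeddings pipeline, for which the degenerate limit ($\delta=0$, all limit points equal) is taken as the failure of recovery. So your write-up is at least as complete as the paper's own proof, and the extra machinery you sketch is a strengthening beyond what the paper establishes rather than something needed to match it.
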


As a result, the constrained model can be disadvantageous 
if used without a-priori knowledge of the network beforehand
(in that within-community connections outnumber between-community
connections), even though it avoids interpretability issues
about which embedding vector should be used as single
representation for the node.

\section{Experiments}
\label{sec:exps}
In this section we provide simulation and real data
experiments
to empirically validate the previous theoretical results.
We demonstrate the performance, in terms of community detection, of 
k-means clustering of the embedding vectors learned by node2vec, for both the regular and 
degree corrected stochastic block model. We also investigate
the role of the negative sampling parameter $\alpha$
and the node2vec tuning parameters $p$ and $q$, before examining performance
on a real network with known community structure.
% We then investigate the
% role of the negative sampling parameter in node2vec in community 
% recovery and examine performance on a real network with known community structure.

We first simulate data from the planted partition stochastic block model, 
$\text{SBM}(n/\kappa,\kappa, \tilde{p}, \tilde{q}, \rho_n)$.
% This consists of a SBM with $n$ nodes in each of $\kappa$ communities. Within each community, 
% the probability of an edge between two nodes is $\rho_n p$, while the probability
% of an edge between nodes in different communities is $\rho_n q$.
We consider $\tilde{q} = \tilde{p}\beta$ for 
a range of values of $\beta \ll 1$, 
giving varying strengths of associative community structure.
In each setting we vary both the number of true 
communities present and the number of nodes in each community, 
considering $n=200$ to $n=5000$
and $K=2,3,4,5$. 
We use node2vec to construct 
an embedding of the nodes in the network.
\footnote{We use the implementation of 
node2vec available at \url{https://github.com/eliorc/node2vec}
without any modifications.}
We use an embedding dimension 
of 64 and do not modify other default tuning parameters for the 
embedding procedure unless specified,
so that $p=q=1$. 
We
investigate the role of these tuning parameters below, allowing them to vary 
as is considered in node2vec.
We pass these embedding vectors into k-means clustering, where $k=\kappa$, 
the true 
number of communities present in the network. 
This estimates a community 
assignment for each of 
the 
nodes in the network.

\begin{figure}[ht]
    \centering
    \begin{subfigure}{0.495\textwidth}
       \centering 
       \includegraphics[width=\textwidth]{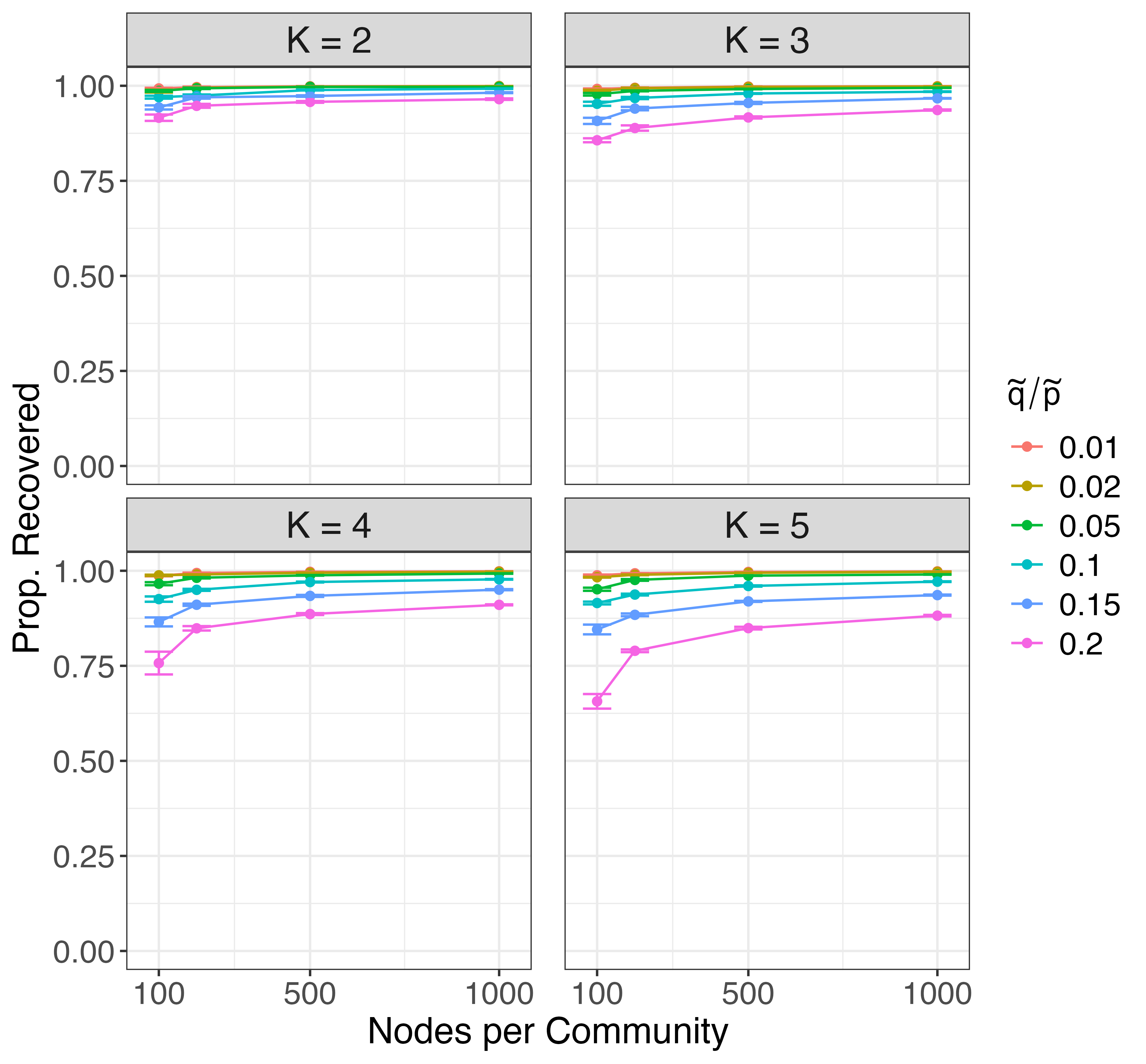}
       \caption{A relatively sparse SBM.}
    \end{subfigure}
    \hfill
    \begin{subfigure}{0.495\textwidth} 
    \includegraphics[width=\textwidth]{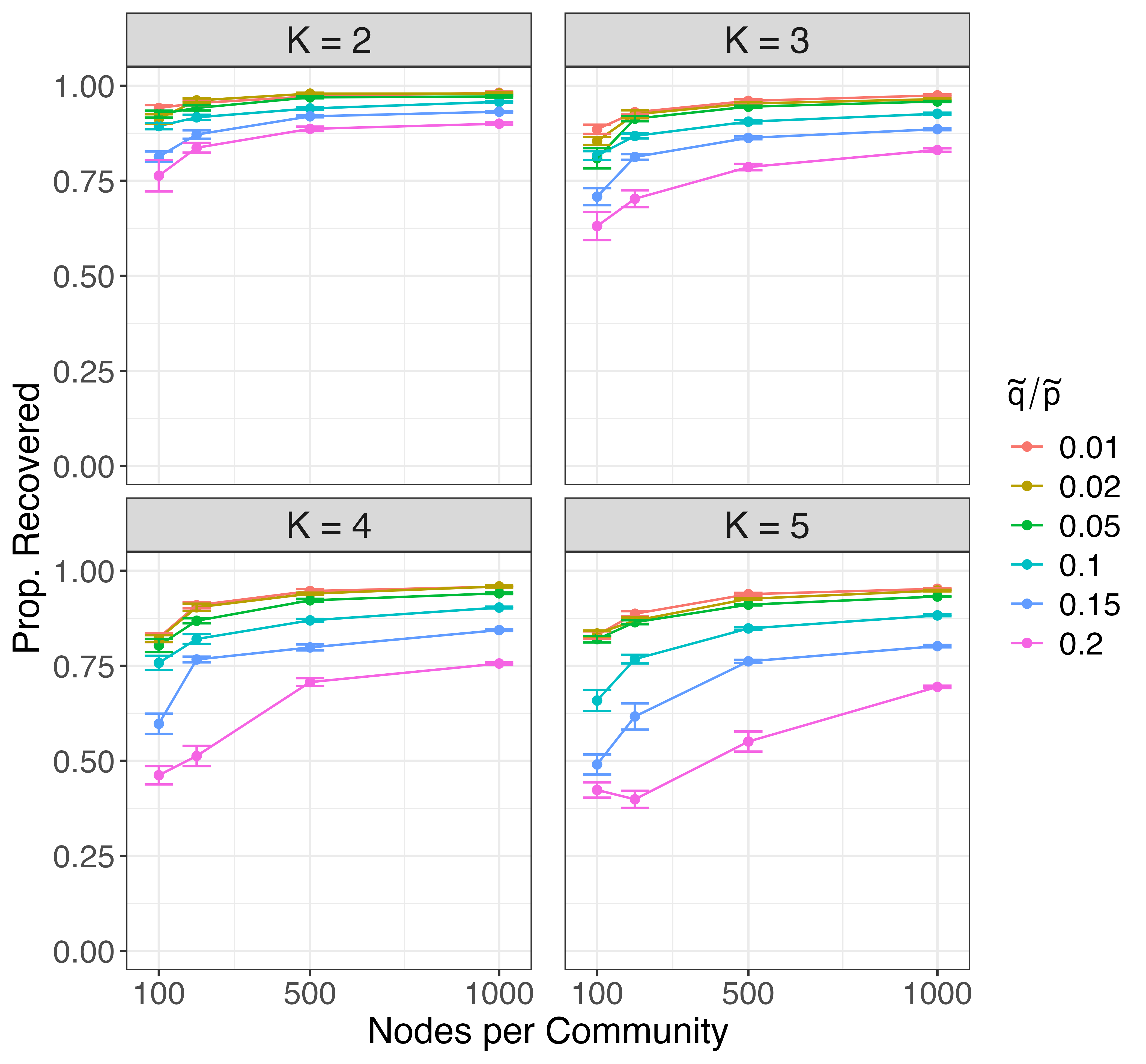}
    \caption{A degree corrected relatively sparse SBM.}
    \end{subfigure}
    % \includegraphics[width=0.85\textwidth]{figures/rsparse_sbm_pcr.png}
    % \caption{Proportion of nodes correctly assigned to true community from k-means
    % clustering of the node2vec embeddings. Here we consider vary the number of nodes
    % per community, the number of communities and the signal strength $\beta=q/p$. For each
    % setting we consider 10 simulations, and show the mean proportion of nodes correctly
    % recovered, with error bars indicating 
    % one standard error.}
    \caption{Proportion of nodes correctly recovered for both the regular and degree
    corrected relatively sparse SBM.}
    \label{fig:rsparse_pcr}
\end{figure}

To evaluate the performance of our procedure, we compute the proportion of 
nodes correctly classified, up to permutation of the community assignments. For each simulation 
setting we perform 10 replications. We show the resulting estimates in 
Figure~\ref{fig:rsparse_pcr}(a), 
for the relatively sparse setting where $\rho_n = \log(n)/n$. For all settings, the 
proportion of nodes assigned to the correct community by k-means clustering of the node2vec 
embeddings is high, particularly when the ratio of the between to within community edge 
probabilities, $\beta$, is small. As expected, as we increase the number of nodes in the 
network, a larger proportion of nodes are correctly recovered. 
We examine the empirical rate of convergence of this procedure in the Appendix. 
This appears to be  approximately super-linear for dense networks 
and sub-linear for relatively sparse networks. Compared to 
the results of \cite{zhang2023fundamental},
this indicates that node2vec may be supoptimal.
In the Appendix we also show community recovery using
normalized mutual information (NMI) \citep{danon2005comparing}.
We also see good performance.

We can similarly evaluate the performance of node2vec for data generated from a degree corrected
SBM (DC-SBM). To generate such networks we modify the simulation setting used
by \citep{gao-dcsbm}. We generate the degree correction parameters 
$\theta_u = |Z_u|+1 - (2\pi)^{-1/2}$ where $Z_u\sim N(0, \sigma = 0.25)$ and incorporate
these 
into the $\text{SBM}(n/\kappa,\kappa,\tilde{p},\tilde{q}, \rho_n)$ considered previously. 
Two nodes $u$ and $v$ in the same community will have connection probability
$\theta_u \theta_v \rho_n \tilde{p}$ while for nodes in different communities it will be
$\theta_u \theta_v \rho_n \tilde{q}$.
We again learn an embedding of the nodes using a default implementation of 
node2vec and cluster these embedding vectors using k-means
clustering. We show the corresponding results, in terms of the proportion of the nodes 
assigned to their correct communities under this setting in 
Figure~\ref{fig:rsparse_pcr}(b).
As expected, the degree corrections make community recovery somewhat
more challenging however as
we increase the number of nodes in the network, we are able to correctly recover a 
high proportion of nodes.

% \begin{figure}[ht]
%     \centering
%     \includegraphics[width=0.85\textwidth]{figures/rsparse_dcsbm_pcr.png}
%     \caption{The proportion of nodes correctly recovered by node2vec
%     with k-means clustering, for networks simulated from a DC-SBM. We show the mean and 
%     errorbars denoting one standard error, across 10 replications
%     for each setting. Performance improves as $n$ increases.}
%     \label{fig:rsparse_dcsbm_pcr}
% \end{figure}

We next wish to examine empirically the role of the unigram 
parameter
$\alpha$ of Equation~\eqref{uni_gram},
and how this affects community detection. While the 
previous theoretical 
results require $\alpha=1$ for weak consistency
of community recovery in the DC-SBM,
we investigate if good empirical performance 
is possible with other choices of this parameter. We consider 
the DC-SBM simulation described previously, 
where we now vary $\alpha\in\{-1, 0, 0.25, 0.5, 0.75,1\}$ when learning the node embeddings.
For each of these settings (with all other parameters as before)
we consider the proportion of nodes correctly recovered. 
We show this result for networks with $\kappa=2$ communities in 
% the relatively sparse network setting in
Figure~\ref{fig:alpha_rsparse_dcsbm_pcr}.
These experiments indicate 
similar performance for a range of values of $\alpha$. 
Further work is needed to confirm the guarantees do indeed extend 
to these alternative choices of $\alpha$, and we investigate this for real networks in
Section~A of the appendix.

We also investigate the role of the node2vec tuning parameters $p$ and $q$ on performance.
For $\kappa=2$ we consider $\beta=0.01$ and $\beta=0.2$, giving networks with strong
and weak associative community structure respectively.
We simulate from the previous relatively sparse DC-SBM with varying numbers of nodes and
fit node2vec, using $p,q\in\{0.5, 1, 2\}$. As the number of nodes in the network increases all choices of 
$p$ and $q$ give similar good performance for both choices of $\beta$.
This indicates that the impact of these sampling parameters becomes limited as the networks
become sufficiently large.
We provide further discussion and a visualization of this result
in Appendix~A.

% \begin{figure}[ht]
%     \centering
%     \includegraphics[width=0.75\textwidth]{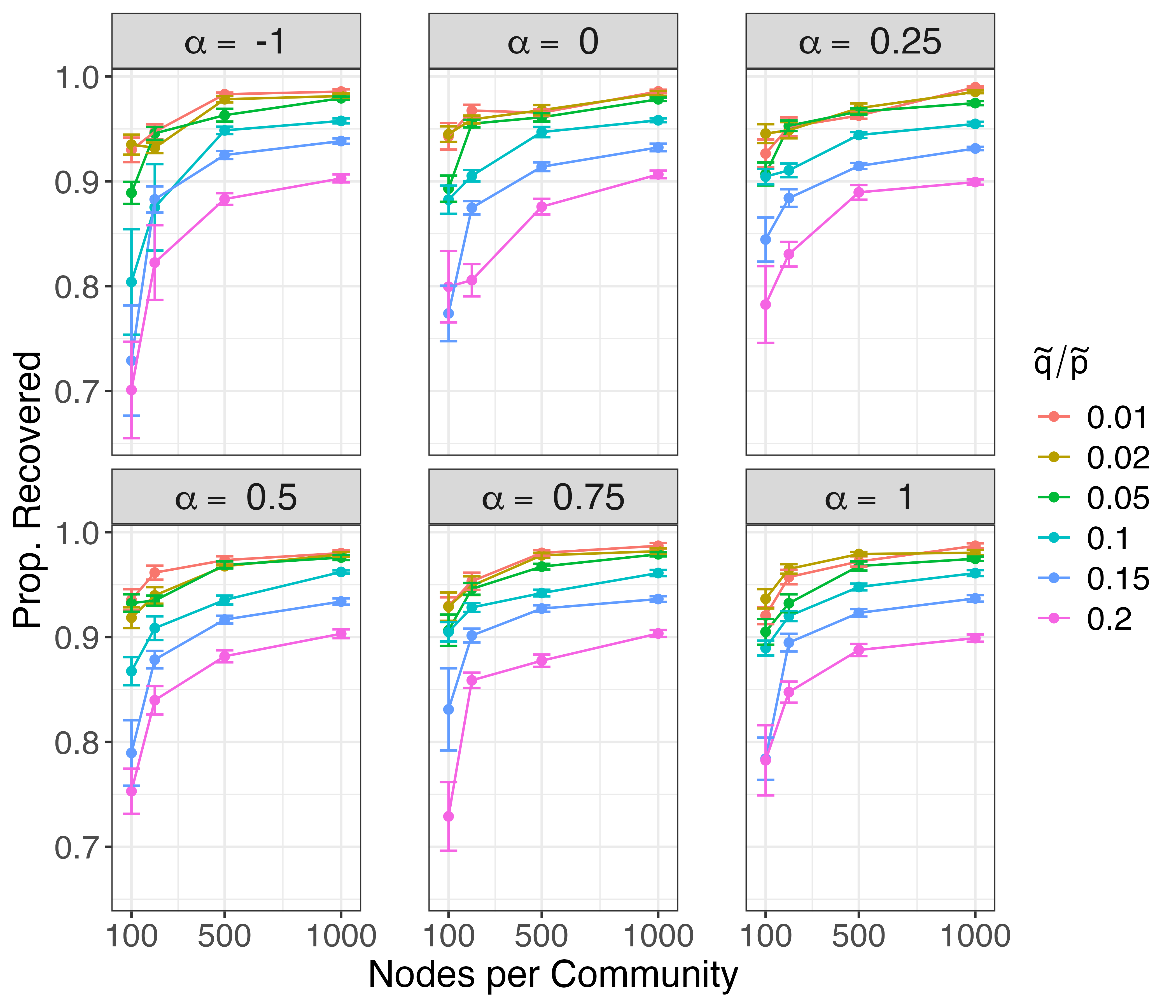}
%     \caption{Proportion of nodes correctly recovered as we vary the negative sampling parameter in node2vec. We 
%     show the mean and one standard error for each setting.
%     As the number of nodes increases community recovery improves, with similar
%     performance for each choice of $\alpha$.}
%     \label{fig:alpha_rsparse_dcsbm_pcr}
% \end{figure}

\begin{figure}

\begin{minipage}{0.62\textwidth}
    % \begin{figure}
    % \centering
    \includegraphics[width=0.95\textwidth]{figures/alpha_rsparse_dcsbm_pcr.png}
    \caption{Proportion of nodes correctly recovered as we vary the negative sampling parameter in node2vec with
    mean and one standard error for each setting.
    We see similar performance for each choice of $\alpha$.}
    \label{fig:alpha_rsparse_dcsbm_pcr}
% \end{figure}
\end{minipage}
\hfill
\begin{minipage}{0.32\textwidth}
    % \begin{figure}
        % \centering
        \includegraphics[width=\textwidth]{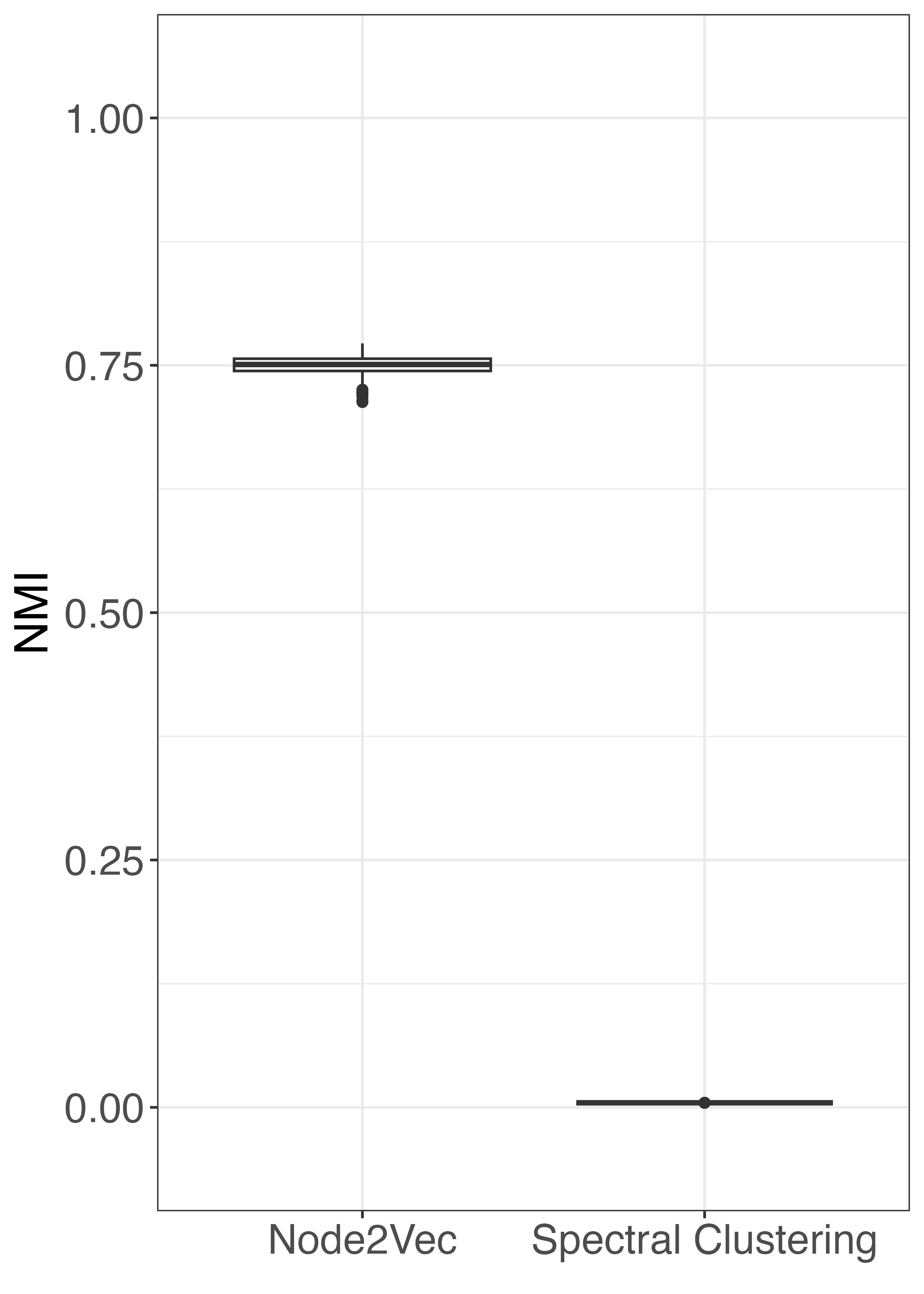}
        \caption{Node2vec with 
        k-means clustering can recover the communities in the political blog data while spectral 
        clustering fails.}
        \label{fig:blogs}
    % \end{figure}
\end{minipage}

\end{figure}

% \begin{figure}[ht]
%     \centering
%     \begin{subfigure}{0.64\textwidth}
%        \centering 
%        \includegraphics[width=\textwidth]{figures/alpha_rsparse_dcsbm_pcr.png}
%        \caption{Varying the negative sampling parameter in node2vec with the mean and
%        one standard error for each setting. We see similar
%        performance for all choices of $\alpha$.}
%     \end{subfigure}
%     \hfill
%     \begin{subfigure}{0.34\textwidth} 
%     \includegraphics[width=\textwidth]{figures/blog_comp1.png}
%     \caption{Performance of node2vec and spectral clustering for community recovery on the political 
%     blog data.}
%     \end{subfigure}
%     \caption{}
%     \label{fig:alpha_real}
% \end{figure}

Finally, we briefly examine the performance of our community detection procedure on
the political blog data collected by \citep{polblogs}. As highlighted 
by \citep{karrer_stochastic_2011}, degree heterogeneity makes community 
recovery challenging for methods which do not account for this. We see similar
performance if we cluster using a Gaussian mixture model rather than k-means clustering.
In particular, spectral clustering struggles regardless of the graph Laplacian used.
Our procedure shows excellent community recovery (average
NMI of 0.75) for a range of embedding dimensions and unigram parameter settings as shown
in Figure~\ref{fig:blogs}, with
further details and an additional real network example in Appendix~A.
% \section{CONCLUSION AND FUTURE WORK}
\section{Conclusion and Future Work}
\label{sec:conclusion}

In this work we consider the theoretical 
properties of node embeddings learned
from node2vec. We show, when the network
is generated from a (degree corrected)
stochastic block model, that the embeddings
learned from DeepWalk and node2vec converge
asymptotically to vectors depending only on
their community assignment. As a result,
we show that K-means clustering
of the node2vec embedding vectors can provide weakly
consistent estimates of the true community assignments
of the nodes in the network. We verify these results empirically
using simulated networks.

There are several important future directions
which can extend this work. One direction is in
extending the recovery results within the degree corrected
SBM to the full range of hyperparmaeters for node2vec, as our simulation
studies indicate that a more general
result may hold. There is also the
matter of increasing the strength of our 
results to give better rates and strongly consistent community
detection; one possible avenue of exploration would be to see
whether our results and the results of \citep{zhang_consistency_2024}
could be combined to achieve this. Another improvement would be to study
the behavior of the random walk on the graph in the sparse regime, 
although this would require a generalization of e.g the result of
\citep{jian2014giantcomponent}. We have also 
not considered the task of estimating $\kappa$, the
number of communities in a SBM model, using the embeddings obtained by
node2vec. This has been considered for alternative approaches 
to community detection, 
(\citep{jin_optimal_2022, le_estimating_2022} are some recent results)
but not in the context of a general embedding of the nodes.
Finally, there is a desire to obtain consistency results
for more recent and complex network embedding methods, such as
\citep{hamilton_inductive_2017} and \citep{velickovic_deep_2018}.

% \begin{itemize}
%     \item a paragraph saying what we've done and that it's novel
%     \item a paragraph on the future work which we're definitely
%     not going to do
% \end{itemize}

\subsubsection*{References}

\printbibliography[heading=none]

%%%%%%%%%
%%%%%%%%%
%%%%% need to update labels in supplement

\newpage

\appendix

\begin{center}
\textbf{\Large Supplemental Materials: Community Detection
Guarantees Using Embeddings Learned by Node2Vec}
\end{center}

The Supplementary Material consists of the proofs of the results
stated within the paper, along with some extra discussions which
would detract from the flow of the main paper. We also provide some
additional simulation results relating to node classification, and
community detection when measuring performance in terms of the ARI metric.

\setcounter{equation}{0}
\setcounter{theorem}{0}
\setcounter{figure}{0}
\setcounter{table}{0}
\makeatletter
\renewcommand{\theequation}{S\arabic{equation}}
\renewcommand{\thefigure}{S\arabic{figure}}
\renewcommand{\thelemma}{S\arabic{lemma}}
\renewcommand{\thetheorem}{S\arabic{theorem}}

\section{Additional Experimental Results}
% \subsection*{Supplement: Additional Simulations}
% \subsection{Experimental Results}

Here we provide additional details describing the experimental results presented in 
the main paper. We also describe additional experiments.
All experiments were run on a computing cluster utilising 4 cores of an Intel E5-2683 v4 Broadwell 2.1GHz CPU 
or similar with 2 GB of memory per core. 
Each individual experimental run required at most 2 hours of computing time.
All experiments, including initial preliminary experiments,
required approximately 25k CPU hours.
All code required to reproduce all results is included in the code repository in the supplemental files.

\paragraph{Additional Simulation, Node Classification}
We provide a simple experiment to support the theoretical results on node classification 
demonstrated 
in Section~D of the appendix.
We simulate data from a $\text{SBM}(n/\kappa,\kappa, \tilde{p}, \tilde{q}, \rho_n)$ as before
with $\tilde{q}=\tilde{p}\beta$ as in the main text. We learn an embedding of each node using node2vec 
with embedding dimension of 64 and all other parameters set at 
their default values.
We then use the true community labels of 10\% of these nodes to 
train a (multinomial) logistic regression classifier, and predict the class
label for the remaining 90\% of nodes in the network.
We examine the performance of this classification tool using
the node2vec embeddings in terms of classification accuracy.
We show these results in Figure~\ref{fig:class_acc} for $\rho_n = \log(n)/n$, with 
10 simulations for each setting, with the mean across these simulations
and error bars indicating one standard error.
This classifier has excellent accuracy at
predicting the labels of other nodes.

\begin{figure}[ht]
    \centering
    \includegraphics[width=0.75\textwidth]{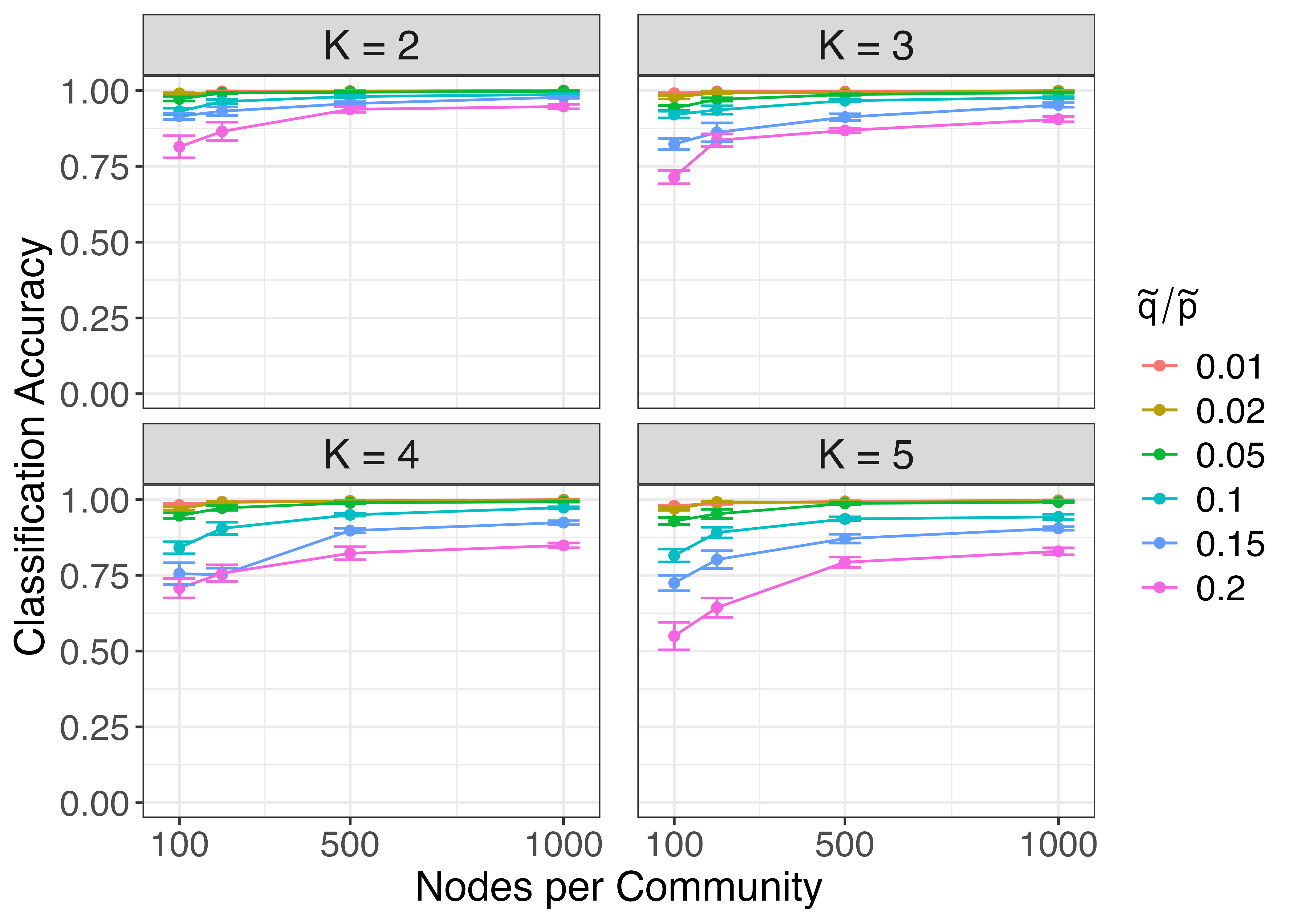}
    \caption{Classification accuracy using 10\% of the node embeddings to learn a
    multinomial logistic regression classifier. Mean and one standard error shown.}
    \label{fig:class_acc}
\end{figure}

\paragraph{Additional Results, Community Detection}
Here we include additional simulation results which were omitted from the
main text. In particular, for the simulations considered in the main manuscript 
we now examine the community recovery performance in 
terms of the normalized mutual information \citep{danon2005comparing}.
% terms of the adjusted rand index (ARI) \citep{hubert_comparing_1985}.
We show the average NMI score across these simulations, along with error bars 
corresponding to one standard error.
In each case, the NMI metric is similar to the 
proportion of nodes correctly recovered. As we increase the number of nodes
this performance improves.

\begin{figure}[ht]
    \centering
    \includegraphics[width=0.75\textwidth]{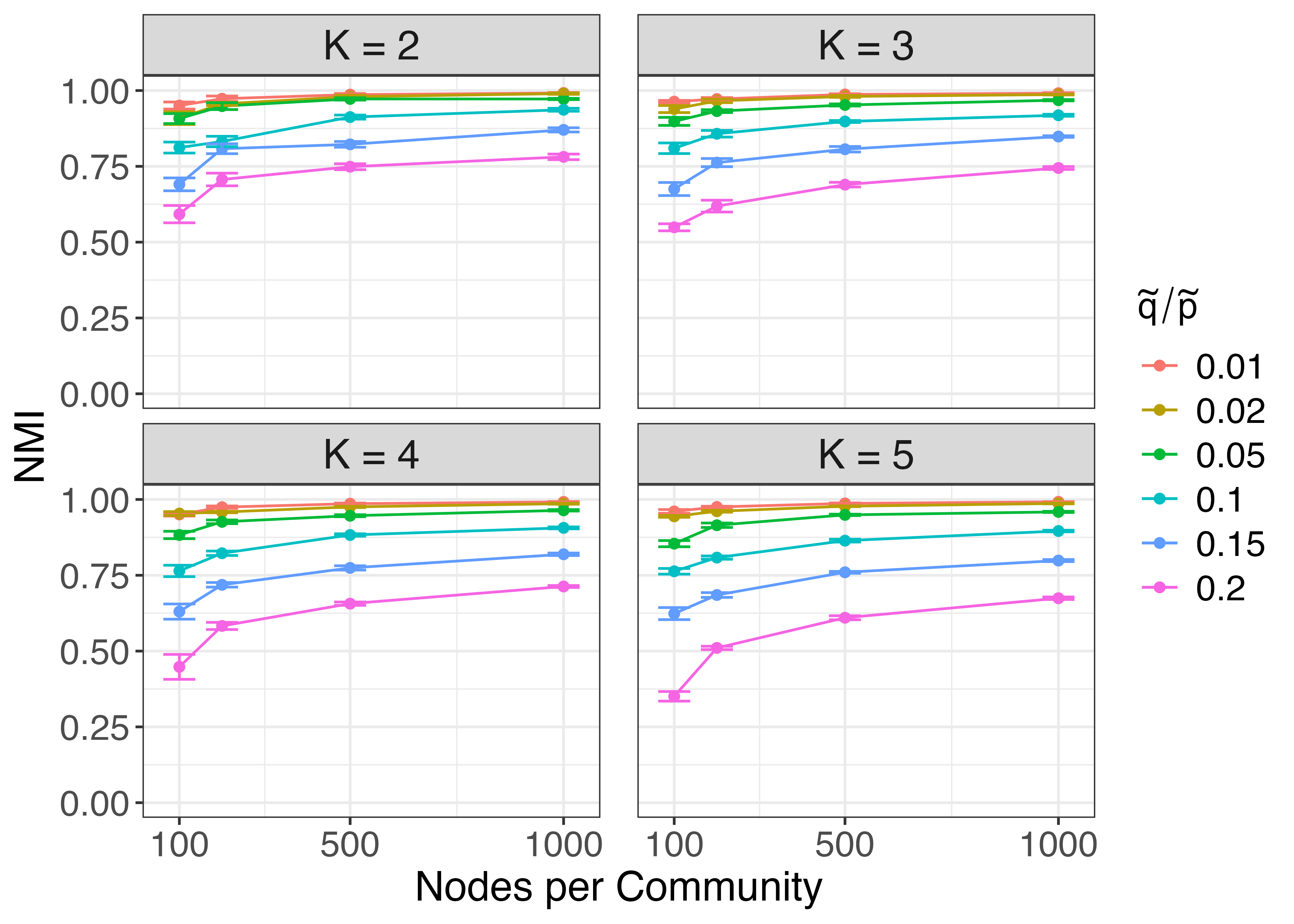}
    \caption{NMI for relatively sparse SBM. Mean and one standard error shown.}
    \label{fig:rsparse_sbm_nmi}
\end{figure}

\begin{figure}[ht]
    \centering
    \includegraphics[width=0.75\textwidth]{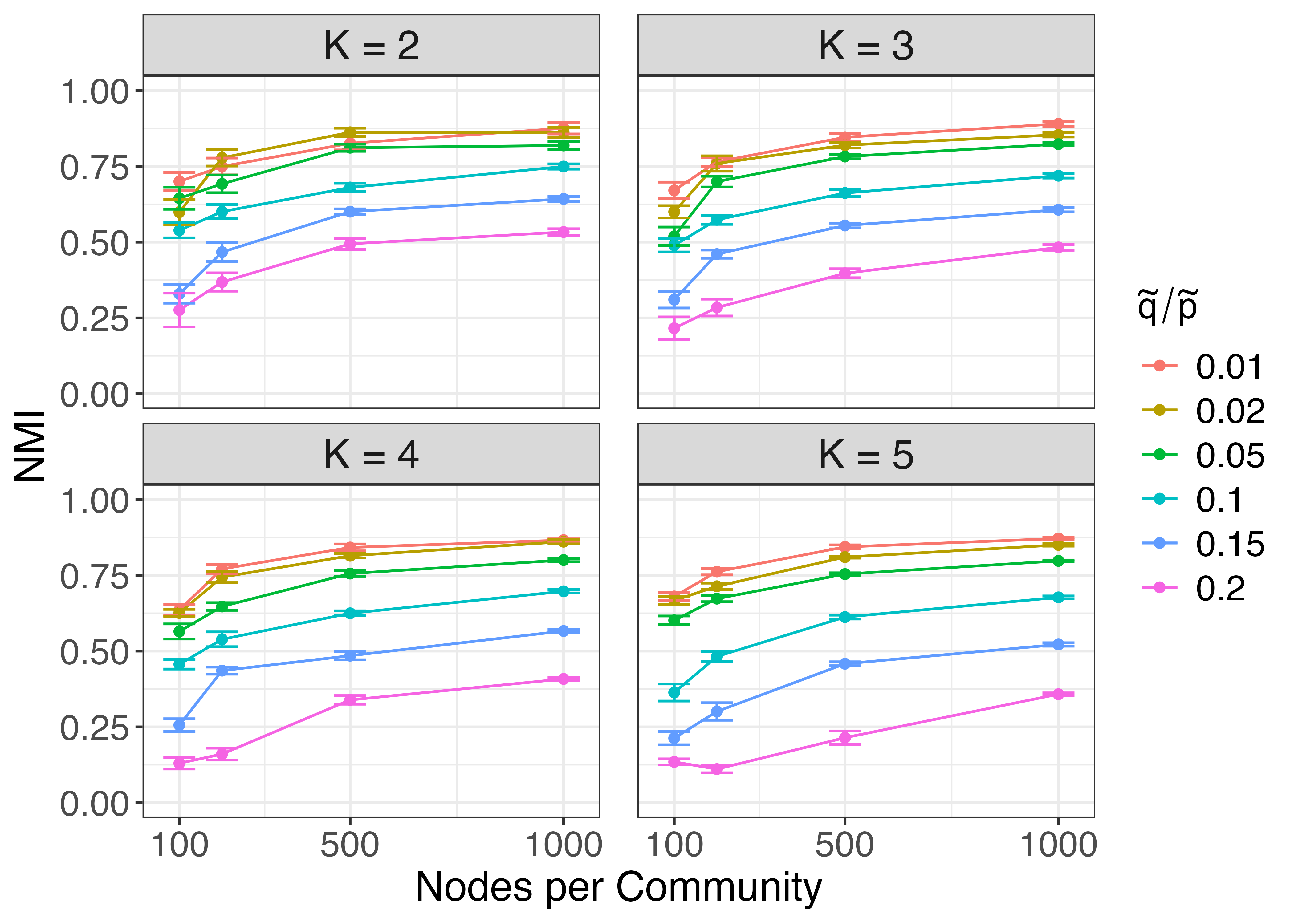}
    \caption{NMI for relatively sparse DC-SBM. Mean and one standard error shown.}
    \label{fig:rsparse_dcsbm_nmi}
\end{figure}

\begin{figure}[ht]
    \centering
    \includegraphics[width=0.75\textwidth]{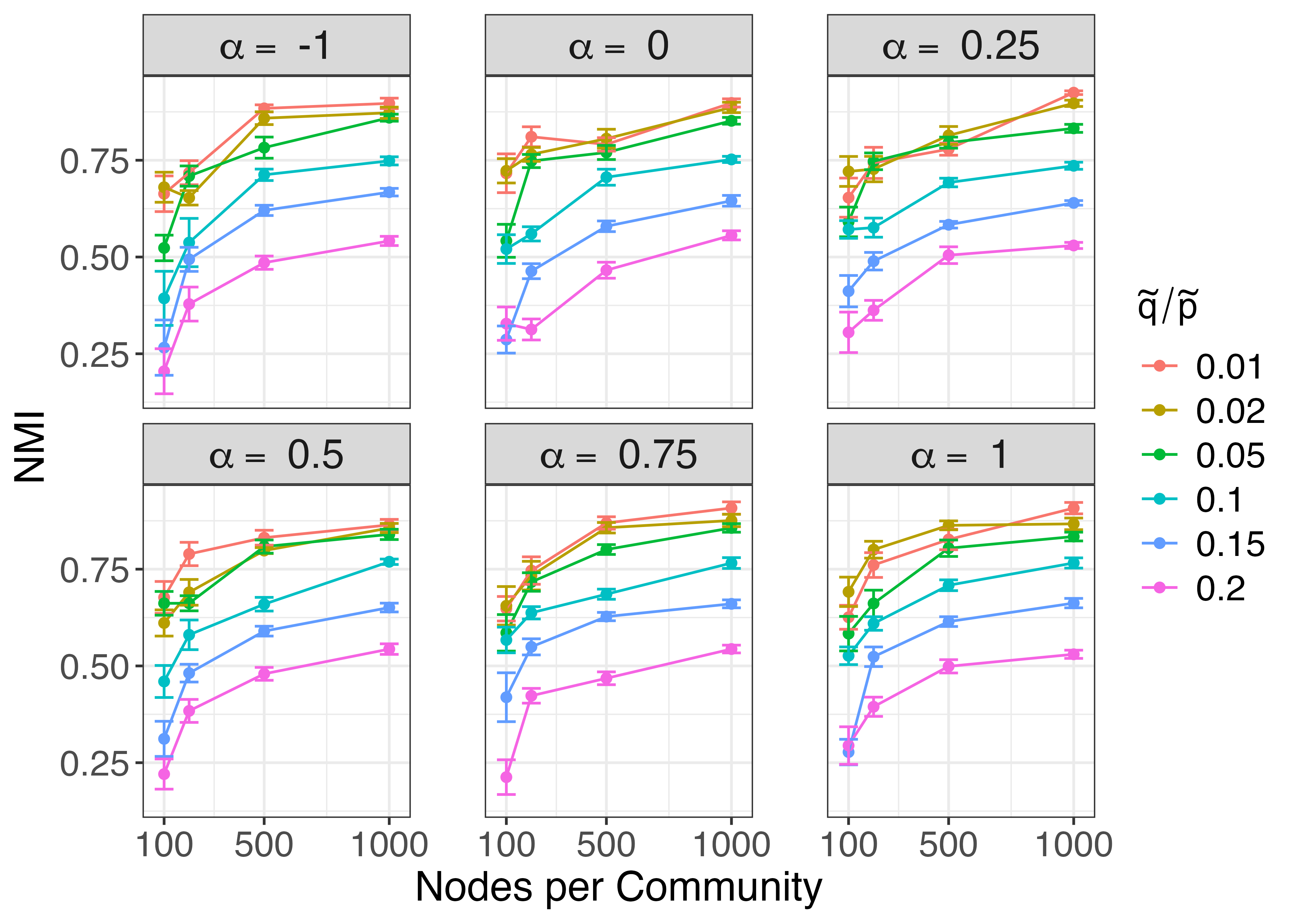}
    \caption{NMI varying $\alpha$ for relatively sparse DC-SBM. 
    Mean and one standard error shown.}
    \label{fig:alpha_rsparse_dcsbm_nmi}
\end{figure}

\paragraph{Rates of Convergence}

We can also investigate the empirical convergence of these methods. Here, we consider the same 
simulated SBM data as above, and examine the convergence in the proportion of nodes 
correctly recovered, as we increase the number of nodes in the network, for $\kappa=2,3,4,5$.
We empirically investigate this convergence using a log-log plot, which is shown
in Figure~\ref{fig:roc_rsparse} for a relatively sparse SBM. 
Our node2vec procedures demonstrates empirical convergence which is super-linear for dense
networks while being sub-linear for relatively sparse networks.

\begin{figure}[ht]
    \centering
    \includegraphics[width=0.75\textwidth]{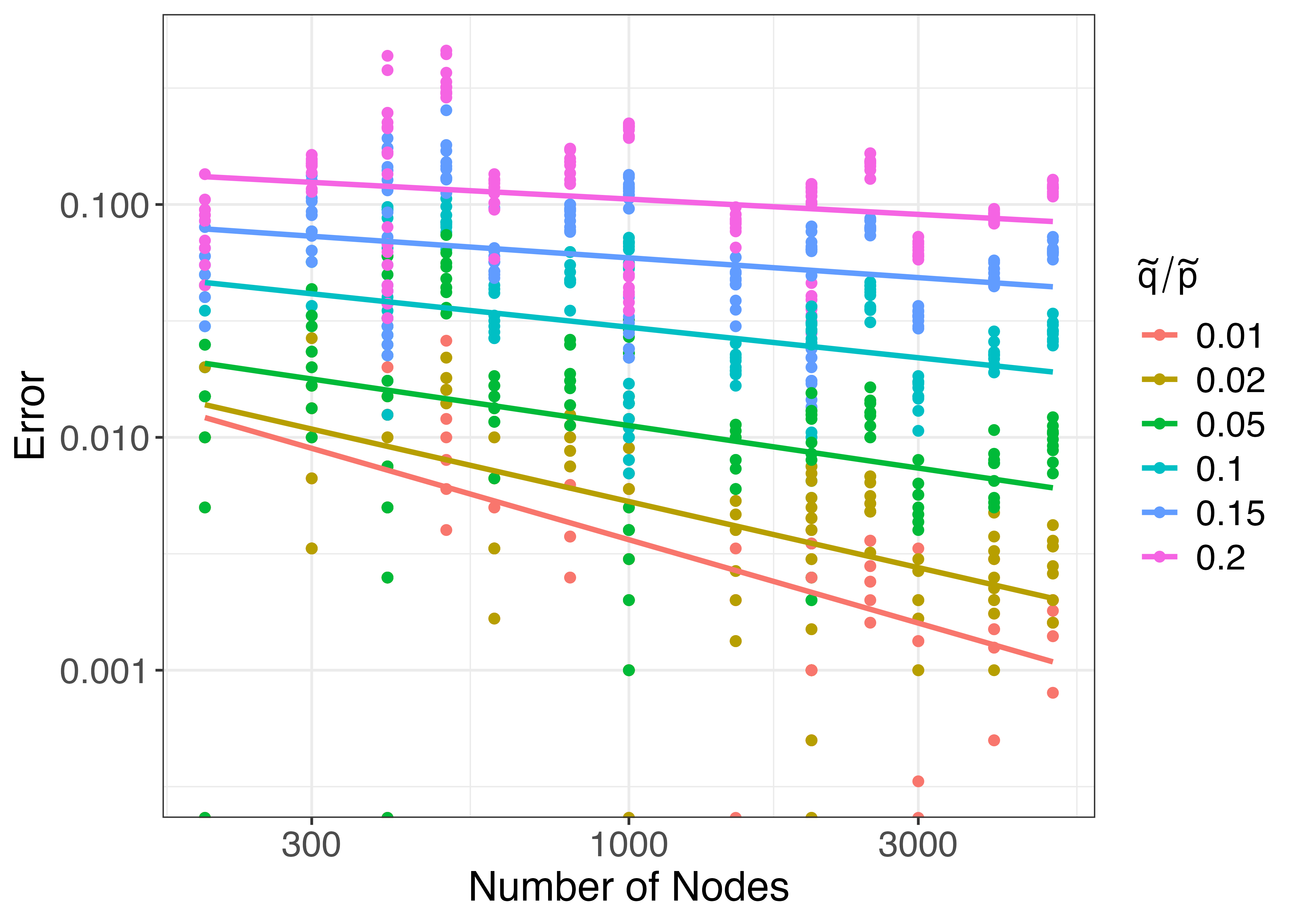}
    \caption{Log-Log plot showing the rate of convergence as we increase the number of nodes in the network. We show a fitted regression for each of the values of $\beta$, showing better convergence
    when the difference between the within and between community edge probabilities is higher.}
    \label{fig:roc_rsparse}
\end{figure}

\paragraph{Varying the node2vec walk parameters}

We also wish to examine the performance of our proposed clustering procedure
when the parameters of the random walk are varied. While $p$ and $q$ are both 
commonly chosen to be $1$, resulting in a simple random walk, other values are possible.
We consider data simulated from the relatively sparse DC-SBM considered previously with $\kappa=2$
communities and consider the within between community probability ratio $\beta = .01$ and
$\beta=0.2$, corresponding to an easier and harder setting to recover the communities respectively.
We then consider $p,q\in \{0.5, 1, 2\}$, the common possible values and vary the number of nodes
in each community as before. For each of these settings we perform community detection using node2vec and spectral
clustering. When $\beta=0.01$ weobtain excellent community recovery for all values of $p$ and $q$, as shown in 
Figure~\ref{fig:pq_pars}(a). When $\beta=0.2$ community recovery is more challenging
for small networks for all values of $p$ and $q$. As the number of nodes increases, Figure~\ref{fig:pq_pars}(b)
shows that all choices of $p$ and $q$ result in good performance.

% \begin{figure}[ht]
%     \centering
%     \includegraphics[width=0.75\textwidth]{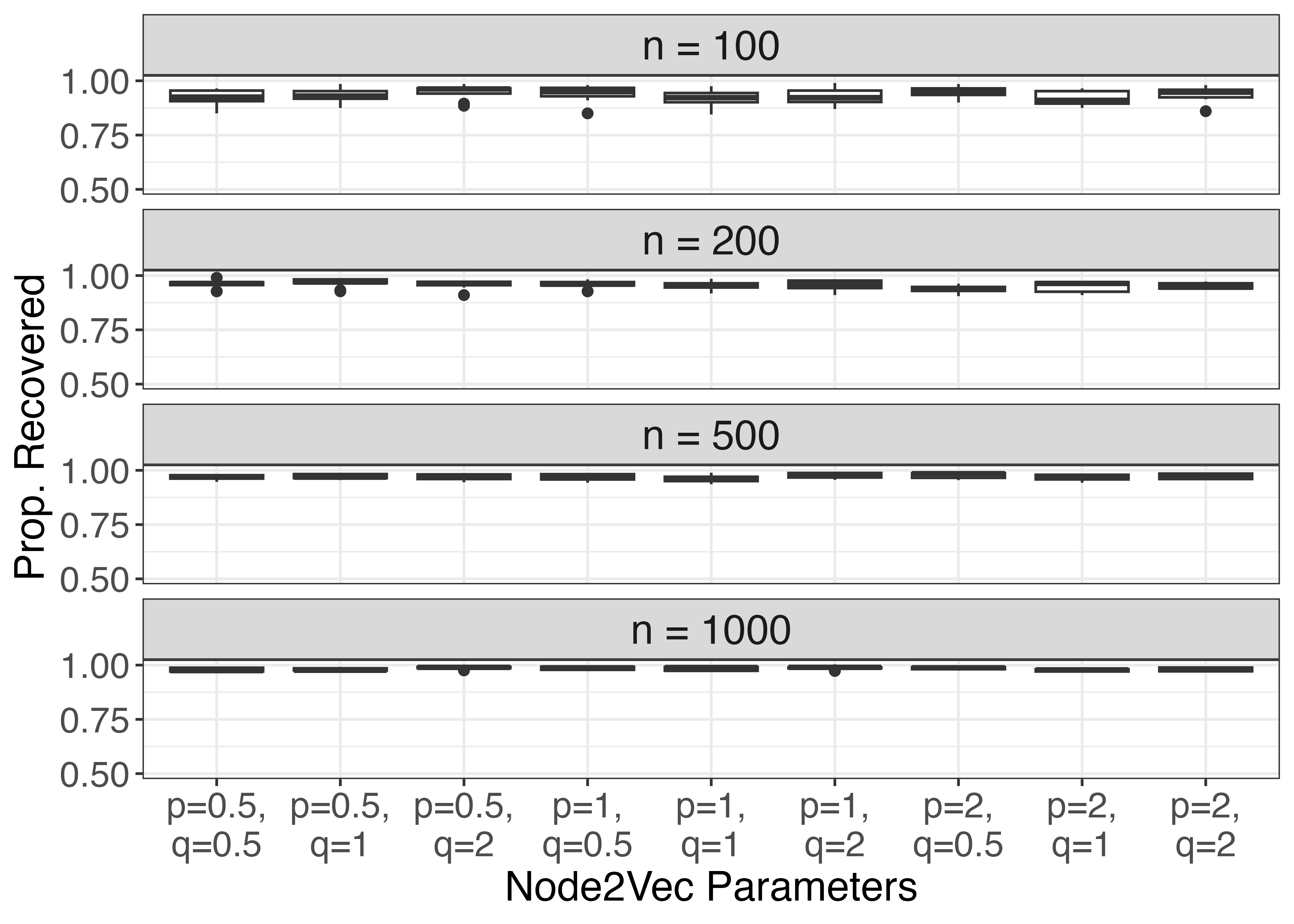}
%     \caption{Community recovery using node2vec and spectral clustering for different 
%     random walk parameters. In each case we can recover the true community structure.}
%     \label{fig:pq_pars}
% \end{figure}

\begin{figure}[t!]
    \centering
    \begin{subfigure}[t]{0.5\textwidth}
        \centering
        \includegraphics[width=\textwidth]{figures/pq_pars.png}
        \caption{$\beta = 0.01$}
    \end{subfigure}%
    ~ 
    \begin{subfigure}[t]{0.5\textwidth}
        \centering
        \includegraphics[width=\textwidth]{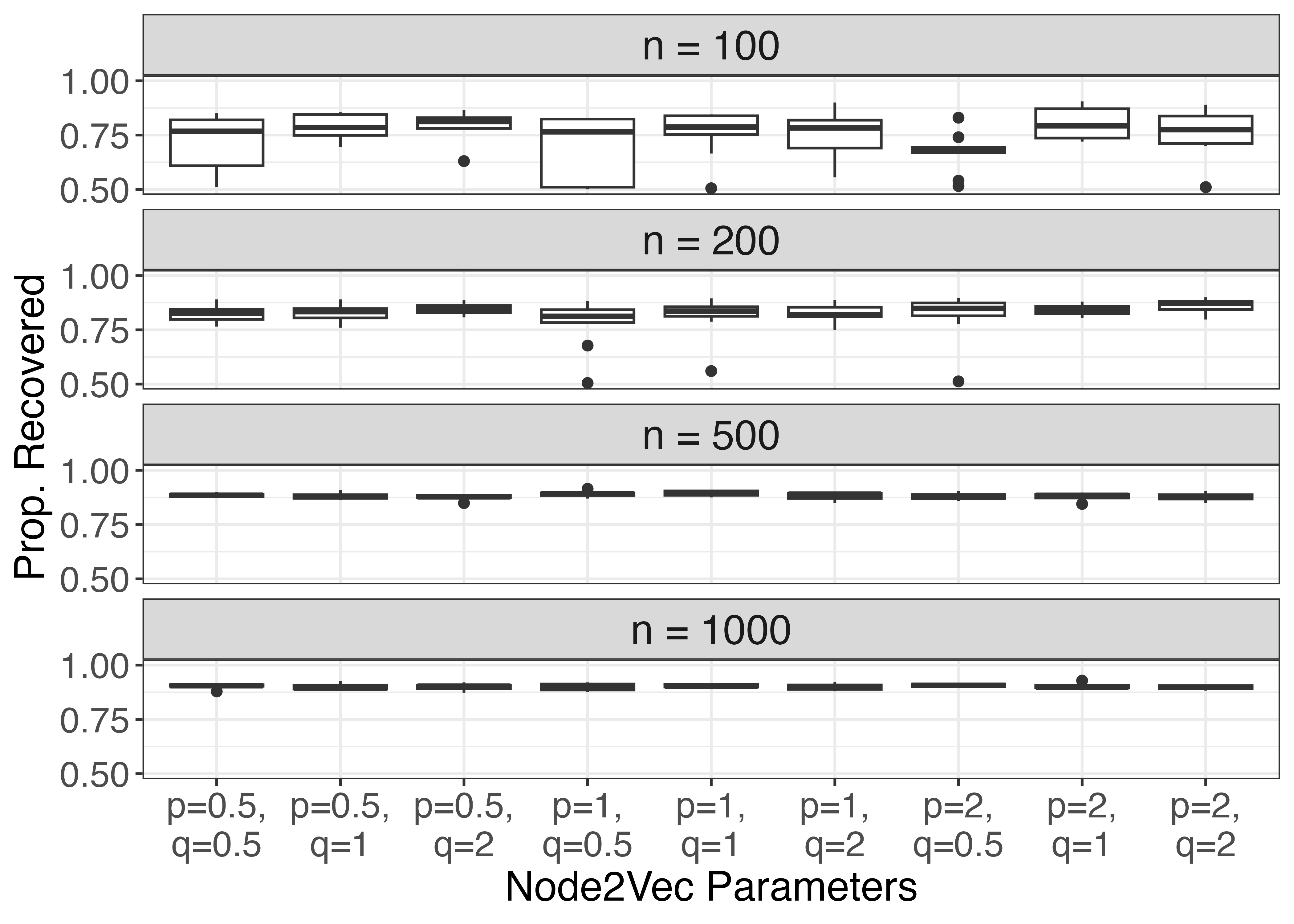}
        \caption{$\beta=0.2$}
    \end{subfigure}
    \caption{Varying the node2vec sampling parameters for DC-SBMs with $\beta=0.01$ (left) and $\beta = 0.2$ (right). Community recovery is harder when $\beta$ is larger and this is seen for all values of $p$ and 
    $q$ for small networks. As the number of nodes increases we get good community recovery for all
    choices of $p$ and $q$.}
    \label{fig:pq_pars}
\end{figure}

% \begin{figure}

% \begin{minipage}{0.5\textwidth}
%     % \begin{figure}
%     % \centering
%     \includegraphics[width=\textwidth]{figures/pq_pars.png}
%     % \caption{Boxplots showing the proportion of nodes correctly recovered when $\beta=0.01$.
%     % We recover the communities for all choices of $p$ and $q$.}
%     \label{fig:pq_pars}
% % \end{figure}
% \end{minipage}
% \hfill
% \begin{minipage}{0.5\textwidth}
%     % \begin{figure}
%         % \centering
%         \includegraphics[width=\textwidth]{figures/pq_pars_2.png}
%         % \caption{Boxplots showing the proportion of nodes correctly recovered when $\beta=0.2$.
%         %          When the number
%         %          of nodes increases, all values of $p$ and $q$ show good performance.}
%         \label{fig:pq_pars_2}
%     % \end{figure}
% \end{minipage}
% \caption{Overall caption.}

% \end{figure}

\subsection{Performance on Real Networks}
We wish to further examine the performance of this community detection procedure 
for real networks, with known community structure. We also wish to compare 
this procedure to spectral clustering, which is widely used in practice for community detection.
We use two publicly available networks containing known community structure.
We first consider a network
of emails between 1005 members of a large 
research institution, available as part of the Stanford Network
Analysis Project \citep{snapnets}.
There are 25571 directed edges between the nodes in this network,
with known ground truth communities consisting of 42 departments present in this institution.
We also consider a widely used dataset of directed edges between 1490 U.S political blogs,
collected before the 2004 elections \citep{polblogs}. Here the directed 
edges correspond to hyperlinks, with ground truth communities corresponding
to whether the blogs has been identified as liberal or conservative.

For each of these datasets we compare the community recovery of Node2Vec and traditional spectral clustering,
using the normalized graph Laplacian.
As is common in the literature, we remove the direction from these edges and take the largest connected component,
forming symmetric adjacency matrices with 986 and 1222 nodes respectively.
We then use the previously described procedure to perform community detection using Node2Vec. We consider a range
of embedding dimensions ($d=16,32,64,128,256$) and unigram sampling parameter
($\alpha=-1,0.0, 0.25, 0.5, 0.75, 1.0$), while keeping all
other parameters fixed at the defaults considered before. With the true number of communities
known, we then compare the estimated communities from 10 simulations for each of these parameter settings,
along with performing 10 simulations of spectral clustering for each of these settings.

\begin{figure}[ht]
    \centering
    \includegraphics[width=0.75\textwidth]{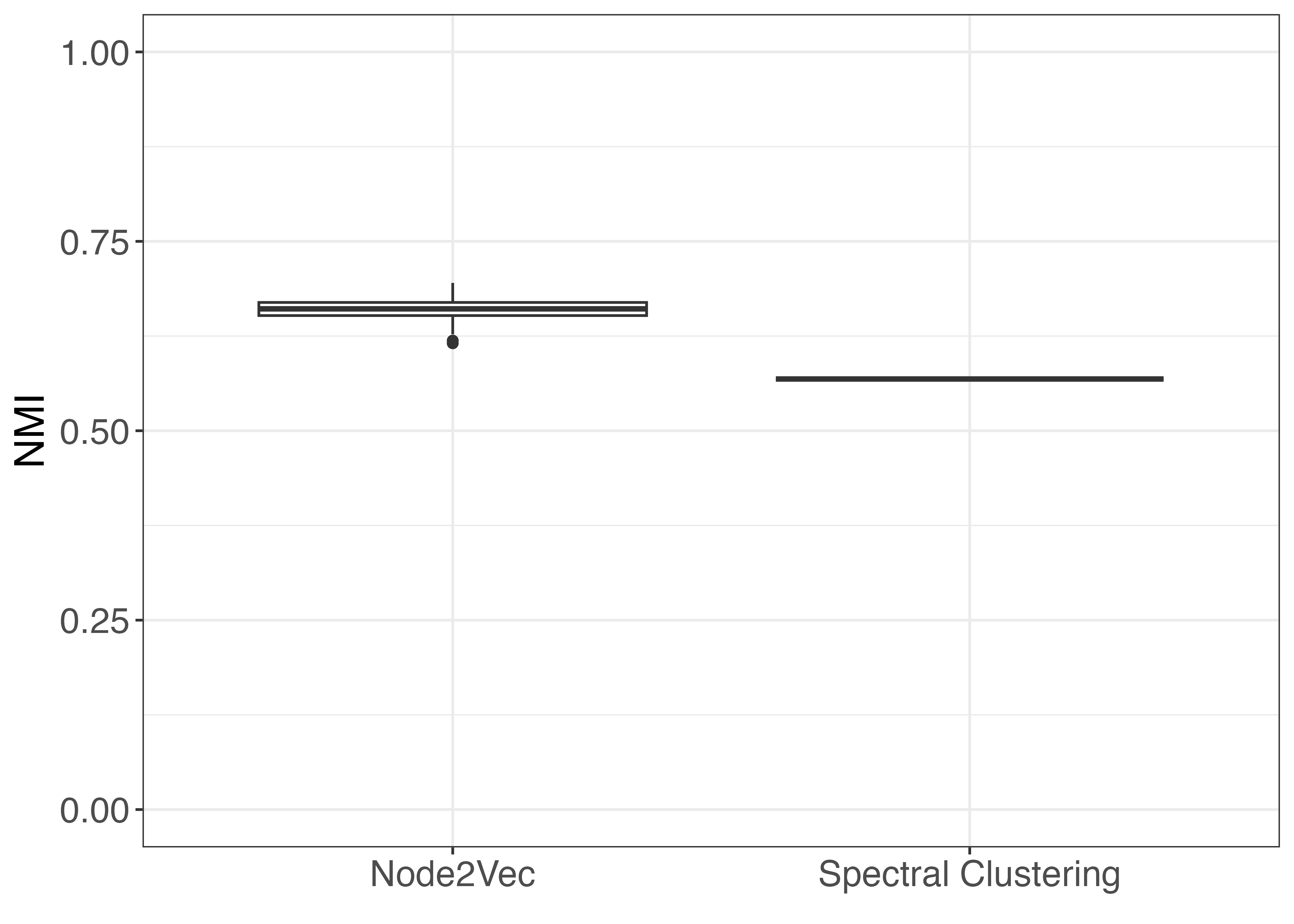}
    \caption{Community recovery for the Email data, using both Node2Vec and Spectral Clustering.
    Node2Vec can better recover the true communities.}
    \label{fig:snap1}
\end{figure}

\begin{figure}[ht]
    \centering
    \includegraphics[width=0.75\textwidth]{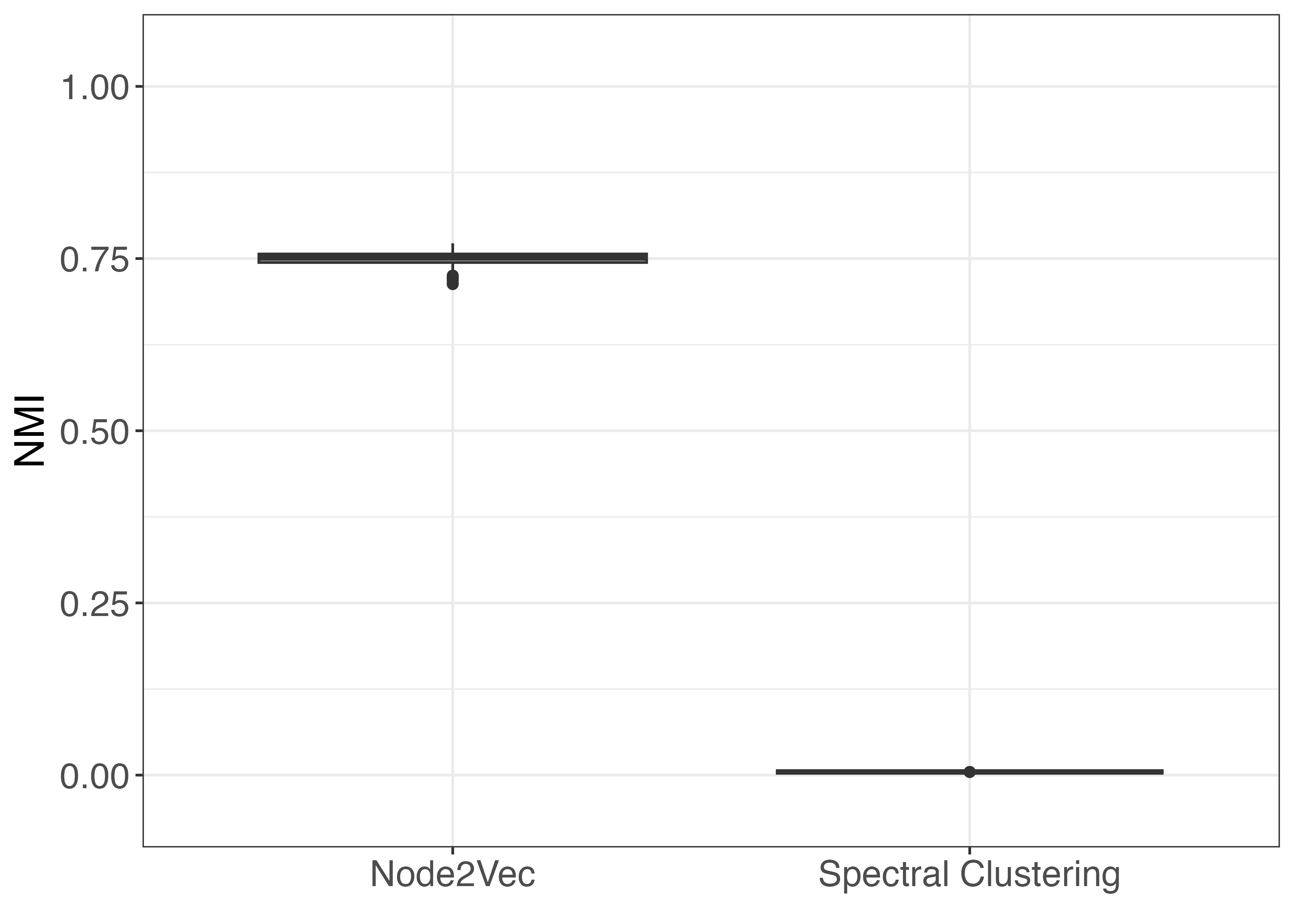}
    \caption{Community recovery for the Political Blog data, using both Node2Vec and Spectral Clustering.
    Node2Vec can better recover the true communities.}
    \label{fig:blogs1}
\end{figure}

In Figure~\ref{fig:snap1} we compare the performance of Node2Vec and spectral clustering for the Email network
and in Figure~\ref{fig:blogs1} we use the Political Blogs network.
We measure community recovery in terms of the normalized mutual information (NMI)
between the estimated and true communities. Other metrics such as
the adjusted rand index (ARI) showing similar results.
In each case the communities estimated by Node2Vec are substantially closer to the true communities
than those estimated by spectral clustering. 
As highlighted by \citet{karrer_stochastic_2011} for the political blog data, 
models which do not account for degree heterogeneity can struggle 
to recover the underlying community structure. As shown in Figure~\ref{fig:blogs1},
spectral clustering is unable to recover the
communities due to this heterogeneity, while clustering 
using the Node2Vec embedding shows strong performance at community recovery.

We also further expand on the role of the embedding parameters
in the performance of Node2Vec on these real networks.
In Figure~\ref{fig:snap2} we examine community recovery for the Email data
as we vary the embedding dimension $d$ and the unigram sampling 
parameter $\alpha$. As we vary each of these parameters we see 
good community recovery in all settings. 
For this dataset all choices of embedding dimension and unigram parameter
give good NMI scores.

\begin{figure}
    \centering
    \begin{subfigure}{0.45\textwidth}
       \centering 
       \includegraphics[width=\textwidth]{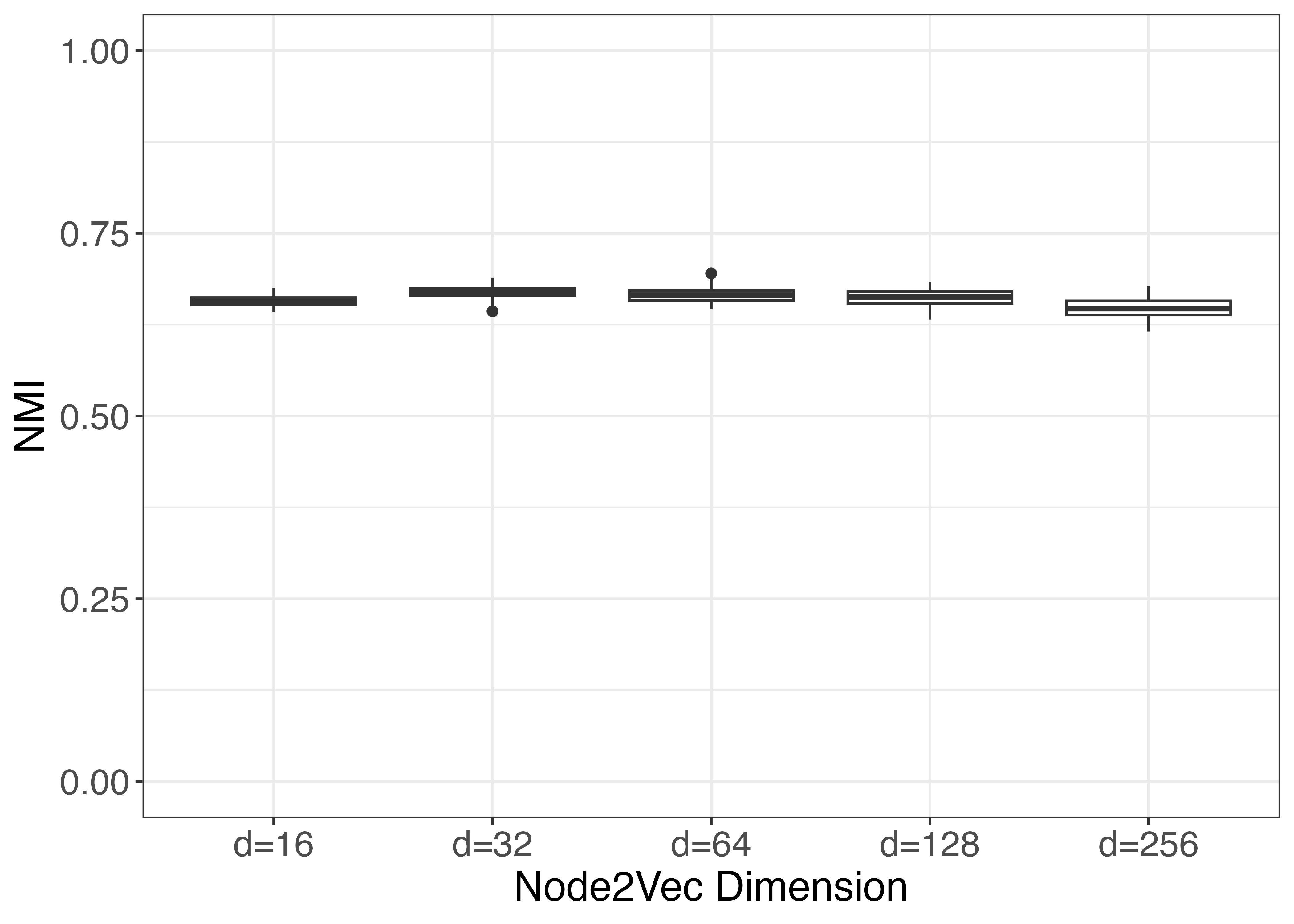}
       \caption{Varying the embedding dimension used.}
    \end{subfigure}
    \hfill
    \begin{subfigure}{0.45\textwidth} 
    \includegraphics[width=\textwidth]{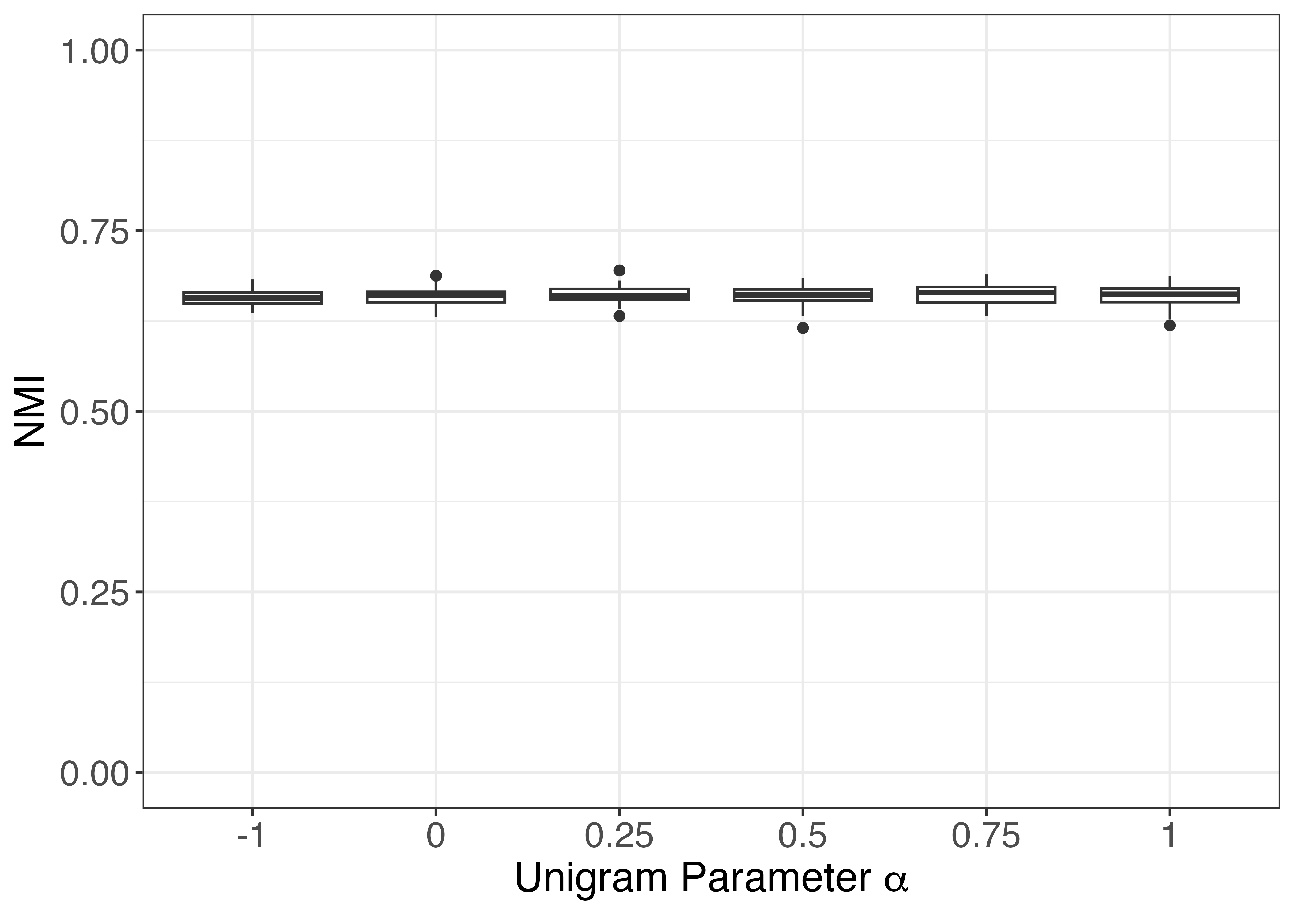}
    \caption{Varying the Unigram Parameter $\alpha$.}
    \end{subfigure}
    \caption{The effect of different Node2Vec parameters on community recovery, measure in terms of 
    Normalized Mutual Information (NMI), for the Email Data.}
    \label{fig:snap2}
\end{figure}

% \begin{figure}
%     \centering
%     \begin{subfigure}{0.45\textwidth}
%        \centering 
%        \includegraphics[width=\textwidth]{}
%        \caption{Varying the embedding dimension used.}
%     \end{subfigure}
%     \hfill
%     \begin{subfigure}{0.45\textwidth} 
%     \includegraphics[width=\textwidth]{}
%     \caption{Varying the Unigram Parameter $\alpha$.}
%     \end{subfigure}
%     % \includegraphics{}
%     \caption{The effect of different Node2Vec parameters on community recovery, measure in terms of Normalized 
%     Mutual Information (NMI), for the Political Blog Data.}
%     \label{fig:snap4}
% \end{figure}

\section{Additional Notation}
We give a brief recap of some of the notation introduced in the main paper, along with some more notation which is used purely within the Supplemntary Material. Throughout, we will suppose that the graph $\mcG = (\mcV, \mcE)$ is
drawn according to the following generative model: each vertex 
$u \in \mcV$ have latent variables $\lambda_u = (c(u), \theta_u)$ where
$c(u) \in [\kappa]$ is a community assignment, and $\theta_u$ is a degree-heterogenity 
correction factor. We then suppose that the edges $a_{uv} \in \{ 0, 1\}$ 
in the graph $\mcG_n$ on $n$ vertices arise independently with probability
\begin{equation}
    \mathbb{P}(a_{uv} = 1 \,|\, \lambda_u, \lambda_v ) 
    = \rho_n \theta_u \theta_v P_{c(u), c(v)}
\end{equation}
for $u < v$, with $a_{uv} = a_{vu}$ by symmetry for $u > v$\footnote{To prevent notation overloading when $A$ is used to indicate constants, we use $a_{uv}$
to describe the 
presence or absence of an edge between nodes $u$ and $v$ in the supplement, rather than $A_{uv}$ which was used in the main text.}. 
The factor
$\rho_n$ accounts for sparsity in the network. The above model corresponds 
to a degree corrected stochastic block model \citep{karrer_stochastic_2011};
we highlight that the
case where $\theta_u$ is constant across all $u \in \mcV$ corresponds
to the original stochastic block model \citep{holland_stochastic_1983}.
For convenience, we will write
\begin{equation}
    W(\lambda_u, \lambda_v) = \theta_u \theta_v P_{c(u), c(v)} \qquad \text{ so } 
    \qquad \mathbb{P}(A_{uv} = 1 \,|\, \lambda_u, \lambda_v ) 
    = \rho_n W(\lambda_u, \lambda_v).
\end{equation}
We then introduce the notation
\begin{equation}
    W(\lambda_i, \cdot) := \mathbb{E}[ W(\lambda_i, \lambda_j) \,|\, \lambda_i], \qquad 
    \mcE_W(\alpha) := \mathbb{E}[ W(\lambda_i, \cdot)^{\alpha}] \text{ for } \alpha > 0.
\end{equation}
Note that under the assumptions that the community assignments are drawn i.i.d
from a $\mathrm{Categorical}(\pi)$ random variable, and the degree correction factors
are drawn i.i.d from a distribution $\vartheta$ independently of the community
assignments, we have 
\begin{align}
    W(\lambda_i, \cdot) & = 
    \theta_i \cdot \mathbb{E}[\theta] \cdot 
        \mathbb{E}_{j \sim \mathrm{Cat}(\pi)}[ P_{c(i), j} \,|\, c(i)] 
     = \theta_i \cdot \mathbb{E}[\theta] \cdot \sum_{j = 1}^{\kappa} \pi_j P_{c(i), j}, \\
    \mcE_W(\alpha) & = \mathbb{E}[\theta^{\alpha}] \cdot \mathbb{E}[\theta]^{\alpha} \cdot
    \sum_{i = 1}^{\kappa} \pi_i \Big( \sum_{j=1}^{\kappa} \pi_j P_{i, j}   \Big)^{\alpha}
\end{align}
For convenience, we will write $\widetilde{P}_{c(i)} = \sum_{j=1}^{\kappa} 
\pi_j P_{c(i), j}$.

Recall that node2vec attempts to minimize the objective 
\begin{align*}
    \emprisk(U, V) := \sum_{i \neq j} \Big\{ - &\psamp \log( \sigma(  \embedip ) ) 
    \\ & - \nsamp \log(1 - \sigma( \embedip )) \big\} 
\end{align*}
where $U, V \in \mathbb{R}^{n \times d}$, with $u_i, v_j \in \mathbb{R}^d$ denoting
the $i$-th and $j$-th rows of $U$ and $V$ respectively, and
$\sigma(x) := (1 + e^{-x})^{-1}$ denoting the sigmoid function. Here $\mcP$
and $\mcN$ correspond to the positive and negative sampling schemes induced by the random walk and unigram mechanisms respectively. 

\section{Proof of Theorems 1 and 2}
\subsection{Proof overview}

To give an overview of the proof approach, we work by forming successive approximations
to the function $\mcL_n(U, V)$ where we have uniform convergence of the approximation
error as $n \to \infty$ over either level
sets of the function considered, or the overall domain of 
optimization of the embedding matrices $U$ and $V$. We break
these approximations up into multiple steps:
\begin{enumerate}
    \item Theorems~\ref{app:thm:n2v_positive},~\ref{app:thm:n2v_negative},~\ref{app:thm:deepwalk_sampling}
    and Proposition~\ref{thm:gram_converge:samp_weight_approx_lossfn} - We 
    begin by working with an approximation $\widehat{\mcL}_n(U, V)$ of
    $\mcL_n(U, V)$, where the sampling weights $\psamp$ and
    $\nsamp$ are replaced by functions of the latent variables
    $(\lambda_i, \lambda_j)$ of the vertices $i$ and $j$, along with
    $a_{ij}$ in the case of $\psamphat$. 
    \item The resulting approximation $\widehat{\mcL}_n(U, V)$ has a dependence on the adjacency matrix of the network. We argue that this loss function converges uniformly to its average over the adjacency matrix when the vertex latent variables remain fixed; this is the contents of Theorem~\ref{thm:gram_converge:adjacency_average}.
    \item So far, the loss function only looks between interactions of $u_i$ and $v_j$ for $i \neq j$. For theoretical purposes, it is more
    convenient to work with a loss function where the term with $i = j$ is
    included. This is handled within Lemma~\ref{app:thm:add_diag}.
    \item Now that we have an averaged version of the loss function to work with, we are able to examine the minima of this loss function, and find
    that there is a unique minima (in the sense that for any pair
    of optima matrices $U^*$ and $V^*$, the matrix $U^* (V^*)^T$ is
    unique). Moreover, in certain circumstances we can give closed
    forms for these minima. This is the contents of Section~\ref{sec:app:embed_converge:minima}.
    \item This is then all combined together in order to give Theorems~\ref{thm:gram_converge:gram_converge}~and~\ref{thm:gram_converge:embed_converge_supp}, 
    which correspond to Theorems~1 and
    2 of the main text. 
\end{enumerate}
We recap that we consider three scenarios - referred to as Scenario (i), (ii) and (iii) throughout - when proving the following result:
\begin{enumerate}[label=(\roman*)]
    \item We use DeepWalk ($p = q =1$ in node2vec), and the 
    graph is drawn according to a SBM with $\rho_n \gg \log(n)/n$;
    \item We use node2vec, and the graph is drawn according to a SBM with $\rho_n = n^{-\alpha}$
    for some $\alpha < \alpha'$, where $\alpha'$ depends on node2vec's hyperparameters;
    \item We use DeepWalk and a unigram parameter of $\alpha = 1$, and the graph is drawn according to a DCSBM with $\rho_n \gg \log(n)/n$ where the degree heterogeneity
    parameters $\theta_u \in [C^{-1}, C]$ for some $C > \infty$.
\end{enumerate}
Generally speaking, the approach is the exact same for all three scenarios. As we have 
a closed formula in the case where we examine DeepWalk, we will consistently provide the details
for the DeepWalk case first, and then discuss afterwards how the results and proofs change (if at all)
when considering node2vec in generality. Throughout, we also contextualize the 
proof by examining what it says
for a SBM$(n, \kappa, \tilde{p}, \tilde{q}, \rho_n)$ model. This corresponds to a balanced network with 
$\pi = (\kappa^{-1},\ldots, \kappa^{-1})$.

\subsection{Replacing the sampling weights}

Before giving an approximation to $\mcL_n(U, V)$, we need to first
come up with approximate forms of $\psamp$ and $\nsamp$. The next three results
give examples of this.
In this section we prove three main results. The first two give us guarantees for
the sampling probabilities of vertex pairs $(u, v)$ for node2vec for any choice of
the hyperparameters $(p, q)$. In particular they will allow us to argue that when
the underlying graph arises from a SBM, the sampling probabilities asymptotically
depend only on the underlying communities. The last specializes
this to the case of DeepWalk (where $p = q =1$), which has enough structure to allow
us to get some additional information, such as closed formula for these sampling probabilities,
which can be used in the case where the graph arises through a DCSBM.

\begin{theorem}
    \label{app:thm:n2v_positive}
    There exists $\alpha$ sufficiently small, depending on the
    walk length $k$, such that if $\rho_n = n^{-\alpha}$ then there exists
    a symmetric measurable (with respect to the sigma field generated by
    $W$) function $f_{\mcP}(\lambda, \lambda')$ which is bounded
    below away from zero, and bounded above by $C \rho_n^{-1}$ for some constant $C < \infty$, 
    such that
    \begin{equation}
        \max_{i \neq j} \Bigg| \frac{ n^2 \psamp}{ a_{ij} \psamphat} - 1  \Bigg| = o_p(1).
    \end{equation}
\end{theorem}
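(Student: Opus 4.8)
```latex
The plan is to analyze the positive sampling probability $\psamp$ directly in terms of the second-order random walk, and to show that, up to lower-order corrections, a pair $(i,j)$ can only be sampled if $a_{ij} = 1$, and when it is, the probability factorizes through the latent variables. First I would write $\psamp$ as a sum over all positions in the walk of length $k$ and over all ways the walk can traverse $i$ and $j$ within a window of size $W$; since $W = 1$ is the relevant case, the dominant contribution comes from the event that $i$ and $j$ are consecutive in the walk, which forces $a_{ij} = 1$, and all other contributions (where $i,j$ appear non-consecutively but within the window, or via longer-range interactions for $W > 1$) are higher-order in $\rho_n$. This explains the factor $a_{ij}$ appearing in the statement.

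Next I would compute the leading-order term. Conditional on the latent variables $\{\lambda_u\}$ and on $a_{ij}=1$, the probability that a length-$k$ walk started from a uniformly random edge passes through the directed step $i \to j$ (or $j \to i$) at a given position is, by the Markov structure of the second-order walk, a ratio whose numerator involves the (normalized) edge weights incident to $i$ and whose denominator involves degree-type quantities. The key analytic tool is a concentration argument: the random empirical degree $\deg(u) = \sum_v a_{uv}$ concentrates around $n \rho_n W(\lambda_u, \cdot)$ uniformly over $u$ when $\rho_n \gg \log n / n$ (or more precisely when $n\rho_n \to \infty$ fast enough relative to $\alpha$ and $k$), and similarly for the second-order transition denominators. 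I would invoke a Bernstein/Chernoff bound together with a union bound over the $O(n)$ vertices (and $O(n^2)$ pairs), which is where the smallness condition on $\alpha$ enters: we need $n^2 \cdot \exp(-c n \rho_n) \to 0$, i.e. $n \rho_n \gg \log n$, and the walk-length dependence enters because each of the $k$ steps contributes a multiplicative error factor that must be controlled simultaneously. Collecting these, the leading term of $n^2 \psamp / a_{ij}$ converges (uniformly in $i \neq j$) to a deterministic function $\psamphat$ of $(\lambda_i, \lambda_j)$, symmetric in its arguments and measurable with respect to the sigma-field generated by $W$; its explicit form comes from replacing every empirical degree-type quantity by its population analogue $W(\lambda_u, \cdot)$, $\mcE_W(\alpha)$, etc. The bounds $\fnboundbelow \leq f_{\mcP} \leq C\rho_n^{-1}$ follow because $W$ is bounded above and below (the $P_{lm}$ are positive constants, and in Scenario (iii) the $\theta_u$ are bounded), so the normalizing denominators are $\Theta(n\rho_n)$ and hence $n^2 \psamp$ is $\Theta(\rho_n^{-1})$ after extracting the overall walk-start normalization.

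The main obstacle I expect is the uniform control over all $n^2$ pairs $(i,j)$ of the ratio $n^2\psamp /(a_{ij}\psamphat)$, rather than pointwise control. Since $\psamp$ is itself a complicated functional of the entire adjacency matrix (the walk can wander anywhere), a naive union bound is delicate: one must decompose the walk probability into a product/sum where each factor depends on only a few rows of $A$, concentrate each factor, and then propagate the errors multiplicatively through the $k$ steps without blow-up. A secondary subtlety is handling pairs $(i,j)$ with $a_{ij}=1$ but where $i$ or $j$ is an atypically low-degree vertex — but under $\rho_n \gg \log n/n$ the minimum degree is $\Theta(n\rho_n)$ with high probability, so such vertices do not exist. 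I would structure this as: (1) reduce to the consecutive-in-walk event and bound the remainder; (2) prove uniform degree and transition-denominator concentration; (3) assemble the ratio and read off $\psamphat$, its symmetry, measurability, and two-sided bounds. The node2vec case with general $(p,q)$ is handled the same way, with the transition probabilities carrying the extra $1/p, 1, 1/q$ weights and the shortest-path indicators $d_{u,v} \in \{0,1,2\}$; these only change constants in the denominators and slightly worsen the admissible threshold $\alpha'$, since the second-order structure couples pairs of adjacent edges and the concentration must be applied to these coupled quantities.
```
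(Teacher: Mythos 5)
Your proposal follows essentially the same route as the paper: positive samples can only arise when the walk traverses the edge $(i,j)$ itself (hence the exact factor $a_{ij}$), and the sampling probability is then expanded over walk positions and intermediate paths, with the random degree and common-neighbour/transition-denominator quantities replaced uniformly (via concentration plus union bounds over the $O(n^2)$ pairs) by functions of the latent variables, the walk-length-dependent sparsity threshold arising from the higher-order coupled statistics of the second-order walk. The only substantive difference is that the step you flag as the "main obstacle" — uniform concentration of the chorded path-sum numerators and control of multiple-traversal overlaps — is exactly where the paper's work lies, handled there by an inclusion--exclusion over traversal positions together with a Kim--Vu-type concentration theorem for weighted rooted path counts and a U-statistic Hoeffding bound, rather than by the per-factor error-propagation you sketch.
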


\begin{theorem}
    \label{app:thm:n2v_negative}
    There exists $\alpha$ sufficiently small, depending on the
    walk length $k$, such that if $\rho_n = n^{-\alpha}$ then there exists
    a symmetric measurable (with respect to the sigma field generated by
    $W$) function $f_{\mcP}(\lambda, \lambda')$ which is bounded
    below away from zero, and bounded above by some constant $C < \infty$, such that
    \begin{equation}
        \max_{i \neq j} \Bigg| \frac{ n^2 \psamp}{ \nsamphat} - 1  \Bigg| = o_p(1).
    \end{equation}
\end{theorem}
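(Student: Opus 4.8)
\noindent\emph{Proof sketch.} The plan is to run the argument behind Theorem~\ref{app:thm:n2v_positive} once more, exploiting the extra structure that, conditionally on $\mcG_n$, the negative examples are drawn i.i.d.\ from the unigram law $P_{\mathrm{uni}}(\ell) := \deg(\ell)^{\gamma}/Z_n$ (with $Z_n := \sum_{\ell \in \mcV} \deg(\ell)^{\gamma}$, and $\gamma$ the unigram parameter, written $\gamma$ rather than $\alpha$ here to avoid the clash with the sparsity exponent), and independently of which vertices the walk visits as centre nodes. A Wald-type identity (linearity of expectation, conditioning on the realized walk) then gives the factorization
\begin{equation}
    \nsamp \;=\; \big(1 + o_p(1)\big)\, L \, P_{\mathrm{uni}}(j) \sum_{j' \neq i} \mathbb{P}_n\big( (i, j') \in \mcP \big)
\end{equation}
uniformly over $i \neq j$, where $L$ is the number of negatives drawn per context and the $(1+o_p(1))$ absorbs the rare event that the walk revisits $i$ within a window; the sum on the right is exactly (up to the same $(1+o_p(1))$) the expected number of times $i$ appears as a centre vertex, which is what governs how often $i$ generates a negative pair. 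It then remains to control the two factors, uniformly over vertices, and to multiply.

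For the unigram factor, since $\rho_n = n^{-\alpha}$ with $\alpha$ small we have $n\rho_n \gg \log n$, so Bernstein's inequality and a union bound over the $n$ vertices give $\max_{\ell} \big| \deg(\ell)/(n\rho_n W(\lambda_\ell, \cdot)) - 1 \big| = o_p(1)$. As $W(\lambda_\ell, \cdot)$ is bounded above and below under the SBM assumptions, this yields $\deg(\ell)^{\gamma} = (1+o_p(1))(n\rho_n)^{\gamma} W(\lambda_\ell,\cdot)^{\gamma}$ uniformly in $\ell$ (for either sign of $\gamma$), and summing together with the law of large numbers over the i.i.d.\ latent variables gives $Z_n = (1+o_p(1))\, n(n\rho_n)^{\gamma}\mcE_W(\gamma)$. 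Hence $P_{\mathrm{uni}}(\ell) = (1+o_p(1))\, W(\lambda_\ell,\cdot)^{\gamma}/(n\mcE_W(\gamma))$ uniformly in $\ell$.

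For the centre-occupation factor, Theorem~\ref{app:thm:n2v_positive} gives $\mathbb{P}_n((i,j')\in\mcP) = (1+o_p(1))\, a_{ij'} f_{\mcP}(\lambda_i,\lambda_{j'})/n^2$ uniformly over $i \neq j'$, so $\sum_{j'\neq i}\mathbb{P}_n((i,j')\in\mcP) = (1+o_p(1))\, n^{-2}\sum_{j'\neq i} a_{ij'}f_{\mcP}(\lambda_i,\lambda_{j'})$. Since $\rho_n f_{\mcP}$ is of constant order and the $a_{ij'}$ are independent $\mathrm{Bernoulli}(\rho_n W(\lambda_i,\lambda_{j'}))$ variables, a further Bernstein-plus-union-bound step (again using $n\rho_n \gg \log n$) shows this is, uniformly in $i$ and up to $(1+o_p(1))$, equal to $n^{-2}\sum_{j'}\rho_n W(\lambda_i,\lambda_{j'})f_{\mcP}(\lambda_i,\lambda_{j'})$, which by the law of large numbers (uniformly over the finitely many community labels) equals $(1+o_p(1))\, n^{-1}\psi(\lambda_i)$ for a function $\psi$ that is measurable with respect to the $\sigma$-field generated by $W$ and bounded above and below. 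Multiplying the two uniform approximations and setting $f_{\mcN}(\lambda,\lambda') := L\,\psi(\lambda)\,W(\lambda',\cdot)^{\gamma}/\mcE_W(\gamma)$ yields $\nsamp = (1+o_p(1))\, n^{-2} f_{\mcN}(\lambda_i,\lambda_j)$ uniformly over $i\neq j$, which is the claim. Here $f_{\mcN}$ is bounded below away from zero (from the lower bounds on $\psi$ and on $W(\cdot,\cdot)$) and bounded above by an absolute constant — note there is no $\rho_n^{-1}$ blow-up, in contrast with $f_{\mcP}$, because the unigram law spreads its mass over all $n$ vertices instead of concentrating along a single edge; it is moreover symmetric in the scenarios considered (e.g.\ DeepWalk on a balanced SBM, where $\psi$ is constant, or when the unigram parameter equals $1$, so that $\psi(\lambda)\propto W(\lambda,\cdot)$), and otherwise one replaces it by its symmetrization without affecting the estimate.

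The main obstacle is the uniform concentration of the centre-occupation count of the (second-order) node2vec walk on a sparse graph — this is exactly where the hypothesis that $\alpha$ be small relative to $k$ enters — but that work is already done inside the proof of Theorem~\ref{app:thm:n2v_positive}, so the only genuinely new ingredient here is the routine concentration of the unigram normalizer $Z_n$ (and of $\deg(\ell)^{\gamma}$). For the DeepWalk specialization one may instead invoke the closed-form expressions of Theorem~\ref{app:thm:deepwalk_sampling}, which give $\psi(\lambda)\propto W(\lambda,\cdot)$ outright.
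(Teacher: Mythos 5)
Your proposal follows essentially the same route as the paper's proof: there the event $(i,j)\in\mcN$ is written as a union over walk positions of (centre occupation at that position) intersected with (unigram draw of the other vertex), so that $\nsamp$ is asymptotic to $\sum_t \mathbb{P}_n(X_t=i)\, l\, \mathrm{Ug}(j\,|\,i)$ plus the same term with the roles of $i$ and $j$ exchanged; the occupation probabilities are controlled uniformly by rerunning the path-expansion machinery of Theorem~\ref{app:thm:n2v_positive} (giving $\mathbb{P}_n(X_t=v)\sim_p n^{-1}g_t(\lambda_v)$ with $g_t$ bounded above and below), and the unigram factor is handled by a degree-concentration result (Proposition~77 of \citet{davison_asymptotics_2023}), which is exactly your Bernstein-plus-union-bound treatment of $\deg(\ell)^{\gamma}$ and of the normalizer $Z_n$. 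The only substantive divergence is your ``Wald-type'' step: you identify the centre-occupation factor with $\sum_{j'\neq i}\mathbb{P}_n((i,j')\in\mcP)$, but this sum weights interior walk positions twice and the endpoints once, i.e.\ it equals a position-weighted sum $\sum_t w_t\,\mathbb{P}_n(X_t=i)$ with $w_t\in\{1,2\}$; since for general node2vec the occupation law genuinely depends on $t$ (the edge-uniform initialization is not stationary for the second-order walk), the ratio between this weighted sum and the equally weighted occupation sum that drives per-position negative draws is a bounded but $\lambda_i$-dependent quantity, so your first displayed identity holds only up to such a factor (or under the per-pair formalization of negative sampling). Likewise, the single-orientation expression should be replaced by the two-orientation sum $\psi(\lambda_i)W(\lambda_j,\cdot)^{\gamma}+\psi(\lambda_j)W(\lambda_i,\cdot)^{\gamma}$, as in the paper, rather than an after-the-fact symmetrization. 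Neither point threatens the statement itself, which only asserts existence of a symmetric $f_{\mcN}$ bounded above and below with the uniform ratio property, and both discrepancies perturb your candidate $f_{\mcN}$ only by bounded, explicitly computable factors; but to obtain the correct $f_{\mcN}$ one should, as the paper does, define the occupation functions directly from the walk rather than routing through the positive-pair probabilities.
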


The proof of these two results are given in Appendix~\ref{app:sec:n2v_positive} and 
\ref{app:sec:n2v_negative} respectively. We note that while in principle we could give
a closed formula for $f_{\mcP}$ and $f_{\mcN}$ in this scenario, they are sufficiently
intractable to inspection that doing so would not provide any benefit.

In the case of DeepWalk where $p = q = 1$, the calculations involved are tractable
enough such that we can improve the sparsity constraints, give closed forms for 
the measurable functions discussed above, and also provide rates of convergence.
\begin{theorem}
    \label{app:thm:deepwalk_sampling}
    Denote 
    \begin{align}
        \psamphat &:= \frac{ 2k}{\rho_n \mcE_W(1)}, \\
        \nsamphat & := \frac{ l (k+1)}{\mcE_W(1) \mcE_W(\alpha)} 
            \big( W(\lambda_i, \cdot) W(\lambda_j, \cdot)^{\alpha} + 
            W(\lambda_i, \cdot)^{\alpha} W(\lambda_j, \cdot) \big).
    \end{align}
    Then we have that
    \begin{align}
        \max_{i \neq j} \Bigg| \frac{ n^2 \psamp}{ a_{ij} \psamphat} - 1  \Bigg| = O_p\Big( 
            \Big( \frac{ \log n}{n \rho_n}  \Big)^{1/2} 
            \Big), \\ 
            \max_{i \neq j} \Bigg| \frac{ n^2 \nsamp}{ \nsamphat} - 1  \Bigg| = O_p\Big( 
                \Big( \frac{ \log n}{n \rho_n}  \Big)^{1/2} 
                \Big).
    \end{align}
\end{theorem}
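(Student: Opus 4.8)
The plan is to compute the positive and negative sampling probabilities explicitly for DeepWalk ($p=q=1$), where the second-order walk collapses to a simple random walk, and then control the fluctuations of the empirical quantities appearing in these expressions around their population counterparts using concentration for sums of independent Bernoulli-type variables. First I would recall that for a simple random walk of length $k$ on $\mcG_n$ initialized at an edge chosen uniformly (equivalently, a vertex chosen with probability proportional to degree), the probability that a given ordered pair $(i,j)$ with $i\sim j$ appears consecutively at positions $(t,t+1)$ equals $\tfrac{\deg(i)}{2|\mcE|}\cdot\tfrac{1}{\deg(i)} = \tfrac{1}{2|\mcE|}$, and summing over the $2k$ ordered consecutive slots in a walk of length $k$ (with window $W=1$) gives $\psamp = \tfrac{2k\, a_{ij}}{2|\mcE|}(1+o(1))$, where the $o(1)$ accounts for boundary effects at the two ends of the walk which are negligible for fixed $k$. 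Since $2|\mcE| = \sum_u \deg(u)$ and $\mathbb{E}[\deg(u)\mid\lambda_u] = (n-1)\rho_n W(\lambda_u,\cdot)$, we get $2|\mcE| \approx n^2\rho_n\,\mcE_W(1)$, which yields $n^2\psamp \approx a_{ij}\cdot \tfrac{2k}{\rho_n\mcE_W(1)} = a_{ij}\psamphat$.

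For the negative sampling term, the pair $(i,j)\in\mcN$ arises when $i$ (or $j$) appears in the walk and $j$ (resp.\ $i$) is one of the $L$ draws from the unigram distribution $P_{ns}(v)\propto\deg(v)^\alpha$. The probability that vertex $i$ appears in the walk is $\approx \tfrac{(k+1)\deg(i)}{2|\mcE|}$ (again up to fixed boundary corrections), and the unigram probability of drawing $j$ is $\deg(j)^\alpha/\sum_{v}\deg(v)^\alpha$. Using $\deg(i)\approx n\rho_n W(\lambda_i,\cdot)$ and $\sum_v\deg(v)^\alpha\approx n(n\rho_n)^\alpha\mcE_W(\alpha)$ (valid since $\alpha$ is a fixed constant and the degrees concentrate), and symmetrizing over which of $i,j$ is the walk-vertex versus the negative sample, one obtains $n^2\nsamp \approx \tfrac{L(k+1)}{\mcE_W(1)\mcE_W(\alpha)}\big(W(\lambda_i,\cdot)W(\lambda_j,\cdot)^\alpha + W(\lambda_i,\cdot)^\alpha W(\lambda_j,\cdot)\big) = \nsamphat$ (reading $l$ as $L$ in the statement).

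The quantitative rate $O_p((\log n/n\rho_n)^{1/2})$ then comes from replacing every appearance of $\deg(u)$, $|\mcE|$, and $\sum_v\deg(v)^\alpha$ by their conditional expectations given the latent variables $(\lambda_u)$. Each $\deg(u)$ is a sum of $n-1$ independent Bernoulli variables with mean $\Theta(n\rho_n)$, so by Bernstein's inequality $|\deg(u)/\mathbb{E}[\deg(u)\mid\lambda_u] - 1| = O_p((\log n/n\rho_n)^{1/2})$ uniformly over $u\in[n]$ after a union bound (this needs $n\rho_n\gg\log n$, which holds in Scenario~(i)); the same applies to $|\mcE|$ and, after a first-order expansion of $x\mapsto x^\alpha$ near $\mathbb{E}[\deg(v)\mid\lambda_v]$, to $\sum_v\deg(v)^\alpha$. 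Since the relevant ratios are smooth functions of these quantities bounded away from $0$ and $\infty$ (using $W$, hence $W(\lambda,\cdot)$, bounded above and below), the errors propagate linearly and one takes a final union bound over the $O(n^2)$ pairs $(i,j)$, which only inflates the $\log$ factor by a constant. The main obstacle is bookkeeping the dependence between the walk-appearance event and the degree sequence cleanly — it is easiest to condition on $(\lambda_u)_{u\in[n]}$ and then on the realized adjacency matrix $A$, writing $\psamp$ and $\nsamp$ as exact deterministic functions of $A$ before invoking concentration of $A$ around $\rho_n W$; the boundary effects of the finite-length walk (the first and last steps) must be tracked to confirm they contribute only a $1+O(1/k)$ multiplicative factor absorbed into the constant, not into the rate.
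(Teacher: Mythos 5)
Your proposal is correct in outline, but it takes a genuinely different route from the paper: the paper's proof of this theorem is a one-line appeal to Proposition~26 of \citet{davison_asymptotics_2023}, plus the remark that the only modification needed is dropping the factor $(1-a_{ij})$ in the negative-sampling statement (since non-edges are not rejected here). Your argument instead reproves that external result directly: an exact computation for the stationary simple random walk (edge-uniform initialization), giving $\psamp \approx k\,a_{ij}/|\mcE|$ and the walk-occupancy times unigram product for $\nsamp$, followed by Bernstein-plus-union-bound concentration of $\deg(u)$, $|\mcE|$ and $\sum_v \deg(v)^{\alpha}$ around functions of the latent variables. This is in substance what the cited proposition does, and it mirrors the strategy the paper itself deploys for the harder node2vec case (Theorems~\ref{app:thm:n2v_positive} and~\ref{app:thm:n2v_negative}, via Theorems~\ref{app:thm:path_concentration} and~\ref{app:thm:ustat_concentration}); your version buys self-containedness and makes the origin of the $(\log n / n\rho_n)^{1/2}$ rate transparent, while the paper buys brevity at the cost of leaning on external machinery. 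Two points to tighten if you carry this out: with the degree-proportional initialization the walk is stationary, so there are no boundary effects --- the $1+o(1)$ correction in $\psamp$ comes from the gap between the union over the $k$ slots and the sum of their marginals (repeat traversals of the edge, and repeat visits of a vertex in the negative-sampling event, including the overlap of the two orderings $i\to j$ and $j\to i$), which you should control by inclusion--exclusion or a second-moment bound and verify it is $O((n\rho_n)^{-1})$, hence below the claimed rate. Also, since $\nsamp$ is defined conditionally on the realized walk, the cleanest organization is the one you note at the end: condition on $(\lambda_u)_u$ and on $A$, write both sampling probabilities as deterministic functions of $A$, and only then invoke the concentration of $A$ around $\rho_n W$ uniformly over the $O(n^2)$ pairs.
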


\begin{proof}
    This is a consequence of \citep[Proposition~26]{davison_asymptotics_2023}.
    We highlight the referenced result
    supposes that for the negative sampling scheme, vertices for which $a_{ij} = 0$ are
    rejected, whereas this does not happen here. Other than for the factor 
    of $(1 - a_{ij})$ in the quoted result, the proof is otherwise unchanged, which
    gives the statement above for $\nsamp$. 
\end{proof}

With this, we then get the following result:

\begin{proposition}
    \label{thm:gram_converge:samp_weight_approx_lossfn}
    Denote
    \begin{equation}
        \empriskhat(U, V) :=  \frac{1}{n^2} \sum_{i \neq j} \Big\{ - \psamphat a_{ij} \log( \sigma(  \embedip ) ) 
        - \nsamphat \log(1 - \sigma( \embedip )) \big\} 
    \end{equation}
    and define the set
    \begin{equation}
        \Psi_{\tilde{A}} := \Big\{ U, V \in \mathbb{R}^{n \times d} \,|\, \
            \mcL_n(U, V) \leq \tilde{A} \mcL_n( 0_{n \times d}, 0_{n \times d} ) \Big\} \subseteq
            \mathbb{R}^{n \times d} \times \mathbb{R}^{n \times d}
    \end{equation}
    for any constant $\tilde{A} > 1$, where $0_{n \times d}$ denotes
    the zero matrix in $\mathbb{R}^{n \times d}$. Then for any set
    $X \subseteq \mathbb{R}^{n \times d} \times \mathbb{R}^{n \times d}$
    containing the pair of zero matrices $O_{n \times d}$, we have under Scenario i) and iii) that
    \begin{align}
        \sup_{(U, V) \in \Psi_A \cap X} \big| \emprisk(U, V) - \empriskhat(U, V) \big| 
        = O_p\Big( 
            \tilde{A} \cdot \Big( \frac{ \log n}{n \rho_n}  \Big)^{1/2}  
            \Big), \\ 
        \mathbb{P}\Big(   \argmin_{(U, V) \in X} \emprisk(U, V) \cup \argmin_{(U, V) \in X} \empriskhat(U, V) 
        \subseteq \Psi_{\tilde{A}} \cap X \Big) = 1 - o(1).
    \end{align}
    In Scenario (ii), the $O_p(\cdot)$ bound is replaced by an $o_p(1)$ bound.
\end{proposition}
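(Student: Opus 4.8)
The plan is to reduce both claims to a single pointwise-in-$(U,V)$ comparison and then bootstrap. Write $r_n := (\log n/(n\rho_n))^{1/2}$ in Scenarios (i) and (iii); in Scenario (ii) I would instead take $r_n$ to be any deterministic sequence with $r_n\to 0$ extracted from the $o_p(1)$ conclusions of Theorems~\ref{app:thm:n2v_positive}--\ref{app:thm:n2v_negative} (so that the ratio bounds below hold on an event of probability $1-o(1)$, with $C$ replaced by $1$). Fix $A>1$. By Theorem~\ref{app:thm:deepwalk_sampling} (resp.\ Theorems~\ref{app:thm:n2v_positive}--\ref{app:thm:n2v_negative}) there are a constant $C<\infty$ and an event $E_n$ with $\mathbb{P}(E_n)\to 1$ on which $\bigl|\psamp - a_{ij}\psamphat/n^2\bigr| \le Cr_n\, a_{ij}\psamphat/n^2$ and $\bigl|\nsamp - \nsamphat/n^2\bigr| \le Cr_n\,\nsamphat/n^2$ for every $i\neq j$. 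Subtracting $\empriskhat(U,V)$ from $\emprisk(U,V)$ term by term, bounding the sum by the sum of the absolute values of its terms (which loses nothing per-term, since $-\log\sigma(x)\ge 0$, $-\log(1-\sigma(x))\ge 0$ and the sampling weights are nonnegative), and inserting the per-entry bounds, each summand is at most $Cr_n$ times the corresponding summand of $\empriskhat(U,V)$. This yields $|\emprisk(U,V)-\empriskhat(U,V)| \le Cr_n\,\empriskhat(U,V)$ simultaneously for all $(U,V)$ on $E_n$.

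With this in hand, the first display follows once $\empriskhat$ is controlled on $\Psi_A$. On $E_n$, for $n$ large the bound rearranges to $\empriskhat(U,V)\le 2\,\emprisk(U,V)$; on $\Psi_A$ one has $\emprisk(U,V)\le A\,\mcL_n(0,0)$ by definition of the level set; and $\mcL_n(0,0) = \log 2\sum_{i\neq j}(\psamp+\nsamp)$ is bounded above by a deterministic constant $C_2$, because the total positive and negative sampling mass generated by a walk of fixed length $k$ is $O(1)$ (in the node2vec case this also uses $f_{\mcP}\le C\rho_n^{-1}$ together with $|\mcE|=O_p(n^2\rho_n)$). Hence $\empriskhat(U,V)\le 2AC_2$ on $\Psi_A\cap X$, so $\sup_{(U,V)\in\Psi_A\cap X}|\emprisk(U,V)-\empriskhat(U,V)| \le 2CC_2\,Ar_n$ on $E_n$; since $\mathbb{P}(E_n)\to 1$ this is $O_p(Ar_n)$, which collapses to $o_p(1)$ in Scenario (ii) because there $r_n=o(1)$ and $A$ is fixed.

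For the second display I would treat the two $\argmin$ sets separately. Since $0_{n\times d}\in X$ and $A>1$ with $\mcL_n(0,0)>0$, any minimizer of $\emprisk$ over $X$ has loss at most $\mcL_n(0,0)<A\,\mcL_n(0,0)$, so $\argmin_{(U,V)\in X}\emprisk\subseteq\Psi_A\cap X$ holds deterministically. For $\argmin_{(U,V)\in X}\empriskhat$, applying the same per-entry ratio bounds at $U=V=0$ gives $\empriskhat(0,0)\le(1-Cr_n)^{-1}\mcL_n(0,0)$ on $E_n$, so any minimizer $(\tilde U,\tilde V)$ of $\empriskhat$ over $X$ satisfies $\emprisk(\tilde U,\tilde V)\le(1+Cr_n)\empriskhat(\tilde U,\tilde V)\le(1+Cr_n)\empriskhat(0,0)\le\tfrac{1+Cr_n}{1-Cr_n}\mcL_n(0,0)<A\,\mcL_n(0,0)$ for $n$ large (as $A$ is fixed and $r_n\to 0$); hence $\argmin_{(U,V)\in X}\empriskhat\subseteq\Psi_A\cap X$ on $E_n$ for $n$ large, an event of probability $1-o(1)$.

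The main obstacle is structural rather than computational: the pointwise estimate $|\emprisk-\empriskhat|\le Cr_n\,\empriskhat$ is self-referential, in that the error is controlled by $\empriskhat$ itself rather than by an a priori small quantity, so one cannot avoid the bootstrap through $\empriskhat\le 2\,\emprisk\le 2A\,\mcL_n(0,0)$, and this in turn hinges on the routine but essential fact that $\mcL_n(0,0)$ is bounded above by a constant uniformly in $n$ and in the graph realization. A secondary point requiring care is Scenario (ii), where only $o_p(1)$ is available rather than an explicit rate; this is handled by extracting a suitable deterministic $r_n\to 0$ from the convergence-in-probability statements of Theorems~\ref{app:thm:n2v_positive}--\ref{app:thm:n2v_negative} before running the rest of the argument verbatim.
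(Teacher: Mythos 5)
Your proposal is correct: the uniform multiplicative ratio bounds from Theorems~\ref{app:thm:n2v_positive}, \ref{app:thm:n2v_negative} and \ref{app:thm:deepwalk_sampling}, the term-by-term comparison $|\mcL_n(U,V)-\widehat{\mcL}_n(U,V)|\leq C r_n\,\widehat{\mcL}_n(U,V)$, and the bootstrap through the level set $\Psi_A$ using the deterministic $O(1)$ bound on $\mcL_n(0_{n\times d},0_{n\times d})$ (together with the $\tfrac{1+Cr_n}{1-Cr_n}<A$ argument for the argmin containment) deliver exactly the stated conclusions in all three scenarios. This is essentially the same approach as the paper, which simply outsources these details to Lemma~32 of \citet{davison_asymptotics_2023} rather than writing them out.
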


\begin{proof}
    The proof is essentially equivalent to Lemma~32 of \citep{davison_asymptotics_2023} 
    up to changes in notation, and so we do not repeat the details.
\end{proof}

Note that
    in practice we can choose $A$ to be any constant greater than $1$ 
    but fixed with $n$ - e.g
    $A = 10$, and have the result hold. We will do so going forward.

\subsection{Averaging over the adjacency matrix of the graph}

Following the proof outline, the next step is to argue that
$\mcL_n(U, V)$ is close to its expectation when we average over
the adjacency matrix of the graph $\mcG_n$. We begin with showing what occurs in the DeepWalk
case (Scenarios (i) and (iii)), and at the end of the section we discuss how the proof changes for the more general node2vec case.
Note that we have
\begin{equation}
    \mathbb{E}[ \empriskhat(U, V) \,|\, \lambda ] = \frac{1}{n^2} \sum_{i \neq j} \Big\{
        - \psamphat \rho_n W(\lambda_u, \lambda_v) \log(\sigma(\embedip)) 
        - \nsamphat \log(1 - \sigma(\embedip)) \Big\}
\end{equation}
and so
\begin{align}
    E_n(U, V) & := \frac{\mcE_W(1)}{2k} \Big( 
        \empriskhat(U, V) - \mathbb{E}[\empriskhat(U, V) \,|\, \lambda ] \Big) \\
    & = \frac{1}{n^2} \sum_{i \neq j} 
    \Big( \rho_n^{-1} a_{ij} - W(\lambda_i, \lambda_j) \Big) 
    \cdot ( - \log \sigma (\embedip)).
\end{align}
Note that $\mathbb{E}[ E_n(U, V) \,|\, \lambda ] = 0$, and so it therefore
suffices to control $E_n(U, V) - \mathbb{E}[E_n(U, V) \,|\, \lambda]$
uniformly over embedding matrices $U, V \in \mathbb{R}^{n \times d}$.
This is the contents of the next theorem.

\begin{theorem}
    \label{thm:gram_converge:adjacency_average}
    Begin by defining the set 
   \begin{align}
        B_{2, \infty}(\tilde{A}_{2, \infty}) := \big\{
            U \in \mathbb{R}^{n \times d} \,:\, \| U \|_{2, \infty} \leq \tilde{A}_{2, \infty}
        \big\}.
   \end{align}
   Then we have the bound
   \begin{equation}
        \sup_{U, V \in B_{2, \infty}(\tilde{A}_{2, \infty}) } \big| E_n(U, V) \big| = O_p
        \Big( 
            \tilde{A}_{2, \infty}^2 \Big( \frac{ d }{ n \rho_n} \Big)^{1/2} 
        \Big).
   \end{equation}
   In particular, we also have that
   \begin{equation}
        \sup_{U, V \in B_{2, \infty}(\tilde{A}_{2, \infty}) } \big|  
        \empriskhat(U, V) - \mathbb{E}[\empriskhat(U, V) \,|\, \lambda ] \big|
        = O_p\Big(   \frac{ \tilde{A}_{2, \infty}^2 k}{\mcE_W(1)} 
        \Big( \frac{ d }{ n \rho_n} \Big)^{1/2}   \Big).
   \end{equation}
\end{theorem}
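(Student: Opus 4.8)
The object to control is the random bilinear‑type quantity
\[
E_n(U, V) = \frac{1}{n^2} \sum_{i \neq j} \big( \rho_n^{-1} a_{ij} - W(\lambda_i, \lambda_j) \big) \cdot \big( -\log \sigma(\langle u_i, v_j \rangle) \big),
\]
conditionally on the latent variables $\lambda = (\lambda_u)_{u \in [n]}$, where the summands $\xi_{ij} := \rho_n^{-1} a_{ij} - W(\lambda_i, \lambda_j)$ are, for $i < j$, independent mean‑zero random variables (given $\lambda$) with $\mathrm{Var}(\xi_{ij} \mid \lambda) \leq \rho_n^{-1} W(\lambda_i, \lambda_j) \leq C \rho_n^{-1}$ and $|\xi_{ij}| \leq \rho_n^{-1}$. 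The first move is to linearize the nonlinearity: since $-\log \sigma(x) = \log(1 + e^{-x})$ is $1$‑Lipschitz on $\mathbb{R}$ with $-\log\sigma(0) = \log 2$, we can write $-\log\sigma(\langle u_i, v_j\rangle) = \log 2 + g(\langle u_i, v_j\rangle)$ with $g$ $1$‑Lipschitz and $g(0) = 0$; the $\log 2$ piece contributes $\log 2 \cdot n^{-2}\sum_{i\neq j}\xi_{ij}$, a single scalar average of mean‑zero bounded independent terms, which is $O_p((n^2\rho_n)^{-1/2}\cdot n) \cdot n^{-1}$... more carefully it is $O_p(n^{-1}\rho_n^{-1/2})$ by Bernstein, easily absorbed. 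So the real content is the $g$‑part, uniformly over $U, V \in B_{2,\infty}(A_{2,\infty})$.

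For the uniform bound, the plan is a standard empirical‑process / covering argument conditional on $\lambda$. First, for a fixed pair $(U,V)$, bound $E_n^g(U,V) := n^{-2}\sum_{i\neq j}\xi_{ij}\, g(\langle u_i, v_j\rangle)$ via a Bernstein inequality over the $\binom{n}{2}$ independent terms: using $|g(\langle u_i, v_j\rangle)| \leq |\langle u_i, v_j\rangle| \leq \|u_i\|_2\|v_j\|_2 \leq A_{2,\infty}^2$, the variance proxy of the sum is at most $n^2 \cdot C\rho_n^{-1} A_{2,\infty}^4$ and the a.s.\ bound on each term is $\rho_n^{-1} A_{2,\infty}^2$, giving a sub‑exponential tail at scale $A_{2,\infty}^2 (n\rho_n)^{-1/2}$ up to log factors. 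Second, to make this uniform one discretizes $B_{2,\infty}(A_{2,\infty})\times B_{2,\infty}(A_{2,\infty})$; but the naive covering of $\mathbb{R}^{n\times d}$ in the $\|\cdot\|_{2,\infty}$ norm has metric entropy of order $nd\log(1/\epsilon)$, which would only give the rate $A_{2,\infty}^2(d/\rho_n)^{1/2}$ — off by a factor $n^{1/2}$. The fix, and the crux of the argument, is to exploit that $E_n^g$ depends on $(U,V)$ only through the Gram‑type matrix $G = UV^\top$ and that $g$ is Lipschitz: the map $(U,V)\mapsto E_n^g(U,V)$ is Lipschitz in $G$ with respect to an $\ell_1$‑type norm on the entries, $|E_n^g(U,V) - E_n^g(U',V')| \leq n^{-2}\sum_{i\neq j}|\xi_{ij}|\cdot|G_{ij} - G'_{ij}| \leq \rho_n^{-1} n^{-2}\|G - G'\|_1 \leq \rho_n^{-1}\|G - G'\|_{\max}$. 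Combined with the rank constraint $\mathrm{rank}(G)\le d$ and $\|G\|_{\max}\le A_{2,\infty}^2$, one covers the set of admissible low‑rank $G$'s: the $\epsilon$‑covering number (in $\|\cdot\|_{\max}$) of $\{G : \mathrm{rank}(G)\le d, \|G\|_{\max}\le A_{2,\infty}^2\}$ is $\exp(O(nd\log(A_{2,\infty}^2/\epsilon)))$. That is still $nd$, not $d$ — so the genuine resolution must instead be that, after restricting to the admissible low‑rank cone, one applies a Bernstein bound whose effective dimension is $d$, not $n$, by a chaining argument that uses the operator‑norm control $\|G\|_{\mathrm{op}} \le \sqrt{nd}\,\|G\|_{\max}$‑type facts together with a matrix Bernstein inequality for $\sum_{i<j}\xi_{ij}(e_ie_j^\top + e_je_i^\top)$: one writes $E_n^g$ via the spectral decomposition of the noise matrix $\Xi = (\xi_{ij})$ and uses $\|\Xi\|_{\mathrm{op}} = O_p((n\rho_n^{-1})^{1/2})$ (matrix Bernstein, conditionally on $\lambda$) to bound $|E_n^g(U,V)| = n^{-2}|\langle \Xi, G\rangle| \le n^{-2}\|\Xi\|_{\mathrm{op}}\|G\|_* \le n^{-2}\|\Xi\|_{\mathrm{op}}\sqrt{d}\,\|G\|_F \le n^{-2}\|\Xi\|_{\mathrm{op}}\sqrt{d}\cdot n A_{2,\infty}^2 = O_p(A_{2,\infty}^2\sqrt{d}(n\rho_n)^{-1/2})$ — but this only works for the \emph{linear} surrogate $\langle \Xi, G\rangle$, so one first reduces the Lipschitz $g$ to its linear piece by a second‑order Taylor bound and controls the quadratic remainder separately (it is lower order since $g(x) = x/2 + O(x^2)$ near $0$ and globally $|g(x) - x\cdot\mathbbm{1}[\,\cdot\,]|$ is handled by the same operator‑norm trick applied to $|\Xi|$). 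This operator‑norm route is the clean way to get the $\sqrt{d}$ rather than $\sqrt{nd}$.

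So the step order is: (1) split off the constant $\log 2$ term and bound it by Bernstein; (2) reduce $-\log\sigma$ to a $1$‑Lipschitz $g$ vanishing at $0$, and further split $g$ into its linear part at the origin plus a bounded, quadratically‑small remainder; (3) for the linear part, write $E_n^{\mathrm{lin}}(U,V) = \tfrac{1}{2n^2}\langle \Xi, UV^\top\rangle$ and bound via $\|\Xi\|_{\mathrm{op}}\le O_p((n\rho_n^{-1})^{1/2})$ (a conditional matrix Bernstein inequality, valid because $\rho_n n \gg \log n$ in Scenarios (i),(iii)) together with $\|UV^\top\|_* \le \sqrt{d}\,\|UV^\top\|_F \le \sqrt d\, n A_{2,\infty}^2$; (4) for the remainder, bound $|E_n^{\mathrm{rem}}(U,V)| \le n^{-2}\sum_{i\neq j}|\xi_{ij}|\min\{A_{2,\infty}^2, \langle u_i,v_j\rangle^2\}$ and control $\sum|\xi_{ij}|$‑weighted sums again by the operator norm of $|\Xi|$ acting on the rank‑$\le d$ matrix of entrywise squares, noting that matrix dominates to the same $O_p(A_{2,\infty}^2(d/n\rho_n)^{1/2})$ order (or absorb it as lower order in $A_{2,\infty}$); (5) assemble to get $\sup_{U,V\in B_{2,\infty}} |E_n(U,V)| = O_p(A_{2,\infty}^2(d/(n\rho_n))^{1/2})$; (6) finally multiply by $\tfrac{2k}{\mcE_W(1)}$ to convert back to $\widehat{\mcL}_n(U,V) - \mathbb{E}[\widehat{\mcL}_n(U,V)\mid\lambda]$, which is exactly the stated corollary since $\widehat{\psamphat} = 2k/(\rho_n\mcE_W(1))$. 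For the node2vec case (Scenario (ii)), the same skeleton applies but with the weaker, non‑closed‑form sampling functions $f_{\mcP}, f_{\mcN}$ from Theorems~\ref{app:thm:n2v_positive}–\ref{app:thm:n2v_negative}; the only change is that the variance bounds now read $O(\rho_n^{-1})$ with a possibly different constant, the sparsity is $\rho_n = n^{-\alpha}$ with $\alpha$ small, and one keeps only an $o_p(1)$ conclusion rather than a rate.

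The main obstacle is step (3)–(4): obtaining the $\sqrt{d}$ rather than $\sqrt{nd}$. A naive union bound over an $\epsilon$‑net of the $nd$‑dimensional parameter space gives the wrong rate, so one must pass through the operator norm of the centered adjacency noise $\Xi = \rho_n^{-1}A - W$ and the trace/nuclear‑norm bound $\|UV^\top\|_* \le \sqrt{d}\|UV^\top\|_F$. Establishing $\|\Xi\|_{\mathrm{op}} = O_p((n/\rho_n)^{1/2})$ conditionally on $\lambda$ is itself a matrix‑concentration fact that needs $\rho_n n \gg \log n$ (this is exactly where Scenario (i),(iii)'s sparsity assumption enters and why a $\log n$ can creep into the rate), and handling the Lipschitz—rather than linear—dependence on $UV^\top$ without losing the gain requires the Taylor split in step (2). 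I would also need to be careful that the reduction in step (2) keeps the remainder genuinely lower order uniformly over $B_{2,\infty}(A_{2,\infty})$; since $A_{2,\infty}$ is an absolute constant (as noted after Theorem~\ref{main_paper:thm:gram_converge:gram_converge}), this is a matter of bookkeeping rather than a real difficulty.
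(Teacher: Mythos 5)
There is a genuine gap, and it originates in a miscalculation that drives your whole strategy. You dismiss the ``naive'' covering/chaining argument over the $nd$-dimensional parameter set on the grounds that the fixed-pair fluctuation is of scale $A_{2,\infty}^2 (n\rho_n)^{-1/2}$, so that multiplying by the $\sqrt{nd\,}$ entropy factor would lose a factor $n^{1/2}$. But the fixed-pair scale is actually $A_{2,\infty}^2 (n^2\rho_n)^{-1/2}$: the sum $\sum_{i\neq j}\xi_{ij}T_{ij}$ has variance proxy $\lesssim n^2\rho_n^{-1}A_{2,\infty}^4$, and after the $n^{-2}$ normalization the standard deviation is $A_{2,\infty}^2/(n\sqrt{\rho_n})$, not $A_{2,\infty}^2/\sqrt{n\rho_n}$. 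Consequently the route you reject is exactly the one that works (and is the one the paper takes): a Bernstein bound in the two metrics $d_F$ and $d_{2,\infty}$ for increments $E_n(U,V)-E_n(\tilde U,\tilde V)$, using that $-\log\sigma$ is $1$-Lipschitz so that $\|T-S\|_F\leq \|UV^T-\tilde U\tilde V^T\|_F\leq A_F\,d_F$ with $A_F=\sqrt{n}A_{2,\infty}$, followed by Talagrand generic chaining ($\gamma_2$ and $\gamma_1$ functionals of the Frobenius and $(2,\infty)$ balls, of order $A_F\sqrt{nd}$ and $A_{2,\infty}nd$ respectively), which yields precisely $A_{2,\infty}^2\big((d/n\rho_n)^{1/2}+d/(n\rho_n)\big)$; the zero-matrix term contributes $O_p((n^2\rho_n)^{-1/2})$ by Bernstein, as you note.

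The alternative you build to avoid this --- trace duality $n^{-2}|\langle\Xi,G\rangle|\leq n^{-2}\|\Xi\|_{\mathrm{op}}\|G\|_*$ with $\|G\|_*\leq\sqrt{d}\,\|G\|_F$ --- is fine for the \emph{linear} surrogate, but your step (4) for the nonlinear remainder does not go through. The matrix $\big(r(\langle u_i,v_j\rangle)\big)_{ij}$, where $r(x)=-\log\sigma(x)-\log 2+x/2$, is not low rank, so the nuclear-norm/rank argument is unavailable; dominating $|r|$ by the Hadamard square and passing to $|\Xi|$ (entrywise absolute values) destroys the cancellation entirely, since $\||\Xi|\|_{\mathrm{op}}=\Theta_p(n)$ (its entries are nonnegative of typical size $\Theta(1)$), which leaves a bound of order $dA_{2,\infty}^4$ that does not vanish; and ``absorbing it as lower order in $A_{2,\infty}$'' does not address the real issue, which is uniformity over the continuum of $(U,V)$, not the per-point size. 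To control the remainder uniformly you would be forced back to a covering/chaining argument over the Lipschitz class --- i.e.\ exactly the machinery you set out to avoid, and which, with the corrected per-point scale, already delivers the stated rate for the full term without any Taylor split.
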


\begin{proof}
    Begin by noting that for any set $C \subseteq \mathbb{R}^{n \times d} \times
    \mathbb{R}^{n \times d}$ for which $0_{n \times d} \times 0_{n \times d} \in C$,
    we have that 
    \begin{align}
        \sup_{(U, V) \in C} | E_n(U, V) | & \leq 
            \sup_{(U, V) \in C} \big| E_n(U, V) - E_n( 0_{n \times d}, 0_{n \times d} ) \big| 
            + | E_n(0_{n \times d}, 0_{n \times d}) | \\
            & \leq \sup_{(U, V), (\tilde{U}, \tilde{V}) \in C} 
            \big| E_n(U, V) - E_n( \tilde{U}, \tilde{V} ) \big| 
            + | E_n(0_{n \times d}, 0_{n \times d}) |.
    \end{align}
    We therefore need to control these two terms. We begin with the second; note
    that as
    \begin{equation}
        \label{app:eq:main_chaining_1}
        E_n(0_{n \times d}, 0_{n \times d}) =  
        \frac{1}{n^2} \sum_{i \neq j} 
    \Big( \rho_n^{-1} a_{ij} - W(\lambda_i, \lambda_j) \Big) 
        \cdot \frac{1}{n^2} 
    \end{equation}
    it follows by Lemma~\ref{app:thm:bern_2} that this term is $O_p( (n^2 \rho_n)^{-1/2})$.
    For the first term, we make use of a chaining bound. Note that if we write
    $T_{ij} = - \log \sigma( \embedip)$ and $S_{ij} = - \log \sigma( \langle 
    \tilde{u}_i, \tilde{v}_j \rangle )$ for $i, j \in [n]$, then we have that 
    \begin{equation}
        E_n(U, V) - E_n( \tilde{U}, \tilde{V} ) = 
        \frac{1}{n^2} \sum_{i \neq j} 
        \Big( \rho_n^{-1} a_{ij} - W(\lambda_i, \lambda_j) \Big) 
        \cdot (T_{ij} - S_{ij}).
    \end{equation}
    Because the function $x \mapsto -\log\sigma(x)$ is 1-Lipschitz, it follows that
    \begin{equation}
        \| T - S \|_F^2 \leq \| UV^T - \tilde{U} \tilde{V}^T \|_F^2, 
        \qquad \| T - S \|_{\infty} \leq \| UV^T - \tilde{U} \tilde{V}^T \|_{\infty}
    \end{equation}
    and consequently we have that
    \begin{align}
        \mathbb{P}\big( |E_n(U, V) & - E_n( \tilde{U}, \tilde{V} )| \geq u \big) 
        \\
        & \leq 
        2 \exp\Big( - \min\Big\{    
            \frac{u^2}{ 128 \rho_n^{-1} n^{-4} \| UV^T - \tilde{U} \tilde{V}^T \|_F^2}, \frac{u}{16 \rho_n^{-1} n^{-2} \|  UV^T - \tilde{U} \tilde{V}^T \|_{\infty}}
        \Big\}  \Big)
    \end{align}
    as a result of Lemma~\ref{app:thm:bern_2}. Now, as
    $U, V \in B_F(A_F) \cap B_{2, \infty}(\tilde{A}_{2, \infty})$, by
    Lemma~\ref{app:thm:set_bounds} if we define the metrics
    \begin{align}
        d_F((U_1, V_1), (U_2, V_2)) & 
            := \| U_1 - U_2 \|_F + \| V_1 - V_2 \|_F, \\
        d_{2, \infty}((U_1, V_1), (U_2, V_2)) & 
            := \| U_1 - U_2 \|_{2, \infty} + \| V_1 - V_2 \|_{2, \infty},
    \end{align}
    then we have that
    \begin{align}
        \mathbb{P}\big( |&E_n(U, V) - E_n( \tilde{U}, \tilde{V} )| \geq u \big) 
        \\
        & \leq 
        2 \exp\Big( - \min\Big\{    
            \frac{u^2}{ 128 \rho_n^{-1} n^{-4} A_F^2 
            d_F((U, V), (\tilde{U}, \tilde{V}))^2   }, 
            \frac{u}{16 \rho_n^{-1} n^{-2} \tilde{A}_{2, \infty}    
            d_{2, \infty}((U, V), (\tilde{U}, \tilde{V}))
            }
        \Big\}  \Big).
    \end{align}
    As a result of Corollary~\ref{app:thm:chain_corr}, it therefore follows that
    \begin{equation}
        \label{app:eq:main_chaining_2}
        \sup_{(U, V), (\tilde{U}, \tilde{V}) \in T \times T} \big| E_n(U, V) - 
        E_n(\tilde{U}, \tilde{V}) \big| = O_p\Big(  
            \tilde{A}_{2, \infty}^2 \Big( \frac{ d }{ n \rho_n} \Big)^{1/2}  +
            \tilde{A}_{2, \infty}^2 \frac{ d }{ n \rho_n}  \Big) 
    \end{equation}
    The desired conclusion follows by combining the bounds 
    \eqref{app:eq:main_chaining_1} and \eqref{app:eq:main_chaining_2}.
\end{proof}

For the more abstract node2vec case under Scenario (ii), we highlight that we can take
\begin{equation}
    E_n(U, V) = \frac{1}{n^2} \sum_{i \neq j} \rho_n f_{\mcP}(\lambda_i, \lambda_j)
    \Big( \rho_n^{-1} a_{ij} - W(\lambda_i, \lambda_j) \Big) 
    \cdot ( - \log \sigma (\embedip)).
\end{equation}
Now, as $f_{\mcP}(\lambda_u, \lambda_v)$ is a function of the community assignments only
within the SBM case, we can replace this by a matrix of constants $f_{\mcP, c, c'}$ for
$c, c' \in [\kappa]$, and therefore the error term can be decomposed into a sum
\begin{equation}
    \sum_{c_1, c_2} (\rho_n f_{\mcP, c_1, c_2}) \sum_{\substack{i \neq j \\ i: c(u) = c_1 \\ j: c(u) = c_2}}  \Big( \rho_n^{-1} a_{ij} - W(\lambda_i, \lambda_j) \Big) 
    \cdot ( - \log \sigma (\embedip)),
\end{equation}
where we recall that $\max_{c_1, c_2} (\rho_n f_{\mcP, c_1, c_2}) < \infty$
as guaranteed by Theorem~\ref{app:thm:n2v_positive}.
Each of these terms (of which there are finitely many) can be controlled using the exact same
argument as in Theorem~\ref{thm:gram_converge:adjacency_average}, and so the 
conclusion of the Theorem also holds with the same overall rate of convergence in
Scenario (ii).

\subsection{Adding in a diagonal term} 

Currently the sum in $\mathbb{E}[\widehat{\mcL}_n(U, V) \,|\, \lambda]$ is
defined only terms $i, j$ with $i \neq j$ - it is more convenient to work
with the version where the diagonal term is added in:
\begin{align}
    \mcR_n(U, V) := \frac{1}{n^2} \sum_{i, j \in [n]}
    \Big\{  
        - \psamphat &\rho_n W(\lambda_u, \lambda_v) \log(\sigma(\embedip)) \\
        & - \nsamphat \log(1 - \sigma(\embedip))
    \Big\}.
\end{align}
We show that this does not significantly change the size of the
loss function. 

\begin{lemma}
    \label{app:thm:add_diag}
    With the same notation as in Theorem~\ref{thm:gram_converge:adjacency_average},
    we have that
    \begin{align*}
        \sup_{U, V \in B_{2, \infty}(\tilde{A}_{2, \infty}) } \big| 
            \mcR_n(U, V) & - \mathbb{E}[ \widehat{\mcL}_n(U, V) \,|\, \lambda]
            \big| \\
            & = O_p\Big( \frac{1}{n} \tilde{A}_{2, \infty}^2 \Big( 
               \| \rho_n f_{\mcP}(\lambda, \lambda') W(\lambda, \lambda') \|_{\infty} +
               \| f_{\mcN}(\lambda, \lambda') \|_{\infty} 
     \Big)  \Big).
    \end{align*}
    In particular, in the case of DeepWalk we have that
    \begin{align*}
        \sup_{U, V \in B_{2, \infty}(\tilde{A}_{2, \infty}) } \big| 
            \mcR_n(U, V) & - \mathbb{E}[ \widehat{\mcL}_n(U, V) \,|\, \lambda]
            \big| = O_p\Big( \frac{1}{n} \tilde{A}_{2, \infty}^2 \Big( 
                \frac{2k \|W\|_{\infty}}{ \mcE_W(1) } + \frac{ 2l(k+1) \|W \|_{\infty}^2}{
                    \mcE_W(1) \mcE_W(\alpha)
                } \Big)  \Big).
    \end{align*}
\end{lemma}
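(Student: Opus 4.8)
The plan is to observe that $\mcR_n(U,V)$ and $\mathbb{E}[\empriskhat(U,V)\,|\,\lambda]$ are literally the same double sum, the only difference being that $\mcR_n$ retains the $n$ diagonal terms $i=j$ that are absent from $\empriskhat$. Indeed, since $\mathbb{E}[a_{ij}\,|\,\lambda]=\rho_n W(\lambda_i,\lambda_j)$, we may write $\mathbb{E}[\empriskhat(U,V)\,|\,\lambda]$ as the $i\neq j$ sum with $a_{ij}$ replaced by $\rho_n W(\lambda_i,\lambda_j)$, and subtracting gives
\begin{align*}
\mcR_n(U,V)-\mathbb{E}[\empriskhat(U,V)\,|\,\lambda]
&=\frac{1}{n^2}\sum_{i=1}^{n}\Big\{-f_{\mcP}(\lambda_i,\lambda_i)\,\rho_n W(\lambda_i,\lambda_i)\log\sigma(\langle u_i,v_i\rangle)\\
&\qquad{}-f_{\mcN}(\lambda_i,\lambda_i)\log\big(1-\sigma(\langle u_i,v_i\rangle)\big)\Big\}.
\end{align*}
So the entire lemma reduces to bounding this diagonal average uniformly over $U,V\in B_{2,\infty}(A_{2,\infty})$.

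For that I would use two elementary facts. First, $0\le-\log\sigma(x)=\log(1+e^{-x})\le\log 2+|x|$, and since $1-\sigma(x)=\sigma(-x)$ also $0\le-\log(1-\sigma(x))\le\log 2+|x|$. Second, on $B_{2,\infty}(A_{2,\infty})$ we have by Cauchy--Schwarz $|\langle u_i,v_i\rangle|\le\|u_i\|_2\|v_i\|_2\le A_{2,\infty}^2$. The two sampling coefficients are bounded pointwise by the corresponding sup-norms, $\rho_n f_{\mcP}(\lambda_i,\lambda_i)W(\lambda_i,\lambda_i)\le\|\rho_n f_{\mcP}(\lambda,\lambda')W(\lambda,\lambda')\|_\infty$ and $f_{\mcN}(\lambda_i,\lambda_i)\le\|f_{\mcN}(\lambda,\lambda')\|_\infty$, both finite by Theorems~\ref{app:thm:n2v_positive} and~\ref{app:thm:n2v_negative} (and by the explicit formulas in Theorem~\ref{app:thm:deepwalk_sampling} in the DeepWalk case). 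Combining, each of the $n$ summands is at most $(\log 2+A_{2,\infty}^2)\big(\|\rho_n f_{\mcP}(\lambda,\lambda')W(\lambda,\lambda')\|_\infty+\|f_{\mcN}(\lambda,\lambda')\|_\infty\big)$, so after dividing by $n^2$ the supremum over $B_{2,\infty}(A_{2,\infty})$ is at most $n^{-1}(\log 2+A_{2,\infty}^2)\big(\|\rho_n f_{\mcP}W\|_\infty+\|f_{\mcN}\|_\infty\big)$; absorbing the additive $\log 2$ into the constant (taking $A_{2,\infty}$ fixed, bounded below by a positive constant) yields exactly the stated rate. Note the bound is deterministic given $\lambda$ and the sup-norms are fixed model constants, so the $O_p(\cdot)$ is immediate and no concentration argument is needed.

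For the DeepWalk specialization I would substitute the closed forms of Theorem~\ref{app:thm:deepwalk_sampling}: $f_{\mcP}=2k/(\rho_n\mcE_W(1))$ gives $\rho_n f_{\mcP}(\lambda_i,\lambda_i)W(\lambda_i,\lambda_i)=2k W(\lambda_i,\lambda_i)/\mcE_W(1)\le 2k\|W\|_\infty/\mcE_W(1)$, while $f_{\mcN}(\lambda_i,\lambda_i)=2l(k+1)W(\lambda_i,\cdot)^{1+\alpha}/(\mcE_W(1)\mcE_W(\alpha))\le 2l(k+1)\|W\|_\infty^{2}/(\mcE_W(1)\mcE_W(\alpha))$, using $W(\lambda_i,\cdot)\le\|W\|_\infty$ (the precise power of $\|W\|_\infty$ being immaterial as it is a fixed constant); plugging these into the general bound gives the displayed DeepWalk estimate. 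There is no real obstacle: the statement is pure bookkeeping once the diagonal has been isolated, and the only step requiring minor care is the two-sided bound on $-\log\sigma$ and on $-\log(1-\sigma)$ together with tracking which sup-norm multiplies which loss term.
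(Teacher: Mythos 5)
Your proposal is correct and follows essentially the same route as the paper: isolate the diagonal terms as the exact difference, bound the log-sigmoid terms linearly in $|\langle u_i, v_i\rangle|$, apply Cauchy--Schwarz with the $\| \cdot \|_{2,\infty}$ constraint, and bound the sampling weights by their sup-norms (then substitute the closed forms of Theorem~\ref{app:thm:deepwalk_sampling} for DeepWalk). Your handling of the additive $\log 2$ is in fact slightly more careful than the paper's, but the argument is otherwise the same bookkeeping.
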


\begin{proof}
    Begin by noting that
    \begin{equation}
    \begin{aligned}
        0 & \leq 
        \mcR_n(U, V) - \mathbb{E}[ \widehat{\mcL}_n(U, V) \,|\, \lambda] \\
        & = \frac{1}{n^2} \sum_{i=1}^n \big\{  
            - \psamphat \rho_n W(\lambda_u, \lambda_v) \log(\sigma( 
                \langle u_i, v_i \rangle
            )) 
        - \nsamphat \log(1 - \sigma( \langle u_i, v_i \rangle )) \big\}. 
    \end{aligned}
    \end{equation}
    Note that we can bound 
    \begin{equation}
        - \log( \sigma( \langle u_i, v_j))  \leq | \langle u_i, v_i \rangle
        \leq \| u_i \|_2 \| v_i \|_2 
    \end{equation}
    and similarly $- \log( 1 - \sigma(\langle u_i, v_i \rangle)) \leq
    | \langle u_i, v_i \rangle | \leq \| u_i \|_2 \|v_i \|_2$. 
    Moreover, we have the bounds
    \begin{equation}
        \psamphat \rho_n W(\lambda_i, \lambda_j) \leq 
        \| \rho_n f_{\mcP}(\lambda, \lambda') W(\lambda, \lambda') \|_{\infty} < \infty,
        \nsamphat \leq  \| f_{\mcN}(\lambda, \lambda') \|_{\infty} < \infty
    \end{equation}
    under our assumptions. 
    As a result, because $U, V \in \mcB_{2, \infty}(\tilde{A}_{2, \infty})$, we end
    up with the final bound
    \begin{equation}
        \big|  \mcR_n(U, V) - \mathbb{E}[ \widehat{\mcL}_n(U, V) \,|\, \lambda] \big|
        \leq \frac{1}{n} \tilde{A}_{2, \infty}^2 \Big( 
               \| \rho_n f_{\mcP}(\lambda, \lambda') W(\lambda, \lambda') \|_{\infty} +
               \| f_{\mcN}(\lambda, \lambda') \|_{\infty} 
     \Big)
    \end{equation}
    which gives the stated result as the RHS is free of $U$ and $V$.
\end{proof}

\subsection{Chaining up the loss function approximations}

By chaining up the prior results, we end up with the following result:

\begin{proposition}
    \label{thm:gram_converge:loss_converge}
    There exists a non-empty set $\Psi_n$ for each $n$ such that,
    for any set $X \subseteq \mathbb{R}^{n \times d} \times
    \mathbb{R}^{n \times d}$ containing $0_{n \times d} \times 0_{n \times d}$,
    we have for DeepWalk that 
    \begin{equation}
        \begin{aligned}
        \sup_{(U, V) \in \Psi_n \cap
        B_{2, \infty}(\tilde{A}_{2, \infty})
        } \big| \emprisk(U, V) - \mcR_n(U, V) \big| 
        = O_p\Big( 
             \Big( \frac{ \log n}{n \rho_n}  \Big)^{1/2}  
             +  \tilde{A}_{2, \infty}^2 \Big( \frac{ d }{n \rho_n}  \Big)^{1/2}  
            \Big)
        \end{aligned}
    \end{equation}
    and
    \begin{equation}
        \begin{aligned}
        \mathbb{P}\Big(  
            \argmin_{(U, V) \in B_{2, \infty}(\tilde{A}_{2, \infty}) \cap X } \emprisk(U, V) 
            \cup 
            \argmin_{(U, V) \in B_{2, \infty}(\tilde{A}_{2, \infty}) \cap X } \mcR_n(U, V) 
        & \subseteq \Psi_A \cap B_{2, \infty}(\tilde{A}_{2, \infty}) \cap X \Big) \\
        & = 1 - o(1).
    \end{aligned}
    \end{equation}
    For node2vec, the same result holds when we replace the $(\log n / n \rho_n)^{1/2}$
    term with an $o_p(1)$ term and add the constraint that $d \ll n \rho_n$.
    The same result also holds when we constrain $U = V$, but otherwise
    keep everything else unchanged.
\end{proposition}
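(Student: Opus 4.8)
The plan is to set $\Psi_n := \Psi_A$ for an arbitrary fixed constant $A > 1$ (say $A = 10$), and to deduce both displayed conclusions by combining — via the triangle inequality, restricted to the level set $\Psi_A \cap B_{2, \infty}(A_{2, \infty}) \cap X$ — the three approximation results of the preceding subsections: Proposition~\ref{thm:gram_converge:samp_weight_approx_lossfn} (replacing the sampling weights, $\emprisk \approx \empriskhat$), Theorem~\ref{thm:gram_converge:adjacency_average} (averaging over the adjacency matrix, $\empriskhat \approx \mathbb{E}[\empriskhat \,|\, \lambda]$), and Lemma~\ref{app:thm:add_diag} (adding the diagonal term, $\mathbb{E}[\empriskhat \,|\, \lambda] \approx \mcR_n$).

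For the uniform bound, on $\Psi_A \cap B_{2, \infty}(A_{2, \infty}) \cap X$ I would write
\[
\big| \emprisk(U, V) - \mcR_n(U, V) \big| \le \big| \emprisk(U, V) - \empriskhat(U, V) \big| + \big| \empriskhat(U, V) - \mathbb{E}[\empriskhat(U, V) \,|\, \lambda] \big| + \big| \mathbb{E}[\empriskhat(U, V) \,|\, \lambda] - \mcR_n(U, V) \big|,
\]
and bound the three pieces by $O_p((\log n / n\rho_n)^{1/2})$ (Proposition~\ref{thm:gram_converge:samp_weight_approx_lossfn}, applied with its set ``$X$'' taken to be $B_{2, \infty}(A_{2, \infty}) \cap X$ and $A$ a fixed constant, so the leading constant there is harmless), $O_p(A_{2, \infty}^2 (d/n\rho_n)^{1/2})$ (Theorem~\ref{thm:gram_converge:adjacency_average}), and $O_p(n^{-1} A_{2, \infty}^2)$ (Lemma~\ref{app:thm:add_diag}, with the norm factors there being constants). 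Since $\rho_n \le 1$ the last piece is $o_p((d/n\rho_n)^{1/2})$, so summing gives the stated DeepWalk rate. Under Scenario (ii) the first piece is only $o_p(1)$; as remarked after Theorem~\ref{thm:gram_converge:adjacency_average}, the same chaining argument gives the second piece a bound of order $A_{2, \infty}^2 (d/n\rho_n)^{1/2} + A_{2, \infty}^2 d/(n\rho_n)$, which is $o_p(1)$ precisely when $d \ll n\rho_n$ — hence the extra hypothesis there — and the third piece carries over unchanged. In the constrained case $U = V$ each input is available in constrained form, so the chain is identical.

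The delicate point, and the step I expect to require the most care, is the second conclusion: that the minimizers of $\emprisk$, $\empriskhat$ and $\mcR_n$ over $B_{2, \infty}(A_{2, \infty}) \cap X$ all lie in $\Psi_A$ with probability $1 - o(1)$. The subtlety is that the sharp approximation $\emprisk \approx \empriskhat$ of Proposition~\ref{thm:gram_converge:samp_weight_approx_lossfn} is a priori asserted \emph{only} on $\Psi_A$, so it cannot be used directly to locate a minimizer that might escape $\Psi_A$. For $\emprisk$ and $\empriskhat$ this is exactly the second statement of Proposition~\ref{thm:gram_converge:samp_weight_approx_lossfn} (again with its ``$X$'' equal to $B_{2, \infty}(A_{2, \infty}) \cap X$, which contains the zero pair). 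For $\mcR_n$ I would argue directly: Theorem~\ref{app:thm:deepwalk_sampling} (respectively Theorems~\ref{app:thm:n2v_positive} and~\ref{app:thm:n2v_negative} under Scenario (ii)) gives $n^2 \psamp = (1 + o_p(1)) a_{ij} \psamphat$ and $n^2 \nsamp = (1 + o_p(1)) \nsamphat$ uniformly over $i \ne j$, and because every summand of $\emprisk$ and of $\empriskhat$ is nonnegative this upgrades to a crude but \emph{globally uniform} two-sided estimate $(1 - o_p(1)) \empriskhat(U, V) \le \emprisk(U, V) \le (1 + o_p(1)) \empriskhat(U, V)$, valid over all $(U, V)$ and not merely over a level set. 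Together with $\emprisk(0, 0) = \Theta_p(1)$ (so also $\empriskhat(0, 0) = \Theta_p(1)$) and the uniform-over-$B_{2, \infty}(A_{2, \infty})$ estimate $|\mcR_n - \empriskhat| = o_p(1)$ coming from Theorem~\ref{thm:gram_converge:adjacency_average} and Lemma~\ref{app:thm:add_diag}, any minimizer $(\tilde U, \tilde V)$ of $\mcR_n$ over $B_{2, \infty}(A_{2, \infty}) \cap X$ satisfies $\empriskhat(\tilde U, \tilde V) \le \mcR_n(\tilde U, \tilde V) + o_p(1) \le \mcR_n(0, 0) + o_p(1) \le \empriskhat(0, 0) + o_p(1)$, and hence
\[
\emprisk(\tilde U, \tilde V) \le (1 + o_p(1))\, \empriskhat(\tilde U, \tilde V) \le (1 + o_p(1))\, \emprisk(0, 0) < A\, \emprisk(0, 0)
\]
with probability $1 - o(1)$ (using $\emprisk(0,0) = \Theta_p(1)$ to absorb additive $o_p(1)$ terms into the multiplicative factor, and $A > 1$ fixed), i.e.\ $(\tilde U, \tilde V) \in \Psi_A$. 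A union bound over the three argmin sets, intersected with the event from the uniform bound, then gives the proposition; the same argument applies verbatim when $U = V$.
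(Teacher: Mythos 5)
Your proposal is correct and follows essentially the same route the paper intends: the proposition is obtained by chaining Proposition~\ref{thm:gram_converge:samp_weight_approx_lossfn}, Theorem~\ref{thm:gram_converge:adjacency_average} and Lemma~\ref{app:thm:add_diag} via the triangle inequality on $\Psi_A \cap B_{2,\infty}(A_{2,\infty})$ with $A$ a fixed constant, exactly as you do. Your additional argument locating the $\mcR_n$-minimizers inside $\Psi_A$ (via the globally uniform multiplicative comparison of $\emprisk$ and $\empriskhat$ together with $\emprisk(0,0) = \Theta_p(1)$) is a sound filling-in of a detail the paper leaves implicit, so no changes are needed.
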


\subsection{Minimizers of $\mcR_n(U, V)$}
\label{sec:app:embed_converge:minima}

Recall that we have earlier defined 
\begin{equation}
\begin{aligned}
    \mcR_n(U, V) := \frac{1}{n^2} \sum_{i, j \in [n]}
    \Big\{  
        - \psamphat & \rho_n W(\lambda_u, \lambda_v) \log(\sigma(\embedip)) 
        \\ & - \nsamphat \log(1 - \sigma(\embedip))
    \Big\}.
    \end{aligned}
\end{equation}
We now want to reason about the minima of these functions. To do so, note that
the optimization domain is non-convex - firstly due to the rank constraints
on the matrix $UV^T$, and secondly due to the fact that the loss function
is invariant to any mapping $(U, V) \to (U M, V M^{-1})$ for any invertible
$d \times d$ matrix $M$. To handle the second part, we consider the global minima
of this function when parameterized only in term of the matrix $UV^T$. We will
then see that the minima matrix is already low rank. 

We first begin by giving
some basic facts about the function $\mcR_n(U, V)$ when parameterized as
a function of $UV^T$. 

\begin{lemma}
    \label{thm:gram_converge:minimizers_unconstrained}
    Define the modified function
    \begin{equation}
        \mcR_n(M) := \frac{1}{n^2} \sum_{i, j \in [n]}
        \Big\{  
            - \psamphat \rho_n W(\lambda_u, \lambda_v) \log(\sigma(M_{ij})) 
            - \nsamphat \log(1 - \sigma(M_{ij}))
        \Big\}.
    \end{equation}
    over all matrices $M \in \mathbb{R}^{n \times n}$. Then we have
    the following:
    \begin{enumerate}[label=\alph*)]   %% commenting all these out for now
        \item The function $\mcR_n(M)$ is strictly convex in $M$.
        \item The global minimizer of $\mcR_n(M)$ is given by
        \begin{equation}
            M^*_{ij} = \log\Big(  \frac{ 
                \psamphat \rho_n W(\lambda_i, \lambda_j)  }{   \nsamphat   }     \Big)
        \end{equation}
        and satisfies $\nabla_M \mcR_n(M) = 0$. 
        \item When restricted to a cone of semi-positive definite matrices
        $M \in \mcM_n^{\curlyeqsucc 0}$, there exists a unique minimizer
        to $\mcR_n(M)$ over this set, which we call $M^{\curlyeqsucc 0}$.
        Moreover, $M^{\curlyeqsucc 0}$ has the property that
        $\langle \nabla_M \mcR_n(M^{\curlyeqsucc 0}) , M^{\curlyeqsucc 0} - M \rangle
        \leq 0$ for all $M \in \mcM_n^{\curlyeqsucc 0}$.
    \end{enumerate}
\end{lemma}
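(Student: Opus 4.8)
The plan is to exploit that $\mcR_n$ is \emph{separable} across the entries of $M$. Writing $w^{\mcP}_{ij} := \psamphat \rho_n W(\lambda_i, \lambda_j)$ and $w^{\mcN}_{ij} := \nsamphat$ — both strictly positive and finite for each fixed $n$ under the standing assumptions (entrywise-positive $P$, bounded degree-correction factors, and the bounds on $f_{\mcP}, f_{\mcN}$ coming from Theorems~\ref{app:thm:n2v_positive}--\ref{app:thm:n2v_negative} and, in the DeepWalk case, from the closed forms) — and setting $g_{ij}(x) := - w^{\mcP}_{ij} \log \sigma(x) - w^{\mcN}_{ij} \log(1 - \sigma(x))$, one has $\mcR_n(M) = n^{-2} \sum_{i, j \in [n]} g_{ij}(M_{ij})$, where each summand depends on exactly one entry of $M$. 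For part (a) I would compute $g_{ij}''(x) = (w^{\mcP}_{ij} + w^{\mcN}_{ij}) \sigma(x)(1 - \sigma(x)) > 0$, so each $g_{ij}$ is strictly convex on $\mathbb{R}$; since a finite sum of functions, each strictly convex in its own coordinate, is strictly convex jointly (along any nonconstant segment at least one coordinate moves), $\mcR_n$ is strictly convex in $M$.

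For part (b), differentiating entrywise gives $\partial_{M_{ij}} \mcR_n(M) = n^{-2}\big( (w^{\mcP}_{ij} + w^{\mcN}_{ij}) \sigma(M_{ij}) - w^{\mcP}_{ij} \big)$, so $\nabla_M \mcR_n(M) = 0$ is equivalent to $\sigma(M_{ij}) = w^{\mcP}_{ij}/(w^{\mcP}_{ij} + w^{\mcN}_{ij})$ for all $i, j$, i.e.\ $M_{ij} = \log(w^{\mcP}_{ij}/w^{\mcN}_{ij})$, which is exactly the stated formula after unwinding the definitions of $w^{\mcP}_{ij}, w^{\mcN}_{ij}$. Since each $g_{ij}$ is strictly convex and coercive ($g_{ij}(x) \to \infty$ as $|x| \to \infty$ because both weights are positive), this unique stationary point is the unique global minimizer of $\mcR_n$ over $\mathbb{R}^{n \times n}$.

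For part (c), note that $\mcM_n^{\curlyeqsucc 0}$ is a closed convex cone, so the restriction of $\mcR_n$ to it remains strictly convex and therefore has at most one minimizer; I would obtain existence from a coercivity argument. For PSD $M$ one has $\|M\|_F \le \mathrm{tr}(M) = \sum_i M_{ii}$ with each $M_{ii} \ge 0$, so $\|M\|_F \to \infty$ within the cone forces $M_{kk} \to +\infty$ for some index $k$; then the single term $g_{kk}(M_{kk}) = -w^{\mcN}_{kk} \log(1 - \sigma(M_{kk})) - w^{\mcP}_{kk} \log \sigma(M_{kk}) \to \infty$, while every other $g_{ij} \ge 0$, whence $\mcR_n(M) \to \infty$. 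Thus the infimum over the cone is attained, at a unique point $M^{\curlyeqsucc 0}$. The variational inequality is then the standard first-order optimality condition for minimizing a convex function over a convex set: for any $M \in \mcM_n^{\curlyeqsucc 0}$ the segment $M^{\curlyeqsucc 0} + t(M - M^{\curlyeqsucc 0})$ lies in the cone for $t \in [0,1]$, and $\phi(t) := \mcR_n(M^{\curlyeqsucc 0} + t(M - M^{\curlyeqsucc 0}))$ is minimized at $t = 0$, so $\phi'(0) = \langle \nabla_M \mcR_n(M^{\curlyeqsucc 0}), M - M^{\curlyeqsucc 0} \rangle \ge 0$, which rearranges to $\langle \nabla_M \mcR_n(M^{\curlyeqsucc 0}), M^{\curlyeqsucc 0} - M \rangle \le 0$.

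I do not anticipate a serious obstacle here; the only step needing care is the existence claim in part (c), where the feasible set is unbounded so compactness is unavailable — the substitute is the coercivity estimate driven by a single diverging diagonal entry, which is also the one place where semi-positive-definiteness (rather than mere convexity) of the constraint set is used. A minor bookkeeping point worth stating explicitly is that all weights $w^{\mcP}_{ij}, w^{\mcN}_{ij}$ are genuinely positive for each $n$, which is what makes $M^*$ a well-defined real matrix and keeps the strict-convexity and coercivity arguments valid verbatim.
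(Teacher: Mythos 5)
Your proof is correct and follows essentially the same route as the paper's: entrywise separability and strict convexity of the per-entry losses $-w^{\mcP}_{ij}\log\sigma(x) - w^{\mcN}_{ij}\log(1-\sigma(x))$, the stationarity computation giving $M^*_{ij} = \log(w^{\mcP}_{ij}/w^{\mcN}_{ij})$, and the first-order optimality condition over the convex PSD cone for part (c). The only differences are minor refinements: you make the joint strict convexity explicit via the coordinate argument and supply existence of the constrained minimizer on the unbounded cone via the trace/coercivity bound, details the paper compresses into its appeal to convexity, self-duality, and the KKT conditions.
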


\begin{proof}
    For part a), this follows by the fact that the functions
    $-\log(\sigma(x))$ and $-\log(1 - \sigma(x))$ are positive
    and strictly convex functions of $x \in \mathbb{R}$, the fact
    that $\psamphat \rho_n W(\lambda_i, \lambda_j)$ and
    $\nsamphat$ are positive quantities which are 
    bounded above (see e.g
    Lemma~\ref{app:thm:add_diag}), and the fact that the sum of
    strictly convex functions is strictly convex. For part b),
    this follows by noting that each of the $M_{ij}^*$ are pointwise
    minima of the functions
    \begin{equation}
        r_{ij}(x) =  - \psamphat \rho_n W(\lambda_u, \lambda_v) \log(\sigma(x))) 
        - \nsamphat \log(1 - \sigma(x))
    \end{equation}
    defined over $x \in \mathbb{R}$. Indeed, note that
    \begin{equation}
        \frac{d r_{ij}}{dx} =  (-1 + \sigma(x)) \psamphat \rho_n W(\lambda_u, \lambda_v) 
        + \sigma(x) \nsamphat,
    \end{equation}
    so setting this equal to zero, rearranging and making use of the
    equality $\sigma^{-1}(a/(a+b)) = \log(a/b)$ gives the stated result.
    Part c) is a consequence of strong convexity, the optimization domain
    being convex and self dual, and the KKT conditions. 
\end{proof}

To understand the form of the
the global minimizer of $\mcR_n(M)$ in the DeepWalk case, by substituting in the values 
for $\psamphat$ and $\nsamphat$ we end up with
\begin{align}
    M_{ij}^* = & \log\Big( \frac{ 2 P_{c(i), c(j)} \mcE_W(\alpha)}{    
        (1 + k^{-1}) \mathbb{E}[\theta] \mathbb{E}[\theta]^{\alpha} \big( 
        \theta_j^{\alpha - 1} \widetilde{P}_{c(i)} 
        \widetilde{P}_{c(j)}^{\alpha}
            + \theta_i^{\alpha - 1} \widetilde{P}_{c(i)}^{\alpha} 
            \widetilde{P}_{c(j)}
            \big) }   \Big) \\
    = & \log\Big( \frac{ 2 \mcE_W(\alpha) }{ 
        (1 + k^{-1}) \mathbb{E}[\theta] \mathbb{E}[\theta]^{\alpha} } \cdot 
        \frac{  P_{c(i), c(j)} }{ 
        \widetilde{P}_{c(i)} \widetilde{P}_{c(j)}   
        \cdot \big(   
            \theta_i^{\alpha - 1} \widetilde{P}_{c(i)}^{\alpha - 1}
        + \theta_j^{\alpha - 1} \widetilde{P}_{c(j)}^{\alpha - 1}
        \big)}   \Big) 
\end{align}

In particular, from the above formula we get the following
lemma as a consequence:

\begin{lemma}
    \label{thm:gram_converge:nice_minimizers_unconstrained}
    Suppose that Scenarios (i) or (iii) holds, so that either a) $\theta_i$ is constant for all $i$, or
    b) $\alpha = 1$. Then if we write $\Pi_C \in \mathbb{R}^{n \times \kappa}$
    for the matrix where $(\Pi_C)_{il} = 1[ c(i) = l]$, and define the matrix
    \begin{equation}
        (\widetilde{M}^*_\alpha)_{lm} = \log\Big( 
            \frac{ 2 \mcE_W(\alpha) }{ 
        (1 + k^{-1}) \mathbb{E}[\theta] \mathbb{E}[\theta]^{\alpha} } \cdot 
        \frac{  P_{lm}  }{ \widetilde{P}_m \widetilde{P}_l^{\alpha} 
        + \widetilde{P}_m^{\alpha} \widetilde{P}_l   }   \Big) \; \text{ for } l, m \in [\kappa], 
    \end{equation}
    then we have that $M^* = \Pi_C \widetilde{M}^*_\alpha \Pi_C^T$. In particular, as soon
    as the matrix $\Pi_C$ is of full rank (which occurs with asymptotic
    probability 1), then the rank of $M^*$ equals the rank of $\widetilde{M}^*_\alpha$.
    Moreover, as soon as $d$ is greater than or equal to the rank of
    $\widetilde{M}^*_\alpha$, $(U, V)$ is a minimizer of $\mcR_n(U, V)$ if
    and only if $UV^T = M^*$. 

    Under Scenario (ii), the same result applies noting that $f_{\mcP}$ and
    $f_{\mcN}$ are functions only of the underling communities, and so if we abuse
    notation and write e.g $f_{\mcP}(l, m)$ to indicate the value of 
    $f_{\mcP}(\lambda_i, \lambda_j)$ when $c(i) = l$ and $c(j) = m$, one can take
    \begin{equation}
        (\widetilde{M}^*)_{lm} = \log\Big(  \frac{ 
                f_{\mcP}(l, m) \rho_n P_{l, m}  }{ f_{\mcN}(l, m) }  \Big)
    \end{equation}
    and have the above result hold. 
\end{lemma}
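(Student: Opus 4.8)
The plan is to read the statement off from Lemma~\ref{thm:gram_converge:minimizers_unconstrained} together with the closed forms for $\psamphat$ and $\nsamphat$. By part~b) of that lemma the unconstrained global minimizer of $M\mapsto\mcR_n(M)$ over all of $\mathbb{R}^{n\times n}$ is $M^*_{ij}=\log\big(\psamphat\,\rho_n W(\lambda_i,\lambda_j)/\nsamphat\big)$, and by part~a) it is the \emph{unique} minimizer, since $\mcR_n(M)$ is strictly convex and each summand attains its minimum at a finite point. For the DeepWalk case I would substitute the expressions for $\psamphat,\nsamphat$ from Theorem~\ref{app:thm:deepwalk_sampling} together with $W(\lambda_i,\lambda_j)=\theta_i\theta_j P_{c(i),c(j)}$ and $W(\lambda_i,\cdot)=\theta_i\,\mathbb{E}[\theta]\,\widetilde{P}_{c(i)}$, which yields the two-line formula for $M^*_{ij}$ displayed above the lemma. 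The only factor in that formula that depends on $i,j$ beyond the community pair $(c(i),c(j))$ is $\theta_i^{\alpha-1}\widetilde{P}_{c(i)}^{\alpha-1}+\theta_j^{\alpha-1}\widetilde{P}_{c(j)}^{\alpha-1}$; under Scenario~(i) the $\theta$'s are constant and under Scenario~(iii) we have $\alpha=1$, so in both cases $\theta_i^{\alpha-1}$ is a constant and $M^*_{ij}$ depends on $i,j$ only through $(c(i),c(j))$. Packaging the resulting common value into the $\kappa\times\kappa$ matrix gives exactly $M^*_{ij}=(\widetilde{M}^*_\alpha)_{c(i),c(j)}$, i.e.\ $M^*=\Pi_C\widetilde{M}^*_\alpha\Pi_C^T$.

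For the rank identity I would argue that $\Pi_C$ has full column rank $\kappa$ whenever every community is nonempty, and that this fails with probability at most $\kappa(1-\min_l\pi_l)^n\to0$ since the labels are i.i.d.\ $\mathrm{Categorical}(\pi)$ with $\pi$ of full support; hence $\Pi_C$ has full column rank with asymptotic probability $1$. On that event $\Pi_C^T$ is a surjection $\mathbb{R}^n\to\mathbb{R}^\kappa$ and $\Pi_C$ an injection $\mathbb{R}^\kappa\to\mathbb{R}^n$, so by the elementary rank identities $\rk(B\Pi_C^T)=\rk(B)$ and $\rk(\Pi_C C)=\rk(C)$ we get $\rk(M^*)=\rk(\Pi_C\widetilde{M}^*_\alpha\Pi_C^T)=\rk(\widetilde{M}^*_\alpha)$.

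Next I would establish the ``minimizer iff $UV^T=M^*$'' equivalence. Without any full-rank assumption one always has $\rk(M^*)\le\rk(\widetilde{M}^*_\alpha)\le d$, so $M^*$ admits a factorization $M^*=UV^T$ with $U,V\in\mathbb{R}^{n\times d}$ (take a thin SVD $M^*=\sum_{i\le r}\sigma_i x_i y_i^T$ and set the columns of $U,V$ to $\sqrt{\sigma_i}x_i,\sqrt{\sigma_i}y_i$, padded with zeros). Consequently $\inf_{U,V\in\mathbb{R}^{n\times d}}\mcR_n(U,V)=\mcR_n(M^*)$, the unconstrained minimum. If $(U,V)$ is any minimizer then $\mcR_n(UV^T)=\mcR_n(M^*)$, and strict convexity of $M\mapsto\mcR_n(M)$ forces $UV^T=M^*$; conversely any $(U,V)$ with $UV^T=M^*$ attains $\mcR_n(M^*)$ and is a minimizer. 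This gives the stated equivalence as soon as $d\ge\rk(\widetilde{M}^*_\alpha)$.

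Finally, under Scenario~(ii) the argument runs verbatim with $\psamphat,\nsamphat$ replaced by the functions $f_{\mcP},f_{\mcN}$ furnished by Theorems~\ref{app:thm:n2v_positive} and~\ref{app:thm:n2v_negative}: these are measurable with respect to the $\sigma$-field generated by $W$, and under an SBM $W(\lambda_i,\lambda_j)=P_{c(i),c(j)}$ depends only on the community labels, so $f_{\mcP}(\lambda_i,\lambda_j)$ and $f_{\mcN}(\lambda_i,\lambda_j)$ are functions of $(c(i),c(j))$ alone; writing $f_{\mcP}(l,m),f_{\mcN}(l,m)$ for these values, the unconstrained minimizer is $M^*_{ij}=\log\big(f_{\mcP}(c(i),c(j))\rho_n P_{c(i),c(j)}/f_{\mcN}(c(i),c(j))\big)$, which is $\Pi_C\widetilde{M}^*\Pi_C^T$ for the stated $\widetilde{M}^*$, and the rank and minimizer claims follow as before. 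I expect the only mildly delicate point to be the bookkeeping in the substitution that exhibits the $\theta$-cancellation and, relatedly, confirming that $M^*$ is genuinely realizable as a $d$-dimensional factorization so that no gap opens between the constrained-rank and unconstrained minima; the rest is a direct consequence of strict convexity and elementary rank arithmetic.
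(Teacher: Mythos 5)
Your proposal is correct and takes essentially the same route as the paper: read the global minimizer off from Lemma~\ref{thm:gram_converge:minimizers_unconstrained}, substitute the DeepWalk (resp.\ node2vec) sampling formulas, note that under Scenarios (i)/(iii) the only $i,j$-dependence beyond the community pair (the $\theta_i^{\alpha-1}$ terms) is constant so that $M^*=\Pi_C\widetilde{M}^*_\alpha\Pi_C^T$, and then conclude the rank and ``minimizer iff $UV^T=M^*$'' claims from full column rank of $\Pi_C$ and strict convexity. The paper leaves the rank arithmetic, the high-probability nonemptiness of all communities, and the existence of a rank-$d$ factorization implicit; you fill these in, but the argument is the same.
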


We discuss in Appendix~\ref{app:sec:ugly_stuff} 
what happens when we apply DeepWalk in the DCSBM regime when $\alpha \neq 1$. 
To give an example of what $M^*$ looks like, we write it
down in the case of a $\mathrm{SBM}(n, \kappa, \tilde{p}, \tilde{q}, \rho_n)$ model, which is frequently
used to illustrate the behavior of various community detection algorithms. Such 
a model assumes that the community assignments
$\pi_l = 1/\kappa$ for all $l \in [\kappa]$, and that
\begin{equation}
    P_{kl} = \begin{cases} \tilde{p} & \text{ if } k = l, \\ \tilde{q} & \text{ if } k \neq l.
    \end{cases}
\end{equation}
In this case, we have that
\begin{align}
    \widetilde{P}_l = \frac{\tilde{p} + \kappa(\tilde{q}-1)}{\kappa} \text{ for } l \in [\kappa],
    \qquad 
    \mcE_W(\alpha) = \mathbb{E}[\theta]^{\alpha} \mathbb{E}[\theta^{\alpha}] \cdot \Big(  
        \frac{\tilde{p} + (\kappa- 1)\tilde{q} }{\kappa} \Big)^{\alpha}.
\end{align}
Substituting these values into the matrix $\widetilde{M}_{\alpha}^*$ gives
\begin{equation}
    (\widetilde{M}^*_\alpha)_{lm} = 
        \log \Big( \frac{ \mathbb{E}[\theta^{\alpha}] }{ \mathbb{E}[\theta] (1 + k^{-1})   }  
        \cdot \frac{ \kappa \tilde{p}}{\tilde{p} + (\kappa - 1) \tilde{q} }
        \Big) \delta_{lm} + 
        \log \Big( \frac{ \mathbb{E}[\theta^{\alpha}] }{ \mathbb{E}[\theta] (1 + k^{-1})   }
        \cdot \frac{  \kappa \tilde{q} }{ \tilde{p} + (\kappa - 1) \tilde{q} }   \Big) (1 - \delta_{lm}). 
\end{equation}
We highlight this is a matrix of the form
$\alpha \delta_{lm} + \beta (1 - \delta_{lm})$, and so it is straightforward to
describe the spectral behavior of the matrix 
(see Lemma~\ref{thm:other_results:nice_mat_lemma}). 

\subsubsection{Minimizers in the constrained regime $U = V$}
\label{app:sec:gram_converge:constraints}

In the case where we have constrained $U = V$, it is not possible in
general to write down the closed form of the minimizer of $\mcR_n(M)$
over $\mcM_n^{\curlyeqsucc 0}$. However, it is still possible to draw 
enough 
conclusions about the form of the minimizer in order to
give guarantees for community detection. We begin with the 
proposition below. We state the next two results for DeepWalk only,
but note that the first generalizes to the node2vec case immediately. 

\begin{proposition}
    \label{thm:gram_converge:minimizers_constrained}
    Suppose that $\theta_i$ is constant across all $i$. Supposing that
    $\widetilde{M} \in \mathbb{R}^{\kappa \times \kappa}$ is of the form
    $\widetilde{M} = \widetilde{U} \widetilde{U}^T$ for matrices
    $\widetilde{U} \in \mathbb{R}^{\kappa \times d}$, define the
    function
    \begin{equation}
        \widetilde{\mcR}_n(\widetilde{M}) = 
        \sum_{l, m \in [\kappa]} \hat{p}_n(l) \hat{p}_n(m)
        \Big\{   
            - 2k P_{lm} 
            \log\sigma( \langle u_l, u_m \rangle) 
            - \{  \widetilde{P}_{l}  \widetilde{P}_{m}^{\alpha}
            + \widetilde{P}_{m}  \widetilde{P}_{l}^{\alpha}
               \} \log(1 - \sigma( \langle u_l, u_m \rangle))
        \Big\}
    \end{equation}
    where we define $\hat{p}_n(l) := n^{-1} | \{ i \,:\, c(i) = l \} |$
    for $l \in [\kappa]$. 
    Then $\widetilde{\mcR}_n(\widetilde{M})$ is strongly convex, and
    moreover has a unique minimizer as soon as $d \geq \kappa$. 
    
    Moreover, any minimizer of
    $\mcR_n(M)$ over matrices $M$ of the form $M = UU^T$
    where $U \in \mathbb{R}^{n \times d}$ must take the form 
    $M = \Pi_C M^* \Pi_C^T$ where $(\Pi_C)_{il} = 1[c(i) = l]$
    where $M^*$ is a minimizer of $\widetilde{\mcR}_n(\widetilde{M})$.
    In particular, once $d \geq \kappa$, there is a unique minimizer to
    $\mcR_n(M)$.
\end{proposition}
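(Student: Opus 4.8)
The plan is to establish \textbf{Proposition~\ref{thm:gram_converge:minimizers_constrained}} in three stages: first rewrite $\mcR_n(M)$ restricted to the cone $\{M = UU^T\}$ as a function of the community-level Gram matrix, then prove strong convexity of $\widetilde{\mcR}_n$, and finally use strong convexity together with the degree-freedom in the factorization $M = UU^T$ to conclude uniqueness.

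First I would reduce to community-level quantities. Under Scenario (i) (constant $\theta_i$), both $\psamphat$ and $\nsamphat$ depend on $(\lambda_i,\lambda_j)$ only through $(c(i),c(j))$ — in the DeepWalk case $\psamphat = 2k/(\rho_n\mcE_W(1))$ and $\nsamphat$ is a function of $\widetilde P_{c(i)}, \widetilde P_{c(j)}$ only, by Theorem~\ref{app:thm:deepwalk_sampling} and the formulas in Section~\ref{sec:app:embed_converge:minima}. Likewise $\rho_n W(\lambda_i,\lambda_j) = \rho_n\theta^2 P_{c(i),c(j)}$ depends only on the communities. Hence, if $M = UU^T$ with $U \in \mathbb{R}^{n\times d}$, the value $\sigma(M_{ij})$ that enters $\mcR_n$ is penalized with a weight that is constant on each community-pair block, so grouping the double sum $\sum_{i,j}$ by the communities of $i$ and $j$ gives
\begin{equation}
    \mcR_n(M) = \sum_{l,m\in[\kappa]} \hat p_n(l)\hat p_n(m)\, \frac{1}{|\mcC_l||\mcC_m|}\sum_{\substack{i\in\mcC_l\\ j\in\mcC_m}} r_{lm}(M_{ij})
\end{equation}
where $r_{lm}(x) = -2k P_{lm}\log\sigma(x) - (\widetilde P_l \widetilde P_m^\alpha + \widetilde P_m\widetilde P_l^\alpha)\log(1-\sigma(x))$ (after absorbing the constant prefactor $\mcE_W(\alpha)/((1+k^{-1})\mathbb{E}[\theta]\mathbb{E}[\theta]^\alpha)$, which does not affect minimizers). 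Since each $r_{lm}$ is \emph{strictly convex} (sum of positive multiples of the strictly convex functions $-\log\sigma$ and $-\log(1-\sigma)$, exactly as in part a) of Lemma~\ref{thm:gram_converge:minimizers_unconstrained}), Jensen's inequality applied within each block $\mcC_l\times\mcC_m$ shows that for a \emph{fixed} set of block averages $\bar M_{lm} := |\mcC_l|^{-1}|\mcC_m|^{-1}\sum_{i\in\mcC_l,j\in\mcC_m}M_{ij}$, the value $\mcR_n(M)$ is minimized precisely when $M_{ij} = \bar M_{lm}$ is constant on each block; i.e.\ when $M = \Pi_C \widetilde M \Pi_C^T$ with $\widetilde M_{lm} = \bar M_{lm}$. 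This establishes the structural claim, \emph{provided} one checks that a block-constant $M$ of the form $\Pi_C\widetilde M\Pi_C^T$ with $\widetilde M = \widetilde U\widetilde U^T$, $\widetilde U\in\mathbb{R}^{\kappa\times d}$, is itself realizable as $UU^T$ for some $U\in\mathbb{R}^{n\times d}$ (take $U = \Pi_C\widetilde U$, which is PSD of rank $\le\kappa\le d$) — and conversely that block-averaging preserves positive semidefiniteness, which follows because $\bar M = D^{-1}\Pi_C^T M \Pi_C D^{-1}$ for the diagonal matrix $D = \mathrm{diag}(|\mcC_1|^{1/2},\dots,|\mcC_\kappa|^{1/2})$, a congruence of a PSD matrix.

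Next I would prove that $\widetilde{\mcR}_n(\widetilde M) = \sum_{l,m}\hat p_n(l)\hat p_n(m) r_{lm}(\widetilde M_{lm})$, viewed as a function of $\widetilde M$ ranging over the $\binom{\kappa+1}{2}$-dimensional space of symmetric $\kappa\times\kappa$ matrices, is strongly convex on any bounded region — it is a positive combination of the strictly convex one-dimensional maps $x\mapsto r_{lm}(x)$ in the distinct coordinates $\widetilde M_{lm}$, and $r_{lm}'' = (2kP_{lm} + \widetilde P_l\widetilde P_m^\alpha + \widetilde P_m\widetilde P_l^\alpha)\sigma(x)(1-\sigma(x))$ is bounded below by a positive constant on any bounded interval since $\sigma(x)(1-\sigma(x))$ is. (Note $\hat p_n(l) > 0$ with asymptotic probability $1$.) Restricting further to the set of $\widetilde M$ expressible as $\widetilde U\widetilde U^T$ — this is a closed set, and on it the strongly convex function $\widetilde{\mcR}_n$ attains a minimum which is unique \emph{as a matrix $\widetilde M$}, provided the set of PSD $\widetilde M$ of rank $\le d$ is all of the PSD cone, which holds exactly when $d\ge\kappa$. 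This is the content of the ``$\widetilde{\mcR}_n$ is strongly convex and has a unique minimizer as soon as $d\ge\kappa$'' claim. Combining with the structural reduction of the previous paragraph, any $M = UU^T$ minimizing $\mcR_n$ must equal $\Pi_C M^*\Pi_C^T$ for the unique minimizer $M^*$ of $\widetilde{\mcR}_n$, and when $\Pi_C$ has full column rank (asymptotic probability $1$) this pins down $M = UU^T$ uniquely.

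I expect the \textbf{main obstacle} to be handling the rank/PSD bookkeeping cleanly — specifically, verifying that the minimization of $\mcR_n(M)$ over the nonconvex set $\{M = UU^T : U\in\mathbb{R}^{n\times d}\}$ really does reduce to minimizing the convex $\widetilde{\mcR}_n$ over the \emph{convex} PSD cone $\mcM_\kappa^{\curlyeqsucc 0}$, rather than over the nonconvex rank-$\le d$ slice. The key point is that when $d\ge\kappa$ the rank-$\le d$ constraint on a $\kappa\times\kappa$ matrix is vacuous, so $\widetilde{\mcR}_n$ is genuinely being minimized over a convex self-dual cone, and strong convexity gives a unique minimizer with the usual variational inequality (exactly mirroring part c) of Lemma~\ref{thm:gram_converge:minimizers_unconstrained}); one then lifts back via $M = \Pi_C\widetilde M\Pi_C^T$ and uses the Jensen argument above to rule out any non-block-constant competitor. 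The node2vec generalization of the first claim (strong convexity of $\widetilde{\mcR}_n$) is immediate because $f_{\mcP}, f_{\mcN}$ are again community functions bounded above and below away from zero by Theorems~\ref{app:thm:n2v_positive} and \ref{app:thm:n2v_negative}, so the same positive-combination-of-strictly-convex-pieces structure persists.
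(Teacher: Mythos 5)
Your proposal is correct and follows essentially the same route as the paper: under constant $\theta$ the sampling weights depend only on the communities, so the sum groups into community blocks, Jensen's inequality with strict convexity of $-\log\sigma$ and $-\log(1-\sigma)$ forces any minimizer to be block-constant, and uniqueness for $d\ge\kappa$ comes from minimizing the strictly/strongly convex $\widetilde{\mcR}_n$ over the (then unconstrained) PSD cone. The only differences are cosmetic — you apply Jensen to the entries $M_{ij}$ rather than to the embedding vectors $u_i$ as the paper does (which if anything cleans up the equality case), and your block-averaging congruence should read $\bar M = D^{-2}\Pi_C^T M \Pi_C D^{-2}$ with $D^2 = \mathrm{diag}(|\mcC_1|,\dots,|\mcC_\kappa|)$ rather than $D^{-1}\Pi_C^T M \Pi_C D^{-1}$, a harmless normalization slip.
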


\begin{proof}
    The properties of $\widetilde{\mcR}_n(\widetilde{M})$ are immediate by
    similar arguments to Lemma~\ref{thm:gram_converge:minimizers_unconstrained}
    and standard facts in convex analysis.
    We begin by noting that if we substitute in the values
    \begin{align}
        \rho_n W(\lambda_i, \lambda_j) \psamphat & = 
            \frac{2k P_{c(i), c(j)}}{\mcE_W(1)}, \\
        \nsamphat & = \frac{l(k+1)}{ \mcE_W(1) \mcE_W(\alpha)} 
        \big(  
                \widetilde{P}_{c(i)}  \widetilde{P}_{c(j)}^{\alpha}
                + \widetilde{P}_{c(j)}  \widetilde{P}_{c(i)}^{\alpha}
                \big),
    \end{align}
    for $\psamphat$
    and $\nsamphat$, then we can write that (recalling that
    $M_{ij} = \langle u_i, u_j \rangle$)
    \begin{align}
        \mcR_n(M) & := \frac{1}{n^2} \sum_{i, j \in [n]} \Big\{  
            - 2k P_{c(i), c(j)} \log \sigma(\langle u_i, u_j \rangle)
            \\
            & \qquad \qquad \qquad
            - \frac{ l(k+1)}{ \mcE_W(1) \mcE_W(\alpha)} \big(  
                \widetilde{P}_{c(i)}  \widetilde{P}_{c(j)}^{\alpha}
                + \widetilde{P}_{c(j)}  \widetilde{P}_{c(i)}^{\alpha}
                \big) \log(1 - \sigma(\langle u_i, u_j \rangle))
        \Big\} \\
        & := \sum_{l, m \in [\kappa]} \hat{p}_n(l) \hat{p}_n(m)
        \Big\{   
            - 2k P_{lm} \frac{1}{|\mcC_l| |\mcC_m|} \sum_{i \in \mcC_l, j \in \mcC_m}
            \log\sigma( \langle u_i, u_j \rangle) \\
            & \qquad \qquad \qquad 
            - \{  \widetilde{P}_{c(i)}  \widetilde{P}_{c(j)}^{\alpha}
            + \widetilde{P}_{c(j)}  \widetilde{P}_{c(i)}^{\alpha}
               \} \frac{1}{|\mcC_l| |\mcC_m|} \sum_{i \in \mcC_l, j \in \mcC_m}
            \log(1 - \sigma( \langle u_i, u_j \rangle))
        \Big\}
    \end{align}
    where for $l \in [\kappa]$ we define
    $\hat{p}_n(l) := n^{-1} | \{ i \,:\, c(i) = l \}|$, along with the
    sets $\mcC_l = \{ i\,:\, c(i) = l\}$. Now, note that as
    the functions $-\log(\sigma(x))$ and $-\log(1-\sigma(x))$ are strictly
    convex, by Jensen's inequality we have that e.g
    \begin{equation}
        \frac{1}{|\mcC_l| |\mcC_m|} \sum_{i \in \mcC_l, j \in \mcC_m}
            - \log\sigma( \langle u_i, u_j \rangle)
            \geq - \log\sigma\Big( \Big\langle 
            \frac{1}{|\mcC_l|} \sum_{i \in \mcC_l} u_i, 
            \frac{1}{|\mcC_m|} \sum_{j \in \mcC_m} u_j \Big\rangle \Big)
    \end{equation}
    (where we also used bilinearity of the inner product)
    where equality holds above if and only if the $u_i$ are constant are across all
    indices $i$. In particular, any minimizer of $\mcR_n(M)$ must
    have the $u_i$ constant across $i \in \mcC_l$ for each $l \in [\kappa]$, which
    defines the function $\tilde{\mcR}_n(\widetilde{M})$. This gives the claimed statement.
\end{proof}

In certain cases, we are able to give a closed form to the
minimizer. We illustrate this for the case of the SBM$(n, \kappa, \tilde{p}, \tilde{q}, \rho_n)$ model.

\begin{proposition}
    Let $\widetilde{M}^*$ be the unique minimizer of
    $\tilde{\mcR}_n(\widetilde{M})$ as introduced
    in Proposition~\ref{thm:gram_converge:minimizers_constrained}.
    In the case of a SBM$(n, \kappa, \tilde{p}, \tilde{q}, \rho_n)$ model, we have that 
    $ \kappa^{-2} \| \widetilde{M}^* - M^* \|_1 
    = O_p( (\kappa \log \kappa / n)^{1/4} )$,
    where $M^*$ is of the form 
    \begin{equation}
    (M^*)_{ij} = \alpha^* \delta_{ij} - \frac{\alpha^*}{  \kappa - 1 } (1 - \delta_{ij})
    \end{equation}
    for some $\alpha^* = \alpha^*(\tilde{p}, \tilde{q}) \geq 0$. Moreover, $\alpha^* > 0$ iff $\tilde{p} > \tilde{q}$.
\end{proposition}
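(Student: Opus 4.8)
The plan is to prove the statement in two stages: first reduce the (random) problem of minimizing $\widetilde{\mcR}_n$ to the deterministic \emph{balanced} problem obtained by replacing every empirical community fraction $\hat p_n(l)$ by its population value $1/\kappa$ — this is where the rate $O_p((\kappa\log\kappa/n)^{1/4})$ comes from — and then analyze the balanced problem explicitly to exhibit the claimed form of its minimizer $M^*$ and to pin down the sign of $\alpha^*$.

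\textbf{Reduction to the balanced problem.} Let $\widetilde{\mcR}_n^{\mathrm{bal}}$ be $\widetilde{\mcR}_n$ with each $\hat p_n(l)$ set to $1/\kappa$, and let $M^*$ be its unique minimizer over the PSD cone (uniqueness by Proposition~\ref{thm:gram_converge:minimizers_constrained}). A Bernstein bound for multinomial counts gives $\max_l|\hat p_n(l) - 1/\kappa| = O_p((\log\kappa/(n\kappa))^{1/2})$. The entries of both $\widetilde M^*$ and $M^*$ are bounded by a constant not depending on $n$ or $\kappa$ — this is inherited from the choice of $A_\infty$, $A_{2,\infty}$ discussed after Theorem~\ref{main_paper:thm:gram_converge:gram_converge}, and can also be read off the closed form below — so on a compact set $\{\|M\|_\infty\le B\}$ containing both minimizers, summing the $\kappa^2$ terms gives $\sup|\widetilde{\mcR}_n - \widetilde{\mcR}_n^{\mathrm{bal}}| = O_p((\kappa\log\kappa/n)^{1/2})$, while there $\widetilde{\mcR}_n$ is strongly convex with modulus $\asymp\kappa^{-2}$ (weights $\hat p_n(l)\hat p_n(m)\asymp\kappa^{-2}$ times $\sigma(\cdot)(1-\sigma(\cdot))$, bounded below on $\{|t|\le B\}$). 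The usual perturbation bound for strongly convex minimizers then gives $\|\widetilde M^* - M^*\|_F = O_p(\kappa(\kappa\log\kappa/n)^{1/4})$, and since $\|A\|_1\le\kappa\|A\|_F$ for $\kappa\times\kappa$ matrices, $\kappa^{-2}\|\widetilde M^* - M^*\|_1 = O_p((\kappa\log\kappa/n)^{1/4})$.

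\textbf{Structure of $M^*$.} The balanced objective is invariant under $M\mapsto\Sigma M\Sigma^{T}$ for every permutation matrix $\Sigma$ (both the planted-partition matrix $P$ and the PSD cone being permutation invariant), so by uniqueness $M^* = \Sigma M^*\Sigma^{T}$ for all $\Sigma$, i.e.\ $M^*$ has a constant diagonal $a^*$ and constant off-diagonal $b^*$. Parametrizing such matrices by their eigenvalues $\mu_1 = a^* - b^*\ge0$ (multiplicity $\kappa-1$) and $\mu_2 = a^* + (\kappa-1)b^*\ge0$, the objective becomes a separable strictly convex function $G(\mu_1,\mu_2) = \kappa^{-1}\phi_{\tilde p}(a) + (1-\kappa^{-1})\phi_{\tilde q}(b)$ on the first quadrant, with $\phi_s(t) = -c_1 s\log\sigma(t) - c_2\log(1-\sigma(t))$ and explicit positive constants $c_1,c_2$ determined by the DeepWalk positive and negative sampling weights. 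I would argue $G$ has no interior critical point: such a point needs $\phi_{\tilde p}'(a) = \phi_{\tilde q}'(b) = 0$, i.e.\ $a = \log(c_1\tilde p/c_2)$ and $b = \log(c_1\tilde q/c_2)$, but concavity of $\log$ yields $\log\widetilde P\ge\kappa^{-1}\log\tilde p + (1-\kappa^{-1})\log\tilde q$ (with $\widetilde P$ the common value of the $\widetilde P_l$), and combined with the fact that the negative-sampling weight carries the extra factor $l(k+1)$, so that $c_1/c_2$ contributes a factor $k/(l(k+1))<1$, this forces $a + (\kappa-1)b<0$, contradicting $\mu_2\ge0$. Hence the minimizer lies on the boundary of the quadrant; and along the edge $\{\mu_1 = 0\}$ one checks $\partial_{\mu_2}G|_{\mu_2 = 0}\propto l(k+1) - k>0$, so $G$ increases out along that edge and is minimized there at the origin. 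Either way $\mu_2^* = 0$, i.e.\ $b^* = -a^*/(\kappa-1)$: precisely the stated form with $\alpha^* := a^*$, which is $\ge0$ because $\mu_1^* = \kappa a^*/(\kappa-1)\ge0$.

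\textbf{Sign of $\alpha^*$ and the main obstacle.} Restricting $G$ to the line $\{\mu_2 = 0\}$ yields the strictly convex $h(a) = \kappa^{-1}\phi_{\tilde p}(a) + (1-\kappa^{-1})\phi_{\tilde q}(-a/(\kappa-1))$, minimized over $a\ge0$, so $\alpha^*>0$ iff $h'(0)<0$. Since $\phi_s'(0) = (c_2 - c_1 s)/2$, we get $h'(0) = \kappa^{-1}(\phi_{\tilde p}'(0) - \phi_{\tilde q}'(0)) = c_1(\tilde q - \tilde p)/(2\kappa)$, negative exactly when $\tilde p>\tilde q$; hence $\alpha^*\ge0$ with $\alpha^*>0$ iff $\tilde p>\tilde q$, and the closed form for $M^*$ follows. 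I expect the reduction step to be the main obstacle: extracting exactly the exponent $1/4$ with the correct powers of $\kappa$ relies on the minimizers having entries bounded uniformly in $\kappa$ (otherwise the strong-convexity modulus, and hence the rate, degrades exponentially in $\kappa$), which must be carried over carefully from the estimates underlying Theorem~\ref{main_paper:thm:gram_converge:gram_converge}; the conceptual heart, by contrast, is the short symmetry-plus-Jensen argument that places the minimizer on the rank-deficient face.
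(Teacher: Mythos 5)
Your overall architecture --- replace $\hat p_n(l)$ by $1/\kappa$, transfer closeness of the two objectives to closeness of their minimizers via curvature, then analyze the balanced problem explicitly --- is the same as the paper's, but the two halves deserve separate comments.

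The reduction step as you wrote it has a genuine gap, and it is exactly the one you flag at the end. Your strong-convexity perturbation bound needs the segment between $\widetilde M^*$ and $M^*$ to lie in a fixed $\ell_\infty$-ball, i.e.\ an a priori bound $\|\widetilde M^*\|_\infty \le B$ for the minimizer of the \emph{empirical} objective over the PSD cone. This does not follow from the constants $A_\infty, A_{2,\infty}$ of Theorem~\ref{main_paper:thm:gram_converge:gram_converge} (those constrain the domain of the embedding optimization and describe the population matrix, whereas Proposition~\ref{thm:gram_converge:minimizers_constrained} defines $\widetilde M^*$ as the minimizer over all $\widetilde U \widetilde U^T$ with no sup-norm constraint), nor can it be ``read off the closed form'', which is available only for the balanced minimizer $M^*$. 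The paper's proof is engineered precisely to avoid this: it uses Lemma~\ref{thm:eg:other:curvature}, whose curvature term is $\min\{z^2,2|z|\}$ and requires boundedness of only \emph{one} of the two arguments (the balanced minimizer, known explicitly), together with the KKT inequality at that minimizer, and then passes to the normalized $\ell_1$ norm via Lemma~\ref{thm:eg:other:curvature_lower_bound}; this is where the $\ell_1$ norm in the statement and the $r + r^{1/2}$, hence the exponent $1/4$, come from. Unless you supply a separate argument bounding $\|\widetilde M^*\|_\infty$ uniformly, the Frobenius-plus-Cauchy--Schwarz route does not close, while with the paper's local-curvature device the compactness assumption is simply unnecessary.

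For the structural half your argument is a genuinely different, and arguably more transparent, route. The paper posits the ansatz $\widehat M_{lm} = \alpha\delta_{lm} - \tfrac{\alpha}{\kappa-1}(1-\delta_{lm})$ and verifies the KKT condition over the PSD cone by showing $\nabla \bar{\mcR}_n(\widehat M)$ is a nonnegative multiple of $1_\kappa 1_\kappa^T$, which reduces to a scalar equation resolved by Lemma~\ref{thm:other_results:poly_solve}. You instead derive the form of the minimizer: permutation invariance plus uniqueness restricts to constant diagonal/off-diagonal matrices, eigenvalue coordinates $(\mu_1,\mu_2)$ make the PSD constraint a quadrant and the objective separable in $(a,b)$, the interior critical point is excluded by AM--GM (with the paper's constants one indeed gets $a + (\kappa-1)b = \log\bigl(C^{\kappa}\tilde p \tilde q^{\kappa-1}\bigr) \le -\kappa\log(1+k^{-1}) < 0$), and the sign of $\alpha^*$ is read from the one-dimensional derivative at zero along the face $\mu_2 = 0$. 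The specific constants you quote ($c_1/c_2 = k/(l(k+1))$, the edge derivative ``$\propto l(k+1)-k$'') do not match the paper's normalization --- with the DeepWalk weights one finds $c_2/c_1 = (1+k^{-1})\widetilde P$ and an edge derivative proportional to $k^{-1}\widetilde P$ --- but the signs, and hence the conclusions $\mu_2^*=0$, $\alpha^*\ge 0$, and $\alpha^*>0$ iff $\tilde p>\tilde q$, are unaffected. So the second half stands as a valid alternative to the paper's KKT-ansatz argument; the first half needs either the missing $\ell_\infty$ bound on $\widetilde M^*$ or the paper's $\min\{z^2,2|z|\}$ curvature lemma.
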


\begin{proof}
    We begin by arguing that the objective 
    function $\widetilde{\mcR}_n(\widetilde{M})$
    converges uniformly to the objective
    \begin{equation}
        \bar{\mcR}_n(\widetilde{M})
        := \frac{1}{\kappa^2} \sum_{l, m \in [\kappa]} 
        \Big\{   
            - 2k P_{lm} 
            \log\sigma( \langle u_l, u_m \rangle) 
            - \{  \widetilde{P}_{m}  \widetilde{P}_{l}^{\alpha}
            + \widetilde{P}_{l}  \widetilde{P}_{m}^{\alpha}
               \} \log(1 - \sigma( \langle u_l, u_m \rangle))
        \Big\}
    \end{equation}
    over a set containing the minimizers of both functions. Note that
    this function is also strictly convex, and has a unique minimizer as soon
    as $d \geq \kappa$. To do so, we highlight that as we have that
    \begin{equation}
        \max_{k \neq l} \Big| \frac{ \hat{p}_n(l) \hat{p}_n(k) - \kappa^{-2} }{ 
            \kappa^{-2} } \Big|
        = O_p\Big( \Big(  \frac{ \kappa \log \kappa}{n}  \Big)^{1/2}   \Big)
    \end{equation}
    by standard concentration results for Binomial random
    variables (e.g Proposition~47 of \cite{davison_asymptotics_2023}), 
    it follows that 
    \begin{equation}
        \label{eq:blahhhh}
        \big| \bar{\mcR}_n(\widetilde{M}) - \widetilde{\mcR}_n(\widetilde{M}) \big|
        \leq \bar{\mcR}_n(\widetilde{M}) 
        \cdot O_p\Big( \Big(  \frac{ \kappa \log \kappa}{n}  \Big)^{1/2}   \Big).
    \end{equation}
    Consequently, $\widetilde{\mcR}_n(\widetilde{M})$ converges to
    $\bar{\mcR}_n(\widetilde{M})$ uniformly over any level set of 
    $\bar{\mcR}_n(\widetilde{M})$, which necessarily contains the minima of
    $\bar{\mcR}_n(\widetilde{M})$. If one does so over the set (for example)
    \begin{equation}
        A = \{  \widetilde{M} \,:\, \bar{\mcR}_n(\widetilde{M}) \leq 10 \bar{\mcR}_n(0) \}
    \end{equation}
    (for example), then as $\bar{\mcR}_n(0)$ is constant across $n$,
    we have uniform convergence of \eqref{eq:blahhhh} over the set $A$
    at a rate of $O_p\big(  ( \log \kappa / n p)^{1/2}   \big)$.
    This argument can be reversed, which therefore
    ensures uniform convergence (over the same set) which contains the minimizers
    (with the minimizer of $\tilde{\mcR}_n(M)$ being contained within this set
    with asymptotic probability $1$)
    at a rate of $O_p\big(  ( \kappa \log \kappa / n)^{1/2}   \big)$.

    With this, we note that an application of Lemma~\ref{thm:eg:other:curvature}
    gives that for any matrices $\widetilde{M}_1$ and $\widetilde{M}_2$ we have that
    \begin{align}
        \bar{\mcR}_n( \widetilde{M}_1 ) & \geq  \bar{\mcR}_n( \widetilde{M}_2 )
        + \langle \Delta \bar{\mcR}_n(\widetilde{M}_2) , \widetilde{M}_1
        - \widetilde{M}_2 \rangle \\
        &
        \qquad +
        \frac{C}{\kappa^2}
        \sum_{i, j \in [\kappa]} 
        \min\{  | (\widetilde{M}_2)_{ij} - (\widetilde{M}_1)_{ij} |^2,
        2 | (\widetilde{M}_2)_{ij} - (\widetilde{M}_1)_{ij} | \}.
    \end{align}
    where to save on notation, we define
    \begin{equation}
        C := \frac{1}{4} e^{-\| \widetilde{M}_2 \|_{\infty}} 
        \min_{l,m} \{ 2k P_{lm}, \widetilde{P}_m \widetilde{P}_l^{\alpha} \}.
    \end{equation}
    In particular, if $\widetilde{M}_2 = \bar{M}^*$ is an optimum of
    $\bar{\mcR}_n(\widetilde{M})$, then by the
    KKT conditions (similarly as in 
    Lemma~\ref{thm:gram_converge:minimizers_unconstrained})
    we have that
    \begin{equation}
        \bar{\mcR}_n( \widetilde{M}_1 )- \bar{\mcR}_n( \bar{M}^* )
         \geq 
        \frac{C}{\kappa^2}
        \sum_{i, j \in [\kappa]} 
        \min\{  | ( \bar{M}^* )_{ij} - (\widetilde{M}_1)_{ij} |^2,
        2 | ( \bar{M}^* )_{ij} - (\widetilde{M}_1)_{ij} | \}.
    \end{equation}
    In particular, if we then let $\widetilde{M}^*$ be any minimizer of 
    $\widetilde{\mcR}_n(\widetilde{M})$, then we have that
    \begin{align}
        \frac{C}{\kappa^2}
        \sum_{i, j \in [\kappa]} & 
        \min\{  | ( \bar{M}^* )_{ij} - (\widetilde{M}_1)_{ij} |^2,
        2 | ( \bar{M}^* )_{ij} - (\widetilde{M}_1)_{ij} | \} \\
        & \leq \bar{\mcR}_n( \widetilde{M}_1 )- \bar{\mcR}_n( \bar{M}^* )
        \leq \bar{\mcR}_n( \widetilde{M}_1 )- \tilde{\mcR}_n( \bar{M}^* )
        + \tilde{\mcR}_n(\widetilde{M}^*) - \bar{\mcR}_n( \bar{M}^* ) \\
        & \leq 2 \sup_{M \in A} 
            \big|  \tilde{\mcR}_n(M) - \bar{\mcR}_n(M) \big|
    \end{align}
    on an event of asymptotic probability $1$. Consequently,
    it follows by Lemma~\ref{thm:eg:other:curvature_lower_bound} that
    \begin{equation}
        \frac{1}{\kappa^2} \| \bar{M}^* - \widetilde{M}^* \|_1 
        = O_p\big(  ( \kappa \log \kappa / n)^{1/4}   \big).
    \end{equation}

    We now need to find the minimizing positive semi-definite matrix
    which optimizes $\bar{\mcR}_n(\widetilde{M})$. To do so, 
    we will argue that
    one can find $\alpha$ for which
    \begin{equation*}
        \widehat{M}_{ij} = \alpha \delta_{ij} - \frac{\alpha}{\kappa - 1} (1 - \delta_{ij}), 
        \quad 
        \nabla \bar{\mcR}_n(\widehat{M}) = C 1_{\kappa} 1_{\kappa}^T, \quad 1_{\kappa} = (1, \cdots, 1)^T
    \end{equation*}
    for some positive constant $C$, as then the KKT conditions 
    for the constrained optimization problem will hold. 
    Indeed, for any positive definite matrix $M$, as by definition of $\widehat{M}$
    we have that $\langle \nabla \bar{\mcR}_n(\widehat{M}), \widehat{M} \rangle = 0$ 
    as all of the
    eigenvectors of $\widehat{M}$ are orthogonal to the unit vector $1_{\kappa}$
    (Lemma~\ref{thm:other_results:nice_mat_lemma}). It 
    consequently follows that as $\nabla \bar{\mcR}_n(\widehat{M})$ 
    is itself positive definite,
    we get that $\langle -\nabla \bar{\mcR}_n(\widehat{M}), \widehat{M} - M \rangle 
    = \langle \nabla \bar{\mcR}_n(\widehat{M}), M \rangle \geq 0$. We now need to verify the existence of a constant
    $\alpha$ for which this condition holds. We 
    note that as $\widehat{M}_{ij}$ is constant across $i = j$, and 
    also constant across $i \neq j$, to verify
    the condition that $\nabla \bar{\mcR}_n(\widehat{M})$ is 
    proportional to $1_{\kappa} 1_{\kappa}^T$, 
    it suffices to check whether the on and off diagonal terms
    of $\nabla \bar{\mcR}_n(\widehat{M})$ are equal to each other. 
    This gives the equation
    \begin{align*}
        \sigma(\alpha) \cdot \Big( k \tilde{p} & + l(k+1) \frac{\tilde{p}+(\kappa - 1)\tilde{q}}{\kappa}\Big)  \\
        & = k(\tilde{p}-\tilde{q}) + \sigma(-\alpha/(\kappa- 1)) \Big( k\tilde{q}  + l(k+1) \frac{\tilde{p}+(\kappa - 1)\tilde{q}}{\kappa} \Big) 
    \end{align*}
    By applying Lemma~\ref{thm:other_results:poly_solve}, this has
    a singular positive solution in $\alpha$ if and only if $k(\tilde{p} - \tilde{q}) \geq k(\tilde{p}-\tilde{q})/2$,
    which holds iff $\tilde{p} \geq \tilde{q}$. In the case where $\tilde{p} < \tilde{q}$, it follows that the solution
    has $\alpha = 0$.
\end{proof}

\subsection{Strong convexity properties of the minima matrix}

\begin{proposition}
    \label{thm:gram_converge:strong_convexity}
    Define the modified function
    \begin{equation}
        \mcR_n(M) := \frac{1}{n^2} \sum_{i, j \in [n]}
        \Big\{  
            - \psamphat \rho_n W(\lambda_u, \lambda_v) \log(\sigma(M_{ij})) 
            - \nsamphat \log(1 - \sigma(M_{ij}))
        \Big\}.
    \end{equation}
    over all matrices $M \in \mathbb{R}^{n \times n}$. Then we have for
    any matrices $M_1, M_2 \in \mathbb{R}^{n \times n}$ with
    $\| M_1 \|_{\infty}, \|M_2\|_{\infty} \leq \tilde{A}_{\infty}$ that
    \begin{align}
        \mcR_n(M_1) \geq \mcR_n(M_2) 
        + \langle \nabla \mcR_n(M_2), M_1 - M_2 \rangle 
        + \frac{\widetilde{C} e^{-\tilde{A}_{\infty}}}{2} \cdot \frac{1}{n^2} \|M_1 - M_2 \|_F^2
    \end{align}
    where $\widetilde{C} = \min_{l, m}\{ 2k P_{l, m}, \tilde{P}_l^{\alpha} 
    \tilde{P}_m \}$ for Scenarios (i) and (iii), and 
    $\widetilde{C} = \min\{ \| \rho_n f_{\mcP}(\lambda, \lambda') \|_{-\infty}, 
    \| f_{\mcN}(\lambda, \lambda') \|_{-\infty} \} > 0$ for Scenario (ii). Moreover,
    \begin{enumerate}[label=\roman*)]
        \item If $\mcR_n(M)$ is constrained over a set 
        $\mcX = \{ M = UV^T \,:\, U, V \in \mathbb{R}^{n \times d}, \| M \|_{\infty}
        \leq \tilde{A}_{\infty} \}$, and there exists $M^*$ in $\mcX$ such that
        $\nabla \mcR_n(M^*) = 0$, then we have that
        \begin{equation}
            \frac{1}{n^2} \| M^* - M \|_F^2 \leq 2\widetilde{C}^{-1} e^{\tilde{A}_{\infty}}
            \cdot \big( \mcR_n(M) - \mcR_n(M^*)  \big) \text{ for all } M \in \mcX.
        \end{equation}
        \item If $\mcR_n(M)$ is constrained over a set
        $\mcX^{\geq 0} = \{  M = UU^T \,:\, U \in \mathbb{R}^{n \times d}, 
        \| M \|_{\infty} \leq \tilde{A}_{\infty}$ \}, and there exists $M^*$ in $\mcX^{\geq 0}$
        such that $\langle \nabla \mcR_n(M^*), M - M^* \rangle \geq 0$ for all
        $M \in \mcX^{\geq 0}$, then we get the same inequality as in part i) above.
    \end{enumerate}
\end{proposition}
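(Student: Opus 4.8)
The plan is to reduce the whole statement to a one-dimensional strong convexity estimate, exploiting that $\mcR_n(M)$ is separable across the entries of $M$. Writing $a_{ij} := \psamphat\,\rho_n W(\lambda_i,\lambda_j)$ and $b_{ij} := \nsamphat$, we have $\mcR_n(M) = n^{-2}\sum_{i,j\in[n]} r_{ij}(M_{ij})$ with $r_{ij}(x) = -a_{ij}\log\sigma(x) - b_{ij}\log(1-\sigma(x))$. First I would record the derivatives $\tfrac{d}{dx}(-\log\sigma(x)) = \sigma(x)-1$ and $\tfrac{d}{dx}(-\log(1-\sigma(x))) = \sigma(x)$, which give $r_{ij}'(x) = (a_{ij}+b_{ij})\sigma(x) - a_{ij}$ and $r_{ij}''(x) = (a_{ij}+b_{ij})\,\sigma(x)(1-\sigma(x)) \ge 0$; as a sanity check, $r_{ij}'$ vanishes at $x = \log(a_{ij}/b_{ij})$, which is exactly the entrywise form of $M^*$ from Lemma~\ref{thm:gram_converge:minimizers_unconstrained}.

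The next step is a lower bound on this curvature that is uniform in $i,j$, in $n$, and over $\{|x|\le A_\infty\}$. Using $\sigma(x)(1-\sigma(x)) = (4\cosh^2(x/2))^{-1}$ and monotonicity of $\cosh$ on $[0,\infty)$, one has $\sigma(x)(1-\sigma(x)) \ge (4\cosh^2(A_\infty/2))^{-1} \ge e^{-A_\infty}/4$ for $|x|\le A_\infty$. It then remains to bound $a_{ij}$ and $b_{ij}$ below by a positive constant: under Scenarios (i) and (iii) this is immediate from the explicit formulas for $\rho_n W(\lambda_i,\lambda_j)\psamphat$ and $\nsamphat$ supplied by Theorem~\ref{app:thm:deepwalk_sampling}, which are bounded below by a quantity depending only on the (DC)SBM parameters, $k$ and $\alpha$ (in particular a multiple of $\min_{l,m}\{2kP_{l,m},\widetilde P_l^\alpha\widetilde P_m\}$, hence of $\widetilde C$); under Scenario (ii) it is precisely the content of Theorems~\ref{app:thm:n2v_positive} and~\ref{app:thm:n2v_negative}, which guarantee that $\rho_n f_{\mcP}$ and $f_{\mcN}$ are bounded below away from zero, as functions of the latent variables. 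Combining the two bounds yields $r_{ij}''(x) \ge \widetilde C\,e^{-A_\infty}$ for all $|x|\le A_\infty$, with $\widetilde C$ as in the statement.

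Given this, the matrix inequality follows by assembly. Fix $M_1, M_2$ with $\|M_1\|_\infty, \|M_2\|_\infty \le A_\infty$. For each $(i,j)$ the segment joining $(M_2)_{ij}$ and $(M_1)_{ij}$ stays inside $[-A_\infty, A_\infty]$, so the elementary implication ``$h''\ge\mu$ on an interval $\Rightarrow h(y) \ge h(x) + h'(x)(y-x) + \tfrac{\mu}{2}(y-x)^2$'' applies to $r_{ij}$ with $\mu = \widetilde C e^{-A_\infty}$; multiplying by $n^{-2}$, summing over $i,j$, and using $(\nabla\mcR_n(M))_{ij} = n^{-2} r_{ij}'(M_{ij})$ together with the Frobenius inner product gives
\[
\mcR_n(M_1) \ge \mcR_n(M_2) + \langle \nabla\mcR_n(M_2), M_1 - M_2\rangle + \frac{\widetilde C\,e^{-A_\infty}}{2}\cdot\frac{1}{n^2}\|M_1 - M_2\|_F^2 .
\]
For part i), set $M_2 = M^*$ and $M_1 = M \in \mcX$ (both lie in $\{\|\cdot\|_\infty\le A_\infty\}$ by definition of $\mcX$); the linear term vanishes since $\nabla\mcR_n(M^*) = 0$, and rearranging gives $n^{-2}\|M^*-M\|_F^2 \le 2\widetilde C^{-1}e^{A_\infty}\big(\mcR_n(M)-\mcR_n(M^*)\big)$. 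For part ii), set $M_2 = M^*$ and $M_1 = M \in \mcX^{\geq 0}$; the linear term is now $\langle\nabla\mcR_n(M^*), M - M^*\rangle \ge 0$ by hypothesis, so it can be dropped, and the same bound results.

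The argument is mechanical once these pieces are in place; the only step carrying real content is the uniform-in-$n$ lower bound on $a_{ij}$ and $b_{ij}$. In Scenarios (i)/(iii) this is transparent from the closed forms, and the resulting constants depend only on the (DC)SBM parameters, $k$ and $\alpha$, never on $n$ or $\kappa$; Scenario (ii) is precisely where Theorems~\ref{app:thm:n2v_positive}--\ref{app:thm:n2v_negative} are invoked. Since those lower bounds hold on a probability-one event in the latent variables, the strong convexity inequality, like the rest of this part of the proof, is to be read conditionally on $\lambda$ — which is exactly the form required downstream in Theorems~1 and~2. The only subtlety beyond bookkeeping is the numerical constant in the display, which follows from the logistic-curvature bound above and enters parts i) and ii) only through the harmless factor $2\widetilde C^{-1}e^{A_\infty}$.
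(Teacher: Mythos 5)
Your proposal is correct and takes essentially the same route as the paper: the paper's proof consists of applying the one-dimensional curvature bound of Lemma~\ref{thm:eg:other:curvature} entrywise (weighted by the sampling coefficients, bounded below by $\widetilde{C}$) and then rearranging with the first-order conditions for parts i) and ii), exactly as you do. The only cosmetic difference is that you re-derive the pointwise curvature estimate directly from $\sigma(x)(1-\sigma(x)) \geq e^{-A_\infty}/4$ on $\{|x| \leq A_\infty\}$ rather than citing that lemma (which the paper proves via Taylor's theorem with integral remainder), and your attention to the uniform lower bound on the coefficients in each scenario matches the paper's use of Theorems~\ref{app:thm:deepwalk_sampling}, \ref{app:thm:n2v_positive} and~\ref{app:thm:n2v_negative}.
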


\begin{proof}
    The first inequality 
    follows by an application of Lemma~\ref{thm:eg:other:curvature},
    with the second and third parts following by applying the conditions
    stated and rearranging.
\end{proof}

\subsection{Convergence of the gram matrices of the embeddings}

By combining together Proposition~\ref{thm:gram_converge:strong_convexity}
and Proposition~\ref{thm:gram_converge:loss_converge} we end up with the
following result: 

\begin{theorem}
    \label{thm:gram_converge:gram_converge}
    Suppose that the conditions of
    Lemma~\ref{thm:gram_converge:nice_minimizers_unconstrained}
    hold. (In particular, recall that $d \geq \kappa$.)
    Then there exist constants $\tilde{A}_{\infty}$ and $\tilde{A}_{2, \infty}$
    (depending on the parameters of the model and sampling scheme)
    and a matrix $M^* \in \mathbb{R}^{\kappa \times \kappa}$
    (also depending on the parameters of the model and the sampling
    scheme) such that for any minimizer $(U^*, V^*)$ of $\mcL(U, V)$ over the set
    \begin{equation}
        X = \{ (U, V) \,:\, \| U \|_{\infty},  \| V \|_{\infty} \leq \tilde{A}_{\infty},
        \| U \|_{2, \infty}, \| V \|_{2, \infty} \leq \tilde{A}_{2, \infty} \},
    \end{equation}
    we have that
    \begin{equation}
        \frac{1}{n^2} \sum_{i, j \in [n]}
        \big( \langle u_i^*, v_j^* \rangle - M^*_{c(i), c(j)} \big)^2
        = C \cdot \begin{cases} O_p( (\tfrac{ \max\{\log n, d\} }{n \rho_n} )^{1/2} ) 
        & \text{under Scenarios (i) and (iii);} \\ 
            o_p(1) & \text{under Scenario (ii);}
        \end{cases}
    \end{equation}
    for some 
    constant $C$ depending on the model, the node2vec hyperparameters, $\tilde{A}_{\infty}$ and $\tilde{A}_{2, \infty}$. In the case
    where we constrain $U = V$, the same result holds provided
    the conditions of Proposition~\ref{thm:gram_converge:minimizers_constrained}
    hold. 
\end{theorem}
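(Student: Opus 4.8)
The plan is to assemble three ingredients that are already available: the uniform loss approximation $\mcL_n \approx \mcR_n$ of Proposition~\ref{thm:gram_converge:loss_converge} (itself built from Proposition~\ref{thm:gram_converge:samp_weight_approx_lossfn}, Theorem~\ref{thm:gram_converge:adjacency_average} and Lemma~\ref{app:thm:add_diag}), the local strong convexity of $\mcR_n$ about its minimizer from Proposition~\ref{thm:gram_converge:strong_convexity}, and the explicit block-constant minimizer of $\mcR_n$ from Lemma~\ref{thm:gram_converge:nice_minimizers_unconstrained} (or Proposition~\ref{thm:gram_converge:minimizers_constrained} in the constrained case). I would start by fixing the constants: take $A_\infty$ to be any fixed number with $A_\infty \geq \|M^*\|_\infty$, where $M^* = \Pi_C \widetilde{M}^*_\alpha \Pi_C^T$ is the $n \times n$ minimizer matrix of Lemma~\ref{thm:gram_converge:nice_minimizers_unconstrained} (so $(M^*)_{ij} = (\widetilde{M}^*_\alpha)_{c(i), c(j)}$), and take $A_{2,\infty}$ to dominate the $\|\cdot\|_{2,\infty}$-norms of the two factors of a factorization $M^* = (\Pi_C \widetilde{U})(\Pi_C \widetilde{V})^T$ obtained from some $\widetilde{M}^*_\alpha = \widetilde{U}\widetilde{V}^T$ with $\widetilde{U}, \widetilde{V} \in \mathbb{R}^{\kappa \times d}$. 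Since $\widetilde{M}^*_\alpha$ has entries bounded independently of $n$ and $\kappa$ (as in the explicit formula following Lemma~\ref{thm:gram_converge:nice_minimizers_unconstrained}) and a row of $\Pi_C \widetilde{U}$ is simply a row of $\widetilde{U}$, both constants may be chosen free of $n$; this is what keeps $C$ in the statement from inflating the rate. With this choice $X \subseteq B_{2,\infty}(A_{2,\infty})$, every $(U, V) \in X$ satisfies $\|UV^T\|_\infty \leq A_{2,\infty}^2$, and $X$ contains a minimizer $(\bar U, \bar V)$ of $\mcR_n(U, V)$, which by Lemma~\ref{thm:gram_converge:nice_minimizers_unconstrained} has $\bar U \bar V^T = M^*$ once $d \geq \rk(\widetilde{M}^*_\alpha)$ (ensured by $d \geq \kappa$).

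With the constants fixed, I would run a sandwiching argument. Let $(U^*, V^*)$ be any minimizer of $\mcL_n$ over $X$. Applying Proposition~\ref{thm:gram_converge:loss_converge} with its generic set taken to be this $X$ (it contains the pair of zero matrices and lies in $B_{2,\infty}(A_{2,\infty})$) yields a level set $\Psi_n$ on which $\sup_{(U,V) \in \Psi_n \cap X} |\mcL_n(U,V) - \mcR_n(U,V)| = O_p(r_n)$, where $r_n := (\log n/(n\rho_n))^{1/2} + A_{2,\infty}^2 (d/(n\rho_n))^{1/2}$ under Scenarios (i) and (iii) and $r_n = o_p(1)$ under Scenario (ii) (the extra hypothesis $d \ll n\rho_n$ there forces the second term to vanish), together with the statement that, on an event of probability $1 - o(1)$, both $(U^*, V^*)$ and $(\bar U, \bar V)$ lie in $\Psi_n \cap X$. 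Working on that event and using $\mcR_n(U, V) = \mcR_n(UV^T)$, I would decompose $\mcR_n(U^*V^{*T}) - \mcR_n(M^*)$ as the telescoping sum $\big(\mcR_n(U^*V^{*T}) - \mcL_n(U^*, V^*)\big) + \big(\mcL_n(U^*, V^*) - \mcL_n(\bar U, \bar V)\big) + \big(\mcL_n(\bar U, \bar V) - \mcR_n(M^*)\big)$: the middle bracket is $\leq 0$ because $(U^*, V^*)$ minimizes $\mcL_n$ over $X \ni (\bar U, \bar V)$, while the two outer brackets are each bounded in absolute value by $\sup_{(U,V) \in \Psi_n \cap X}|\mcL_n(U,V) - \mcR_n(U,V)|$ (using $\mcR_n(\bar U, \bar V) = \mcR_n(M^*)$), so $\mcR_n(U^*V^{*T}) - \mcR_n(M^*) = O_p(r_n)$.

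To finish, I would invoke strong convexity. Since $\nabla_M \mcR_n(M^*) = 0$ by Lemma~\ref{thm:gram_converge:minimizers_unconstrained}(b) and $M^*$ lies in $\mcX := \{M = UV^T : U, V \in \mathbb{R}^{n\times d},\ \|M\|_\infty \leq A_{2,\infty}^2\}$, which also contains $U^*V^{*T}$, Proposition~\ref{thm:gram_converge:strong_convexity}(i) (with its $A_\infty$ played by $A_{2,\infty}^2$) gives $n^{-2}\|M^* - U^*V^{*T}\|_F^2 \leq 2\widetilde{C}^{-1} e^{A_{2,\infty}^2}\big(\mcR_n(U^*V^{*T}) - \mcR_n(M^*)\big) = O_p(r_n)$. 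Because $(M^*)_{ij} = M^*_{c(i),c(j)}$ (writing $M^*$ now for the $\kappa\times\kappa$ matrix of the statement), the left-hand side is precisely $n^{-2}\sum_{i,j}(\langle u_i^*, v_j^*\rangle - M^*_{c(i),c(j)})^2$; noting $r_n = (\max\{\log n, d\}/(n\rho_n))^{1/2}$ up to the $n$-free factor $A_{2,\infty}^2$ under Scenarios (i), (iii) and $r_n = o_p(1)$ under Scenario (ii), and folding $2\widetilde{C}^{-1}e^{A_{2,\infty}^2}$ and $A_{2,\infty}^2$ into $C$, yields the claimed bound.

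For the constrained model $U = V$ the same scheme applies verbatim after swapping two ingredients: Lemma~\ref{thm:gram_converge:nice_minimizers_unconstrained} is replaced by Proposition~\ref{thm:gram_converge:minimizers_constrained}, which supplies the block-constant positive-semidefinite minimizer $M^* = \Pi_C \widetilde{M}^* \Pi_C^T$ of $\mcR_n(M)$ over the cone $\mcX^{\geq 0}$ together with the variational inequality $\langle \nabla_M \mcR_n(M^*), M - M^*\rangle \geq 0$ for $M \in \mcX^{\geq 0}$, and Proposition~\ref{thm:gram_converge:strong_convexity}(i) is replaced by its part (ii), which converts that variational inequality into the same quadratic lower bound; Proposition~\ref{thm:gram_converge:loss_converge} already records that its conclusions persist under $U = V$. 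The step I expect to be the main obstacle is the bookkeeping around $\Psi_n$ and the constants: one must verify that a single pair $(A_\infty, A_{2,\infty})$ is at once large enough that $X$ contains an $\mcR_n$-minimizer and small enough that the curvature factor $\widetilde{C}^{-1}e^{A_{2,\infty}^2}$ and the rate $r_n$ stay independent of $n$, and that the event putting all the relevant argmins inside $\Psi_n$ has probability $1 - o(1)$; granting that, the stated rate is merely the sum of the sampling-weight error $(\log n/(n\rho_n))^{1/2}$ and the adjacency-averaging error $(d/(n\rho_n))^{1/2}$.
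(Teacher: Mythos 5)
Your proposal is correct and follows essentially the same route as the paper: fix $A_\infty, A_{2,\infty}$ from the block-constant minimizer $\Pi_C \widetilde{M}^* \Pi_C^T$ of $\mcR_n$ supplied by Lemma~\ref{thm:gram_converge:nice_minimizers_unconstrained} (resp. Proposition~\ref{thm:gram_converge:minimizers_constrained} when $U=V$), bound the excess $\mcR_n$-risk of the empirical minimizer by $2\sup_{(U,V)}|\mcL_n - \mcR_n|$ via the same sandwich/telescoping argument using minimality of $(U^*,V^*)$, and convert this to a Frobenius-norm bound through Proposition~\ref{thm:gram_converge:strong_convexity}, with Proposition~\ref{thm:gram_converge:loss_converge} giving the rate. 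The only differences are cosmetic bookkeeping (e.g.\ your use of $A_{2,\infty}^2$ versus the paper's choice $A_\infty = 2\|M^*\|_\infty$, and your more explicit handling of the level set $\Psi_n$), not a different argument.
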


\begin{proof}
    We note that by Lemma~\ref{thm:gram_converge:nice_minimizers_unconstrained},
    there exists a minimizer $\widetilde{M}^*$ for
    $\mcR_n(M)$ of the form $\widetilde{M}^* = \Pi M^* \Pi^T$
    for a matrix $M^* \in \mathbb{R}^{\kappa \times \kappa}$. We can then
    take $\tilde{A}_{\infty}$ and $\tilde{A}_{2, \infty}$ as $2 \| M^* \|_{\infty}$
    and $2 \| M^* \|_{2, \infty}$. We highlight that we can do this
    even when $d > \kappa$, as we can embed $M^*$ into the block diagonal matrix
    $\mathrm{diag}( M^*, O_{d - \kappa, d - \kappa} )$, which preserves
    both the norms above. Lemma~\ref{thm:gram_converge:minimizers_unconstrained} and 
    Proposition~\ref{thm:gram_converge:strong_convexity} then guarantee that
    \begin{equation}
        \frac{1}{n^2} \| U^* (V^*)^T - \widetilde{M}^* \|_F^2 
        \leq \tilde{C} \cdot 
        \big(  \mcR_n(UV^T) - \mcR_n(\widetilde{M}^*) \big)
    \end{equation}
    for some constant $\tilde{C}$ depending only on the quantities mentioned
    in the theorem statement.
    As $\mcX$ is a subset of $\mcB_{2, \infty}(\tilde{A}_{2, \infty})$, and $(U^*, V^*)$
    is a minimizer of $\mcL(U, V)$, we end up getting that 
    \begin{align}
        \big(  \mcR_n(UV^T) & - \mcR_n(\widetilde{M}^*) \big) \\
        & 
        \leq \mcR_n(UV^T) - \mcL_n(U^*, V^*) + \mcL_n(M^*) - \mcR_n(\widetilde{M}^*) \\
        & \leq 2 \sup_{(U, V) \in X} \big| \mcR_n(U, V) - \mcL_n(U, V) \big|
    \end{align}
    from which we can apply Proposition~\ref{thm:gram_converge:loss_converge}
    to then give the claimed result.
\end{proof}

We give some brief intuition as to the size of the constants
involved here, to understand any potential hidden dependencies
involved in them. Of greatest concern are the constants
$\tilde{A}_{\infty}$ and $\tilde{A}_{2, \infty}$ (as the remaining constants
are explicit throughout the proof, and depend
only on the hyperparameters of the sampling schema and the model
in a polynomial fashion). 
Note that in the case
where $k$ is large and we have a SBM$(n, \kappa, \tilde{p}, \tilde{q}, \rho_n)$ model
and we apply the DeepWalk scheme, from
the discussion after 
Lemma~\ref{thm:gram_converge:nice_minimizers_unconstrained}, 
the minimizing matrix $M^*$ takes the form
\begin{equation}
    (M^*)_{lm} \approx \log\Big( \frac{\kappa \tilde{p}}{\tilde{p} + (\kappa - 1) \tilde{q}} \Big) \delta_{lm}
    + \log\Big(  \frac{\kappa \tilde{q}}{\tilde{p} + (\kappa - 1) \tilde{q}}      \Big) (1 - \delta_{lm}).
\end{equation}
Supposing for simplicity that $\tilde{p} > \tilde{q}$, it follows that we can take
can take $\tilde{A}_{\infty}$ to be of the order $O(\log(\tilde{p}/\tilde{q}))$ when $\kappa$ is large.
In the rate from Proposition~\ref{thm:gram_converge:strong_convexity}, this gives
a rate of $O(\tilde{p}/\tilde{q})$ from the $e^{\tilde{A}_{\infty}}$ factor; note that
the dependence on
the parameters of the models here are not unreasonable. As for $\tilde{A}_{2, \infty}$,
we first highlight the fact that 
\begin{equation}
(\kappa - 1) \log\Big( \frac{\kappa \tilde{q}}{\tilde{p}+ (\kappa - 1)\tilde{q} } \Big)
\to \frac{\tilde{p}-\tilde{q}}{\tilde{q}} \text{ as } \kappa \to \infty.
\end{equation}
By Lemma~\ref{thm:other_results:nice_mat_lemma} we can
therefore take $\tilde{A}_{2, \infty}$ to be a scalar multiple of $|\log(\tilde{p}/\tilde{q})|^{1/2}$, 
avoiding any implicit
dependence on $\kappa$ or the embedding dimension $d$.

\subsection{Convergence of the embedding vectors}

We can then get results guaranteeing the convergence of the
individual embedding vectors (rather than their gram matrix)
up to rotations, as stated by the following theorem.

\begin{theorem}
    \label{thm:gram_converge:embed_converge_supp}
    Suppose that the conclusion of
    Theorem~\ref{thm:gram_converge:gram_converge} holds, and further suppose that $d$
    equals the rank of the matrix $M^*$. Then there exists a matrix
    $\tilde{U}^* \in \mathbb{R}^{\kappa \times d}$ such that
    \begin{equation}
        \min_{Q \in O(d)} \frac{1}{n} \sum_{i=1}^n 
        \| u_i^* - \tilde{u}_{c(i)}^* Q \|_2^2 =  C \cdot \begin{cases} O_p( (\tfrac{ \max\{\log n, d\} }{n \rho_n} )^{1/2} ) 
        & \text{under Scenarios (i) and (iii);} \\ 
            o_p(1) & \text{under Scenario (ii);}
        \end{cases}
    \end{equation}
\end{theorem}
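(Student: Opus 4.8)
The plan is to derive the statement from the Gram-matrix bound of Theorem~\ref{thm:gram_converge:gram_converge} together with a low-rank factorization perturbation argument that exploits the balanced gauge $(U^*)^\top U^* = (V^*)^\top V^*$. First recall from Lemma~\ref{thm:gram_converge:nice_minimizers_unconstrained} that $M^* = \Pi_C \widetilde{M}^* \Pi_C^\top$, where $\Pi_C \in \{0,1\}^{n \times \kappa}$ is the community membership matrix and $\widetilde{M}^* \in \mathbb{R}^{\kappa \times \kappa}$ is symmetric. On the event of asymptotic probability $1$ that every community is non-empty, $\Pi_C$ has full column rank, so $\mathrm{rk}(M^*) = \mathrm{rk}(\widetilde{M}^*) = d$; moreover $D_n := \Pi_C^\top \Pi_C = \mathrm{diag}(n_1, \dots, n_\kappa)$ satisfies $D_n \succeq c n I_\kappa$ for a constant $c>0$ with asymptotic probability $1$, by concentration of the sizes $n_l$ around $\pi_l n$. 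Writing $\bar{\Pi}_C := \Pi_C D_n^{-1/2}$ (so $\bar{\Pi}_C^\top \bar{\Pi}_C = I_\kappa$) gives $M^* = \bar{\Pi}_C (D_n^{1/2} \widetilde{M}^* D_n^{1/2}) \bar{\Pi}_C^\top$, so the nonzero singular values of $M^*$ coincide with those of $D_n^{1/2}\widetilde{M}^* D_n^{1/2}$, and in particular $\sigma_d(M^*) \geq c n\, \sigma_d(\widetilde{M}^*) = \Theta(n)$.

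Next I construct $\widetilde{U}^*$. Diagonalize the $\kappa \times \kappa$ symmetric matrix $\mathrm{diag}(\pi)^{1/2} \widetilde{M}^* \mathrm{diag}(\pi)^{1/2} = W \Lambda W^\top$ with $W \in \mathbb{R}^{\kappa \times d}$ having orthonormal columns and $\Lambda \in \mathbb{R}^{d \times d}$ diagonal with the (possibly negative) nonzero eigenvalues, and set $\widetilde{U}^* := \mathrm{diag}(\pi)^{-1/2} W |\Lambda|^{1/2}$ and $\widetilde{V}^* := \mathrm{diag}(\pi)^{-1/2} W \,\mathrm{sgn}(\Lambda)\, |\Lambda|^{1/2}$, both in $\mathbb{R}^{\kappa \times d}$, so that $\widetilde{U}^*(\widetilde{V}^*)^\top = \widetilde{M}^*$. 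The lifted $n\times d$ matrices $\Pi_C \widetilde{U}^*$ and $\Pi_C \widetilde{V}^*$ then form an (asymptotically) \emph{balanced} rank-$d$ factorization of $M^*$; replacing $\mathrm{diag}(\pi)$ by $n^{-1}D_n$ in the construction makes it exactly balanced at the cost of a $1+O_p(n^{-1/2})$ perturbation of $\widetilde U^*$, which is of lower order than the rate $r_n$ below and so immaterial. By construction $\delta := \min_{l \neq k} \| \widetilde{u}^*_l - \widetilde{u}^*_k \|_2$ is a fixed positive quantity, matching the discussion in the main text. In the constrained case $U=V$, $\widetilde{M}^*$ is positive semidefinite and one simply takes $\widetilde{V}^* = \widetilde{U}^*$.

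Now set $N^* := U^*(V^*)^\top$. Theorem~\ref{thm:gram_converge:gram_converge} states precisely that $\tfrac{1}{n^2}\|N^* - M^*\|_F^2 = O_p(r_n)$, where $r_n = (\max\{\log n, d\}/(n\rho_n))^{1/2} \to 0$ under Scenarios (i), (iii), and the $O_p(r_n)$ is replaced by $o_p(1)$ under Scenario (ii). Since node2vec is run in the balanced regime $(U^*)^\top U^* = (V^*)^\top V^*$, the pair $(U^*, V^*)$ is itself a balanced factorization of $N^*$. I invoke the standard perturbation estimate for balanced low-rank factorizations: if $B = X (X')^\top$ and $A = Y (Y')^\top$ are balanced rank-$d$ factorizations and $\|A - B\|_2 \leq c_0 \sigma_d(B)$, then $\min_{Q \in O(d)} (\|Y - XQ\|_F^2 + \|Y' - X'Q\|_F^2) \leq C_0\, \sigma_d(B)^{-1}\|A - B\|_F^2$. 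The hypothesis holds here since $\|N^* - M^*\|_2 \leq \|N^* - M^*\|_F = n\, O_p(r_n^{1/2}) = o_p(n) = o_p(\sigma_d(M^*))$. Applying it with $A = N^*$, $B = M^*$, $(Y,Y') = (U^*, V^*)$, $(X, X') = (\Pi_C\widetilde{U}^*, \Pi_C\widetilde{V}^*)$ and using $\sigma_d(M^*) = \Theta(n)$,
\begin{equation*}
\min_{Q \in O(d)} \| U^* - \Pi_C \widetilde{U}^* Q \|_F^2 \;\leq\; \frac{C_0}{\sigma_d(M^*)}\, \| N^* - M^* \|_F^2 \;=\; O_p\big( n\, r_n \big).
\end{equation*}
Dividing by $n$, and noting that the $i$-th row of $\Pi_C\widetilde{U}^*$ is $\widetilde{u}^*_{c(i)}$ so that $\tfrac{1}{n}\|U^* - \Pi_C\widetilde{U}^* Q\|_F^2 = \tfrac{1}{n}\sum_{i=1}^n \|u_i^* - \widetilde{u}^*_{c(i)} Q\|_2^2$, yields the claimed bound, with $o_p(1)$ in place of $O_p(r_n)$ under Scenario (ii). The constrained case is identical, using the positive semidefinite version of the perturbation bound (Davis--Kahan applied to $(N^*)^{1/2}$ and $(M^*)^{1/2}$).

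The main obstacle is the perturbation step: translating Frobenius closeness of the \emph{products} $U^*(V^*)^\top$ into closeness of the \emph{factors} up to a common orthogonal transformation. This is precisely where the balanced gauge $(U^*)^\top U^* = (V^*)^\top V^*$ and the full-rank, well-separated spectrum $\sigma_d(M^*) = \Theta(n)$ are indispensable: absent the gauge, $(U^*, V^*)$ is identifiable only up to $(U^*, V^*) \mapsto (U^* M, V^* M^{-\top})$ for an arbitrary invertible $M$, and no bound of this type can hold; and the separation $\sigma_d(M^*) \gg \|N^* - M^*\|_2$ is what guarantees the optimal $Q$ exists and controls the rate. Everything else — the identity $M^* = \Pi_C\widetilde{M}^*\Pi_C^\top$, the spectral lower bound, and the concentration of community sizes — is routine.
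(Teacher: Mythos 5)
Your proposal is correct and follows essentially the same route as the paper's proof: combine the Gram-matrix bound of Theorem~\ref{thm:gram_converge:gram_converge} with a balanced-factorization Procrustes perturbation bound (the paper's Lemma~\ref{thm:mat:procrustes_bound}, from Tu et al.), verifying its hypotheses via $\sigma_d(M^*) = \Theta(n)$ from concentration of the community sizes and the operator-norm condition $\|U^*(V^*)^\top - M^*\|_{\mathrm{op}} \le \tfrac{1}{2}\sigma_d(M^*)$, exactly as the paper does. Your explicit construction of $\widetilde{U}^*$ via the eigendecomposition of $\mathrm{diag}(\pi)^{1/2}\widetilde{M}^*\mathrm{diag}(\pi)^{1/2}$ (versus the paper's use of the SVD factor $U_M\Sigma^{1/2}$ of $\Pi_C M^*\Pi_C^\top$) is only a cosmetic difference.
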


\begin{proof}
    We handle the cases where $U \neq V$ and $U = V$ separately. For the case
    where $U \neq V$, we note that without loss of generality we can
    suppose that $UU^T = VV^T$, in which case we can apply 
    Lemma~\ref{thm:mat:procrustes_bound}
    and Theorem~\ref{thm:gram_converge:gram_converge} to give the
    stated result. To do so, we note that by 
    Lemma~\ref{thm:mat:assignment_mat_svals} we have that
    $n^{-1} \sigma_d( \Pi M^* \Pi^T) \geq c \sigma_d(M^*)$ for some
    constant $c$ with asymptotic probability $1$, as a result of the fact that
    $n_k(\Pi) \geq 1/2 n \pi_k$ with asymptotic probability $1$ uniformly
    across all communities $k \in [\kappa]$. As moreover we have that
    $n^{-1} \| UV^T - \Pi M^* \Pi^T \|_{\text{op}} \leq 
    n^{-1} \| UV^T - \Pi M^* \Pi^T \|_{F} = o_p(1)$, the 
    condition
    that $\| UV^T - \Pi M^* \Pi^T \|_{\text{op}} \leq 1/2 \sigma_d(\Pi
    M^* \Pi^T)$ holds with asymptotic probability 1,
    we have verified the conditions in Lemma~\ref{thm:mat:procrustes_bound}, giving the
    desired result. In the case where we constrain $U = V$, the same argument
    holds, except we no longer need to verify the condition that
    $\| UU^* - M^* \|_{\text{op}}$ is sufficiently small, and so we have concluded
    in this case also.
\end{proof}

In the case of a SBM$(n, \kappa, \tilde{p}, \tilde{q}, \rho_n)$ model it is actually able
to give closed form expressions for the embedding vectors which are
converged to by factorizing the minima matrix $M^*$ in the way described
by the above proof. These details are given in Lemma~\ref{thm:other_results:nice_mat_lemma}.

\section{Proof of Theorem 3, Corollary 4 and Lemma 5}

\subsection{Guarantees for community detection}

We begin with a discussion of how we can get guarantees for 
community detection via approximate
k-means clustering method, using the convergence criteria
for embeddings
we have derived already. To do so, suppose we have a matrix
$U \in \mathbb{R}^{n \times d}$ corresponding of $n$ columns of $d$-dimensional
vectors. Defining the set
\begin{equation}
    M_{n, K} := \{ \Pi \in \{0, 1\}^{n \times K} \,:\, \text{each row of $\Pi$
    has exactly $K - 1$ zero entries} \},
\end{equation}
we seek to find a factorization $U \approx \Pi X$ for matrices $\Pi \in M_{n, K}$
and $X \in \mathbb{R}^{K \times d}$. To do so, we minimize the objective 
\begin{equation}
    \mathcal{L}_k(\Pi, X) = \frac{1}{n} \| U - \Pi X \|_F^2
\end{equation}
In practice, this minimization problem is NP-hard \citep{aloise_np-hardness_2009},
but we can find
$(1 + \epsilon)$-approximate solutions in polynomial time \citep{kumar_linear_2005}.
As a result,
we consider any minimizers $\hat{\Pi}$ and $\hat{X}$ such that
\begin{equation}
    \mcL_k(\hat{\Pi}, \hat{X}) \leq (1 + \epsilon) \min_{\Pi, X} \mcL_k(\Pi, X).
\end{equation}
We want to examine the behavior of k-means clustering on the matrix $U$, when it
is close to a matrix $U^*$ which has an exact factorization $U^* = \Pi^* X^*$
for some matrices $\Pi^* \in M_{n, K}$ and $X^* \in \mathbb{R}^{K \times d}$.
We introduce the notation
\begin{equation}
    G_k(\Pi) := \{ i \in [n] : \Pi_{ik} = 1 \}, \qquad n_k(\Pi) := |G_k(\pi)|
\end{equation}
for the columns of $U$ which are assigned as closest
to the $k$-th column of $X$ as according to the matrix $\Pi$. 

We make use of the following theorem from \citet{lei_consistency_2015},
which we restate for ease of use.

\begin{proposition}[Lemma~5.3 of \citet{lei_consistency_2015}]
    \label{thm:ml:kmeans}
    Let $(\hat{\Pi}, \hat{X})$ be any $(1+\epsilon)$-approximate
    minimizer to the k-means problem given a matrix $U \in \mathbb{R}^{n \times d}$.
    Suppose that $U^* = \Pi^* X^*$
    for some matrices $\Pi^* \in M_{n, \kappa}$ and $X^* \in \mathbb{R}^{\kappa \times d}$.
    Fix any $\delta_k \leq \min_{l \neq k} \| X^*_{l\cdot} - X^*_{k \cdot} \|_2$,
    and suppose that the condition
    \begin{equation}
        \label{eq;thm:ml:kmeans:condition}
        (16 + 8\epsilon) \| U - U^* \|_F^2 / \delta_k^2 < n_k(\Pi^*) \text{ for all }
        k \in [\kappa]
    \end{equation}
    holds. Then there exist subsets $S_k \subseteq G_k(\Pi^*)$ and a
    permutation matrix $\sigma \in \mathbb{R}^{\kappa \times \kappa}$ 
    such that the following holds:
    \begin{enumerate}[label=\roman*)] %%% had to comment these out
        \item For $G = \bigcup_k (G_k(\Pi^*) \setminus S_k)$, we have that
        $(\Pi^*)_{G\cdot} = \sigma \Pi_{G\cdot}$. In words, outside of the sets
        $S_k$ we recover the assignments given by $\Pi^*$ up to a re-labelling
        of the clusters.
        \item The inequality 
        $\sum_{k=1}^\kappa |S_k| \delta_k^2 \leq (16 + 8\epsilon) \| U - U^* \|_F^2$
        holds.
    \end{enumerate}
\end{proposition}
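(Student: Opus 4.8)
The plan is to run the standard $(1+\epsilon)$-approximate $k$-means comparison argument, using the planted factorization $U^* = \Pi^* X^*$ as a feasible (though generally suboptimal) point of the $k$-means problem on $U$. Throughout write $\widehat{U} := \widehat{\Pi}\,\widehat{X}$ for the piecewise-constant matrix produced by the clustering. First I would control $\|\widehat{U} - U^*\|_F$: since $(\Pi^*, X^*)$ attains objective $\tfrac1n\|U - U^*\|_F^2$ and $(\widehat{\Pi},\widehat{X})$ is a $(1+\epsilon)$-approximate minimizer, we get $\|U - \widehat{U}\|_F^2 \le (1+\epsilon)\|U - U^*\|_F^2$. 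The triangle inequality then gives $\|\widehat{U} - U^*\|_F \le (1 + \sqrt{1+\epsilon})\,\|U - U^*\|_F$, and since $(1+\sqrt{1+\epsilon})^2 \le 4 + 2\epsilon$ (using $\sqrt{1+\epsilon} \le 1 + \epsilon/2$) this yields $\|\widehat{U} - U^*\|_F^2 \le (4 + 2\epsilon)\,\|U - U^*\|_F^2$.

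Next I would isolate the rows where $\widehat{U}$ is far from $U^*$. For each $k \in [\kappa]$ set $S_k := \{\, i \in G_k(\Pi^*) : \|\widehat{U}_{i\cdot} - U^*_{i\cdot}\|_2 \ge \delta_k/2 \,\}$. Each such row contributes at least $\delta_k^2/4$ to $\|\widehat{U} - U^*\|_F^2$, so summing over $k$ gives $\sum_{k} |S_k|\,\delta_k^2 \le 4\,\|\widehat{U} - U^*\|_F^2 \le (16 + 8\epsilon)\,\|U - U^*\|_F^2$, which is conclusion (ii). Feeding this into the hypothesis $(16+8\epsilon)\|U - U^*\|_F^2/\delta_k^2 < n_k(\Pi^*)$ shows $|S_k| < n_k(\Pi^*) = |G_k(\Pi^*)|$, so every true cluster contains at least one ``good'' row, i.e.\ some $i \in G_k(\Pi^*)\setminus S_k$, which (using $U^*_{i\cdot} = X^*_{k\cdot}$) satisfies $\|\widehat{U}_{i\cdot} - X^*_{k\cdot}\|_2 < \delta_k/2$.

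The heart of the argument --- and the step I expect to be the main obstacle --- is the combinatorial bookkeeping that turns this into conclusion (i). For a good row $i \in G_k(\Pi^*)\setminus S_k$, let $h(i) \in [\kappa]$ denote its $\widehat{\Pi}$-label, so $\widehat{U}_{i\cdot} = \widehat{X}_{h(i)\cdot}$ and hence $\|\widehat{X}_{h(i)\cdot} - X^*_{k\cdot}\|_2 < \delta_k/2$. I would first show that good rows in distinct true clusters receive distinct $\widehat{\Pi}$-labels: if $h(i) = h(i')$ with $i \in G_k(\Pi^*)\setminus S_k$ and $i' \in G_{k'}(\Pi^*)\setminus S_{k'}$ for $k \neq k'$, then
\[
\|X^*_{k\cdot} - X^*_{k'\cdot}\|_2 \le \|X^*_{k\cdot} - \widehat{X}_{h(i)\cdot}\|_2 + \|\widehat{X}_{h(i')\cdot} - X^*_{k'\cdot}\|_2 < \tfrac{\delta_k}{2} + \tfrac{\delta_{k'}}{2} \le \|X^*_{k\cdot} - X^*_{k'\cdot}\|_2,
\]
using $\delta_k, \delta_{k'} \le \min_{l \neq k}\|X^*_{l\cdot} - X^*_{k\cdot}\|_2$ --- a contradiction. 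Consequently the $\kappa$ sets $h(G_k(\Pi^*)\setminus S_k)$, $k \in [\kappa]$, are nonempty, pairwise disjoint, and contained in $[\kappa]$; since $\kappa$ pairwise disjoint nonempty subsets of a $\kappa$-element set must each be singletons, we may write the unique element of the $k$-th set as $\sigma(k)$, so that $\sigma$ is a permutation of $[\kappa]$ and $h \equiv \sigma(k)$ on $G_k(\Pi^*)\setminus S_k$. On $G := \bigcup_k (G_k(\Pi^*)\setminus S_k)$ this says exactly that $(\Pi^*)_{G\cdot}$ and $\widehat{\Pi}_{G\cdot}$ agree up to the relabelling $\sigma$, giving conclusion (i). The subtlety worth flagging is that none of this uses optimality of the centroids $\widehat{X}$ directly: the fact that good rows within one true cluster cannot split across two $\widehat{\Pi}$-labels is extracted purely from the separation of the rows of $X^*$ together with the pigeonhole count, and arranging that logical chain in the right order (disjointness $\Rightarrow$ singletons $\Rightarrow$ well-defined permutation $\sigma$) is where care is needed.
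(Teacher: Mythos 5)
Your proof is correct: the paper does not prove this proposition but simply restates it as Lemma~5.3 of \citet{lei_consistency_2015}, and your argument is a faithful reconstruction of that standard proof (approximate optimality giving $\|\widehat{\Pi}\widehat{X}-U^*\|_F^2\le(4+2\epsilon)\|U-U^*\|_F^2$, the $\delta_k/2$-threshold sets $S_k$, and the separation-plus-pigeonhole step yielding the permutation). The only cosmetic difference is that you bound $(1+\sqrt{1+\epsilon})^2\le 4+2\epsilon$ via the triangle inequality where the cited source uses $(a+b)^2\le 2a^2+2b^2$; both give the same constant, so nothing further is needed.
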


In particular, we can then apply this to our consistency results
with the embeddings learned by node2vec. Recall that we are interested
in the following metrics measuring recovery of communities by any
given procedure:
\begin{align}
    L(c, \hat{c}) & := 
    \min_{\sigma \in \mathrm{Sym}(\kappa)} \frac{1}{n} \sum_{i=1}^n 1[ \hat{c}(i) 
    \neq \sigma(c(i))], \\
    \widetilde{L}(c, \hat{c}) &:= \max_{k \in [\kappa]} 
    \min_{\sigma \in \mathrm{Sym}(\kappa)} \frac{1}{|\mcC_k|} \sum_{i \in \mcC_k}
    1[ \hat{c}(i) \neq \sigma(k) ].
\end{align}
These measure the overall misclassification rate and worst-case class
misclassification rate respectively. 

\begin{corollary}
    \label{thm:ml:kmeans_embed_supp}
    Suppose that we have embedding vectors $\omega_i \in \mathbb{R}^d$ 
    for $i \in [n]$ such that
    \begin{equation}
        \label{eq:thm:ml:kmeans_embed:condition_supp}
        \min_{Q \in O(d)} \frac{1}{n} \sum_{i=1}^n \| \omega_i - \eta_{C(i)} Q \|_2^2
        = O_p(r_n)
    \end{equation}
    for some rate function $r_n \to 0$ as $n \to \infty$ 
    and vectors $\eta_l \in \mathbb{R}^d$ for $l \in [\kappa]$.
    Moreover suppose that $\delta := \min_{l \neq k} \| \eta_l - \eta_k \|_2 > 0$.
    Then if $\hat{c}(i)$ are the community assignments produced by applying
    a $(1+\epsilon)$-approximate k-means clustering to the matrix 
    whose columns are the $\omega_i$, we have that $L(c, \hat{c}) = O_p(
        \delta^{-2} r_n)$
    and $\widetilde{L}(c, \hat{c}) = O_p(\delta^{-2} r_n)$. If the RHS 
    of \eqref{eq:thm:ml:kmeans_embed:condition_supp} is instead $o_p(1)$,
    then we replace $O_p(r_n)$ by $o_p(1)$ in the statements for 
    $L(c, \hat{c})$ and $\widetilde{L}(c, \hat{c})$.
    
\end{corollary}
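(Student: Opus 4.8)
The plan is to deduce the statement from Proposition~\ref{thm:ml:kmeans} (the Lei--Rinaldo k-means lemma). First I would set up matrices so that the hypotheses of that proposition match our situation: write $U \in \mathbb{R}^{n \times d}$ for the matrix whose $i$-th row is $\omega_i$, let $Q^* \in O(d)$ attain the minimum in \eqref{eq:thm:ml:kmeans_embed:condition_supp}, let $\Pi^* \in M_{n, \kappa}$ be the true membership matrix, and let $X^* \in \mathbb{R}^{\kappa \times d}$ have $l$-th row $\eta_l Q^*$. Then $U^* := \Pi^* X^*$ has $i$-th row $\eta_{c(i)} Q^*$, so $\|U - U^*\|_F^2 = \sum_i \| \omega_i - \eta_{c(i)} Q^* \|_2^2 = O_p(n r_n)$ by hypothesis. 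Since $Q^*$ is orthogonal it preserves Euclidean distances, so $\| X^*_{l \cdot} - X^*_{k\cdot} \|_2 = \| \eta_l - \eta_k \|_2 \geq \delta$ for all $l \neq k$, and we may take $\delta_k = \delta$ for every $k$ when applying Proposition~\ref{thm:ml:kmeans}.

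Next I would verify that the hypothesis \eqref{eq;thm:ml:kmeans:condition} holds with probability tending to $1$. Because the community labels are i.i.d.\ $\mathrm{Categorical}(\pi)$ with the $\pi_k$ fixed and positive, a Chernoff bound for the Binomial counts gives $\min_k n_k(\Pi^*) \geq \tfrac12 n \min_k \pi_k$ with asymptotic probability $1$ --- the same estimate used in the proof of Theorem~\ref{thm:gram_converge:embed_converge_supp}. On the other hand $(16 + 8\epsilon)\|U - U^*\|_F^2 / \delta^2 = O_p(\delta^{-2} n r_n) = o_p(n)$ since $r_n \to 0$ and $\delta$ is bounded below. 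Hence on a high-probability event the condition in Proposition~\ref{thm:ml:kmeans} holds for every $k$, yielding sets $S_k \subseteq G_k(\Pi^*)$ and a permutation $\sigma$ such that, outside $\bigcup_k S_k$, the k-means labels agree with the truth up to $\sigma$, and $\sum_k |S_k| \delta^2 \leq (16 + 8\epsilon) \| U - U^* \|_F^2$.

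It then remains to translate this into the two claimed rates. Every node in $\bigcup_k (G_k(\Pi^*) \setminus S_k)$ is assigned correctly relative to the single relabelling $\sigma$, so the number of misclassified nodes is at most $\sum_k |S_k| \leq \delta^{-2}(16 + 8\epsilon)\|U - U^*\|_F^2 = O_p(\delta^{-2} n r_n)$; since $L(c, \hat c)$ minimises over all permutations, dividing by $n$ gives $L(c, \hat c) = O_p(\delta^{-2} r_n)$ (on the complementary null event we use the trivial bound $L \leq 1$, absorbed into the $O_p$). For the worst-case rate, the misclassified nodes of community $k$ all lie in $S_k$, so $\tfrac{1}{|\mcC_k|} \sum_{i \in \mcC_k} \mathbbm{1}[\hat c(i) \neq \sigma(k)] \leq |S_k| / n_k(\Pi^*)$; combining $|S_k| \leq \delta^{-2}(16 + 8\epsilon)\|U - U^*\|_F^2$ with $n_k(\Pi^*) \geq \tfrac12 n \min_k \pi_k$ and maximising over $k$ gives $\widetilde L(c, \hat c) = O_p(\delta^{-2} r_n)$ as well. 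The $o_p(1)$ variant is identical, replacing $\|U - U^*\|_F^2 = O_p(n r_n)$ by $\|U - U^*\|_F^2 = n \cdot o_p(1)$ throughout. The only points needing care are the label-count concentration (to confirm the Lei--Rinaldo condition asymptotically) and the bookkeeping that Proposition~\ref{thm:ml:kmeans} controls $\sum_k |S_k| \delta_k^2$ rather than $\sum_k |S_k|$ directly; neither is a genuine obstacle, so the result is essentially a careful application of the cited lemma.
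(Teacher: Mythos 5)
Your proposal is correct and follows essentially the same route as the paper's own proof: both apply the Lei--Rinaldo $(1+\epsilon)$-approximate k-means lemma with $\Pi^*$ the true membership matrix, $X^*$ built from the rotated centroids $\eta_l Q^*$, and $\delta_k = \delta$, verify the separation condition via concentration of the community sizes, and convert the bound $\sum_k |S_k|\delta^2 \leq (16+8\epsilon)\|U-U^*\|_F^2$ into the rates for $L$ and $\widetilde{L}$ using $|\mcC_k|/n$ bounded below. No gaps to flag.
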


\begin{proof}
    We apply Proposition~\ref{thm:ml:kmeans} with $\Pi^*$ corresponding
    to the matrix of community assignments according to $c(\cdot)$, and
    $X^*$ the matrix whose columns are the $Q \eta_l$ for $l \in [\kappa]$
    where $Q \in O(d)$ attains the minimizer in \eqref{eq:thm:ml:kmeans_embed:condition_supp}.
    Letting $U$ be the matrix whose columns are the $\omega_i$ and taking
    $\delta_k = \delta$, the condition \eqref{eq;thm:ml:kmeans:condition}
    to verify becomes
    \begin{equation}
        \frac{16 + 8 \epsilon}{\delta^2} 
        \frac{1}{n} \sum_{i=1}^n \| \omega_i - Q \eta_{c(i)} \|_2^2 
        < \frac{ |\mcC_k|}{n} \text{ for all } k \in [\kappa].
    \end{equation}
    As $r_n \to 0$ and $|\mcC_l| / n > c > 0$ for some constant $c$ 
    uniformly across vertices $l \in [\kappa]$ with asymptotic
    probability $1$ (as a result of the community generation mechanism,
    the communities are balanced), the above event will be
    satisfied with asymptotic probability 1. The desired conclusion
    follows by making use of the inequalities
    \begin{equation}
        L(c, \hat{c}) \leq \frac{1}{n} \sum_{k \in [\kappa]} |S_k|, 
        \qquad \widetilde{L}(c, \hat{c}) \leq \max_{k \in [\kappa]}
        \frac{1}{|\mcC_k|} |S_k| \leq \Big( \max_{k \in [\kappa]} \frac{n}{|\mcC_k|}
        \Big)
        \cdot \frac{1}{n} \sum_{l \in [\kappa]} |S_l|
    \end{equation}
    which hold by the first consequence in Proposition~\ref{thm:ml:kmeans}, and
    then applying the bound 
    \begin{equation}
        \frac{1}{n} \sum_{k \in [\kappa]} |S_k| \leq \frac{16+8\epsilon}{\delta^2}
        \cdot \frac{1}{n} \sum_{i=1}^n \| \omega_i - Q \eta_{c(i)} \|_2^2. \qedhere
    \end{equation} 
\end{proof}

We note that in order to apply this theorem, we require the
further separation criterion of $\delta > 0$. As a result of 
Lemma~\ref{thm:other_results:nice_mat_lemma}, we can
guarantee this for the SBM$(n, \kappa, \tilde{p}, \tilde{q}, \rho_n)$
model when either a) DeepWalk is trained in the unconstrained setting,
or b) we are in the constrained setting with $\tilde{p}> \tilde{q}$. As we know that
the embedding vectors converge to the zero vector on average
when we are in the constrained setting with $\tilde{p} \leq \tilde{q}$, 
as a result we know that community detection is possible in 
the constrained setting iff $\tilde{p} > \tilde{q}$, which gives Corollary~5 of
the main paper.

\subsection{Guarantees for node classification and link prediction}
\label{app:sec:ml:node_classification}

We now discuss what guarantees we can make when using the embedding vectors
for classification. In this section, we suppose that we have a guarantee
\begin{equation}
    \label{eq:ml:convergence_condition}
    \frac{1}{n} \min_{Q \in O(d)} \sum_{i=1}^n \| u_i - \eta_{C(i)} Q \|_2^2 
    \leq C(\tau) r_n 
    \qquad \text{ holds with probability } \geq 1 - \tau
\end{equation}
for some constant $C(\tau)$ and rate function $r_n \to 0$ as $n \to \infty$.
This is the same as saying that the LHS is $O_p(r_n)$ - it will happen
to be more convenient to use this formulation. We also suppose
that there exists a positive constant $\delta > 0$ for which
\begin{equation}
    \delta 
    \leq \min_{k \neq l} \| \eta_k - \eta_l \|_2.
\end{equation}
We begin 
with a lemma which discusses the underlying geometry when we take a small
sample of the embedding vectors. 

\begin{lemma}
    Suppose we sample $K$ embeddings from the set $(u_i)_{i \in [n]}$,
    which we denote as $u_{i_1}, \ldots, u_{i_K}$.
    Define the sets
    \begin{equation}
        S_l = \{ i \in \mcC_l \,:\, \| u_i - \eta_{C(i)} \|_2 < \delta/4 \}.
    \end{equation}
    Then there exists $n_0(K, \delta, \tau')$ such that if $n \geq n_0$, with
    probability $1 - \tau'$ we have that $u_{i_j} \in S_{c(i_j)}$ for all $j \in [K]$.
\end{lemma}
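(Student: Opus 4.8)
The plan is to reduce the statement to a counting bound on ``badly embedded'' vertices supplied by \eqref{eq:ml:convergence_condition}, and then apply a union bound over the $K$ sampled indices. First I would fix the (possibly data-dependent) orthogonal matrix $Q \in O(d)$ attaining the minimum in \eqref{eq:ml:convergence_condition}; after replacing each $\eta_l$ by $\eta_l Q$, which is the convention implicit in the definition of the sets $S_l$, I may work on the event
\[
    \mcA_n := \Big\{ \tfrac{1}{n} \textstyle\sum_{i=1}^n \| u_i - \eta_{c(i)} \|_2^2 \leq C(\tau'/2)\, r_n \Big\},
\]
which by \eqref{eq:ml:convergence_condition} (taking $\tau = \tau'/2$) has probability at least $1 - \tau'/2$. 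On $\mcA_n$, set $B_n := \{ i \in [n] \,:\, \| u_i - \eta_{c(i)} \|_2 \geq \delta/4 \}$; note that $i \notin B_n$ is precisely the assertion $u_i \in S_{c(i)}$. A Markov-type count then gives
\[
    \frac{\delta^2}{16}\, |B_n| \leq \sum_{i=1}^n \| u_i - \eta_{c(i)} \|_2^2 \leq n\, C(\tau'/2)\, r_n,
    \qquad \text{hence} \qquad \frac{|B_n|}{n} \leq \frac{16\, C(\tau'/2)}{\delta^2}\, r_n =: p_n,
\]
with $p_n \to 0$ as $n \to \infty$ because $r_n \to 0$.

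Next I would use that the sampled indices $i_1, \dots, i_K$ are drawn uniformly from $[n]$, independently of the graph $\mcG_n$ and hence of $\mcA_n$ and of $B_n$. Since each $i_j$ is marginally uniform on $[n]$ (whether the sample is drawn with or without replacement), conditioning on $\mcA_n$ and applying a union bound yields
\[
    \bbp{ \exists\, j \in [K] : i_j \in B_n \,\big|\, \mcA_n } \leq \sum_{j=1}^K \bbp{ i_j \in B_n \,\big|\, \mcA_n } = K\,\frac{|B_n|}{n} \leq K\, p_n .
\]
As $K$ is fixed and $p_n \to 0$, there is $n_0 = n_0(K, \delta, \tau')$ such that $K p_n \leq \tau'/2$ for all $n \geq n_0$. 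Combining with $\bbp{\mcA_n^c} \leq \tau'/2$, for $n \geq n_0$ we obtain
\[
    \bbp{ \exists\, j \in [K] : u_{i_j} \notin S_{c(i_j)} } \leq \bbp{\mcA_n^c} + \bbp{ \exists\, j : i_j \in B_n \,\big|\, \mcA_n } \leq \tau',
\]
so that with probability at least $1 - \tau'$ we have $u_{i_j} \in S_{c(i_j)}$ for every $j \in [K]$, which is the claim.

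I do not expect a genuine obstacle here; the only points that require care are bookkeeping. The first is that the sets $S_l$ must be centered at the \emph{same} rotation $Q$ that realizes the minimum in \eqref{eq:ml:convergence_condition} (which is data-dependent), so the argument should be carried out conditionally on that $Q$, noting that the vertex sampling is independent of it. The second is the independence of the sampling mechanism from the randomness in \eqref{eq:ml:convergence_condition}, which is what makes the conditioning step legitimate; if the sample were instead drawn with non-uniform but uniformly bounded weights, one would replace $|B_n|/n$ by the sampling probability of $B_n$, which is still $o_p(1)$, and the same conclusion would follow.
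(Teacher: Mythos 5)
Your proposal is correct and follows essentially the same argument as the paper: a Markov/counting bound from \eqref{eq:ml:convergence_condition} to control the fraction of vertices outside the sets $S_l$ on a high-probability event, followed by a union bound over the $K$ uniformly sampled indices and a choice of $n_0$ making both failure probabilities at most $\tau'/2$. The only differences are cosmetic (your explicit handling of the data-dependent rotation $Q$ and the constant $16/\delta^2$ in place of the paper's $4/\delta^2$), neither of which changes the argument.
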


\begin{proof}
    Without loss of generality, we will suppose that $Q = I$. 
    For each $l \in [\kappa]$, define the sets $S_l = \{ i \in \mcC_l \,:\,
    \| u_i - \eta_{l} \|_2 \leq \delta/4 \}$. Then by the condition
    \eqref{eq:ml:convergence_condition}, by Markov's inequality we know
    that with probability $1 - \tau$ we have that
    \begin{equation}
        \label{eq:helpful_bound}
        \frac{1}{n} \sum_{l \in [\kappa]} | \mcC_l \setminus S_l | 
        \leq 4 \delta^{-2} C(\tau/2) r_n.
    \end{equation}
    We now suppose that we sample $K$ embeddings 
    uniformly at random; for convenience, we suppose
    that they are done so with replacement. Then the probability that
    all of the embeddings are outside the set $\bigcup_l (\mcC_l \setminus S_l) $
    is given by $(1 - \frac{1}{n} \sum_l |\mcC_l \setminus S_l|)^{K} 
    \geq 1 - \frac{K}{n} \sum_l |\mcC_l \setminus S_l|$. In particular,
    this means with probability no less than $1 - \tau - 4K \delta^{-1} 
    C(\tau) r_n$, if we sample $K$ embeddings with indices $i_1, \ldots, i_K$
    at random from the set
    of $n$ embeddings, they lie within the sets $S_{C(i_1)}, \ldots, S_{C(i_K)}$
    respectively. The desired result then follows by noting that we take $\tau = 
    \tau'/2$, and choose $n$ such that $4 \delta^{-2} C(\tau/2) r_n < \tau'/2$.
\end{proof}

To understand how this lemma can give insights into the downstream use
of embeddings, suppose that we have access to an oracle which provides
the community assignments of a vertex when requested, but otherwise
the community assignments are unseen.

We note that in practice, only a small number of labels are needed to be provided
to embedding vectors in order to achieve good classification results 
(see e.g the experiments in \citet{hamilton_inductive_2017,velickovic_deep_2018}).
As a result, we can imagine keeping $K$ fixed in the regime where $n$ is large.
Moreover, the constant $\delta$ simply reflects the underlying geometry of the
learned embeddings, and $\tau'$ is a tolerance we can choose such that the
stated result is very likely to hold (by e.g choosing $\tau' = 10^{-2}$ or $10^{-3}$).
As a consequence, the above lemma tells us 
with high probability, we can 
\begin{enumerate}[label=\roman*)]
    \item learn a classifier which is able to distinguish
    between the sets $S_l$ given use of the sampled
    embeddings $u_{i_1}, \ldots, u_{i_K}$ and the labels
    $c(i_1), \ldots, c(i_K)$, provided the classifier is flexible
    enough to separate $\kappa$ disjoint convex sets; and 
    \item as a consequence of \eqref{eq:helpful_bound}, this classifier
    will correctly classify a large
    proportion of vertices within the correct sets $S_l$.
\end{enumerate}
The same argument applies if instead we have classes assigned to embedding vectors
which form a coarser partitioning of the underlying community assignments.
The importance of the above result is that in order to understand 
the behavior of embedding methods for classification,
it suffices to understand which geometries particular classifiers are able to
separate - for example, when the number of classes equals $2$, this reduces down
to the classic concept of linear separability, in which case a logistic classifier
would suffice.

We end with a discussion as to the task of link prediction, which asks
to predict whether two vertices are connected or not given a partial observation 
of the network. To do so, we suppose that from the observed network, 
we delete half of the edges in the network, and then train node2vec
on the resulting network. Note that the node2vec mechanism only makes explicit use of known edges within the network.
This corresponds to training the node2vec
model on the data with sparsity factor $\rho_n \to \rho_n /2$; in
particular, this leaves the underlying asymptotic representations
unchanged and slows the rate of convergence by a factor of 2. 
With this, a link prediction classifier is formed by the following process:
\begin{enumerate}
    \item Take a set of edges $J \subseteq  \{ (i, j) \,:\, a_{ij} = 1 \}$ for which the node2vec algorithm was not trained on, 
    and a set of non-edges $\tilde{J} \subseteq \{ (i, j) \,:\, a_{ij} = 0 \}$.
    As in practice networks are sparse, these sets are not sampled randomly
    from the network, but are assumed to be sampled in a balanced fashion
    so that the sets $J$ and $\tilde{J}$ are roughly balanced in size. One way of doing so is to pick a number of edges in advance, say $E$, and then
    sample $E$ elements from the set of
    edges and non-edges in order to form $J$ and $\tilde{J}$ respectively.
    \item Form edge embeddings $e_{ij} = f(u_i, u_j)$ given some 
    symmetric function $f(x, y)$ and node embeddings $u_i$. Two popular choices
    of functions are the average function $f(x, y) = (x + y)/2$
    and the Hadamard product $f(x, y) = (x_i y_i)_{i \in [d]}$.
    \item Using the features $e_{ij}$ and the labels provided by the sets $J$ and $\tilde{J}$, build a classifier using your
    favorite ML algorithm.
\end{enumerate}
By our convergence guarantees, we know that the asymptotic distribution of
the edge embeddings $e_{ij}$ will approach some vectors $\eta_{c(i), c(j)} \in \mathbb{R}^d$, giving at most $\kappa^2$ distinct vectors overall. Note that these embedding vectors in of themselves contain little information about whether
the edges are connected; that said, even given perfect information of the communities and the connectivity matrix $P$, one can only form probabilistic guesses as to whether two vertices are connected. That said, by clustering
together the link embeddings we can identify together
edges as having vertices belonging to a particular pair of communities. 
With knowledge of the sampling mechanism, it is then possible to
backout estimates for $p$ and $q$ by counting the overlap
of the sets $J$ and $\tilde{J}$ in the neighbourhoods of the clustered
node embeddings. 

We note that in practice, ML classification algorithms such as
logistic regression are used instead. This instead depends on the typical
geometry of the sets $J$ and $\widetilde{J}$. Suppose we
have a SBM$(n, 2, \tilde{p}, \tilde{q}, \rho_n)$ model. In this case, the set $J$
will approximately consist of $\tilde{p}/2(\tilde{p}+\tilde{q}) \times E$ vectors from $\eta_{11}$,
$\tilde{p}/2(\tilde{p}+\tilde{q}) \times E$ vectors from $\eta_{22}$, $\tilde{q}/2(\tilde{p}+\tilde{q}) \times E$ vectors from $\eta_{12}$
and $\tilde{q}/2(\tilde{p}+\tilde{q}) \times E$ vectors from $\eta_{21}$. In contrast, the set
$\tilde{J}$ will approximately have $E/4$ of each of $\eta_{11}$,
$\eta_{12}$, $\eta_{21}$ and $\eta_{22}$. As a result, in the case where $\tilde{p} \gg \tilde{q}$, a linear classifier (for example) will be biased towards classifying more frequently vectors with $c(i) = c(j)$, which is at least directionally correct.

So far, we have not talked about the particular mechanism used to form
link embeddings from the node embeddings. The Hadamard product is
popular, but particularly difficult to analyze given our results, as
it does not remain invariant to an orthogonal rotation of the embedding
vectors. In contrast, the average link function retains this information.
In the SBM$(n, 2, \tilde{p}, \tilde{q}, \rho_n)$, it ends up giving embeddings
which will asymptotically depend on only whether $c(i) = c(j)$ or not 
(i.e, whether the vertices belong to the same community or not).

% \input{tex/minima.tex}

% \section{Other results}
% \input{tex/appendix/04_talagrand.tex}
% \input{tex/appendix/05_matrix_algebra.tex}
% \input{tex/appendix/06_other_lemma.tex}

% % what happens when we have alpha neq 1
% \input{tex/appendix/07_not_nice_conditions.tex}

\section{Intermediate results}
\subsection{Sampling probabilities for node2vec}

In this section, we derive asymptotic results for the sampling probabilities of edges
within node2vec. We begin by recapping the second-order random walk defined for node2vec. 
To do so, we define
a random process $(X_n)_{n \geq 1}$ via the second-order Markov property
\begin{equation}
    \mathbb{P}\big( X_n = u \,|\, X_{n-1} = s, X_{n-2} = v \big)
    \propto \begin{cases}
        0 & \text{ if } (u, s) \not\in \mcE, \\ 
        1/p & \text{ if } d_{u, v} = 0 \text{ and } (u, s) \in \mcE, \\ 
        1 & \text{ if } d_{u, v} = 1 \text{ and } (u, s) \in \mcE, \\
        1/q & \text{ if } d_{u, v} = 2 \text{ and } (u, s) \in \mcE.
    \end{cases}
\end{equation}
where $d_{u, s}$ denotes the length of the shortest path between $u$ and $s$.
Given the extra information that $(u, s)$ is an edge, $d_{u, v} = 0$ occurs iff
$u = v$, $d_{u, v} = 1$ occurs iff $(u, v)$ is an edge, and $d_{u, v} = 2$ occurs iff
$(u, v)$ is not an edge (as given that $(v, s)$ is an edge, the shortest path must be 
$v \to s \to u$). With this, we select positive samples by selecting $k$ concurrent edges
within the walk (via taking a walk of length $k+1$). 

To initialize the random walk, we note
that for the second order walk we need to specify a distribution on the first two vertices;
for DeepWalk where this collapses down to a first order walk, we only need to specify a 
distribution on ther first vertex. To do so generally, we consider an initial distribution
of selecting the first vertex via $\pi(u) = \deg(u) / \sum_v \deg(v) = \deg(u) / 2 E_n$ with $E_n$ being
the number of edges in the graph (single counting $(u, v) \in \mcE$ and $(v, u) \in \mcE$), and
select the second vertex uniformly at random from those connected to the first. (Note that this is
the transition kernel used for DeepWalk, and so we handle both cases via this argument.) One
can show this is equivalent to selecting an edge uniformly at random. 

For the negative sampling mechanism, we consider the vertices which arose as part
of the positive sampling process - which we denote $V(\mcP)$ - and then sample $l$ vertices 
independently according to the unigram distribution
\begin{equation}
    \mathrm{Ug}_{\alpha}(v \,|\, u, \mcG_n ) = \frac{\deg(v)^{\alpha} }{
    \sum_{v' \neq u} \deg(v)^{\alpha}
    }
\end{equation}
where $u \in V(\mcP)$. We note that the case where $\alpha \to 0$ corresponds to 
the uniform distribution on vertices not equal to $u$.  

\subsubsection{Proof of Theorem~\ref{app:thm:n2v_positive}}
\label{app:sec:n2v_positive}

In this section and the next, it will be convenient to use the notation $\sim_p$ to indicate that
two positive random variables $X_n$ and $Y_n$ are asymptotic in the sense that $|X_n/Y_n - 1| = o_p(1)$
when $n \to \infty$.
If we say such a bound happens uniformly over some free variables - say $X_{n,k} \sim_p Y_{n, k}$
uniformly over $k$ - then this means $\max_k |X_{n, k}/Y_{n, k} - 1| = o_p(1)$. We also make extensive
use of the result that if $X_n^{(i)} \sim_p r_n Y_n^{(i)}$ for $i \in \{0, 1\}$ and $Y_n^{(i)} \in [C^{-1}, C]$
for $C > 1$, then $X_n^{(0)} + X_n^{(1)} \sim_p r_n (Y_n^{(0)} + Y_n^{(1)})$. Indeed, if we write $X_n^{(i)} = Y_n^{(i)} r_n (1 + \epsilon_n^{(i)}$ where $\epsilon_n^{(1)} = o_p(1)$, then 
\begin{equation}
    X_n^{(0)} + X_n^{(1)} = r_n (Y_n^{(0)} + Y_n^{(1)} ) \cdot \Big( 1 + \frac{ Y_n^{(0)}  }{ Y_n^{(0)} + Y_n^{(1)}   } \epsilon_n^{(0)} + \frac{ Y_n^{(1)}  }{ Y_n^{(0)} + Y_n^{(1)}   } \epsilon_n^{(1)}  \Big)
\end{equation}
from which the claimed result follows as the terms weighting the $\epsilon_n^{(1)}$ can be bounded below away from zero, and are bounded above by $1$. We also note that $X_n^{(0)} - X_n^{(1)} = O_p(r_n)$, meaning that the order of magnitude of terms cannot increase (only decrease) by subtracting them.  

As we are interested in the sampling probability of edges within node2vec, it
will be convenient to instead study the first order Markov process 
$Y_n = (X_{n}, X_{n-1})$, as then we instead study the sampling probability of individual
states in a regular Markov chain. We note that normally we use the notation $(u, v)$ to refer an unordered
pair belonging to an edge in a graph, but for the Markov process $(Y_n)_{n \geq 1}$ the order
matters, we will write $Y_n = e_{v \to u}$ whenever $X_n = u$ and $X_{n-1} = v$. In such a
scenario, the random walk is therefore defined on the state space
\begin{equation*}
    S = \bigcup_{(u, v) \in \mcE} \big\{ e_{u \to v}, e_{v \to u} \big\}.
\end{equation*}
with the law of $Y$ given by 
\begin{align}
    \mathbb{P}\big( Y_n = e_{t \to u} \,|\, Y_{n-1} = e_{v \to s} \big) & = 0 \text{ if } 
    t \neq s, \\
    \mathbb{P}\big( Y_n = e_{s \to u} \,|\, Y_{n-1} = e_{v \to s} \big) & \propto 
    \begin{cases}
        0 & \text{ if } (s, u) \not\in \mcE \\ 
         \frac{1[u = v]}{p} + 1[u \neq v] (a_{uv} +  \frac{ 1 - a_{uv} }{q} )& \text{ otherwise.}
    \end{cases}
\end{align}
One can calculate the normalizing factor for the probability distribution as being 
\begin{equation}
    \Big( \frac{1}{p} - \frac{1}{q} \Big) 
        + \frac{1}{q} \deg(s) 
        + \Big( 1 - \frac{1}{q} \Big) \sum_{u \in \mcV \setminus \{ v \} } a_{su} a_{uv},
\end{equation}
from which we observe that when $p=q=1$ we recover the simple random walk defined by
DeepWalk, as then the probability an edge is selected with source node $u$ is uniform over
edges $(u, v)$ where $v$ is a neighbour of $u$. 

With this in mind, we define the transition matrix
\begin{equation}
    P_{v \to s, s \to u} = \frac{ a_{su} \cdot \{  1[u = v] \cdot 1/p + 1[u \neq v] (a_{uv} +  1/q \cdot (1 - a_{uv})   \}   }{  \Big( \frac{1}{p} - \frac{1}{q} \Big) 
    + \frac{1}{q} \deg(s) 
    + \Big( 1 - \frac{1}{q} \Big) \sum_{u \in \mcV \setminus \{ v \} } a_{su} a_{uv}   }
\end{equation}
governing the transition probabilities on the above chain. We note that 
by \citep[Proposition~72]{davison_asymptotics_2023} and Theorem~\ref{app:thm:path_concentration} respectively that 
\begin{align}
    \deg(s) & \sim_p n \rho_n W(\lambda_s, \cdot), \label{app:samp:n2v_deg_asymptotics} \\
     \sum_{u \in \mcV \setminus \{v\} } 
    a_{su} a_{uv} & \sim_p n \rho_n^2 T(\lambda_s, \lambda_v) \text{ where } T(\lambda_s, \lambda_v) 
    := \mathbb{E}_{\lambda \sim \mathrm{Unif}[0, 1]}[ W(\lambda_u, \lambda) W(\lambda, \lambda_v)
    \,|\, \lambda_u, \lambda_v] \label{app:samp:n2v_vcount_asymptotics}
\end{align}
uniformly over all $s, u, v$. 
As a result, we define
\begin{equation}
    \widetilde{P}_{v \to s, s \to u} = \frac{  
        a_{su} \cdot \{
            q^{-1} + (1 - q^{-1}) a_{vu} + \delta_{uv} (p^{-1} - q^{-1})
        \}
    }{  \Big( \frac{1}{p} - \frac{1}{q} \Big) 
    + \frac{1}{q} n \rho_n W(\lambda_s, \cdot)  
    + \Big( 1 - \frac{1}{q} \Big) n \rho_n^2 T(\lambda_s, \lambda_v)   }.
\end{equation}
where $\delta_{uv} := 1[u = v]$ and the numerator is the same as in $P_{v \to s, s \to u}$ 
(only written in a more
convenient to use fashion), and the denominator makes use of the asymptotic statements
\eqref{app:samp:n2v_deg_asymptotics} and \eqref{app:samp:n2v_vcount_asymptotics}. 
As a result, we have that $P_{v \to s, s \to u} \sim_p \widetilde{P}_{v \to s, s \to u}$
uniformly over $v, s, u$. In particular,
we have that $\widetilde{P}_{v \to s, s \to u} = \Theta_p( a_{su} (n \rho_n)^{-1} )$ uniformly
over all triples of indices $(v, s, u)$. 

Let $A_j(u \to v) = \{ Y_j = e_{u \to v} \}$. We then note that the sampling probability
of $(u, v)$ being sampled within the first $k + 1$ steps of the second order random
walk is given by
\begin{equation}
    \mathbb{P}\Big(   \bigcup_{j \leq k} A_j(u \to v) \cup A_j(v \to u) \,|\, \mcG_n \Big).
\end{equation}
To ease on the notation going forward, we write $\mathbb{P}_n(\cdot) := \mathbb{P}(\cdot \,|\, \mcG_n)$.
By the inclusion-exclusion principle, we can write this probability as equalling
\begin{align}
    \label{app:samp:inc_exc}
    \sum_{\substack{l, m \geq 1 \\ l + m \leq k}} (-1)^{k + m+ 1} \sum_{\substack{1 \leq i_1 < i_2 < \cdots 
    < i_l \leq k \\ 1 \leq j_1 < j_2 < \cdots < j_m \leq m}}  
    \mathbb{P}_n\Big( \bigcap_{k \leq l} A_{i_k}(u \to v) \cap
    \bigcap_{k \leq m} A_{j_k}(v \to u)  \Big).
\end{align}
We note that the number of terms in this sum is bounded above by $(2k)!$ (some terms will
be zero, as we cannot select $e_{u \to v}$ two times in a row), and so
for asymptotic purposes we can focus on the individual terms. 

We now address the individual probabilities making up this sum. Intuitively, we want to
show the following: that the terms for which $(l, m) \neq (1, 0)$ or $(0, 1)$
are asymptotically negligible, and that asymptotically these terms are functions only
of $(\lambda_u, \lambda_v)$. We fix a particular instance of
the $i_1, \ldots, i_l$ and $j_1, \ldots, j_m$, and denote
$\beta_1 < \beta_2 < \cdots < \beta_{l+m}$ for the ordering of these indices. As we use 
indices $i_k$ to denote the direction $u \to v$ and $j_k$ for the direction $v \to u$, we
write 
\begin{equation}
    A_i(u \to v) =: A_{\beta}(u, v, 0), \qquad A_j(v \to u) =: A_{\beta}(u, v, 1)
\end{equation}
where the third argument (which we refer to as the orientation herein)
indicates which of the first two arguments are used as the source
node for the edge. For each $\beta_k$ for $k \leq l+m$, we write $o_k$ to denote this
orientation. As a result, it suffices for us to analyze
\begin{equation}
    \mathbb{P}_n\Big( \bigcap_{k \leq l+m} A_{\beta_k}(u, v, o_k) \Big)
\end{equation}
over all sequences $1 \leq \beta_1 < \beta_2 < \cdots < \beta_{l+m} \leq k$ and 
orientations $(o_k)_{k=1}^{l+m}$. For this, we then note that by the Markov property
of the random walk, we are able to write this probability as
\begin{align}
    \Bigg[ \prod_{k \leq l+m-1} & \mathbb{P}_n\Big( A_{\beta_{k+1}}(u, v, o_{k+1}) \,|\, 
    A_{\beta_{k}}(u, v, o_{k}) \Big) \Bigg] \cdot \mathbb{P}_n\big( A_{\beta_{1}}(u, v, o_{1}) 
    \big) \\ 
    & = \Bigg[ \prod_{k \leq l+m-1} \mathbb{P}_n\Big( A_{\beta_{k+1} - \beta_k + 1}(u, v, o_{k+1}) \,|\, 
    A_{1}(u, v, o_k) \Big) \Bigg] \cdot \mathbb{P}_n\big( A_{\beta_{1}}(u, v, o_{1}) 
    \big)
\end{align}
Focusing now on the terms in the product, if $\beta_{k+1} - \beta_k = 1$, then\
this term equals zero if $o_k = o_{k=1}$, or
otherwise equals e.g $P_{u \to v, v \to u}$ which is 
$O_p((n\rho_n)^{-1})$ as discussed above. If the walk is longer, then by
the same argument as in \citep[Proposition~73]{davison_asymptotics_2023}, by conditioning on the second step in the walk one
can show this probability is asymptotically of the same order of a walk of length 
$\beta_{k+1} - \beta_k - 1$ initialized from the uniform distribution on the edges of
$\mcG_n$. As a result, we therefore only need to analyze events of the form
\begin{equation}
    \mathbb{P}_n\big( A_{\beta}(u, v, o)\big)
\end{equation}
which will allow us to then show that the events of the form $(l, m) = (1, 0)$ or $(0, 1)$ are
the only ones we need to consider in the asymptotic expansion. Going forward, we assume that $o=0$, as the sum \eqref{app:samp:inc_exc}
is symmetric
in the orientation $o$ and the arguments are unchanged.

To do so, we begin by writing writing 
$\pi' = ( a_{uv} / |\mcE| )_{u, v}$ for 
the initial distribution provided to $Y_1$. To analyze 
$p_n(u, v, \beta) := \mathbb{P}_n\big( A_{\beta}(u, v, 0))$, note that when $\beta = 1$
we trivially have that this probability equals $a_{uv} / |\mcE|$
and we know that $|\mcE| \sim_p n^2 \rho_n \mcE_W(1)$.
In the case where
$\beta \geq 2$, we consider the set of sequences $\alpha = (\alpha_0, \ldots,
\alpha_{\beta - 2})\in \mcV^{\beta - 1}$, where we then have that
\begin{align}
    p_n(u, v, 2) & = \frac{1}{|\mcE|} \sum_{\alpha_0} a_{\alpha_0, u} P_{\alpha_0 \to u, u \to v} \\
    p_n(u, v, \beta) & =  \frac{1}{|\mcE|} \sum_{\alpha}
    a_{\alpha_0, \alpha_1} \cdot \prod_{j=1}^{\beta } P_{\alpha_{j-1} \to \alpha_{j},
    \alpha_j \to \alpha_{j+1}} \cdot P_{\alpha_{\beta-2} \to \alpha_{\beta-1},
    \alpha_{\beta-1} \to u } P_{\alpha_{\beta-1} \to u,
    u \to v}
\end{align}
for $\beta \geq 3$.

To study these sums, we begin by noting that they are asymptotic to their versions
where we replace $P \to \widetilde{P}$. Indeed, we note that if we have positive sequences
$(a_i)$ and $(b_i)$, then
    \begin{equation}
        \Big| \frac{ \sum_j a_j}{\sum_j b_j} - 1 \Big| = \frac{ | \sum_j b_j (a_j / b_j - 1) |}{ \sum_j b_j}
        \leq \max_j \Big| \frac{a_j}{b_j} - 1 \Big|,
    \end{equation}
and so the fact that we know $P \sim_p \widetilde{P}$ uniformly, means that we can apply this to
obtain asymptotic formulae for their sums also. 
With this, if we write $N(\lambda_s, \lambda_t)$ 
for the denominator 
of $\widetilde{P}_{t \to s, s \to u}$, 
$p_n(u, v, \beta)$ can be asymptotically be decomposed into a linear combination of terms
(bounded in number by a function of $k$ independent of $n$)
of the form
\begin{equation}
    \label{app:samp:formal_sum}
    \frac{ c(p, q) a_{uv}}{|\mcE|} \sum_{\alpha \in \mcV^{\beta - 1}} \Bigg\{ \Big(  \prod_{2 \leq i \leq \beta} 
        N(\lambda_{\tilde{\alpha}_{i-1}}, \lambda_{\tilde{\alpha}_i})
     \Big)^{-1} \cdot 
     \prod_{i \leq \beta - 1} a_{ \tilde{\alpha}_{i-1}, \tilde{\alpha}_i }
     \cdot \prod_{j \in J} a_{ \tilde{\alpha}_{j-1}, \tilde{\alpha}_{j+1} }
     \cdot \prod_{k \in K} \delta_{ \tilde{\alpha}_{k-1}, \tilde{\alpha}_{k+1}  }
     \Bigg\} 
\end{equation}
where:
\begin{itemize}
    \item we write $\tilde{\alpha}$ for the concatenation $(\alpha, u, v)$, meaning
    $\tilde{\alpha}$ is of length $\beta + 1$, with $\tilde{\alpha}_k = \alpha_k$ for $k \leq \beta - 1$, 
$\tilde{\alpha}_{\beta} = u$ and $\tilde{\alpha}_{\beta + 1} = v$;
    \item $c(p, q) = (q^{-1})^{\beta - |J| - |K|}(1 - q^{-1})^{|J|}(p^{-1} - q^{-1})^{|K|}$ is a polynomial in $p^{-1}$ and $q^{-1}$;
    \item $J$ and $K$ are possibly empty subsets of $\{1, \ldots, \beta\}$ 
    which are disjoint.
\end{itemize}
The more tedious part to handle is when the set $K$ is non-empty; as each delta function
acts to contract the sum along one variable, doing so allows us to rewrite
\eqref{app:samp:formal_sum} as
\begin{equation}
    \label{app:samp:formal_sum2}
    \frac{a_{uv}}{|\mcE|} c(p, q) \sum_{\alpha \in \mcV^{\beta - 1 - |K|}} \Bigg\{ \Big(  \prod_{2 \leq i \leq \beta - |K|} 
        N(\lambda_{\tilde{\alpha}_{i-1}}, \lambda_{\tilde{\alpha}_i})^{n_i}
     \Big)^{-1} \cdot 
     \prod_{i \leq \beta - 1 - |K|} a_{ \tilde{\alpha}_{i-1}, \tilde{\alpha}_i }
     \cdot \prod_{j \in \tilde{J}} a_{ \tilde{\alpha}_{j-1}, \tilde{\alpha}_{j+1} }
     \Bigg\} 
\end{equation}
after a) performing some relabeling of the indices and modification to the set $J$, to
give a new set $\tilde{J}$ which is a subset of $\{1, \ldots, \beta - |K|\}$ and b)
introducing
some multiplicities $n_i$ which sum to $\beta - 1$. By
Theorem~\ref{app:thm:path_concentration} we uniformly have that this quantity is
asymptotic, uniformly over all the free variables in the expression, to
\begin{equation}
    \frac{ \rho_n^{|\tilde{J}|}  }{  (n \rho_n)^{|K|} } \cdot \frac{ a_{uv} c(p, q) \rho_n^{-1}}{n^2 \mcE_W(1)} 
    \cdot \mathbb{E}\Bigg[  
        \frac{ 
            \prod_{i \leq \beta - 1 - |K|} W(\lambda'_{i-1}, \lambda'_i) \prod_{j \in \tilde{J}} 
            W(\lambda'_{j-1}, \lambda'_{j+1})
        }{ \prod_{2 \leq i \leq \beta - |K|} N'( \lambda'_{i-1}, \lambda'_i)^{n_i}   }
    \,|\, \lambda_u, \lambda_v \Bigg]
\end{equation}
where we write $\lambda' = (\widetilde{\lambda}_0, \ldots, 
    \widetilde{\lambda}_{\beta - 2 - |K|}, \lambda_u, \lambda_v)$ and $\widetilde{\lambda}$ is an
    independent copy of $\lambda$, and $N'(\lambda_u, \lambda_v) := (n \rho_n)^{-1} N(\lambda_u, \lambda_v)$. 
As $n \rho_n \to \infty$ under the prescribed conditions, we only need to consider
leading terms of the order $\rho_n^{-1} \ n^2$, which shows that the sampling probability is asymptotic (uniformly over all vertices) to $\rho_n^{-1} \ n^2$ for some function $g_{\mcP}(\lambda_u, \lambda_v)$. To argue that this function is bounded above away from zero, we note that the terms where $|J| + |K| > 0$ will be asymptotically negligible, and the remainder of the terms give a positive weighted sum. 

\subsubsection{Proof of Theorem~\ref{app:thm:n2v_negative}}
\label{app:sec:n2v_negative}

To understand the selection probability for the vertex pair $(u, v)$ to be selected
via negative sampling, define the events
\begin{equation}
    A_i(u) = \{ X_i = u \}, \qquad B_i(v | u) = \{ v \text{ selected via negative sampling from u} \} 
\end{equation}
so then 
\begin{equation}
    \mathbb{P}( (u, v) \in \mcN(\mcG_n) \,|\, \mcG_n)
    = \mathbb{P}\Big( \bigcup_{i=0}^k (A_i(u) \cap B_i(v|u)) \cup (A_i(v) \cap B_i(u|v) ) \,|\, \mcG_n \Big).
\end{equation}
We note that
\begin{equation}
    \mathbb{P}(A_i(u) \cap B_i(v|u) \,|\, \mcG_n) 
    = \mathbb{P}(A_i(u) \,|\, \mcG_n) \cdot \mathbb{P}( \mathrm{Binomial}(l, \mathrm{Ug}_{\alpha}(v | u))
    \geq 1 \,|\, \mcG_n).
\end{equation}
As a result, we need to begin by understanding the asymptotic probabilities of $\mathbb{P}(A_i(v) \,|\, 
\mcG_n)$ and the unigram sampling probability. We begin with understanding the first
probability. If $i \in \{0, 1\}$, then we have that $\mathbb{P}( A_i(v) \,|\, \mcG_n) = \mathrm{deg}(v) / 2 E_n \sim_p W(\lambda_v, \cdot) / n \mcE_W(1)$ uniformly in $v$ \citep[Proposition~72]{davison_asymptotics_2023}. For $i \geq 2$, we have that
\begin{equation}
    \mathbb{P}(A_i(v) \,|\, \mcG_n) = \sum_u \mathbb{P}( A_i(u \to v) \,|\ \mcG_n)
\end{equation}
using the same notation as in Appendix~\ref{app:sec:n2v_positive}. Consequently, via the same
arguments as in Appendix~\ref{app:sec:n2v_positive}, it will be asymptotic to a positive linear combination
of statistics of the form 
\begin{equation}
    \frac{ c(p, q) }{ |\mcE| } \sum_{\alpha \in \mcV^{\beta}}
     \Bigg\{ \Big(  \prod_{2 \leq i \leq \beta - |K|} 
        N(\lambda_{\tilde{\alpha}_{i-1}}, \lambda_{\tilde{\alpha}_i})^{n_i}
     \Big)^{-1} \cdot 
     \prod_{i \leq \beta - |K|} a_{ \tilde{\alpha}_{i-1}, \tilde{\alpha}_i }
     \cdot \prod_{j \in \tilde{J}} a_{ \tilde{\alpha}_{j-1}, \tilde{\alpha}_{j+1} }
     \Bigg\} 
\end{equation}
where we write $\tilde{\alpha} = (\alpha, v)$ for $\alpha \in \mcV^{\beta}$. Using the same relabeling
and arguments as given in Appendix~\ref{app:sec:n2v_positive} will be asymptotic to
\begin{equation}
 \frac{ \rho_n^{|\tilde{J}|}  }{  (n \rho_n)^{|K|} } \cdot \frac{ c(p, q) }{n \mcE_W(1) } \cdot
 \mathbb{E}\Bigg[  
        \frac{ 
            \prod_{i \leq \beta - |K|} W(\lambda'_{i-1}, \lambda'_i) \prod_{j \in \tilde{J}} 
            W(\lambda'_{j-1}, \lambda'_{j+1})
        }{ \prod_{2 \leq i \leq \beta - |K|} N'( \lambda'_{i-1}, \lambda'_i)^{n_i}   }
    \,|\, \lambda_v \Bigg]
\end{equation}
uniformly in all the free variables involved, where $\lambda' = (\widetilde{\lambda}_0, \ldots, \widetilde{\lambda}_{\beta - 1 - |K|}, \lambda_v)$ and $\widetilde{\lambda}$ is an independent copy of $\lambda$. (We note that while Theorem~\ref{app:thm:path_concentration} is expressed in terms of concentration of quantities around functions which depend on both $\lambda_u$ and $\lambda_v$, the exact same reasoning will apply for statistics which only end up depending on $\lambda_v$.) In particular by taking the highest order terms of this expansion, we have that there exists some measurable function $g_{i}(\cdot)$ which is bounded below and above, for each $i$, such that
$\mathbb{P}(A_i(u) \,|\, \mcG_n) \sim_p n^{-1} g_i(\lambda_u)$ uniformly in $u$.

As for the unigram sampling term, we note that by \citep[Proposition~77]{davison_asymptotics_2023} we have that
\begin{equation}
\mathbb{P}( \mathrm{Binomial}(l, \mathrm{Ug}_{\alpha}(v | u)) \sim_p \frac{ l W(\lambda_u, \cdot)^{\alpha}}{ n \mcE_W(\alpha)}
\end{equation}
uniformly in the vertices $v, u$. With this, we note that the same arguments via self-intersection allow us to argue that
\begin{equation}
    \mathbb{P}( (u, v) \in \mcN(\mcG_n) \,|\, \mcG_n) 
    \sim_p \frac{l}{n^2} \sum_{i=0}^k \frac{l}{\mcE_W(\alpha)} ( g_i(\lambda_u) W(\lambda_v, \cdot)^{\alpha}
    + g_i(\lambda_v) W(\lambda_u, \cdot)^{\alpha} )
\end{equation}
which gives the claimed result.

% With this, we now focus on understanding $\mathbb{P}(v \in V(\mcP) \,|\, \mcG_n)$. We note that we can write
% \begin{equation}
%     \mathbb{P}(v \in V(\mcP) \,|\, \mcG_n) = \mathbb{P}\Big( \bigcup_{i \leq k} A_i(v) \,|\, \mcG_n \Big),
% \end{equation}
% and so via the same arguments given in Appendix~\ref{app:sec:n2v_positive} which showed that when applying the inclusion-exclusion principle, terms corresponding to events of self-intersection of the walk will be asymptotically negligible, this term can be decomposed asymptotically into a sum $\sum_{i = 0}^{k} \mathbb{P}(A_i(v) \,|\, \mcG_n)$. By what we have shown above, we know this will be asymptotic to some statistic $n^{-1} g_{\mcP}(\lambda_v)$ uniformly in $v$ for some bounded above and below function $g_{\mcP}$. 

% we have used Lemma~\ref{app:thm:unigram_sampling_asymptotics} to obtain the asymptotic results for the unigram distribution and the binomial sampling probability. 

\subsection{Chaining and bounds on Talagrand functionals}

In this section, let $L > 0$ denote a universal constant 
(which may differ across occurrences)
and $K(\alpha)$ a universal constant which depends on a 
variable $\alpha$ (but for fixed $\alpha$ also differs across occurrences).
For a metric space $(T, d)$, we define the \emph{diameter} of $T$ as
\begin{equation}
    \Delta(T) := \sup_{t_1, t_2 \in T} d(t_1, t_2).
\end{equation}
We also define the entropy and covering numbers respectively by
\begin{align}
    N(T, d, \epsilon) & := \min\big\{ n \in \mathbb{N} \,|\, F \subseteq T, |F| \leq n,
    d(t, F) \leq \epsilon \text{ for all } t \in T \big\}, \\
    e_n(T) & := \inf\big\{ 
        \sup_{t \in T} d(t, T_n) \,|\, T_n \subseteq T, |T_n| \leq 2^{2^n}
    \big\} = \inf \big\{ \epsilon > 0 \,|\, N(t, d, \epsilon) \leq 2^{2^n} \big\}.
\end{align}
We then define the \emph{Talagrand $\gamma_{\alpha}$ functional} 
\citep{talagrand_upper_2014}
of the metric
space $(T, d)$ by 
\begin{equation}
    \gamma_{\alpha}(T, d) = \inf \sup_{t \in T} \sum_{n \geq 0} 2^{n / \alpha} 
    \Delta\big( A_n(t) \big)
\end{equation}
where the infimum is taking over all \emph{admissable sequences};
these are increasing sequences $(\mcA_n)_{ n\geq 0}$ of $T$ such that 
$| \mcA_0 | = 1$ and $| \mcA_n | \leq 2^{2^n}$ for all $n$, with $A_n(t)$ being
the unique element of $\mcA_n$ which contains $t$. We will shortly see that
this quantity helps to control the supremum of empirical processes on 
the metric space $(T, d)$. We first give some generic properties for the above
functional.

% TODO: once I know all the properties I need here, prove any ones which aren't
% stated in the book

\begin{lemma}
    \label{app:thm:gamma}
    \begin{enumerate}[label=\alph*)]
        \item Suppose that $d$ is a metric on $T$, and $M > 0$ is a constant.
        Then $\gamma_{\alpha}(T, Md) = M \gamma_{\alpha}(T, d)$. If
        $U \subseteq T$, then $\gamma_{\alpha}(U, d) \leq \gamma_{\alpha}(T, d)$.
        \item Suppose that $(T_1, d_1)$ and $(T_2, d_2)$ are metric spaces,
        so $d = d_1 + d_2$ is a metric on the product space $T = T_1 \times T_2$.
        Then $\gamma_{\alpha}(T, d) \leq K(\alpha) (\gamma_{\alpha}(T_1, d_1) +
        \gamma_{\alpha}(T_2, d_2))$.
        \item We have the upper bounds 
        \begin{equation}
            \gamma_{\alpha}(T, d) \leq K(\alpha) \sum_{n \geq 0} 2^{n / \alpha} e_n(T)
            \leq K(\alpha) \int_0^{\infty} \big( \log N(T, d, \epsilon)  \big)^{1/\alpha}
            \, d \epsilon.
        \end{equation}
        \item Suppose that $\| \cdot \|$ is a norm on $\mathbb{R}^m$, $d$ is the
        metric induced by $\| \cdot \|$, and 
        $B_A = \{ x \,:\, \| x \| \leq A \}$. Then one has the bound
        $N(B_A, d, \epsilon) \leq \max\{(3 A / \epsilon)^m, 1\}$, and consequently
        $\gamma_{\alpha}(B_A, d) \leq K(\alpha) A m^{1/\alpha}$. 

    \end{enumerate}
\end{lemma}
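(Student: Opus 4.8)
The plan is to verify the four parts in turn; each is a routine manipulation of admissible sequences in the spirit of \citet{talagrand_upper_2014}, so I will only sketch the arguments. Parts a) and b) are essentially bookkeeping. For a), note that on every subset of $T$ the diameter computed with respect to $Md$ is exactly $M$ times the diameter computed with respect to $d$, and the class of admissible sequences does not depend on the metric; taking the infimum in the definition of $\gamma_\alpha$ gives $\gamma_\alpha(T,Md)=M\gamma_\alpha(T,d)$. If $U\subseteq T$, then given an admissible sequence $(\mcA_n)$ on $T$ the sets $\{A\cap U:A\in\mcA_n\}$ (with empty sets discarded) form an admissible sequence on $U$ whose cardinalities do not increase and whose cells have no larger diameter, so $\gamma_\alpha(U,d)\le\gamma_\alpha(T,d)$. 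For b), take near-optimal admissible sequences $(\mcA_n^1)$ on $T_1$ and $(\mcA_n^2)$ on $T_2$ and set $\mcB_0=\{T_1\times T_2\}$ and $\mcB_n=\{A\times B: A\in\mcA_{n-1}^1,\ B\in\mcA_{n-1}^2\}$ for $n\ge1$; the index shift is exactly what keeps $|\mcB_n|\le 2^{2^{n-1}}\cdot2^{2^{n-1}}=2^{2^n}$. Since a product cell has $d$-diameter $\Delta_{d_1}(A)+\Delta_{d_2}(B)$, one gets $\sum_{n\ge0}2^{n/\alpha}\Delta(B_n(t))=2^{1/\alpha}\sum_{m\ge0}2^{m/\alpha}\big(\Delta(A_m^1(t_1))+\Delta(A_m^2(t_2))\big)$, and taking infima yields the claim with $K(\alpha)=2^{1/\alpha}$.

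For part c), I would prove the first inequality by building an admissible sequence out of nets. For each $k\ge0$ choose $T_k\subseteq T$ with $|T_k|\le 2^{2^k}$ and $\sup_{t\in T}d(t,T_k)\le 2e_k(T)$, let $\Pi_k$ be the partition of $T$ into the cells of nearest points of $T_k$ (breaking ties arbitrarily), and set $\mcA_0=\{T\}$ and, for $n\ge1$, let $\mcA_n$ be the common refinement of $\Pi_0,\dots,\Pi_{n-1}$. The decisive point is the count $|\mcA_n|\le\prod_{k=0}^{n-1}2^{2^k}=2^{2^n-1}\le 2^{2^n}$, so $(\mcA_n)$ is admissible, and each cell $A_n(t)$ lies inside a cell of $\Pi_{n-1}$ and hence has diameter at most $4e_{n-1}(T)$. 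Therefore $\sum_{n\ge0}2^{n/\alpha}\Delta(A_n(t))\le\Delta(T)+4\cdot2^{1/\alpha}\sum_{m\ge0}2^{m/\alpha}e_m(T)$, and since $\Delta(T)\lesssim e_0(T)$ (in our applications $T$ is a ball, so this is immediate; in general this is the usual $n=0$ normalization) this gives $\gamma_\alpha(T,d)\le K(\alpha)\sum_{n\ge0}2^{n/\alpha}e_n(T)$. For the second inequality I would compare the series with the integral: since $e_n(T)\le\epsilon$ iff $\log N(T,d,\epsilon)\le 2^n\log2$, on the layer $\epsilon\in(e_{n+1}(T),e_n(T)]$ we have $(\log N(T,d,\epsilon))^{1/\alpha}\ge(2^n\log2)^{1/\alpha}$; integrating layer by layer gives $\int_0^\infty(\log N(T,d,\epsilon))^{1/\alpha}\,d\epsilon\ge(\log2)^{1/\alpha}\sum_{n\ge0}2^{n/\alpha}(e_n(T)-e_{n+1}(T))$, and an Abel summation shows $\sum_{n\ge0}2^{n/\alpha}(e_n-e_{n+1})\ge(1-2^{-1/\alpha})\sum_{n\ge0}2^{n/\alpha}e_n(T)$, which rearranges to the stated bound.

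For part d), the covering estimate is the standard volumetric packing bound: a maximal $\epsilon$-separated subset $\{x_i\}_{i\le N}$ of $B_A$ is an $\epsilon$-net, the translates $x_i+\tfrac{\epsilon}{2}B_1$ are pairwise disjoint and contained in $(A+\tfrac{\epsilon}{2})B_1$, and comparing volumes (which behave correctly for any norm since they are translation invariant and homogeneous of degree $m$) gives $N(\epsilon/2)^m\le(A+\epsilon/2)^m$, hence $N\le(3A/\epsilon)^m$ when $\epsilon\le A$ and $N=1$ when $\epsilon>A$. Feeding this into the integral bound from part c), and substituting $\epsilon=Au$,
\[
\gamma_\alpha(B_A,d)\le K(\alpha)\int_0^A\big(m\log(3A/\epsilon)\big)^{1/\alpha}\,d\epsilon=K(\alpha)\,A\,m^{1/\alpha}\int_0^1\big(\log(3/u)\big)^{1/\alpha}\,du,
\]
and the last integral is finite and depends only on $\alpha$ (it is at most a constant times $\Gamma(1+1/\alpha)$), which yields $\gamma_\alpha(B_A,d)\le K(\alpha)Am^{1/\alpha}$.

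The only genuinely non-mechanical ingredient is the construction in part c): turning a sequence of nets into an admissible sequence while respecting the doubly-exponential cardinality budget $|\mcA_n|\le 2^{2^n}$. This forces one to refine only the first $n-1$ Voronoi partitions (so that $\prod_{k=0}^{n-1}2^{2^k}=2^{2^n-1}$ stays within budget), and it is the same phenomenon that produces the index shift, and hence the constant $K(\alpha)$, in part b). Everything else — the diameter computations, the layer-cake comparison with the entropy integral, and the volumetric packing bound — is routine.
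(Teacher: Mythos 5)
Your proposal is correct, and where the paper actually writes out arguments your route coincides with it: part b) is the same product-of-partitions construction with the index shift giving the constant $2^{1/\alpha}$ (your exponent is the right one; the paper has a typo $2^{\alpha}$), the second inequality in c) is the same layer-by-layer comparison of $\big(\log N(T,d,\epsilon)\big)^{1/\alpha}$ with $2^{n/\alpha}$ on $(e_{n+1},e_n]$ followed by Abel summation (your factor $1-2^{-1/\alpha}$ is the correct one; the paper misprints $1-2^{1/\alpha}$ and the direction of $\log(1+N_n)$ versus $2^n\log 2$), and part d) is the same entropy-integral computation, with your substitution $\epsilon=Au$ and $\Gamma(1+1/\alpha)$ bound doing for general $\alpha$ what the paper does via $y=e^{-t^{\alpha}}$ for $\alpha\in\{1,2\}$. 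The genuine difference is that you prove the ingredients the paper outsources: subset monotonicity in a) (the paper cites Theorem~2.7.5a of Talagrand; your restriction of an admissible sequence to $U$ is exactly the right argument), the bound $\gamma_{\alpha}(T,d)\leq K(\alpha)\sum_n 2^{n/\alpha}e_n(T)$ in c) (the paper cites Corollary~2.3.2 of Talagrand; your common refinement of Voronoi partitions of nearly optimal nets, with the count $\prod_{k=0}^{n-1}2^{2^k}=2^{2^n-1}\leq 2^{2^n}$ and cell diameter $\leq 4e_{n-1}(T)$, is the standard proof of that corollary), and the volumetric estimate $N(B_A,d,\epsilon)\leq\max\{(3A/\epsilon)^m,1\}$ in d), which the paper uses without proof. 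This buys a self-contained argument at the cost of a little length. One caveat you rightly flag: your handling of the $n=0$ term needs $\Delta(T)\lesssim e_0(T)$, which holds under Talagrand's convention $N_0=1$ (so $e_0\geq\Delta(T)/2$) but not under the paper's literal definition of $e_n$ with $|T_n|\leq 2^{2^n}$ at $n=0$ (two points at large distance give $e_0=0<\Delta(T)$); the first inequality in c) as stated should be read with Talagrand's normalization, and in the application $T$ is a ball so nothing is lost.
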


\begin{proof}
    The first statement in a) is immediate, and the second
    part is Theorem~2.7.5 a) of \citet{talagrand_upper_2014}. 
    
    For
    part b), suppose that $\mcA^{i}_n$ are admissable sequences for $(T_i, d_i)$
    such that
    \begin{equation}
        \sup_{t_i \in T_i} \sum_{n \geq 0} 2^{n / \alpha} \Delta(A_n^i(t)) \leq
        2 \gamma_{\alpha}(T_i, d_i) \text{ for } i = 1, 2.
    \end{equation}
    If we then form the 
    sequence of sets $\mcB_n := \{A_1 \times A_2 \,:\, A_i \in \mcA^{i}_{n-1}\}$
    for $n \geq 1$ and $\mcB_0 = T_1 \times T_2$, 
    we have that $\mcB_n$ is a partition of $T$ for each $n$, $|\mcB_0| = 1$ and 
    $| \mcB_n| = | \mcA^{1}_{n-1} | \cdot | \mcA_{n-1}^{2} | \leq 2^{2^n}$
    for each $n$, meaning that $\mcB_n$ is an admissable sequence for the metric
    space $(T, d)$. Moreover, note that we have
    \begin{equation}
        \Delta( (A_1 \times A_2)(t_1, t_2) ) = \Delta( A_1(t_1) ) + \Delta(A_2(t_2)) 
    \end{equation}
    for all sets $A_1 \subseteq T_1$, $A_2 \subseteq T_2$ and $t_1 \in T_1$, 
    $t_2 \in T_2$. As a result, if write $B_n(t_1, t_2) = A_{n-1}^1(t_1) \times 
    A_{n-1}^2(t_2)$ for the unique set in $\mcB_n$ for which the point $(t_1, t_2)$
    lies within it, then we have that 
    \begin{equation}
        \sum_{n \geq 0} 2^{n / \alpha} \Delta(B_n(t_1, t_2)) \leq
        2^{\alpha} \Big(    
            \sum_{n \geq 1} 2^{(n - 1) / \alpha} \Delta(A_{n-1}^i(t_1)) +
            \sum_{n \geq 1} 2^{(n - 1) / \alpha} \Delta(A_{n-1}^i(t_2))
        \Big).
    \end{equation}
    In particular, taking supremum over all $t \in T$ then gives the result,
    as the resuling LHS is lower bounded by $\gamma_{\alpha}(T, d)$, and the 
    resulting RHS is upper
    bounded by $2(\gamma_{\alpha}(T_1, d_1) + \gamma_{\alpha}(T_2, d_2))$.

    For part c), the first inequality is Corollary~2.3.2 in \citet{talagrand_upper_2014}. 
    As for the second inequality, 
    note that if $\epsilon \leq e_n(T)$, then $N(T, d, \epsilon) > 2^{2^n}$
    and consequently $N(T, d, \epsilon) \geq 2^{2^n} + 1$ (recall that
    both quantities are integers). Writing $N_n = 2^{2^n}$, this implies that
    \begin{equation}
        \big(  \log(1 + N_n) \big)^{1/\alpha} (e_n(T) - e_{n+1}(T)) \leq 
        \int_{e_{n+1}(T)}^{e_n(T)} \big( \log N(T, d, \epsilon)  \big)^{\alpha} \, 
        d \epsilon.
    \end{equation}
    As $\log(1+ N_n) \leq 2^n \log(2)$ for all $n \geq 0$, summation over
    all $n \geq 0$ implies that 
    \begin{equation}
        ( \log 2 )^{1/\alpha} \sum_{n \geq 0} 2^{n / \alpha }
        (e_n(T) - e_{n+1}(T)) \leq \int_0^{e_0(T)} 
        \big( \log N(T, d, \epsilon)  \big)^{\alpha} \, d\epsilon. 
    \end{equation}
    As we have that
    \begin{equation}
        \sum_{n \geq 0} 2^{n / \alpha} \big( e_n(T) - e_{n+1}(T)) 
        \geq (1 - 2^{1/\alpha}) \sum_{n \geq 0} 2^{n/\alpha} e_n(T),
    \end{equation}
    combining this and the prior inequality gives the stated result. 

    For part d), we can calculate that
    \begin{align}
        \int_0^{\infty} \big(  \log N(B_A, d, \epsilon) \big)^{1/\alpha} \, d \epsilon 
        \leq \int_0^{3A} m^{1/\alpha} \big( \log( 3A / \epsilon) \big)^{1\alpha} \, d\epsilon 
        \leq 3 A m^{1/\alpha}  \int_0^1 ( \log(1/y) )^{1/\alpha} \, dy.
    \end{align}
    For the remaining integral, note that if we make the substitution
    $ y = \exp(-t^{\alpha})$, then the integral equals
    \begin{equation}
        \int_0^1 ( \log(1/y) )^{1/\alpha} \, dy = 
        \alpha \int_0^{\infty} t^{\alpha} e^{-t^{\alpha}} \, dt,
    \end{equation}
    which we recognize as the mean of an $\mathrm{Exp}(1)$ random variabe in the
    case where $\alpha = 1$, and the variance of an unnormalized
    $\mathrm{N}(0, 2)$ density in the case where $\alpha = 2$, and
    so in both cases the integral is finite. The desired conclusion follows.
\end{proof}

Before stating a corollary of this result involving bounds on the
$\gamma$-functional of some of the
sets introduced in Theorem~\ref{thm:gram_converge:adjacency_average},
we discuss some of the properties of these sets.

\begin{lemma}
    \label{app:thm:set_bounds}
    Define the sets
    \begin{align}
        \mcB_F(A) & := \big\{ 
            U \in \mathbb{R}^{n \times d} \,|\, \| U \|_F \leq A
        \big\}, \\
        \mcB_{2, \infty}(A) & := \big\{ 
            U \in \mathbb{R}^{n \times d} \,|\, \| U \|_{2, \infty} \leq A
        \big\}.
    \end{align}
    Moreover, define the metrics
    \begin{align}
        d_F((U_1, V_1), (U_2, V_2)) & 
            := \| U_1 - U_2 \|_F + \| V_1 - V_2 \|_F \\
        d_{2, \infty}((U_1, V_1), (U_2, V_2)) & 
            := \| U_1 - U_2 \|_{2, \infty} + \| V_1 - V_2 \|_{2, \infty}
    \end{align}
    defined on the space $\mathbb{R}^{n \times d} \times \mathbb{R}^{n \times d}$
    of pairs of $n \times d$ matrices. Then we have that for
    $U_1, U_2, V_1, V_2 \in \mcB_F(A_F) \cap \mcB_{2, \infty}(\tilde{A}_{2, \infty})$
    that
    \begin{equation}
        \| U_1 V_1^T - U_2 V_2^T \|_F \leq A_F d_F((U_1, V_1), (U_2, V_2)), 
        \qquad \| U_1 V_1^T - U_2 V_2^T \|_{\infty} \leq \tilde{A}_{2, \infty}
        d_{2, \infty}((U_1, V_1), (U_2, V_2)).
    \end{equation}
    Moreover, if $U \in \mcB_{2, \infty}(A)$, then $U \in \mcB_F(\sqrt{n} A)$
    also, and consequently if $U \in \mcB_{2, \infty}(\tilde{A}_{2, \infty})$
    then we have that $U \in \mcB_{2, \infty}(\tilde{A}_{2, \infty}) \cap 
    \mcB_F( \sqrt{n} \tilde{A}_{2, \infty})$. 
\end{lemma}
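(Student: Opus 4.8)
The plan is to prove all three assertions by elementary norm manipulations, using the bilinearity of the map $(U, V) \mapsto UV^T$ through the telescoping identity
\begin{equation*}
    U_1 V_1^T - U_2 V_2^T = U_1 (V_1 - V_2)^T + (U_1 - U_2) V_2^T .
\end{equation*}
This reduces both inequalities to bounding products $A B^T$ in which one factor lies in $\mcB_F(A_F) \cap \mcB_{2, \infty}(A_{2, \infty})$ and the other is a difference of two such matrices.

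For the Frobenius bound, I would apply the submultiplicativity estimate $\| A B^T \|_F \leq \| A \|_{\mathrm{op}} \| B \|_F \leq \| A \|_F \| B \|_F$ to each term on the right-hand side of the identity, transposing the second term (writing $(U_1 - U_2) V_2^T = \big( V_2 (U_1 - U_2)^T \big)^T$) so that the factor with controlled Frobenius norm always plays the role of $A$. Since $\| U_1 \|_F \leq A_F$ and $\| V_2 \|_F \leq A_F$, the triangle inequality then gives $\| U_1 V_1^T - U_2 V_2^T \|_F \leq A_F \big( \| U_1 - U_2 \|_F + \| V_1 - V_2 \|_F \big)$, which is $A_F \, d_F((U_1, V_1), (U_2, V_2))$.

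For the entrywise bound I would argue coordinatewise: the $(i, j)$ entry of $U_1 V_1^T - U_2 V_2^T$ equals $\langle u_{1, i}, v_{1, j} - v_{2, j} \rangle + \langle u_{1, i} - u_{2, i}, v_{2, j} \rangle$, where $u_{\cdot, i}$ and $v_{\cdot, j}$ denote the $i$-th and $j$-th rows. Cauchy--Schwarz bounds each inner product by a product of Euclidean row norms, and each row norm is at most the corresponding $\| \cdot \|_{2, \infty}$; using $\| u_{1, i} \|_2 \leq A_{2, \infty}$, $\| v_{2, j} \|_2 \leq A_{2, \infty}$ and taking the maximum over $i, j$ yields $\| U_1 V_1^T - U_2 V_2^T \|_{\infty} \leq A_{2, \infty} \, d_{2, \infty}((U_1, V_1), (U_2, V_2))$. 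Finally, $\| U \|_F^2 = \sum_{i=1}^n \| u_i \|_2^2 \leq n \| U \|_{2, \infty}^2$ immediately gives $\mcB_{2, \infty}(A) \subseteq \mcB_F(\sqrt{n} A)$, and intersecting with $\mcB_{2, \infty}(A_{2, \infty})$ gives the last sentence.

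There is no genuine obstacle here: every step is a one-line inequality. The only points requiring care are (a) orienting the two telescoped terms (by transposition) so that the factor with a bounded norm sits on the correct side for submultiplicativity, and (b) keeping track of the convention that $\| \cdot \|_{\infty}$ on matrices denotes the entrywise maximum, so that the Cauchy--Schwarz argument on individual entries is the appropriate one rather than, say, an operator-norm estimate.
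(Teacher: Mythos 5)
Your proposal is correct and follows essentially the same route as the paper: the identity $U_1 V_1^T - U_2 V_2^T = U_1(V_1 - V_2)^T + (U_1 - U_2)V_2^T$, the bound $\|AB^T\|_F \leq \|A\|_F\|B\|_F$ for the Frobenius estimate, entrywise Cauchy--Schwarz for the $\|\cdot\|_\infty$ estimate, and $\|U\|_F^2 \leq n\|U\|_{2,\infty}^2$ for the inclusion. No gaps.
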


\begin{proof}
    Begin by noting that, if $U_1, V_1, U_2, V_2 \in \mathbb{R}^{n \times d}$
    are matrices, then we have that
    \begin{align*}
        \| U_1 V_1^T - U_2 V_2^T \|_F & 
        = \| U_1 (V_1 - V_2)^T + (U_1 - U_2) V_2^T \|_F 
        \leq \| U_1 \|_F \|V_1 - V_2 \|_F + \| U_1 - U_2 \|_F \| V_2 \|_F
    \end{align*}
    and similarly 
    \begin{align*} 
        \| U_1 V_1^T - U_2 V_2^T \|_{\infty} & 
        = \| U_1 (V_1 - V_2)^T + (U_1 - U_2) V_2^T \|_{\infty} 
        \leq \| U_1 \|_{2, \infty} \|V_1 - V_2 \|_{2, \infty} 
        + \| U_1 - U_2 \|_{2, \infty} \| V_2 \|_{2, \infty}.
    \end{align*}
    As a result, we therefore have that in the case where
    $U_1, V_1, U_2, V_2$ all have $\| \cdot \|_{F} \leq A_F$, then
    \begin{equation}
        \| U_1 V_1^T - U_2 V_2^T \|_F \leq A_F \big( \| U_1 - U_2 \|_F + \| V_1 
        - V_2 \|_F   \big)
    \end{equation}  
    and similarly if each of $U_1, V_1, U_2, V_2$ have $\| \cdot \|_{2, \infty}
    \leq \tilde{A}_{2, \infty}$ then
    \begin{equation}
        \| U_1 V_1^T - U_2 V_2^T \| \leq \tilde{A}_{2, \infty} \big(  
            \| U_1 - U_2 \|_{2, \infty} + \| V_1 - V_2 \|_{2, \infty}
        \big),
    \end{equation}
    giving the first result of the lemma. The second part follows by noting that
    \begin{equation}
        \sum_{i=1}^n \sum_{j=1}^d | u_{ij} |^2 \leq n \max_{i \in [n] }\sum_{j=1}^d 
        | u_{ij} |^2 
    \end{equation}
    and taking square roots. 
\end{proof}

\begin{corollary}
    \label{app:thm:gamma_set_bounds}
    With the same notation as in Lemma~\ref{app:thm:set_bounds}, 
    and writing $T = \mcB_F(A_F) \cap \mcB_{2, \infty}(\tilde{A}_{2, \infty})$,
    we have that for any constant $C > 0$ that 
    \begin{align}
        \gamma_{\alpha}(T \times T, C d_F) & \leq \gamma_{\alpha}(B_F(A_F), C d_F)
        \leq K(\alpha) \cdot C A_F (nd)^{1/\alpha} \leq K(\alpha) \cdot C \tilde{A}_{2, \infty}
        n^{1/2 + 1/\alpha} d^{1/\alpha}, \\
        \gamma_{\alpha}(T \times T, C d_{2, \infty}) & \leq \gamma_{\alpha}(B_{2, \infty}(\tilde{A}_{2, \infty}), C d_F)
        \leq K(\alpha) \cdot C \tilde{A}_{2, \infty} (nd)^{1/\alpha}.
    \end{align}
\end{corollary}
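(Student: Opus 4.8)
The plan is to obtain every inequality by a short chain of monotonicity, scaling, product, and ball estimates for the Talagrand $\gamma_\alpha$-functional; there is essentially no new content beyond Lemma~\ref{app:thm:gamma} and Lemma~\ref{app:thm:set_bounds}. First I would record the three facts I need. From Lemma~\ref{app:thm:gamma}a): monotonicity under inclusion, $\gamma_\alpha(U,d)\le\gamma_\alpha(T,d)$ for $U\subseteq T$, and the scaling identity $\gamma_\alpha(T,Cd)=C\gamma_\alpha(T,d)$. From Lemma~\ref{app:thm:gamma}b): the product bound $\gamma_\alpha(T_1\times T_2,d_1+d_2)\le K(\alpha)\big(\gamma_\alpha(T_1,d_1)+\gamma_\alpha(T_2,d_2)\big)$. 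From Lemma~\ref{app:thm:gamma}d): for a ball $B_A=\{x:\|x\|\le A\}$ in an $m$-dimensional normed space, $\gamma_\alpha(B_A,d)\le K(\alpha)Am^{1/\alpha}$. The observation that makes d) applicable here is that both $\|\cdot\|_F$ and $\|\cdot\|_{2,\infty}$ are genuine norms on $\mathbb{R}^{n\times d}\cong\mathbb{R}^{nd}$ (the latter being $\max_i\|u_i\|_2$ over the rows), so the relevant balls live in a normed space of dimension $m=nd$ in each case.

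For the first line I would argue as follows. Since $T\subseteq\mcB_F(A_F)$ we have $T\times T\subseteq\mcB_F(A_F)\times\mcB_F(A_F)$, and the metric $d_F((U_1,V_1),(U_2,V_2))=\|U_1-U_2\|_F+\|V_1-V_2\|_F$ is exactly the sum of the two Frobenius-induced metrics on the factors, matching the hypothesis of Lemma~\ref{app:thm:gamma}b). Hence monotonicity followed by b) gives $\gamma_\alpha(T\times T,Cd_F)\le K(\alpha)\cdot 2\,\gamma_\alpha(\mcB_F(A_F),C\|\cdot\|_F)$, and then the scaling identity together with d) (with $m=nd$) yields $K(\alpha)CA_F(nd)^{1/\alpha}$, the middle quantity. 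For the final inequality of the first line I would invoke the inclusion $\mcB_{2,\infty}(A_{2,\infty})\subseteq\mcB_F(\sqrt{n}A_{2,\infty})$ from Lemma~\ref{app:thm:set_bounds}: since $T\subseteq\mcB_{2,\infty}(A_{2,\infty})\subseteq\mcB_F(\sqrt{n}A_{2,\infty})$, the same chain runs with $A_F$ replaced by $\sqrt{n}A_{2,\infty}$, producing $K(\alpha)C\sqrt{n}A_{2,\infty}(nd)^{1/\alpha}=K(\alpha)CA_{2,\infty}n^{1/2+1/\alpha}d^{1/\alpha}$.

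The second line has the identical structure with $\|\cdot\|_F$ replaced by $\|\cdot\|_{2,\infty}$ throughout: $T\times T\subseteq\mcB_{2,\infty}(A_{2,\infty})\times\mcB_{2,\infty}(A_{2,\infty})$, the metric $d_{2,\infty}$ splits as the sum of the two $(2,\infty)$-induced metrics on the factors, so monotonicity and b) give $\gamma_\alpha(T\times T,Cd_{2,\infty})\le K(\alpha)\cdot 2\,\gamma_\alpha(\mcB_{2,\infty}(A_{2,\infty}),C\|\cdot\|_{2,\infty})$, and scaling plus d) with $m=nd$ gives the claimed $K(\alpha)CA_{2,\infty}(nd)^{1/\alpha}$. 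I do not anticipate any real obstacle; the only points requiring a moment's care are (i) confirming $\|\cdot\|_{2,\infty}$ is a norm so that d) applies with the full ambient dimension $nd$ rather than some smaller number, and (ii) absorbing the factors of $2$ and the product-bound constants into $K(\alpha)$, which is harmless since $K(\alpha)$ is allowed to change between occurrences. One small bookkeeping subtlety is that the statement writes $\gamma_\alpha(\mcB_F(A_F),Cd_F)$ for what is, strictly, the $\gamma$-functional of a product of two balls under the product metric; I would simply read it that way, noting the right-hand estimate is unchanged up to the constant from b).
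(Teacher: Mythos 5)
Your proposal is correct and matches the paper's proof, which is stated simply as a combination of Lemma~\ref{app:thm:gamma} (monotonicity, scaling, the product bound, and the ball estimate with ambient dimension $m=nd$) and Lemma~\ref{app:thm:set_bounds} (the inclusion $\mcB_{2,\infty}(A_{2,\infty})\subseteq\mcB_F(\sqrt{n}A_{2,\infty})$); you have merely spelled out the steps the paper leaves implicit, including the harmless reading of the middle quantities as $\gamma_\alpha$ of a product of two balls.
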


\begin{proof}
    This is a combination of
    Lemma~\ref{app:thm:gamma} and Lemma~\ref{app:thm:set_bounds}
\end{proof}

We now state a result which illustrates the usefulness of the above quantity when
trying to control the supremum of empirical processes on a metric space $(T, d)$.

\begin{theorem}
    \label{app:thm:chaining}
    Suppose $(X_t){t \in T}$ is a mean-zero stochastic process, where
    $d_1$ and $d_2$ are two metrics on $T$. Suppose for all $s, t \in T$
    we have the inequality
    \begin{equation}
        \mathbb{P}\big( |X_s - X_t | \geq u \big)
        \leq 2 \exp\Big( - \min\big\{    
            \frac{u^2}{d_2(s, t)^2}, \frac{u}{d_1(s, t)}
        \big\} \Big).
    \end{equation}
    Then we have that
    \begin{equation}
        \mathbb{P}\big(  
            \sup_{s, t \in T}|X_s - X_t| \geq 
            L u \big( \gamma_2(T, d_2) + \gamma_1(T, d_1)  \big)\big) \leq L \exp(-u).
    \end{equation}
\end{theorem}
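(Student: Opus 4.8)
The plan is to run Talagrand's generic chaining argument, with the twist that we must combine admissible sequences for the two metrics $d_1$ and $d_2$ into a single one. By a routine reduction we may assume $T$ is finite (apply the bound to each finite subset of $T$ and take a supremum), so there are no measurability subtleties. First I would fix near-optimal admissible sequences $(\mcA_n^{(2)})_{n \geq 0}$ of $(T, d_2)$ and $(\mcA_n^{(1)})_{n \geq 0}$ of $(T, d_1)$, so that with $\Delta_{d}(\cdot)$ denoting diameter in the metric $d$,
\begin{equation*}
    \sup_{t \in T} \sum_{n \geq 0} 2^{n/2} \Delta_{d_2}\big(A_n^{(2)}(t)\big) \leq 2 \gamma_2(T, d_2), \qquad
    \sup_{t \in T} \sum_{n \geq 0} 2^{n} \Delta_{d_1}\big(A_n^{(1)}(t)\big) \leq 2 \gamma_1(T, d_1).
\end{equation*}
I would then take the common refinement $\mcB_0 = \{T\}$ and $\mcB_n := \{ A \cap A' : A \in \mcA_{n-1}^{(2)},\ A' \in \mcA_{n-1}^{(1)} \}$ for $n \geq 1$; this is an increasing sequence of partitions of $T$ with $|\mcB_0| = 1$ and $|\mcB_n| \leq 2^{2^{n-1}} \cdot 2^{2^{n-1}} = 2^{2^n}$, hence an admissible sequence, and for each $t$ the cell $B_n(t) \in \mcB_n$ containing $t$ satisfies $B_n(t) \subseteq A_{n-1}^{(i)}(t)$ for $n \geq 1$, so $\Delta_{d_i}(B_n(t)) \leq \Delta_{d_i}(A_{n-1}^{(i)}(t))$ for $i = 1, 2$.

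Next I would build the chain. Fix any $t_0 \in T$, for each $t$ choose a representative $\pi_n(t) \in B_n(t)$ with $\pi_0(t) = t_0$, and telescope $X_t - X_{t_0} = \sum_{n \geq 1} (X_{\pi_n(t)} - X_{\pi_{n-1}(t)})$. Since $\pi_n(t), \pi_{n-1}(t) \in B_{n-1}(t)$, we have $d_i(\pi_n(t), \pi_{n-1}(t)) \leq \Delta_{d_i}(B_{n-1}(t))$, and as $t$ ranges over $T$ the number of distinct pairs $(\pi_n(t), \pi_{n-1}(t))$ is at most $|\mcB_n|\cdot|\mcB_{n-1}| \leq 2^{2^{n+1}}$. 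For a pair $(s,t)$ at level $n$ I would apply the hypothesis with threshold $v_n(s,t) := \sqrt{u}\, 2^{n/2} d_2(s,t) + u\, 2^n d_1(s,t)$: since $v_n(s,t) \geq \sqrt{u}\, 2^{n/2} d_2(s,t)$ and $v_n(s,t) \geq u\, 2^n d_1(s,t)$, we get $\min\{ v_n^2/d_2^2,\ v_n/d_1 \} \geq u 2^n$, so the probability that some level-$n$ link exceeds its threshold is at most $2 \cdot 2^{2^{n+1}} e^{-u 2^n} = 2 e^{(2\log 2 - u) 2^n}$. For $u$ above a universal constant this is $\leq 2 e^{-u 2^n / 2}$, and summing over $n \geq 1$ shows that, outside an event of probability at most $L e^{-u}$, simultaneously for all $t$ every link obeys $|X_{\pi_n(t)} - X_{\pi_{n-1}(t)}| \leq v_n(\pi_n(t), \pi_{n-1}(t))$.

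On that event, and for $u \geq 1$,
\begin{align*}
    |X_t - X_{t_0}| &\leq \sqrt{u} \sum_{n \geq 1} 2^{n/2} \Delta_{d_2}\big(B_{n-1}(t)\big) + u \sum_{n \geq 1} 2^n \Delta_{d_1}\big(B_{n-1}(t)\big) \\
    &\leq L \sqrt{u}\, \gamma_2(T, d_2) + L u\, \gamma_1(T, d_1) \leq L u \big( \gamma_2(T, d_2) + \gamma_1(T, d_1) \big),
\end{align*}
where the middle step reindexes $n \mapsto n-1$, uses $B_n(t) \subseteq A_{n-1}^{(i)}(t)$ together with $\Delta_{d_i}(T) \leq 2 \gamma_i(T, d_i)$, and absorbs powers of $\sqrt{2}$ into $L$. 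Taking a supremum over $t$ and using $\sup_{s, t \in T} |X_s - X_t| \leq 2 \sup_{t} |X_t - X_{t_0}|$ gives the claim whenever $u$ exceeds the universal threshold; for smaller $u$ the statement is vacuous once $L$ is enlarged so that $L e^{-u} \geq 1$ there. The main obstacle is the balancing act in the second paragraph: one must pick $v_n$ so that the per-level union bound over $2^{2^{n+1}}$ pairs is absorbed by the tail $e^{-u 2^n}$ while the resulting thresholds still sum to $O(\gamma_2 + \gamma_1)$ — it is precisely this trade-off that forces the $2^{n/2}$ weighting against $d_2$ and the $2^n$ weighting against $d_1$, which is the technical heart of the argument.
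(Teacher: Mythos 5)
Your proof is correct: the paper does not actually prove this result but simply cites the proof of Theorem~2.2.23 of Talagrand (2014), and your argument is essentially a self-contained reconstruction of that generic chaining proof, down to the mixed threshold $\sqrt{u}\,2^{n/2} d_2 + u\,2^n d_1$ chosen so that $\min\{v_n^2/d_2^2, v_n/d_1\} \geq u 2^n$ absorbs the $2^{2^{n+1}}$-fold union bound at each level. The device of intersecting near-optimal admissible sequences for the two metrics, with the index shift $n \mapsto n-1$ to keep the cardinalities below $2^{2^n}$, is the same one the paper uses elsewhere (part b) of its lemma on properties of the $\gamma_{\alpha}$ functional), so your route matches the cited argument rather than diverging from it.
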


\begin{proof}
    This can be found within
    the proof of Theorem~2.2.23 in \citet{talagrand_upper_2014}.
\end{proof}

\begin{corollary}
    \label{app:thm:chain_corr}
    With the notation of Theorem~\ref{thm:gram_converge:adjacency_average}, 
    Lemma~\ref{app:thm:set_bounds} and Corollary~\ref{app:thm:gamma_set_bounds}, 
    if we have the bound
    \begin{align}
        \mathbb{P}\big( |&E_n(U, V) - E_n( \tilde{U}, \tilde{V} )| \geq u \big) 
        \\
        & \leq 
        2 \exp\Big( - \min\Big\{    
            \frac{u^2}{ 128 \rho_n^{-1} n^{-4} A_F^2 
            d_F((U, V), (\tilde{U}, \tilde{V}))^2   }, 
            \frac{u}{16 \rho_n^{-1} n^{-2} \tilde{A}_{2, \infty}    
            d_{2, \infty}((U, V), (\tilde{U}, \tilde{V}))
            }
        \Big\}  \Big)
    \end{align}
    then as a consequence we can deduce that
    \begin{equation}
        \sup_{(U, V), (\tilde{U}, \tilde{V}) \in T \times T} \big| E_n(U, V) - 
        E_n(\tilde{U}, \tilde{V}) \big| = O_p\Big(  
            \tilde{A}_{2, \infty}^2 \Big( \frac{ d }{ n \rho_n} \Big)^{1/2}  +
            \tilde{A}_{2, \infty}^2 \frac{ d }{ n \rho_n}  \Big) 
    \end{equation}
\end{corollary}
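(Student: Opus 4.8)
The plan is to recognise the assumed sub-exponential increment bound as exactly the hypothesis of the generic chaining estimate in Theorem~\ref{app:thm:chaining}, apply that theorem to the (conditionally) centred process $(U,V)\mapsto E_n(U,V)$ indexed by $T\times T$, where $T=\mcB_F(A_F)\cap\mcB_{2,\infty}(A_{2,\infty})$, and then substitute the explicit Talagrand-functional bounds supplied by Corollary~\ref{app:thm:gamma_set_bounds}. Nothing beyond these cited results is needed; the work is essentially just matching constants.

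First I would define, on the index set $T\times T$, the two metrics
\[
d_2\big((U,V),(\tilde U,\tilde V)\big):=\sqrt{128}\,\rho_n^{-1/2}n^{-2}A_F\cdot d_F\big((U,V),(\tilde U,\tilde V)\big),\qquad
d_1\big((U,V),(\tilde U,\tilde V)\big):=16\,\rho_n^{-1}n^{-2}A_{2,\infty}\cdot d_{2,\infty}\big((U,V),(\tilde U,\tilde V)\big),
\]
which are genuine metrics, being positive scalar multiples of $d_F$ and $d_{2,\infty}$. With this choice the assumed bound reads precisely $\mathbb{P}(|E_n(U,V)-E_n(\tilde U,\tilde V)|\ge u)\le 2\exp(-\min\{u^2/d_2^2,\,u/d_1\})$, so Theorem~\ref{app:thm:chaining} applies and yields, for every $u>0$,
\[
\mathbb{P}\Big(\sup_{(U,V),(\tilde U,\tilde V)\in T\times T}\big|E_n(U,V)-E_n(\tilde U,\tilde V)\big|\ \ge\ Lu\big(\gamma_2(T\times T,d_2)+\gamma_1(T\times T,d_1)\big)\Big)\ \le\ Le^{-u}.
\]

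Next I would bound the two functionals. The scaling identity $\gamma_\alpha(T,Md)=M\gamma_\alpha(T,d)$ of Lemma~\ref{app:thm:gamma}(a) gives $\gamma_2(T\times T,d_2)=\sqrt{128}\,\rho_n^{-1/2}n^{-2}A_F\,\gamma_2(T\times T,d_F)$ and $\gamma_1(T\times T,d_1)=16\,\rho_n^{-1}n^{-2}A_{2,\infty}\,\gamma_1(T\times T,d_{2,\infty})$. Corollary~\ref{app:thm:gamma_set_bounds} bounds $\gamma_2(T\times T,d_F)\le K(2)A_{2,\infty}\,n\,d^{1/2}$ (taking $\alpha=2$, so $n^{1/2+1/\alpha}=n$) and $\gamma_1(T\times T,d_{2,\infty})\le K(1)A_{2,\infty}\,nd$, while Lemma~\ref{app:thm:set_bounds} lets me use $A_F\le\sqrt n\,A_{2,\infty}$ on $T$. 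Combining these, the powers of $n$ collapse ($n^{-2}\cdot\sqrt n\cdot n=n^{-1/2}$ in the first case, $n^{-2}\cdot n=n^{-1}$ in the second) to give
\[
\gamma_2(T\times T,d_2)=O\Big(A_{2,\infty}^2\big(\tfrac{d}{n\rho_n}\big)^{1/2}\Big),\qquad
\gamma_1(T\times T,d_1)=O\Big(A_{2,\infty}^2\,\tfrac{d}{n\rho_n}\Big).
\]
Since these right-hand sides are deterministic, given $\varepsilon>0$ one picks the constant $u$ with $Le^{-u}<\varepsilon$, so that with probability at least $1-\varepsilon$ the displayed supremum is at most $Lu$ times $\gamma_2(T\times T,d_2)+\gamma_1(T\times T,d_1)$; this is exactly the assertion $\sup_{(U,V),(\tilde U,\tilde V)\in T\times T}|E_n(U,V)-E_n(\tilde U,\tilde V)|=O_p\big(A_{2,\infty}^2(d/n\rho_n)^{1/2}+A_{2,\infty}^2 d/(n\rho_n)\big)$, which is the claim.

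I expect the only real obstacle to be the bookkeeping in the middle step: keeping track of how the scaling prefactors $\rho_n^{-1/2}n^{-2}A_F$ and $\rho_n^{-1}n^{-2}A_{2,\infty}$ combine with the $n$- and $d$-powers produced by Corollary~\ref{app:thm:gamma_set_bounds}, and in particular noticing that one must feed in the crude bound $A_F\le\sqrt n\,A_{2,\infty}$ (rather than carrying $A_F$ as an independent parameter) for the stray $\sqrt n$'s to conspire into the advertised $(d/n\rho_n)^{1/2}$ and $d/(n\rho_n)$ rates. Everything else is a verbatim appeal to Theorem~\ref{app:thm:chaining}, Lemma~\ref{app:thm:gamma}, Lemma~\ref{app:thm:set_bounds}, and Corollary~\ref{app:thm:gamma_set_bounds}.
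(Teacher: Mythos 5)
Your proposal is correct and follows exactly the paper's route: the paper's proof is simply an appeal to Theorem~\ref{app:thm:chaining} together with Corollary~\ref{app:thm:gamma_set_bounds} (plus the scaling property in Lemma~\ref{app:thm:gamma} and the bound $A_F \leq \sqrt{n}\,A_{2,\infty}$ from Lemma~\ref{app:thm:set_bounds}), which is precisely what you carry out, with the constant and exponent bookkeeping done correctly.
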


\begin{proof}
    This is a consequence of Corollary~\ref{app:thm:gamma_set_bounds}
    and Theorem~\ref{app:thm:chaining}.
\end{proof}
\subsection{Matrix Algebra}

% TODO:
% - Figure out what the theorem should look like for
%   the non-identical case, noting that the matrix on the
%   RHS is symmetric, and that we have
%   some results re: the eigenvectors + SVD already
% - With it, state the result for community detection
% - Also figure out results for other types of tasks

\begin{proposition}
    \label{thm:mat:procrustes_bound}
    Suppose that we have matrices $U, X \in \mathbb{R}^{n \times d}$
    with $n \geq d$, and 
    suppose that $X$ is a full rank matrix so $\sigma_d(XX^T) > 0$. Then
    we have that
    \begin{equation}
        \min_{Q \in O(d)} \frac{1}{n}  \| U - XQ \|_F^2
        \leq \frac{ n^{-2} \| UU^T - XX^T \|_F^2     }{ 
            \sqrt{2} (\sqrt{2} - 1) n^{-1} \sigma_d(XX^T) 
        }.
    \end{equation}
    Now instead suppose we have matrices $U, V \in \mathbb{R}^{n \times d}$
    and a matrix $M \in \mathbb{R}^{n \times d}$ of rank $d$. Let 
    $M = U_M \Sigma V_M^T$ be a SVD of $M$. Moreover
    suppose that $U^T U = V^T V$, and $\| UV^T - M \|_{\text{op}} \leq 
    \sigma_d(M) / 2$. Then we have that
    \begin{equation}
        \min_{Q \in O(d)} \frac{1}{n} \| U - U_M \Sigma^{1/2} Q \|_F^2 \leq 
        \frac{ 2 n^{-2} \| UV^T - M \|_F^2 }{(\sqrt{2} - 1) n^{-1} \sigma_d(M)}.
    \end{equation}     
\end{proposition}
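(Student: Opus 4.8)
The plan is to prove the two inequalities in turn, obtaining the second from the first applied to a ``stacked'' pair of matrices.

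For the first inequality, I would begin by reducing to the case in which the optimal rotation is the identity: writing an SVD $X^\top U = W\Sigma_0 W_0^\top$, the Procrustes minimiser is $Q^\star = WW_0^\top$, and replacing $X$ by $XQ^\star$ changes neither $XX^\top$ nor the right-hand side, so we may assume $X^\top U = U^\top X$ is symmetric. Setting $R := U - X$ and using the identity $UU^\top - XX^\top = RX^\top + XR^\top + RR^\top$, I would expand $\|UU^\top - XX^\top\|_F^2$ and use the symmetry of $X^\top U$ (which makes $X^\top R = X^\top U - X^\top X$ symmetric) to discard the unhelpful cross terms, arriving at a lower bound of the form $\sigma_d(X)^2\|R\|_F^2$ minus a term of higher order in $\|R\|_F$; a short case split according to whether $\|R\|_F$ is small or large relative to $\sigma_d(X)$ then produces exactly the constant $\sqrt2(\sqrt2-1)$. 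Dividing by $n^2$ and using $\sigma_d(X)^2 = \sigma_d(XX^\top)$ and $\|R\|_F^2 = \min_Q\|U - XQ\|_F^2$ gives the claim.

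For the second inequality, write $N := UV^\top$; Weyl's inequality together with $\|N - M\|_{\mathrm{op}} \le \tfrac12\sigma_d(M)$ forces $\sigma_d(N) \ge \tfrac12\sigma_d(M) > 0$, so $N$ has rank exactly $d$. Put $X := U_M\Sigma^{1/2}$ and $Y := V_M\Sigma^{1/2}$, so that $XY^\top = M$ and $X^\top X = Y^\top Y = \Sigma$; thus $(X,Y)$ is a balanced factorisation of $M$, just as $(U,V)$ is a balanced factorisation of $N$. I would then apply the first part of the proposition to the stacked matrices $\widetilde U := \binom{U}{V}$ and $\widetilde X := \binom{X}{Y}$ in $\mathbb{R}^{2n\times d}$: here $\widetilde X$ is full rank with $\sigma_d(\widetilde X\widetilde X^\top) = \sigma_d(\widetilde X^\top\widetilde X) = 2\sigma_d(M)$, and since $\min_Q\|\widetilde U - \widetilde X Q\|_F^2 \ge \min_Q\|U - XQ\|_F^2$, it suffices to bound
\[
\|\widetilde U\widetilde U^\top - \widetilde X\widetilde X^\top\|_F^2 = \|UU^\top - XX^\top\|_F^2 + \|VV^\top - YY^\top\|_F^2 + 2\|N-M\|_F^2
\]
by a constant multiple of $\|N-M\|_F^2$. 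For the diagonal blocks I would use that balancedness forces $UU^\top = (NN^\top)^{1/2}$ and $VV^\top = (N^\top N)^{1/2}$, with the analogous identities for $M$, so these terms measure the perturbation of a matrix square root; the hypothesis $\|N-M\|_{\mathrm{op}} \le \tfrac12\sigma_d(M)$, which keeps the nonzero spectra of $N$ and $M$ away from zero and from one another, is precisely what bounds this perturbation by $O(\|N-M\|_F^2)$. Chaining these estimates and keeping track of constants yields the stated bound.

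The main obstacle is this last square-root perturbation step: since $NN^\top$ and $MM^\top$ have large null spaces, the textbook estimate for $\|A^{1/2}-B^{1/2}\|_F$ in terms of the reciprocal of the smallest eigenvalue is vacuous, so one must work on the two (generally distinct) $d$-dimensional ranges and use the spectral-gap hypothesis to control their misalignment. It is this step, together with the case analysis in the first part, that pins down the exact constants $\sqrt2(\sqrt2-1)$ and $\sqrt2-1$; an alternative would be to invoke a Davis--Kahan--Wedin subspace bound plus Weyl's inequality directly for the SVD factorisations of $N$ and $M$, at the price of less transparent constants.
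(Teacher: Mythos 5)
Your strategy is viable, but note first how the paper itself proves this: the first inequality is quoted verbatim as Lemma~5.4 of \citet{tu_low-rank_2016}, and the second is obtained by combining Proposition~\ref{thm:matrix:uvvu} (any balanced factorization $U^TU=V^TV$ of the rank-$d$ matrix $N=UV^T$ is of the form $U_N\Sigma_N^{1/2}Q$, $V_N\Sigma_N^{1/2}Q$) with Lemma~5.14 of the same reference. Your stacking argument for the second part --- lifting to $\widetilde U=[U;V]$, $\widetilde X=[X;Y]$, using $\sigma_d(\widetilde X\widetilde X^T)=2\sigma_d(M)$, and controlling the diagonal blocks through the identities $UU^T=(NN^T)^{1/2}$, $VV^T=(N^TN)^{1/2}$ --- is essentially a reconstruction of how that cited lemma is proved, so the route itself is the right one; the paper simply does not redo it.

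The gaps are in the two estimates you defer, which are the entire mathematical content. For the first inequality, ``expand $UU^T-XX^T=RX^T+XR^T+RR^T$, discard the unhelpful cross terms, then a short case split'' is not a proof: after the Procrustes reduction the cross term $\langle RX^T+XR^T,\,RR^T\rangle$ has no sign, and extracting a lower bound $c\,\sigma_d(XX^T)\,\|R\|_F^2$ with an explicit constant is precisely the nontrivial content of Tu et al.'s Lemma~5.4; nothing in your sketch shows where $\sqrt2(\sqrt2-1)$ (or their stronger $2(\sqrt2-1)$) comes from. For the second inequality the problem is quantitative, and you flag it but do not resolve it. Chasing your own chain of bounds: applying part one to the stacked matrices with the constant as stated requires $\|\widetilde U\widetilde U^T-\widetilde X\widetilde X^T\|_F^2\le 4\sqrt2\,\|N-M\|_F^2$, so the two square-root perturbation terms must sum to at most $(4\sqrt2-2)\|N-M\|_F^2\approx 3.66\,\|N-M\|_F^2$; the natural tool for the rank-deficient square-root perturbation you worry about, the Araki--Yamagami/Powers--St{\o}rmer type bound $\|\,|A|-|B|\,\|_F\le\sqrt2\,\|A-B\|_F$, only yields $4\,\|N-M\|_F^2$. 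So as written the constants do not close: you must either establish part one with the sharper constant $2(\sqrt2-1)$ of \citet{tu_low-rank_2016} (after which the global $\sqrt2$-Lipschitz bound suffices and no spectral gap is needed in that step), or genuinely carry out the local perturbation estimate for $(NN^T)^{1/2}$ and $(N^TN)^{1/2}$ using $\|N-M\|_{\mathrm{op}}\le\sigma_d(M)/2$ --- exactly the step you label the main obstacle and leave open.
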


\begin{proof}
    The first part of the theorem statement is 
    Lemma~5.4 of \citet{tu_low-rank_2016}. For the second part,
    we note that by Proposition~\ref{thm:matrix:uvvu}, we can let
    $U = U_M \Sigma^{1/2} Q $ and $V = V_M \Sigma^{1/2} Q$ for some
    orthonormal matrix $Q$, where $\tilde{U} \tilde{\Sigma} \tilde{V}^T$ is the SVD
    of $UV^T$. As a result, we can therefore apply without loss of generality
    Lemma~5.14 of \citet{tu_low-rank_2016}, which then gives the
    desired statement.
\end{proof}

\begin{proposition}
    \label{thm:matrix:uvvu}
    Suppose that $U, V \in \mathbb{R}^{n \times d}$ are matrices such that
    $UV^T  = M$ for some rank $d$ matrix $M \in \mathbb{R}^{n \times n}$.
    Moreover suppose that $U^T U = V^T V$. Let $M = U_M \Sigma V_M^T$
    be the SVD of $M$. Then there exists an orthonormal matrix
    $Q \in O(d)$ such that $V = V_M \Sigma^{1/2} Q$. In particular,
    the symmetry group of the mapping $(U, V) \to UV^T$ under
    the constraint $U^T U = V^T V$ is exactly the orthogonal group
    $O(d)$. 
\end{proposition}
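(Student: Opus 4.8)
The plan is to match the given pair $(U,V)$ against the singular value decomposition $M=U_M\Sigma V_M^T$ by exploiting the two structural constraints. First I would record that since $\operatorname{rank}(M)=d$ and $U,V\in\mathbb{R}^{n\times d}$, both $U$ and $V$ must have full column rank $d$; moreover the column span of $U$ equals the left singular subspace of $M$ and the column span of $V$ equals the right singular subspace (because $M=UV^T$ with $U,V$ full rank forces $\operatorname{col}(U)=\operatorname{col}(M)$ and $\operatorname{col}(V)=\operatorname{col}(M^T)$, which are $\operatorname{col}(U_M)$ and $\operatorname{col}(V_M)$ respectively). Hence there exist $d\times d$ matrices $A,B$, necessarily invertible, with $U=U_M B$ and $V=V_M A$; explicitly $B=U_M^TU$ and $A=V_M^TV$, using that $U_M^TU_M=V_M^TV_M=I_d$ for the thin SVD.

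Next I would translate the two hypotheses into equations on $A$ and $B$. From $UV^T=M=U_M\Sigma V_M^T$, multiplying on the left by $U_M^T$ and on the right by $V_M$ gives $BA^T=\Sigma$. From $U^TU=V^TV$, and again using $U_M^TU_M=V_M^TV_M=I_d$, one gets $B^TB=A^TA$. Substituting $B=\Sigma A^{-T}$ from the first identity into the second, and simplifying (using that $\Sigma$ is diagonal, hence symmetric), yields $\Sigma^2=(AA^T)^2$.

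The final step is to invoke uniqueness of the positive semidefinite square root: since $\Sigma$ is positive definite diagonal and $AA^T$ is positive definite symmetric, both are \emph{the} positive definite square root of the common matrix $\Sigma^2=(AA^T)^2$, so $AA^T=\Sigma$. Setting $Q:=\Sigma^{-1/2}A$ then gives $QQ^T=\Sigma^{-1/2}(AA^T)\Sigma^{-1/2}=I_d$, so $Q\in O(d)$ and $A=\Sigma^{1/2}Q$, i.e.\ $V=V_M\Sigma^{1/2}Q$. A parallel computation gives $B=\Sigma A^{-T}=\Sigma^{1/2}Q$ with the \emph{same} $Q$, hence $U=U_M\Sigma^{1/2}Q$ as well. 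For the ``in particular'' claim: if $(U_1,V_1)$ and $(U_2,V_2)$ both satisfy the constraints with $U_1V_1^T=U_2V_2^T$, then writing $U_i=U_M\Sigma^{1/2}Q_i$ and $V_i=V_M\Sigma^{1/2}Q_i$ yields $(U_2,V_2)=(U_1Q,V_1Q)$ with $Q=Q_1^TQ_2\in O(d)$; conversely every $Q\in O(d)$ clearly preserves both the product $UV^T$ and the balance constraint $U^TU=V^TV$, so the symmetry group is exactly $O(d)$.

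I do not expect a serious obstacle: once the singular subspaces of $M$ are identified, the computation is essentially forced. The one place requiring care is the square-root step — one must check that both $\Sigma$ and $AA^T$ are genuinely positive definite (this is exactly where full rank of $M$, and hence invertibility of $A$, is used) so that uniqueness of the positive definite square root applies. Notably this route needs no assumption on the multiplicity of the singular values of $M$, since nothing is diagonalized whose eigenvectors could be ambiguous; all ambiguity is absorbed into the orthogonal factor $Q$.
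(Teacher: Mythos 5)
Your proof is correct, and it takes a genuinely different route from the paper's. The paper argues at the level of the $n\times d$ matrices themselves: it invokes the fact that $U^TU=V^TV$ forces $V=RU$ for some orthogonal $R\in O(n)$ (Theorem~7.3.11 of Horn and Johnson), rewrites $M$ accordingly, identifies the positive semidefinite factor via uniqueness of the polar decomposition to get $VV^T=(V_M\Sigma^{1/2})(V_M\Sigma^{1/2})^T$, and then applies the same Horn--Johnson theorem once more to extract $Q\in O(d)$. You instead reduce everything to $d\times d$ coordinates: identifying $\mathrm{col}(U)=\mathrm{col}(U_M)$ and $\mathrm{col}(V)=\mathrm{col}(V_M)$ (which does require, as you note, the compact SVD with $U_M^TU_M=V_M^TV_M=I_d$ and the full-column-rank observation), writing $U=U_MB$, $V=V_MA$, and solving the system $BA^T=\Sigma$, $B^TB=A^TA$ via uniqueness of the positive definite square root of $\Sigma^2=(AA^T)^2$. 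Your computation is self-contained (no appeal to the $X^TX=Y^TY$ characterization or to polar-decomposition uniqueness), and it yields slightly more than the stated conclusion: you get $U=U_M\Sigma^{1/2}Q$ and $V=V_M\Sigma^{1/2}Q$ with the \emph{same} $Q$, which makes the ``symmetry group is exactly $O(d)$'' claim an immediate corollary, whereas the paper leaves that step implicit. The one place your argument genuinely needs care --- positive definiteness of $\Sigma$ and invertibility of $A$, both coming from $\mathrm{rank}(M)=d$ --- is exactly the hypothesis where the paper also uses full rank, so nothing is lost in generality; neither argument needs distinctness of singular values.
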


\begin{proof}
    Begin by noting that the condition $U^T U = V^T V$ forces there to exist
    an orthonormal matrix $R \in O(n)$ such that $RU = V$
    (e.g by Theorem~7.3.11 of \citet{horn_matrix_2012}). As a consequence,
    we therefore have that $M = R^{-1} VV^T$. This is a polar decomposition
    of $M$, and therefore as the semi-positive definite factor is unique,
    we have that $VV^T = (V_M \Sigma^{1/2}) (V_M \Sigma^{1/2})^T$, where
    $M = U_M \Sigma V_M^T$ is the SVD of $M$, and we highlight that the
    polar decomposition of $M$ is usually represented by $M = (U_M V_M^{-1})
    \cdot (V_M \Sigma V_M^T)$. As $VV^T = (V_M \Sigma^{1/2}) (V_M \Sigma^{1/2})^T$,
    again by e.g Theorem~7.3.11 of \citet{horn_matrix_2012} 
    we have that there exists an orthonormal matrix $Q \in O(d)$ such that
    $V = V_M \Sigma^{1/2} Q$, giving the desired result. 
\end{proof}

\begin{lemma}
    \label{thm:mat:assignment_mat_svals}
    Suppose $X \in \mathbb{R}^{n \times n}$ is a symmetric matrix
    such that $X = \Pi A \Pi^T$ where $A \in \mathbb{R}^{d \times d}$ 
    is of full rank, and
    $\Pi \in \mathbb{R}^{n \times d}$ is the assignment matrix for a partition of
    $[n]$; that is, there exists a partition of $[n]$ into $d$ sets 
    $B(1), \ldots, B(d)$ such that $\Pi_{il} = 1[i \in B(l)]$. Suppose further
    that $\Pi$ is of full rank. Then we have that 
    $\sigma_d(X) \geq \sigma_d(A) \times \min_l |B(l)|$. 
\end{lemma}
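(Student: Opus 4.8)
The plan is to reduce the statement to two elementary facts: that multiplying by a matrix with orthonormal columns does not change the nonzero singular values, and that the smallest singular value is super-multiplicative under products with invertible matrices. First I would record the structure of $\Pi$: its columns are the indicator vectors of the disjoint blocks $B(1),\dots,B(d)$, so they are mutually orthogonal and the $l$-th column has squared norm $|B(l)|$. Hence $\Pi^T\Pi = D := \mathrm{diag}(|B(1)|,\dots,|B(d)|)$, which is invertible precisely because $\Pi$ has full rank (equivalently, every block is nonempty). Setting $\widetilde\Pi := \Pi D^{-1/2}$ gives $\widetilde\Pi^T\widetilde\Pi = I_d$, i.e. $\widetilde\Pi$ has orthonormal columns, and we can rewrite
\[
X = \Pi A \Pi^T = \widetilde\Pi\,\big(D^{1/2} A D^{1/2}\big)\,\widetilde\Pi^T .
\]

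Next I would argue that $\sigma_d(X) = \sigma_d(D^{1/2} A D^{1/2})$, where the left side is the $d$-th largest singular value of the $n\times n$ matrix $X$ and the right side is the smallest singular value of the $d\times d$ matrix $D^{1/2} A D^{1/2}$. Writing an SVD $D^{1/2} A D^{1/2} = U\Sigma V^T$ with $U,V\in O(d)$, the matrices $\widetilde\Pi U$ and $\widetilde\Pi V$ again have orthonormal columns, so $X = (\widetilde\Pi U)\,\Sigma\,(\widetilde\Pi V)^T$ is a thin SVD of $X$. Since $A$ is full rank and $D$ is invertible, $D^{1/2} A D^{1/2}$ is an invertible $d\times d$ matrix, so $X$ has exactly $d$ nonzero singular values, equal to those of $D^{1/2} A D^{1/2}$; matching the $d$-th one gives the claimed identity.

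Finally I would lower bound $\sigma_d(D^{1/2} A D^{1/2})$. For invertible square matrices $B,C$ one has $\sigma_{\min}(BMC)\ge \sigma_{\min}(B)\,\sigma_{\min}(M)\,\sigma_{\min}(C)$, which follows from the variational formula $\sigma_{\min}(N) = \min_{\|x\|=1}\|Nx\|$ together with the chain $\|BMCx\|\ge \sigma_{\min}(B)\|MCx\|\ge \sigma_{\min}(B)\sigma_{\min}(M)\|Cx\|\ge \sigma_{\min}(B)\sigma_{\min}(M)\sigma_{\min}(C)$. Applying this with $B = C = D^{1/2}$ and $M = A$, and using $\sigma_{\min}(D^{1/2})^2 = \min_l |B(l)|$ and $\sigma_{\min}(A) = \sigma_d(A)$, yields
\[
\sigma_d(X) = \sigma_d\big(D^{1/2} A D^{1/2}\big) \ge \big(\min_l |B(l)|\big)\,\sigma_d(A),
\]
which is the desired inequality.

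The only thing that needs care — the "main obstacle", such as it is — is the bookkeeping of indices: being explicit that $\sigma_d$ of the rank-$d$ matrix $X$ is its smallest \emph{nonzero} singular value, that this equals $\sigma_{\min}$ of the $d\times d$ block $D^{1/2}AD^{1/2}$, and that the reduction via $\widetilde\Pi U,\widetilde\Pi V$ genuinely produces a valid SVD so the singular values are preserved. Once that is pinned down, the two inequalities above are routine and the proof is complete.
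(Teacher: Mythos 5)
Your proof is correct and follows essentially the same route as the paper: normalize $\Pi$ by the block-size diagonal matrix so that it has orthonormal columns, reduce the problem to the $d\times d$ matrix $D^{1/2}AD^{1/2}$ (the paper's $\Delta A \Delta$), and conclude via super-multiplicativity of the smallest singular value, giving $\sigma_d(X)\geq \sigma_d(A)\min_l|B(l)|$. Your write-up is simply a more explicit version of the paper's argument, spelling out the thin-SVD step that justifies $\sigma_d(X)=\sigma_d(D^{1/2}AD^{1/2})$.
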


\begin{proof}
    Let $\Delta = \mathrm{diag}(|B(1)|^{1/2}, \ldots, |B(d)|^{1/2})$. Then note that
    we can write 
    \begin{equation}
        X = ( \Pi \Delta^{-1}) \cdot \Delta A \Delta \cdot (\Pi \Delta)^{-1}
    \end{equation}
    where $(\Pi \Delta^{-1})$ is an orthonormal matrix. As a result,
    we can simply concentrate on the spectrum of the matrix $\Delta A \Delta $.
    As the smallest singular value of a matrix product is less than the product
    of the smallest singular values, the stated result follows.
\end{proof}

\subsection{Concentration inequalities}

\begin{theorem}
\label{app:thm:path_concentration}
    Suppose that $H$ is a graph on a vertex set $\{ r_1, \ldots, r_l, v_1, \ldots, v_m \}$ where the vertices $r_i$ are referred to as root vertices, and the remaining vertices as free vertices. We refer to such a graph as a \emph{rooted graph}. Suppose that all the edges in $H$ have at least one free vertex as an endpoint. Write $\bold{x} = (x_1,
    \ldots, x_m)$ for the collection of $m$ variables $x_i$, and 
    let $Y$ be a statistic of the form 
    \begin{equation}
        Y = \sum_{x_1, \ldots, x_m \in [n] } g_{\bold{x}} \prod_{i \sim_H j} 
        t_{x_i, x_j}
    \end{equation}
    where the random variables $t_{x_i, x_j}$ are independent and $\{0, 1\}$ valued with $c_p \leq \mathbb{P}(t_{x_i, x_j} = 1) \leq 1 - c_p$ for all $x_i, x_j$; the coefficients $c_g \leq g_{x} \leq \| g \|_{\infty} < \infty$ for some $c_g > 0$; and $i \sim_H j$ iff $(i, j)$ is an edge within the graph $H$. Suppose that
    $\rho_n = n^{-\alpha}$ for some $\alpha < 1 / m'(H)$ where $m'(H) = \max_{2 \leq j \leq k} (j-1)/(v(j) - 2)$, $v(j) = \min_{|A| \geq j} v(A)$
    and $v(A)$ for a set of edges $A$ indicates the number of vertices in $A$.
    Then there exist constants $c, \delta, \Delta$ which depend only on $c_g$, $c_p$, $\|g\|_{\infty}$, $H$ and $\alpha$ such that
    \begin{equation}
        \mathbb{P}\big( \big| Y - \mathbb{E}[Y] \big| \geq \mathbb{E}[Y] \sqrt{ \lambda (n^2 \rho_n)^{-1} } \big) \leq \exp(-c\lambda)
    \end{equation}
    for all $\Delta \leq \lambda \leq n^{\delta}$.
\end{theorem}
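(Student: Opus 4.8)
The plan is to view $Y$ as a multilinear polynomial of bounded degree $K := |E(H)|$ in the independent Bernoulli variables $\{t_{xy}\}_{x,y\in[n]}$ — only $O(n^2)$ of which actually occur, since every edge of $H$ meets a free vertex — and to prove the tail bound by the moment method, using the combinatorial density condition $\alpha < 1/m'(H)$ to pin down which contributions dominate. An essentially equivalent route is to invoke a Kim--Vu polynomial concentration inequality, or a Freedman martingale inequality for the edge-exposure filtration; in all three it is the same estimates that do the work.

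First I would record the elementary magnitudes. With $p_{xy} := \mathbb{P}(t_{xy}=1)$ we have $\mathbb{E}[Y] = \sum_{\mathbf{x}} g_{\mathbf{x}}\prod_{i\sim_H j} p_{x_i x_j}$, which, up to constants depending only on $c_g,\|g\|_\infty,c_p$ and the shape of $H$, is comparable to the weighted homomorphism count of $H$ and fixes the scale $\mathbb{E}[Y]$ appearing in the statement. Likewise, for an edge subset $A \subseteq E(H)$, removing the corresponding factors leaves a polynomial whose expectation is $\mathbb{E}[Y]$ divided by a quantity of order $n^{a(A)}\rho_n^{|A|}$, where $a(A)$ counts the free vertices met by $A$; the way this ratio behaves as $A$ varies is exactly what the density parameter $m'(H)$ is designed to control.

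The core is to control $\mathbb{E}\big[(Y-\mathbb{E}[Y])^{2p}\big]$ for $p$ in a suitable range. Expanding the $2p$-th power and using independence of the $t_{xy}$, a term indexed by a $2p$-tuple of vertex-tuples vanishes unless every edge of the superposed graph is covered by at least two of the $2p$ copies of $H$; this ``double cover'' restriction is what tames the sum. For each admissible superposition one bounds its weight by an enumeration: $n$ to the number of free vertices it uses, times the product of the $p_{xy}$ over its distinct edges, times the combinatorial factor for the number of ways $2p$ copies can be glued onto it (which is where the $p^{\,p}$ arises). The hypothesis $\alpha < 1/m'(H)$ is precisely the statement that $v(A)-2 > \alpha(|A|-1)$ for every $A \subseteq E(H)$ with $|A|\ge 2$, and this is exactly what forces the ``minimal'' (tree-like) superpositions to dominate; carrying this out yields a bound of the shape $\mathbb{E}\big[(Y-\mathbb{E}[Y])^{2p}\big] \le \big(C p\,\mathbb{E}[Y]^2/(n^2\rho_n)\big)^{p}$, valid once $p$ is bounded above by a small power of $n$. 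Markov's inequality with this moment and $p\asymp\lambda$ then gives $\mathbb{P}\big(|Y-\mathbb{E}[Y]| \ge \mathbb{E}[Y]\sqrt{\lambda/(n^2\rho_n)}\big) \le e^{-c\lambda}$, and the admissible range of $p$ — bounded below by an absolute constant (or a fixed power of $\log n$) and above by $n^{\delta}$ — is exactly the origin of the two-sided restriction $\Delta \le \lambda \le n^{\delta}$.

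I expect the genuine obstacle to be the enumeration estimate above: showing, uniformly over all the free data hidden in the statement and over all $p$ in the stated range, that every admissible superposition of $2p$ copies of $H$ contributes no more than claimed, and that the density condition really does exclude the larger ``unbalanced'' superpositions that would otherwise spoil the moment bound. Special care is needed for edges of $H$ incident to a root vertex: such an edge pins only a single free vertex and is therefore the least stable feature of the statistic, so one must track carefully how it interacts with $m'(H)$ (and with the walk length $k$, which controls $K$ and hence the polynomial degree). Everything else — the independence-based vanishing of non-doubly-covered terms, the Stirling-type bookkeeping for the number of gluings, and the final optimisation over $p$ — is routine.
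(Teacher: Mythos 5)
Your plan is genuinely different in execution from the paper's proof: the paper does not carry out a moment computation at all, but reduces the statement to Corollary~6.4 and Claim~6.2 of Vu (2002) (polynomial concentration via the Kim--Vu divide-and-conquer martingale machinery), after computing the partial-derivative expectations $\mathbb{E}[\partial_A Y] = \Theta(n^{m-v(A)}\rho_n^{k-|A|})$ and hence the quantities $\mathbb{E}_j[Y]$. Your ``essentially equivalent route'' of invoking Kim--Vu is in fact the paper's route; your main development, a self-contained $2p$-th moment / double-cover enumeration, would be much longer but is a legitimate alternative strategy, and your reading of the density condition ($\alpha<1/m'(H)$ iff $v(A)-2>\alpha(|A|-1)$ for all $|A|\ge 2$) is correct.

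However, the central quantitative step you assert is false as stated: the bound $\mathbb{E}\bigl[(Y-\mathbb{E}[Y])^{2p}\bigr]\le \bigl(Cp\,\mathbb{E}[Y]^2/(n^2\rho_n)\bigr)^{p}$ already fails at $p=1$ whenever $H$ has an edge incident to a root vertex --- which the hypotheses allow and which is the case in every application in this paper (degrees and codegree-type path counts rooted at $u,v$). In the double-cover enumeration, two copies of $H$ overlapping in a single root-incident edge identify only one free vertex, contributing $\asymp n^{2m-1}\rho_n^{2k-1}=\mathbb{E}[Y]^2/(n\rho_n)$ to the variance; e.g.\ for $H$ a single rooted edge, $Y$ is a degree with $\operatorname{Var}(Y)\asymp n\rho_n$ while $\mathbb{E}[Y]^2/(n^2\rho_n)\asymp\rho_n$. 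The hypothesis $\alpha<1/m'(H)$ cannot rescue this, since it only constrains overlaps with $|A|\ge 2$, whereas the offending term is the single-edge overlap. What the enumeration (or Vu's inequality) actually delivers is concentration at scale $\sqrt{\lambda^{\,O(1)}\,\mathbb{E}_0[Y]\,\mathbb{E}_1[Y]}$, and this coincides with the theorem's scale $\mathbb{E}[Y]\sqrt{\lambda/(n^2\rho_n)}$ only when every edge of $H$ joins two free vertices, i.e.\ exactly the unrooted subgraph-count setting of Vu's Corollary~6.4; for rooted statistics $\mathbb{E}_1[Y]$ is larger (of order $\mathbb{E}[Y]/(n\rho_n)$ or worse), and the correct deviation scale is $\mathbb{E}[Y]\sqrt{\lambda/\Phi}$ with $\Phi=\min_{\emptyset\ne A}n^{v_f(A)}\rho_n^{|A|}$ ($v_f$ counting free vertices). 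Your own caveat about root-incident edges being ``the least stable feature'' is precisely where this bites, so the fix is not bookkeeping but a different (weaker, $A$-dependent) target scale --- a discrepancy that, to be fair, is inherited from the normalization in the statement itself and is glossed over by the paper's ``verbatim'' reduction to the unrooted case.
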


\begin{proof}
    % (Using Vu's paper on the concentration of non-Lipschitz functions https://web.williams.edu/Mathematics/sjmiller/public_html/341Fa09/handouts/Vu_ConcentrationNonLipshFnsApps.pdf).
    Without loss of generality suppose that $\| g \|_{\infty} = 1$.
    The proof is essentially the same as \citet[Corollary~6.4]{vu_conc_2002}, where we extend the result derived for the asymptotics of subgraph counts to that of a weighted count of rooted subgraph counts. To do so, we introduce 
    some notation introduced within \cite{vu_conc_2002}. If $H$ has $k$ edges, and $A$ is a set of pairs $\{x_i, x_j\}$, we write $\partial_A T$ for the polynomial $\prod_{x \in A} \partial_x T$ when interpreting $T$ as a formal sum in the variables $a_{x_i, x_j}$ (which we recall are $\{0, 1\}$ valued. We then define for $1 \leq j \leq k$ the quantities
    \begin{equation}
        \mathbb{E}_j[ Y] = \max_{|A| \geq j} \mathbb{E}[ \partial_A Y],
        M_j(Y) = \max_{t, |A| \geq j} \partial_A Y(t).
    \end{equation}
    Let $v(A)$ denote the number of vertices specified within the set $A$, and
    let $v(j) - \min_{|A| \geq j} v(A)$. With this, we note that $\mathbb{E}[Y] = \Theta(n^m \rho_n^k)$ and $\mathbb{E}[ \partial_A Y) = \Theta(n^{m - v(A)} \rho_n^{k - |A|})$. Consequently, we have that 
    \begin{equation}
        \mathbb{E}_j[Y] = \max_{h \geq j} \Theta( n^{m - v(h) } \rho_n^{k - h} ), \mathbb{E}[Y] / \mathbb{E}_j[Y] = \Theta( \min_{h \geq j} n^{v(h)} \rho_n^h)
    \end{equation}
    where the implied constants depend only on $k$, $c_g$ and $c_p$. The same arguments as given in Claim~6.2 and Corollary~6.4 in \cite{vu_conc_2002} can then be applied verbatim to give the claimed result. 
\end{proof}

\begin{lemma}
    \label{app:thm:ustat_concentration}
    Let $T$ be a statistic of the form
    \begin{equation}
        T' = \sum_{x_1 \neq x_2 \neq \cdots \neq x_m} g(\lambda_{x_1}, \ldots, \lambda_{x_m})
    \end{equation}
    where $c_g \leq g(\cdot) \leq \|g\|_{\infty} < \infty$. Then we have that 
    \begin{equation}
        \mathbb{P}\big( | T' - \mathbb{E}[T'] | \geq \epsilon \mathbb{E}[T']
        \big) \leq 2 \exp\Big( \frac{-\epsilon^2 c_g^2 \lfloor n / m \rfloor }{ 2 \|g \|_{\infty}^2} \Big).
    \end{equation}
    Consequently, if we define 
    \begin{equation}
        T_{l, k} = \sum_{x_1, x_2, \ldots, x_m} g(\lambda_{x_1}, \ldots, \lambda_{x_m}, \lambda_l, \lambda_k), \quad
        T'_{l, k} = \sum_{x_1 \neq x_2 \neq \cdots \neq x_m} g(\lambda_{x_1}, \ldots, \lambda_{x_m}, \lambda_l, \lambda_k)
    \end{equation}
    where $c_g \leq g(\cdot) \leq \|g \|_{\infty} < \infty$ as above, then we have that 
    \begin{equation}
        \max_{l, k} \Big| \frac{T_{l,k}}{\mathbb{E}[T'_{l,k} \,|\, \lambda_l, \lambda_k]}
        - 1 \Big| = O_p\Big( \Big(  \frac{ \log n }{ n } \Big)^{1/2} \Big)
    \end{equation}
    where the implied constant depends only on $m$ and $c_g$. 
\end{lemma}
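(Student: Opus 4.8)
The plan is to prove the first, U-statistic concentration bound by Hoeffding's classical device of writing a (completely degenerate) U-statistic as an average over permutations of averages of $\lfloor n/m\rfloor$ independent bounded summands, and then to bootstrap this into the uniform second statement by a union bound over the $n^2$ pairs $(l,k)$ after conditioning on $(\lambda_l,\lambda_k)$ and discarding ``diagonal'' terms.

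First I would prove the bound on $T'$. Write $(n)_m := n(n-1)\cdots(n-m+1)$ for the number of ordered $m$-tuples of distinct indices and set $W := T'/(n)_m$, so that $\mathbb{E}[W] = \mathbb{E}[g(\lambda_1,\ldots,\lambda_m)] =: \mu \in [c_g, \|g\|_\infty]$. The standard identity is $W = (n!)^{-1}\sum_{\sigma \in S_n} V(\lambda_{\sigma(1)},\ldots,\lambda_{\sigma(n)})$, where $V(y_1,\ldots,y_n) := \lfloor n/m\rfloor^{-1}\sum_{i=1}^{\lfloor n/m\rfloor} g(y_{(i-1)m+1},\ldots,y_{im})$ is an average of $\lfloor n/m\rfloor$ independent quantities valued in $[c_g,\|g\|_\infty]$. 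Since the $\lambda_u$ are i.i.d.\ (hence exchangeable), every summand $V(\lambda_{\sigma(\cdot)})$ has the same law as $V(\lambda_1,\ldots,\lambda_n)$; applying Jensen's inequality to $x\mapsto e^{\theta x}$ gives $\mathbb{E}[e^{\theta(W-\mu)}] \le \mathbb{E}[e^{\theta(V-\mu)}]$, and the right-hand side is controlled by Hoeffding's lemma because $V-\mu$ is an average of $\lfloor n/m\rfloor$ independent centered bounded variables. Optimizing over $\theta$, taking the two-sided version, and then using $\epsilon\mu \ge \epsilon c_g$ together with $\|g\|_\infty - c_g \le \|g\|_\infty$ yields $\mathbb{P}(|W-\mu|\ge \epsilon\mu) \le 2\exp(-\epsilon^2 c_g^2\lfloor n/m\rfloor/(2\|g\|_\infty^2))$, which is exactly the claimed inequality since $\{|T'-\mathbb{E}[T']|\ge \epsilon\mathbb{E}[T']\} = \{|W-\mu|\ge\epsilon\mu\}$. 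No symmetry of $g$ is needed, only exchangeability of the $\lambda_u$.

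For the second statement I would first peel off all diagonal contributions. The sum defining $T_{l,k}$ runs over all of $[n]^m$, while $T'_{l,k}$ restricts to distinct $x_i$; the difference consists of $O(n^{m-1})$ ordered tuples, each contributing at most $\|g\|_\infty$, whereas $T'_{l,k}$ has $(n)_m = \Theta(n^m)$ terms each at least $c_g>0$, so $|T_{l,k}-T'_{l,k}| \le Cn^{m-1}$ and hence $|T_{l,k}/T'_{l,k}-1| = O(1/n)$ deterministically, uniformly in $l,k$. Likewise removing from $T'_{l,k}$ the terms with some $x_i\in\{l,k\}$ changes it by $O(n^{m-1})$, again relatively negligible; so it suffices to show $\max_{l,k}|\widetilde T_{l,k}/\mathbb{E}[\widetilde T_{l,k}\mid\lambda_l,\lambda_k]-1| = O_p((\log n/n)^{1/2})$, where $\widetilde T_{l,k} := \sum_{x_1\neq\cdots\neq x_m,\ x_i\notin\{l,k\}} g(\lambda_{x_1},\ldots,\lambda_{x_m},\lambda_l,\lambda_k)$. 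Conditionally on $(\lambda_l,\lambda_k)$, the variables $(\lambda_u)_{u\notin\{l,k\}}$ are i.i.d.\ and independent of $(\lambda_l,\lambda_k)$, and $\widetilde T_{l,k}$ is precisely a degenerate U-statistic in these $n-2$ variables with the bounded kernel $g(\cdot,\lambda_l,\lambda_k)$; so the conditional form of the first bound applies with $n$ replaced by $n-2$ and $\epsilon = C(\log n/n)^{1/2}$. Choosing $C = C(m,c_g,\|g\|_\infty)$ large enough makes the conditional probability at most $2n^{-3}$, a bound free of $(\lambda_l,\lambda_k)$; taking expectations and then a union bound over the $n^2$ pairs gives $\mathbb{P}(\max_{l,k}|\cdots|\ge\epsilon)\le 2/n\to 0$, which is the claim.

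The point requiring the most care is the bookkeeping in the second part: one must confirm that $\mathbb{E}[T'_{l,k}\mid\lambda_l,\lambda_k]$ is of exact order $n^m$ uniformly in $l,k$ (so the $O(n^{m-1})$ diagonal corrections are negligible at the relative scale $1/n$, which is itself $o((\log n/n)^{1/2})$), and that after conditioning on $(\lambda_l,\lambda_k)$ the remaining variables are genuinely i.i.d.\ and independent of $(\lambda_l,\lambda_k)$, so the first bound transfers verbatim with the frozen kernel. Everything else is a direct application of Hoeffding's inequality and a union bound.
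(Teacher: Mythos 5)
Your proof is correct and follows the same overall strategy as the paper: establish the Hoeffding-type bound for the distinct-index U-statistic, then condition on $(\lambda_l,\lambda_k)$, apply that bound with $\epsilon \asymp (\log n/n)^{1/2}$, observe the tail bound is free of the conditioning variables, and union bound over the $n^2$ pairs. The two places you deviate are minor but worth noting. First, you prove the U-statistic Hoeffding inequality from scratch via the classical permutation/block-average device plus Jensen applied to the moment generating function, whereas the paper simply cites it (your argument in fact yields a slightly stronger constant, and you correctly observe no symmetry of $g$ is needed); this buys self-containedness at no cost. Second, for the reduction from $T_{l,k}$ to $T'_{l,k}$ you discard the coincidence tuples (and, usefully, the tuples with some $x_i \in \{l,k\}$, a point the paper's proof glosses over but which matters for the conditional-i.i.d.\ structure) by a crude deterministic count of $O(n^{m-1})$ terms against a denominator of order $n^m$, while the paper instead applies the concentration inequality to each of the lower-order sub-statistics; your deterministic bound is simpler and suffices since the relative error $O(1/n)$ is dominated by the target rate $(\log n/n)^{1/2}$. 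Your remark that the constant must also depend on $\|g\|_{\infty}$ (not just $m$ and $c_g$ as the lemma states) is accurate, but that is an imprecision in the statement rather than in your argument.
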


\begin{proof}
    The first part is an immediate consequence of Hoeffding's inequality for U-statistics \cite{pitcan2019note}, which states that for $U = ((n-m)! / n! )\cdot T$ that
    \begin{equation}
        \mathbb{P}\Big(  | U - \mathbb{E}[U] | \geq t \Big) 
        \leq 2 \exp\Big( \frac{ - t^2 \lfloor n/m \rfloor }{2 \| g \|_{\infty}^2} \Big),
    \end{equation}
    by substituting in $t \mapsto t \mathbb{E}[U]$ and making use of the bound $\mathbb{E}[U] \geq c_g$.
    
    For the second part, we work conditionally on $\lambda_l, \lambda_k$ and note we can decompose $T_{l, m}$ for each ${l, m}$ into a sum of statistics of the form $T'$, one of order $\Theta_p(n^m)$ and $m \choose k$ of order $\Theta_p(n^{m-k})$ (corresponding to when some of the indices $x_i$ are equal) for $1 \leq k \leq m$. By applying the first concentration inequality to these $m! \cdot n^2$ random variables, conditional on the $(\lambda_l, \lambda_k)$, we note the RHS is independent of these quantities, and so the probability bounds hold unconditionally. Consequently, we know that asymptotically $T_{l,k}$ is asymptotic to $T'_{l,k}$, from which we can then apply the resulting concentration bound for this term.
\end{proof}

\begin{theorem}
    \label{app:thm:main_conc_thm}
    Suppose we have a statistic of the form
    \begin{equation}
        T_{n, \beta, J}(\lambda_u, \lambda_v) = \rho_n^{-\beta - |J|} \sum_{\alpha \in \mcV^{\beta - 1}} 
        g( \lambda_{\tilde{\alpha}_{0}}, \ldots, 
        \lambda_{\tilde{\alpha}_{\beta-1}}, \lambda_u, \lambda_v) \prod_{i \leq \beta} a_{ \tilde{\alpha}_{i-1}, \tilde{\alpha}_i }
        \cdot \prod_{j \in J} a_{ \tilde{\alpha}_{j-1}, \tilde{\alpha}_{j+1} }
    \end{equation}
    where $\tilde{\alpha} = (\alpha, u, v)$ is a concatenation of $\alpha$, $u$ and
    $v$ in order, $g: \mathbb{R}^{\beta + 1} \to \mathbb{R}$ is a positive function
    which satisfies $c_g \leq g \leq \| g \|_{\infty} < \infty$ for some constant $c_g$, and $J$
    is a possibly empty set of indices. Define $\lambda' = (\widetilde{\lambda}_0, \ldots, 
    \widetilde{\lambda}_{\beta - 1}, \lambda_u, \lambda_v)$ where $\widetilde{\lambda}$ is an
    independent copy of $\lambda$. Further define the statistic
    \begin{equation}
        T'_{n, \beta, J}(\lambda_u, \lambda_v) :=
        \frac{ (n - \beta)! }{n!} \cdot \mathbb{E}\Big[
            g(\lambda') \prod_{i \leq \beta} W(\lambda'_{i-1}, \lambda'_{i}) \prod_{j \in J}
            W(\lambda'_{j-1}, \lambda'_{j+1} )
        \,|\, \lambda_u, \lambda_v\Big].
    \end{equation}
    Then for any $\rho_n = n^{-\alpha}$
    for $\alpha$ sufficiently small, we have that 
    \begin{equation}
        \max_{\beta, J, u, v} \Big|  \frac{ T_{n, \beta, J}(\lambda_u, \lambda_v) }{ 
        T'_{n, \beta, J}(\lambda_u, \lambda_v)}   - 1   \Big| = O_p\Big(   \Big( \frac{
            (\log n)^k } { n \cdot (n \rho_n) } \Big)^{1/2} \Big).
    \end{equation}
\end{theorem}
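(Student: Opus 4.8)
The plan is to recognize $T_{n,\beta,J}$ as a weighted rooted-subgraph count, apply the concentration inequality of Theorem~\ref{app:thm:path_concentration} to it directly, and then separately match the conditional mean $\mathbb{E}[T_{n,\beta,J}\mid\lambda_u,\lambda_v]$ with $T'_{n,\beta,J}$. Since $\beta\le k$ and $J\subseteq\{1,\dots,\beta\}$, only finitely many pairs $(\beta,J)$ occur, so it suffices to prove the bound for each fixed $(\beta,J)$ uniformly over $(u,v)$ and then union over the $O(1)$ remaining choices. Fix $(\beta,J)$ and condition on $(\lambda_u,\lambda_v)$. The product $\prod_i a_{\tilde\alpha_{i-1},\tilde\alpha_i}\prod_{j\in J}a_{\tilde\alpha_{j-1},\tilde\alpha_{j+1}}$ defines a rooted graph $H$ with roots $u,v$ and $\beta-1$ free vertices; after extracting the root–root factor $a_{uv}$ exactly as in the passage leading to \eqref{app:samp:formal_sum}, every edge of $H$ has at least one free endpoint, which is the hypothesis of Theorem~\ref{app:thm:path_concentration}. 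One verifies that $m'(H)$ stays bounded over this finite family of graphs (paths of length at most $k$ together with at most $k$ chord edges), so that choosing $\rho_n=n^{-\alpha}$ with $\alpha$ below the common threshold $1/\max_H m'(H)$ puts us in the regime covered by that theorem.

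Granting this, Theorem~\ref{app:thm:path_concentration} gives, for each fixed $(u,v)$ and $\Delta\le\lambda\le n^\delta$,
\[
\mathbb{P}\Big(\,\big|T_{n,\beta,J}(\lambda_u,\lambda_v)-\mathbb{E}[T_{n,\beta,J}\mid\lambda_u,\lambda_v]\big|\ \ge\ \mathbb{E}[T_{n,\beta,J}\mid\lambda_u,\lambda_v]\,\sqrt{\lambda/(n^2\rho_n)}\,\Big)\ \le\ \exp(-c\lambda).
\]
What remains is to identify $\mathbb{E}[T_{n,\beta,J}\mid\lambda_u,\lambda_v]$ with $T'_{n,\beta,J}$ up to a $1+O(1/n)$ factor: I would split the sum over $\alpha\in\mcV^{\beta-1}$ into the part with all of $\alpha_0,\dots,\alpha_{\beta-2},u,v$ distinct and a remainder of tuples with at least one coincidence. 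Conditional independence of the edges given all latent variables turns the edge indicators into their means $\rho_n W(\cdot,\cdot)$, cancelling the prefactor $\rho_n^{-\beta-|J|}$; since the latent variables are i.i.d., the distinct part contributes exactly the falling-factorial constant times $\mathbb{E}[g(\lambda')\prod_i W(\lambda'_{i-1},\lambda'_i)\prod_{j\in J}W(\lambda'_{j-1},\lambda'_{j+1})\mid\lambda_u,\lambda_v]$, i.e.\ precisely $T'_{n,\beta,J}$; and each coincidence removes a free vertex, so the remainder is lower order provided $\alpha$ is small (the same smallness controls the $\rho_n^{-r}$ blow-up coming from the fewer distinct edges in a contracted graph, compare the decomposition in Lemma~\ref{app:thm:ustat_concentration}). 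Because $c_g\le g\le\|g\|_\infty$ and, in all three scenarios, $W$ is bounded above and below away from zero, $T'_{n,\beta,J}$ is itself bounded above and below by positive constants uniformly in $(u,v)$ and $(\beta,J)$, so this comparison is also a relative one.

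Finally I would convert the tail bound into the stated $O_p$ rate by taking $\lambda=C(\log n)^k$ for $C$ large: then $\exp(-cC(\log n)^k)\ll n^{-2}$, so a union bound over the $O(n^2)$ choices of $(u,v)$, the $O(1)$ choices of $(\beta,J)$, and the finitely many contracted graphs appearing in the remainder still has total failure probability $o(1)$; on the complement, $|T_{n,\beta,J}/T'_{n,\beta,J}-1|\le\sqrt{C(\log n)^k/(n^2\rho_n)}+O(1/n)$, and the second term is $o$ of the first since $n\cdot((\log n)^k/(n^2\rho_n))^{1/2}=(\log n)^{k/2}\rho_n^{-1/2}\to\infty$, giving the claim with a constant depending only on $k$, $c_g$, $c_p$ and $\|g\|_\infty$. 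The main obstacle is the bookkeeping around the distinct-indices decomposition: one must check that every surviving piece — the leading term and each contracted remainder graph — has all of its edges incident to a free vertex and satisfies $\alpha<1/m'(H)$, so that Theorem~\ref{app:thm:path_concentration} (itself the adaptation of Vu's subgraph-count concentration already cited there) can be applied to it; the genuinely probabilistic content is entirely contained in that theorem.
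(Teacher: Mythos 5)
Your two-step architecture looks superficially like the paper's, but your Step A misapplies Theorem~\ref{app:thm:path_concentration}, and this is a genuine gap. That theorem is a concentration inequality for a polynomial in \emph{independent} edge indicators with \emph{deterministic} bounded coefficients $g_{\mathbf{x}}$; both hypotheses hold only after conditioning on the entire latent vector $\boldsymbol{\lambda}=(\lambda_1,\ldots,\lambda_n)$, not just on $(\lambda_u,\lambda_v)$. Given only the roots, the coefficients $g(\lambda_{\tilde{\alpha}_0},\ldots)$ are random, and two edge indicators sharing a free vertex are dependent, so the theorem does not center $T_{n,\beta,J}$ at $\mathbb{E}[T_{n,\beta,J}\mid\lambda_u,\lambda_v]$ as you assert; it centers it at the random quantity $\mathbb{E}[T_{n,\beta,J}\mid\boldsymbol{\lambda}]$. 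The distinction is quantitative, not cosmetic: $\mathbb{E}[T_{n,\beta,J}\mid\boldsymbol{\lambda}]$ is (up to coincident-index terms) a non-degenerate U-statistic in the free latent variables, so its relative fluctuation about $T'_{n,\beta,J}$ is of order $n^{-1/2}$, whereas your claimed tail bound asserts concentration of $T$ about the root-conditional mean at relative scale $\sqrt{\lambda/(n^2\rho_n)}$ with $\lambda=C(\log n)^k$, which is $o(n^{-1/2})$ whenever $\rho_n=n^{-\alpha}$ with $\alpha>0$. That intermediate inequality is therefore false at the scale you need, and your Step B — the correct observation that the distinct-index part of $\mathbb{E}[T\mid\lambda_u,\lambda_v]$ reproduces $T'$ with the coincidences contributing $O(1/n)$ — is an identity about the wrong centering, so the latent-variable noise is silently dropped from your final bound.

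The repair is the paper's two-stage argument. First condition on all of $\boldsymbol{\lambda}$ and apply Theorem~\ref{app:thm:path_concentration} with $\lambda=(\log n)^k$ together with a union bound over the $O(n^2)$ pairs $(u,v)$ and the finitely many $(\beta,J)$ — your reductions here (extracting the root--root factor $a_{uv}$ as in \eqref{app:samp:formal_sum}, checking that every remaining edge has a free endpoint, and bounding $m'(H)$ over the finite family so that $\alpha$ can be taken below the common threshold) are fine and are what the paper relies on; this yields $T_{n,\beta,J}=\mathbb{E}[T_{n,\beta,J}\mid\boldsymbol{\lambda}]\,(1+O(((\log n)^k/(n\cdot n\rho_n))^{1/2}))$ uniformly. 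Second, compare $\mathbb{E}[T_{n,\beta,J}\mid\boldsymbol{\lambda}]$, which is a sum over tuples of a kernel bounded between $c_g c_p^{\beta+|J|}$ and $\|g\|_\infty$, to $T'_{n,\beta,J}$ via Hoeffding's inequality for U-statistics combined with the coincident-index decomposition, i.e.\ Lemma~\ref{app:thm:ustat_concentration}; this is a second concentration step with its own union bound over $u,v$ and contributes an additional uniform $O_p((\log n/n)^{1/2})$ error term. Your distinct/coincident bookkeeping belongs inside that lemma (where contractions lose a factor of $n$, not a factor of $\rho_n$, since the kernel is already the $W$-averaged one), rather than serving as a deterministic $1+O(1/n)$ identification.
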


\begin{proof}
    For this, we apply the above results. We begin by working conditionally on
    all of the $\lambda$, whose collection we denote $\bold{\lambda}$, and note that by Theorem~\ref{app:thm:path_concentration} by taking $\lambda = (\log n)^k$ for some $k > 1$ and a union bound, we have that 
    \begin{equation}
        T_{n, \beta, J}(\lambda_u, \lambda_v) = \mathbb{E}[
        T_{n, \beta, J}(\lambda_u, \lambda_v) \,|\, \bold{\lambda} ] \cdot 
        (1 + E_n^{(1)}) \text{ where } E_n^{(1)} = O\Big( \Big(\frac{
            (\log n)^k } { n \cdot (n \rho_n) } \Big)^{1/2} \Big)
    \end{equation}
    uniformly over all $ O(m^2 m! \cdot n^2)$ random variables with probability 
    $1 - \exp( O( (\log n)^k ))$. As we have that
    \begin{equation}
        \mathbb{E}[
        T_{n, \beta, J}(\lambda_u, \lambda_v) \,|\, \bold{\lambda} ]
        = \sum_{\alpha \in \mcV^{\beta - 1}} 
         g( \lambda_{\tilde{\alpha}_{0}}, \ldots, 
        \lambda_{\tilde{\alpha}_{\beta-1}}, \lambda_u, \lambda_v) \prod_{i \leq \beta} W( \lambda_{\tilde{\alpha}_{i-1}}, 
        \lambda_{\tilde{\alpha}_i} )
        \cdot \prod_{j \in J} W( \lambda_{\tilde{\alpha}_{j-1}}, \lambda_{ \tilde{\alpha}_{j+1} } )
    \end{equation}
    where the function is bounded below by $c_g \cdot c_p^{\beta + |J|}$ and is bounded above by $\|g \|_{\infty}$, we can make use of Lemma~\ref{app:thm:ustat_concentration} to show that 
    \begin{equation}
        \max_{\beta, J, u, v} \Big| 
            \frac{ \mathbb{E}[
        T_{n, \beta, J}(\lambda_u, \lambda_v) \,|\, \bold{\lambda} ]   }{    
        T'_{n, \beta, J}(\lambda_u, \lambda_v)} - 1
        \Big| = O_p\Big( \Big(  \frac{ \log n }{ n } \Big)^{1/2} \Big)
    \end{equation}
    from which the claimed result follows.
\end{proof}

\begin{remark}
    One natural question to ask about the necessity of the range of values of $\rho_n$ specified above. Generally speaking, one can show for Erdos-Renyi graphs $G(n, p)$ that the number of subgraphs $Y_H$ of $H$ in $\mcG_n$ satisfy a zero-one law, where 
    \begin{equation}
        \mathbb{P}(Y_H = 0) = \begin{cases}
        1 - o(1) \text{ if } p \ll n^{-c(H)},\\
        o(1) \text{ if } p \gg n^{-c(H)}
        \end{cases}
    \end{equation}
    for some constant $c(H)$ which relates to the geometry of the graph $G$ \citep{bollobas1981threshold}. In the latter regime, one can then show that $Y_H \sim E[Y_H]$ asymptotically again, and in the former this shows that the term is asymptotically negligible. As the purpose of this result is to derive an asymptotic expansion for the sum of various statistics of the form of $T$ to the highest order, provided $\rho_n$ is of an order which avoids any of the "phase transition" stages of the form above we could eventually generalize our results further. As this involves even more additional book-keeping, we do not do so here.
\end{remark}

% \begin{lemma}
%     \label{app:thm:unigram_sampling_asymptotics}
%     Suppose that $\mathrm{P}(u \in V(\mcP) \,|\, \mcG_n) \sim_p n^{-1} g(\lambda_u)$ uniformly in $u$ for
%     some function $g$ such that $0 < c_g \leq g(\cdot) \leq \|g \|_{\infty} < \infty$. Then we have that
%     $\mathrm{Ug}_{\alpha}(u|v, \mcG_n) \sim_p g(\lambda_u)^{\alpha} / n \mathbb{E}[g^{\alpha}]$ uniformly in $u$. Consequently, $\mathbb{P}( \mathrm{Binomial}(l, \mathrm{Ug}_{\alpha}(u|v, \mcG_n)) \geq 1) \sim_p l \cdot g(\lambda_u)^{\alpha} / n \mathbb{E}[g^{\alpha}]$ uniformly in $u$.
% \end{lemma}

% \begin{proof}
%     We begin by noting that if $(a_i)_{i \geq 1}$ and $(b_i)_{i \geq 1}$ are positive sequences, then we have that
%     \begin{equation}
%         \Big| \frac{ \sum_j a_j}{\sum_j b_j} - 1 \Big| = \frac{ | \sum_j b_j (a_j / b_j - 1) |}{ \sum_j b_j}
%         \leq \max_j \Big| \frac{a_j}{b_j} - 1 \Big|.
%     \end{equation}
%     Consequently, if we have random variables $X_u \sim_p g(\lambda_u)$ uniformly in $u$, by the fact that $|x^{\alpha} - 1| = O(|x - 1|)$ for $x$ sufficiently small, $X_u^{\alpha} \sim g(\lambda_u)^{\alpha}$ uniformly and by the above result $\sum_u X_u^{\alpha} \sim_p \sum_u g(\lambda_u)^{\alpha}$. As $\sum_u g(\lambda_u)^{\alpha} \sim_p n \mathbb[E][g^{\alpha}]$ by e.g an application of Hoeffding's inequality, the first result follows. As for the second part, note that as $p \to 0$
%     \begin{equation}
%         \mathbb{P}( \mathrm{Binomial}(l, p) \geq 1 ) = 1 - (1 - p)^l = lp ( 1 + O(p) )
%     \end{equation}
%     from which the second part follows. 
% \end{proof}

\begin{lemma}
    \label{app:thm:bern_1}
    Let $I$ be a finite index set of size $|I| = m$. Suppose that 
    there exist constants $\tau > 0$, a 
    bounded non-negative sequence $(p_i)_{i \in I}$
    such that $p_i \leq \tau^{-1}$ for all $i$, 
    and a real sequence $(t_i)_{i \in I}$. Define 
    the random variable
    \begin{equation}
        X = \frac{1}{m} \sum_{i \in I} \big( \tau^{-1} a_i - p_i \big) t_i 
        \qquad \text{ where } \qquad a_i \id \mathrm{Bernoulli}( \tau p_i ) \text{ for }
        i \in I.
    \end{equation}
    Then for all $u > 0$, we have that
    \begin{equation}
        \mathbb{P}\big( |X| \geq u \big) \leq 2 \exp \Big( 
            - \min\Big\{ \frac{ u^2}{ 4 \tau^{-1} m^{-2} \| t \|_2^2  } 
        , \frac{u}{ 2 \tau^{-1} m^{-1} \| t \|_{\infty} } \Big\}  \Big).
    \end{equation}
\end{lemma}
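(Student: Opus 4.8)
The plan is to read this as a one–sided Bernstein inequality for a sum of independent, uniformly bounded random variables, plus a union bound over the two tails. Write $X=\sum_{i\in I}Y_i$ with $Y_i:=m^{-1}(\tau^{-1}a_i-p_i)t_i$; the $Y_i$ are independent because the $a_i$ are, and since $\mathbb{E}[a_i]=\tau p_i$ each $Y_i$ is mean zero, hence so is $X$. Two deterministic facts about the summands drive everything. First, because $a_i\in\{0,1\}$ and $0\le p_i\le\tau^{-1}$, the centred variable $\tau^{-1}a_i-p_i$ takes values in $\{-p_i,\ \tau^{-1}-p_i\}\subseteq[-\tau^{-1},\tau^{-1}]$, so $|Y_i|\le \tau^{-1}m^{-1}\|t\|_\infty=:M$ almost surely. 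Second, $\Var(Y_i)=m^{-2}t_i^2\,\Var(\tau^{-1}a_i)=m^{-2}t_i^2\,\tau^{-1}p_i(1-\tau p_i)$, so that $\sum_{i\in I}\Var(Y_i)\le \tau^{-1}m^{-2}\|t\|_2^2=:v$ using $p_i(1-\tau p_i)\le 1$.

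The method itself can be run in either of two ways. One can invoke a textbook Bernstein/Bennett inequality for bounded independent summands with variance parameter $v$ and range parameter $M$. Alternatively, exploiting the Bernoulli structure, one computes the moment generating function of each reweighted summand exactly, $\mathbb{E}\!\left[\exp\!\left(\theta t_i(\tau^{-1}a_i-p_i)/m\right)\right]=e^{-\theta p_i t_i/m}\bigl(1-\tau p_i+\tau p_i\,e^{\theta\tau^{-1}t_i/m}\bigr)$, bounds its logarithm via $\log(1+x)\le x$ and $e^y-1-y\le \tfrac{y^2/2}{1-|y|/3}$, sums over $i\in I$, and applies Markov's inequality to $e^{\theta X}$ after optimising over $\theta$ in the admissible range. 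Either route yields $\mathbb{P}(X\ge u)\le \exp\!\left(-\tfrac{u^2}{2(v+Mu/3)}\right)$; then bounding $\tfrac{u^2}{2(v+Mu/3)}\ge\min\{\tfrac{u^2}{4v},\ \tfrac{3u}{4M}\}$, repeating the argument for $-X$ (which has the identical parameters $v$ and $M$), and taking a union bound gives the two–sided statement with exponent $\min\{u^2/(4\tau^{-1}m^{-2}\|t\|_2^2),\ u/(2\tau^{-1}m^{-1}\|t\|_\infty)\}$, the explicit numerical constants $4$ and $2$ being exactly the slack absorbed in the two displayed inequalities.

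There is no genuine obstacle here: the lemma is the standard Bernstein bound specialised to reweighted centred Bernoullis, and the only points requiring care are (i) keeping the sharp almost–sure bound $|\tau^{-1}a_i-p_i|\le\tau^{-1}$ rather than the cruder $\tau^{-1}+p_i$, so the linear–tail coefficient comes out right, and (ii) checking that the aggregate variance scales with $\|t\|_2^2$ (not $m\|t\|_\infty^2$). This statement is the one–dimensional building block that is later promoted to its matrix/double–sum analogue (Lemma~\ref{app:thm:bern_2}) and fed into the chaining estimate of Theorem~\ref{thm:gram_converge:adjacency_average}, where only the shape of the two–term exponent — and not the precise constants — is used.
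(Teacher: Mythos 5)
Your proposal is correct and matches the paper's argument essentially verbatim: the paper also writes $X$ as a sum of independent mean-zero terms $X_i = m^{-1}(\tau^{-1}a_i - p_i)t_i$, bounds $|X_i|\le \tau^{-1}m^{-1}\|t\|_\infty$ and $\mathbb{E}[X_i^2]\le \tau^{-1}m^{-2}t_i^2$, and invokes Bernstein's inequality, with the constants $4$ and $2$ absorbing the same slack you describe. The extra MGF derivation you sketch is just a self-contained proof of the same Bernstein bound, so there is no substantive difference.
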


\begin{proof}
    This follows by an application of Bernstein's inequality, by noting that $X$
    is a sum of independent mean zero random variables 
    $X_i = m^{-1} (\tau^{-1} a_i - p_i) t_i$
    which satisfy 
    \begin{equation*}
        |X_i| \leq \tau^{-1} m^{-1} |t_i| 
        \leq  \tau^{-1} m^{-1} \| t \|_{\infty}
    \text{ for all } i, 
    \qquad \mathbb{E}[X_i^2] \leq m^{-2} \tau^{-1} t_i^2. \qedhere
    \end{equation*}
\end{proof}

\begin{lemma}
    \label{app:thm:bern_2}
    Define the random variable
    \begin{equation}
        Y = \frac{1}{n(n-1)} \sum_{i \neq j} 
        \big( \rho_n^{-1} a_{ij} - W(\lambda_i, \lambda_j)   \big) T_{ij}
    \end{equation}
    for some constants $(T_{ij})$. Write $ \| T \|_2^2 = \sum_{i \neq j} 
    T_{ij}^2$ and $\| T \|_{\infty} = \max_{i \neq j} | T_{ij} |$. Then we have that
    \begin{equation}
        \mathbb{P}\Big( |Y| \geq u \Big) \leq 
        2 \exp\Big( - \min\Big\{    
            \frac{u^2}{ 128 \rho_n^{-1} n^{-4} \| T \|_2^2}, \frac{u}{16 \rho_n^{-1} n^{-2} \| T \|_{\infty}}
        \Big\}  \Big)
    \end{equation}
    In particular, when $T_{ij} = 1$ for all $i \neq j$, we have that
    $Y = O_p( (n^2 \rho_n)^{-1/2} )$. 
\end{lemma}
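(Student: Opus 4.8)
The plan is to exploit the symmetry $a_{ij}=a_{ji}$ to write $Y$ as a normalized sum of \emph{independent} mean-zero random variables, and then to invoke the Bernstein-type bound of Lemma~\ref{app:thm:bern_1}. First I would condition on the collection of latent variables $\lambda = (\lambda_1,\dots,\lambda_n)$: under this conditioning the edge indicators $(a_{ij})_{i<j}$ are independent with $a_{ij}\sim\mathrm{Bernoulli}(\rho_n W(\lambda_i,\lambda_j))$, and $W$ is bounded above uniformly over its arguments (the degree-heterogeneity factors lie in $[C^{-1},C]$ and $P$ has bounded entries), with $W(\lambda_i,\lambda_j)\le\rho_n^{-1}$ since $\rho_n W(\lambda_i,\lambda_j)$ is a probability. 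Grouping the ordered pairs $(i,j)$ and $(j,i)$ and using $a_{ij}=a_{ji}$ together with the symmetry of $W$, one rewrites
\begin{equation*}
    Y = \frac{1}{n(n-1)} \sum_{i<j} \big( \rho_n^{-1} a_{ij} - W(\lambda_i,\lambda_j) \big)(T_{ij} + T_{ji}),
\end{equation*}
which, conditionally on $\lambda$, is a sum of independent mean-zero terms.

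Next I would apply Lemma~\ref{app:thm:bern_1} with index set $I$ the collection of unordered pairs $\{i,j\}$ (so $m:=|I|=\binom{n}{2}=n(n-1)/2$), parameter $\tau=\rho_n$, and sequences $p_{\{i,j\}}=W(\lambda_i,\lambda_j)$, $t_{\{i,j\}}=T_{ij}+T_{ji}$; the displayed identity says exactly that $Y=\tfrac12 X$ for the variable $X$ of that lemma, so $\mathbb{P}(|Y|\ge u)=\mathbb{P}(|X|\ge 2u)$. Feeding $2u$ into Lemma~\ref{app:thm:bern_1} produces a bound whose exponents are $u^2/(\rho_n^{-1}m^{-2}\|t\|_2^2)$ and $u/(\rho_n^{-1}m^{-1}\|t\|_\infty)$. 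The rest is bookkeeping on constants: from $(T_{ij}+T_{ji})^2\le 2(T_{ij}^2+T_{ji}^2)$ one gets $\|t\|_2^2\le 2\|T\|_2^2$ and $\|t\|_\infty\le 2\|T\|_\infty$, while $m\ge n^2/4$ for $n\ge 2$ gives $m^{-2}\le 16n^{-4}$ and $m^{-1}\le 4n^{-2}$; combining these yields $\rho_n^{-1}m^{-2}\|t\|_2^2\le 32\rho_n^{-1}n^{-4}\|T\|_2^2\le 128\rho_n^{-1}n^{-4}\|T\|_2^2$ and $\rho_n^{-1}m^{-1}\|t\|_\infty\le 8\rho_n^{-1}n^{-2}\|T\|_\infty\le 16\rho_n^{-1}n^{-2}\|T\|_\infty$, which is exactly the asserted tail bound. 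Since its right-hand side is free of $\lambda$, integrating over the latent variables removes the conditioning.

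For the final sentence I would specialize to $T_{ij}\equiv 1$, so $\|T\|_2^2=n(n-1)$ and $\|T\|_\infty=1$, and substitute $u=t\,(n^2\rho_n)^{-1/2}$: the quadratic exponent becomes $\ge t^2 n/(128(n-1))\ge t^2/128$ and the linear exponent becomes $t\,(n^2\rho_n)^{1/2}/16$. Because $n^2\rho_n\to\infty$ in all three scenarios, for each fixed $t$ the minimum is eventually $t^2/128$, whence $\mathbb{P}(|Y|\ge t(n^2\rho_n)^{-1/2})\le 2e^{-t^2/128}$ for $n$ large; letting $t\to\infty$ gives $Y=O_p((n^2\rho_n)^{-1/2})$. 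I expect the only genuinely delicate point to be the first step: because $a_{ij}=a_{ji}$, the summands indexed by $(i,j)$ and $(j,i)$ are perfectly correlated, so one must reorganize the sum over unordered pairs before any independence-based concentration inequality applies; everything after that is routine constant-chasing.
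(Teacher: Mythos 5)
Your proof is correct and follows essentially the same route as the paper's: symmetrize the sum over unordered pairs using $a_{ij}=a_{ji}$, apply the Bernstein bound of Lemma~\ref{app:thm:bern_1} conditionally on $\lambda$, note the bound is free of $\lambda$ and integrate, then specialize to $T_{ij}\equiv 1$. Your constant bookkeeping (and the explicit check that $W(\lambda_i,\lambda_j)\le\rho_n^{-1}$ so the lemma applies) is if anything more careful than the paper's own write-up.
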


\begin{proof}
    Note that under the assumptions on the model (where we have that $a_{ij} = a_{ji}$
    and $W(\lambda_i, \lambda_j) = W(\lambda_j, \lambda_i)$ for all $i \neq j$), we
    can write
    \begin{equation}
        Y = \frac{2}{n(n-1) / 2} \sum_{i < j} 
        \big( \rho_n^{-1} a_{ij} - W(\lambda_i, \lambda_j)   \big) (T_{ij} + T_{ji} ).
    \end{equation}
    Note that
    \begin{align}
        \sum_{i < j} (T_{ij} + T_{ji})^2 & \leq 2 \sum_{i < j} \big( T_{ij}^2 + T_{ji}^2 
        \big) \leq 2 \| T \|_2^2, \\
        \max_{i < j} | T_{ij} + T_{ji} | & \leq \max_{i < j} |T_{ij}| + \max_{i < j} |T_{ji} |
        \leq 2 \| T \|_{\infty},
    \end{align}
    where we have used the inequality $(a + b)^2 \leq 2(a^2 + b^2)$ which holds for
    all $a, b \in \mathbb{R}$. 
    Consequently, as a result of Lemma~\ref{app:thm:bern_1}, we have
    conditional on $\lambda$ that
    \begin{equation}
        \mathbb{P}\Big( |Y| \geq u \,|\, \lambda \Big) \leq 
        2 \exp\Big( - \min\Big\{    
            \frac{u^2}{ 128 \rho_n^{-1} n^{-4} \| T \|_2^2}, \frac{u}{16 \rho_n^{-1} n^{-2} \| T \|_{\infty}}
        \Big\}  \Big)
    \end{equation}
    As the right hand side has no dependence on $\lambda$, taking expectations gives
    the first part of the lemma statement. For the second part, note that if
    $T_{ij} = 1$ for all $i \neq j$, then we have that $\| T\|_2^2 \leq n^2$ and
    $\| T \|_{\infty} = 1$, and consequently 
    \begin{equation}
        \mathbb{P}\big( |Y| \geq u) \leq 2 \exp\Big( - \min\Big\{    
            \frac{u^2}{ 128 \rho_n^{-1} n^{-2}}, \frac{u}{16 \rho_n^{-1} n^{-2} }
        \Big\}  \Big)
    \end{equation}
    In particular, this implies that $Y = O_p( (n^2 \rho_n)^{-1/2})$.
\end{proof}

\subsection{Miscellaneous results}
\begin{lemma}
    \label{thm:other_results:nice_mat_lemma}
    Suppose that $A \in \mathbb{R}^{m \times m}$ is a matrix whose diagonal
    entries are $\alpha$, and off-diagonal entries are $\beta$, so
    $A_{ij} = \alpha \delta_{ij} + \beta (1 - \delta_{ij})$, 
    where $\delta_{ij}$
    is the Kronecker delta. Then $A$ has an 
    eigenvalue $\alpha + (m-1) \beta$ of multiplicity one 
    with eigenvector $1_m$, and an 
    eigenvalue $\alpha - \beta$ of multiplicity $m - 1$, whose 
    eigenvectors form an orthonormal
    basis of the subspace $\{ v \,:\, \langle v, 1_m \rangle = 0 \}$.
    For the subspace $\{ v \,:\, \langle v, 1_m \rangle = 0 \}$, we
    can take
    the eigenvectors to be
    \begin{equation*}
        v_i = \frac{1}{\sqrt{2}} (e_{m, 1} - e_{m, i+1}) \text{ for } i \in [m-1]
    \end{equation*}
    where $e_{m,i}$ are the unit column vectors in $\mathbb{R}^m$,
    The singular values of $A$ are $|\alpha - \beta|$
    and $|\alpha + (\kappa - 1)\beta|$. Consequently, we can write $A = UV^T$
    for matrices $U, V \in \mathbb{R}^{m \times m}$ with
    $UU^T = VV^T$, where the rows of $U$ satisfy
    \begin{align}
        U_{1\cdot} & = \frac{|\alpha + \beta(m-1) |^{1/2}}{ \sqrt{m}} e_{m, 1}
        + \frac{|\alpha - \beta|^{1/2}}{ \sqrt{2}} e_{m, 2} \\
        U_{i\cdot} & = \frac{|\alpha + \beta(m-1) |^{1/2}}{ \sqrt{m}} e_{m, 1}
        - \frac{|\alpha - \beta|^{1/2}}{ \sqrt{2}} e_{m, i} \text{ for } i \in 
        \{2, \ldots, m \}.
    \end{align}
    Consequently, we then have that $\| U_{i\cdot} \|_2 \leq
    \big( 2|\alpha + \beta(m-1)|/m + |\alpha - \beta|/2\big)^{1/2}$ 
    for all $i$, 
    and $\min_{i \neq j} \| U_{i\cdot} - U_{j\cdot} \|_2 = (|\alpha - \beta|)^{1/2}$.
    
    Further suppose that $\beta = -\alpha/(m-1)$. Then provided $\alpha > 0$, 
    $A$ is positive semi-definite, is of rank $m-1$, with a singular 
    non-zero eigenvalue $\alpha m/(m-1)$ of multiplicity $m-1$. Consequently one can write 
    $A = UU^T$ where $U \in \mathbb{R}^{m \times (m-1)}$
    and whose columns equal the $\sqrt{\alpha m/(m-1)} v_i$. In particular, the rows
    of $U$ equal
    \begin{equation*}
        U_{1\cdot} = \big( \frac{\alpha m}{2(m-1)} \big)^{1/2} e_{m-1, 1}^T, 
        \quad U_{i\cdot} = - \big( \frac{\alpha m}{2(m-1)} \big)^{1/2} e_{m-1,i-1}^T \text{ for } i \in [2, m].
    \end{equation*}
    Consequently, one has that $\| U_{i\cdot} \|_2 
    = \sqrt{\alpha m/(m-1)}$ for all $i$,
    and moreover we have the separability condition 
    $\min_{1 \leq i < j \leq m} \| U_{i\cdot} - U_{j\cdot} \|_2 = (\alpha m/(m-1) )^{1/2}$.
\end{lemma}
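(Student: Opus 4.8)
The key observation is that $A = (\alpha - \beta) I_m + \beta\, 1_m 1_m^T$, a rank-one perturbation of a multiple of the identity, so its whole eigenstructure is transparent. Since $1_m 1_m^T$ has eigenvalue $m$ on $\mathrm{span}(1_m)$ and eigenvalue $0$ on the $(m-1)$-dimensional space $\{v : \langle v, 1_m\rangle = 0\}$, the matrix $A$ has eigenvalue $\lambda_0 := \alpha + (m-1)\beta$ with eigenvector $1_m$, and eigenvalue $\lambda_1 := \alpha - \beta$ of multiplicity $m-1$ on $1_m^\perp$. I would then check directly that each $v_i = \tfrac{1}{\sqrt 2}(e_{m,1} - e_{m,i+1})$ is a unit vector with $\langle v_i, 1_m\rangle = 0$ and that $v_1,\dots,v_{m-1}$ are linearly independent; being $m-1$ vectors in the $(m-1)$-dimensional eigenspace $1_m^\perp$, they span it. As $A$ is symmetric, its singular values are $|\lambda_1| = |\alpha - \beta|$ (multiplicity $m-1$) and $|\lambda_0| = |\alpha + (m-1)\beta|$ (multiplicity $1$).

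To produce the factorization, fix an orthonormal eigenbasis $Q = [\, q_0 \mid q_1 \mid \cdots \mid q_{m-1}\,]$ with $q_0 = 1_m/\sqrt m$ and $q_1,\dots,q_{m-1}$ an orthonormal basis of $1_m^\perp$ (e.g.\ via Gram--Schmidt on the $v_i$), so $A = Q\Lambda Q^T$ with $\Lambda = \mathrm{diag}(\lambda_0,\lambda_1,\dots,\lambda_1)$. Put $S = \mathrm{diag}(\sgn(\lambda_0),\sgn(\lambda_1),\dots,\sgn(\lambda_1))$ and $U = Q|\Lambda|^{1/2}$, $V = Q|\Lambda|^{1/2} S$. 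Then $UV^T = Q\Lambda Q^T = A$, and because $S$ is diagonal with $\pm1$ entries we get $U^TU = V^TV = |\Lambda|$ and $UU^T = VV^T = Q|\Lambda| Q^T =: G$, so this is a balanced factorization; by Proposition~\ref{thm:matrix:uvvu} (with $d = \mathrm{rank}(A)$) it is unique up to right multiplication by an element of $O(d)$, so $G$ in fact depends only on $A$. Using that the rows of $Q$ are orthonormal, hence $\sum_k Q_{ik}^2 = 1$ and $\sum_k Q_{ik}Q_{jk} = 0$ for $i\neq j$, one computes $G_{ii} = \tfrac{1}{m}\big(|\lambda_0| + (m-1)|\lambda_1|\big)$ and $G_{ij} = \tfrac{1}{m}\big(|\lambda_0| - |\lambda_1|\big)$ for $i\neq j$. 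Consequently $\|U_{i\cdot}\|_2^2 = G_{ii} \le |\lambda_0| + |\lambda_1|$ uniformly in $i$ (and in the number of rows), which is exactly what the $A_{2,\infty}$ bound in the main text requires; and since $\langle U_{i\cdot}, U_{j\cdot}\rangle = G_{ij}$, we obtain $\|U_{i\cdot} - U_{j\cdot}\|_2^2 = 2(G_{ii} - G_{ij}) = 2|\lambda_1| = 2|\alpha - \beta|$ for every $i\neq j$, giving the separation statement. The explicit row formulas then follow by substituting the concrete choice of $q_i$.

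For the degenerate case $\beta = -\alpha/(m-1)$, one has $\lambda_0 = \alpha + (m-1)\beta = 0$, so $A = (\alpha - \beta)\big(I_m - \tfrac1m 1_m 1_m^T\big) = \tfrac{\alpha m}{m-1}\,P$ with $P$ the orthogonal projection onto $1_m^\perp$; thus $A$ has rank exactly $m-1$ with single nonzero eigenvalue $\lambda_1 = \alpha m/(m-1)$ of multiplicity $m-1$, and when $\alpha > 0$ it is positive semi-definite. Taking $U = V \in \mathbb{R}^{m\times(m-1)}$ with columns $\sqrt{\alpha m/(m-1)}\, q_1,\dots,\sqrt{\alpha m/(m-1)}\,q_{m-1}$ yields $A = UU^T$ directly, and reading entries off $UU^T = A$ gives $\|U_{i\cdot}\|_2^2 = A_{ii} = \alpha$ and $\|U_{i\cdot} - U_{j\cdot}\|_2^2 = 2(\alpha - \beta) = 2\alpha m/(m-1)$ for $i\neq j$; the explicit rows again follow from the concrete $q_i$.

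Everything here is elementary spectral linear algebra, so I do not expect a genuine conceptual obstacle. The two points that need attention are purely a matter of bookkeeping: (i) carrying the moduli and the sign matrix $S$ through correctly, since when $\lambda_0$ or $\lambda_1$ is negative the Gram matrix $G = Q|\Lambda|Q^T$ differs from $A$ and the factor $S$ in $V$ cannot be dropped; and (ii) picking the particular orthonormal basis of $1_m^\perp$ (a Helmert-type basis built from the $v_i$) that reproduces the displayed row formulas. The substantive output for the downstream community-detection argument is that a balanced factorization pins $UU^T$ down to $Q|\Lambda|Q^T$, which makes $\max_i \|U_{i\cdot}\|_2$ and $\min_{i\neq j}\|U_{i\cdot}-U_{j\cdot}\|_2$ explicit, dimension-free functions of the spectrum of $A$.
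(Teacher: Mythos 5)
Your argument follows essentially the same route as the paper's own proof: identify the spectrum via $A=(\alpha-\beta)I_m+\beta 1_m1_m^T$ (the paper uses the characteristic polynomial, which is equivalent), and then take the balanced factorization $U=Q|\Lambda|^{1/2}$, $V=Q\,\mathrm{sgn}(\Lambda)|\Lambda|^{1/2}$ built from an orthonormal eigenbasis, exactly as in the paper's (very terse) proof. Your execution is correct, and in fact more careful than the paper's: you note that the $v_i$ are unit but not mutually orthogonal and orthonormalize them, and you use Proposition~\ref{thm:matrix:uvvu} to pin down the balanced Gram $UU^T=Q|\Lambda|Q^T$.

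The one point you must address explicitly is that your (correct) computation does not reproduce the constants displayed in the lemma, so as written your proposal proves a slightly different statement. You obtain $\|U_{i\cdot}\|_2^2=\frac{1}{m}\bigl(|\alpha+(m-1)\beta|+(m-1)|\alpha-\beta|\bigr)$ and $\|U_{i\cdot}-U_{j\cdot}\|_2^2=2|\alpha-\beta|$ for every $i\neq j$, and in the constrained case $\beta=-\alpha/(m-1)$ you get $\|U_{i\cdot}\|_2^2=A_{ii}=\alpha$ and $\|U_{i\cdot}-U_{j\cdot}\|_2^2=2\alpha m/(m-1)$, whereas the lemma displays $\min_{i\neq j}\|U_{i\cdot}-U_{j\cdot}\|_2=|\alpha-\beta|^{1/2}$, the bound $\bigl(2|\alpha+\beta(m-1)|/m+|\alpha-\beta|/2\bigr)^{1/2}$, and $\|U_{i\cdot}\|_2=(\alpha m/(m-1))^{1/2}$, $(\alpha m/(m-1))^{1/2}$ respectively. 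These gaps are not a flaw in your argument: by the uniqueness you invoke, any factorization with $U^TU=V^TV$ has Gram $Q|\Lambda|Q^T$, and when $A$ is positive semi-definite even the weaker condition $UU^T=VV^T$ together with $UV^T=A$ forces $UU^T=A$, so the row norms must be $\sqrt{A_{ii}}=\sqrt{\alpha}$ and the separations $\sqrt{2(\alpha-\beta)}$. Hence the lemma's explicit row formulas cannot arise from a balanced factorization once $m\geq 3$ (they are fine for $m=2$), and your values are the ones actually forced by the construction; the discrepancies are only fixed constant factors ($\sqrt{2}$ and $m/(m-1)$), so nothing downstream changes, since the community-detection results only use that the separation is of order $|\alpha-\beta|^{1/2}$ (respectively $(\alpha m/(m-1))^{1/2}$) and that the row norms are dimension-free. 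State this explicitly — i.e., prove the corrected row formulas and constants rather than the displayed ones — and your proof is complete.
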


\begin{proof}
    It is straightforward to verify that $A$ has an eigenvalue
    of $\alpha + (n-1) \beta$ with the claimed eigenvector. For the second
    part, we note that the characteristic polynomial of $A$ is
    \begin{equation*}
        \mathrm{det}(A - tI) = (\alpha - \beta - t)^{n-1} \cdot (\alpha + (n-1)\beta - t) 
    \end{equation*}
    and so $A$ has $m-1$ eigenvalues equal to $\alpha - \beta$; as $A$
    is symmetric, we know that we can always take eigenvectors to be
    orthogonal to each other, and consequently the eigenspace associated
    with such an eigenvalue must be a subspace of 
    $\{ v \,:\, \langle v, 1_m \rangle = 0 \}$. As both of these subspaces 
    are of dimension $m-1$, it
    consequently follows that they are equal. We then
    highlight that if $A$ is a symmetric matrix with eigendecomposition
    $A = Q \Lambda Q^T$ for an orthogonal matrix $Q$, then
    the SVD is given by $Q | \Lambda| \mathrm{sgn}(\Lambda) Q^T$,
    and we can write $A = UV^T$ with $U = Q |\Lambda|^{1/2}$
    and $V = Q \mathrm{sgn}(\Lambda) |\Lambda|^{1/2}$ 
    such that $UU^T = VV^T$. 
    This allows us to derive the remaining statements about the matrix $A$ 
    which hold in generality. The remaining parts discussing what occurs
    when $\beta = -\alpha/(m-1)$ follow by routine calculation. 
\end{proof}

\begin{lemma}
    \label{thm:other_results:poly_solve}
    Let $\sigma(x) = (1+\exp(-x))^{-1}$ be the sigmoid function. Then there exists a unique $y \in \mathbb{R}$ which solves the equation
    \begin{equation}
        \alpha \sigma(y) = \beta + \gamma \sigma(-y/s)
    \end{equation}
    for $\alpha, \gamma, s > 0$ and $\beta \in \mathbb{R}$ if and only if $\beta < \alpha$ and $\beta + \gamma > 0$. Moreover, $y > 0$ 
    if and only if $\beta + \gamma/2 > \alpha/2$.
\end{lemma}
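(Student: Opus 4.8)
The plan is to move everything to one side and reduce the claim to elementary calculus on a single real function. Define $f(y) := \alpha\,\sigma(y) - \gamma\,\sigma(-y/s)$, so that the equation $\alpha\sigma(y) = \beta + \gamma\sigma(-y/s)$ is equivalent to $f(y) = \beta$. First I would observe that $f$ is continuously differentiable with
\[
f'(y) = \alpha\,\sigma'(y) + \frac{\gamma}{s}\,\sigma'(-y/s),
\]
where $\sigma'(x) = \sigma(x)(1-\sigma(x)) > 0$ for every $x \in \mathbb{R}$. Since $\alpha, \gamma, s > 0$, this shows $f'(y) > 0$ for all $y$, so $f$ is strictly increasing and in particular injective; this already gives that the equation has \emph{at most} one solution, settling the uniqueness half of the statement.

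Next I would compute the boundary behaviour: as $y \to +\infty$ we have $\sigma(y) \to 1$ and $\sigma(-y/s) \to 0$, so $f(y) \to \alpha$, while as $y \to -\infty$ we have $\sigma(y) \to 0$ and $\sigma(-y/s) \to 1$, so $f(y) \to -\gamma$. Combining this with continuity and strict monotonicity, $f$ is a strictly increasing bijection from $\mathbb{R}$ onto the open interval $(-\gamma, \alpha)$. Consequently $f(y) = \beta$ admits a (necessarily unique) solution if and only if $\beta \in (-\gamma,\alpha)$, which is exactly the pair of conditions $\beta < \alpha$ and $\beta + \gamma > 0$.

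For the sign of the solution, I would evaluate $f$ at the origin: $\sigma(0) = 1/2$ gives $f(0) = (\alpha - \gamma)/2$. Since $f$ is strictly increasing and $y$ is characterised by $f(y) = \beta$, we have $y > 0$ if and only if $\beta = f(y) > f(0) = (\alpha-\gamma)/2$, i.e.\ if and only if $\beta + \gamma/2 > \alpha/2$, as claimed. (The same reasoning incidentally shows $y = 0$ iff $\beta = (\alpha-\gamma)/2$, and $y < 0$ otherwise.)

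There is no substantive obstacle here; the argument is entirely elementary. The only points that require a little care are checking that the scaling by $s > 0$ does not spoil the positivity of $f'$ (hence the monotonicity of $f$), and correctly tracking the strict versus non-strict inequalities so that the range comes out as the \emph{open} interval $(-\gamma,\alpha)$ — equivalently, that the limiting values $\alpha$ and $-\gamma$ are never attained on $\mathbb{R}$ — which is precisely what makes the stated conditions simultaneously necessary and sufficient.
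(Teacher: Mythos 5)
Your proof is correct and takes essentially the same route as the paper: the paper argues that the left-hand side $\alpha\sigma(y)$ is strictly increasing with range $(0,\alpha)$ while the right-hand side $\beta+\gamma\sigma(-y/s)$ is strictly decreasing with range $(\beta,\beta+\gamma)$, so their graphs cross exactly once under the stated conditions, which is just your strictly increasing $f(y)=\alpha\sigma(y)-\gamma\sigma(-y/s)$ with range $(-\gamma,\alpha)$ written as two separate curves. Your explicit derivative computation and the evaluation $f(0)=(\alpha-\gamma)/2$ merely make the monotonicity and the sign criterion slightly more formal than the paper's ``geometric considerations.''
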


\begin{proof}
    Note that $\alpha \sigma(x)$ is a function whose range is $(0, \alpha)$ 
    on $x \in (-\infty, \infty)$, and is strictly monotone increasing on 
    the domain. Similarly, $\beta + \gamma \sigma(-y/s)$ is strictly 
    monotone decreasing with range $(\beta, \beta + \gamma)$, and so 
    simple geometric considerations of the graphs of the two functions
    gives the existence result. For the second part, note that the 
    ranges of the functions on the LHS and the RHS on the 
    range $y > 0$ are $[\alpha/2, \alpha)$ and
    $(\beta, \beta + \gamma/2]$ respectively, and so the same considerations
    as above give the second claim.
\end{proof}

\begin{lemma}
    \label{thm:eg:other:curvature}
    Let $\sigma(x) = (e^x)/(1+e^x)$ be the sigmoid function. Then for 
    any $x, y \in \mathbb{R}$, we have that
    \begin{equation}
        -\log(1-\sigma(x)) \geq - \log(1- \sigma(y)) + \sigma(y)(x - y) 
        + E(x - y)
    \end{equation}
    where
    \begin{equation}
        E(z) = \begin{cases}
            \frac{1}{2} e^{-A} z^2 & \text{ if } |x|, |y| \leq A, \\
            \frac{1}{4} e^{-A } \min\{ z^2, 2|z| \} & \text{ if either }
            |x| \leq A \text{ or } |y| \leq A.
        \end{cases}            
    \end{equation}
\end{lemma}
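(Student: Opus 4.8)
The plan is to reduce the statement to an elementary curvature estimate for the softplus function $g(x):=-\log(1-\sigma(x))$. Since $1-\sigma(x)=(1+e^x)^{-1}$ we have $g(x)=\log(1+e^x)$, with $g'(x)=\sigma(x)$ and $g''(x)=\sigma(x)(1-\sigma(x))$. Thus the right-hand side of the asserted inequality is exactly the linearization $g(y)+g'(y)(x-y)$ of $g$ at the base point $y$, and the claim is a lower bound on the Taylor remainder
\[
R(x,y):=g(x)-g(y)-g'(y)(x-y)=(x-y)^2\int_0^1(1-t)\,g''\!\big(y+t(x-y)\big)\,dt\ \ge\ 0,
\]
by the integral form of Taylor's theorem. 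So everything reduces to bounding $g''$ from below along the segment from $y$ to $x$.

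The single analytic ingredient is the pointwise bound
\[
g''(s)=\frac{e^s}{(1+e^s)^2}=\frac{1}{4\cosh^2(s/2)}\ \ge\ \frac14 e^{-|s|}\quad\text{for all }s\in\mathbb{R}
\]
(using $\cosh u\le e^{|u|}$), together with the observation that $g''$ is even and strictly decreasing in $|s|$, so $g''(s)\ge g''(A)\ge\frac14 e^{-A}$ whenever $|s|\le A$.

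First I would treat the case $|x|,|y|\le A$. Then every point $y+t(x-y)$ with $t\in[0,1]$ is a convex combination of $x$ and $y$, hence lies in $[-A,A]$, so $g''\ge\frac14 e^{-A}$ on the whole segment; plugging this into the integral representation and evaluating $\int_0^1(1-t)\,dt$ produces the quadratic lower bound, i.e.\ the first case of $E$. For the second case I would assume $|y|\le A$ (the point at which $g$ is linearized — this is the regime in which the lemma is invoked) and keep only the part of the segment near $y$: for $t\in[0,\tau]$ with $\tau:=\min\{1,1/|x-y|\}$ one has $|y+t(x-y)|\le|y|+t|x-y|\le A+1$, hence $g''\ge\frac14 e^{-(A+1)}$ there. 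Discarding the rest of the (nonnegative) integrand and splitting according to whether $|x-y|\le 1$ — in which case $\tau=1$ and $\int_0^1(1-t)\,dt$ contributes an $(x-y)^2$ term — or $|x-y|>1$ — in which case $\tau=1/|x-y|$ and $\int_0^{1/|x-y|}(1-t)\,dt\ge\frac{1}{2|x-y|}$ contributes a term linear in $|x-y|$ — a short explicit integration yields a bound of the form $c\,e^{-A}\min\{(x-y)^2,|x-y|\}$, which is the second case of $E$. The appearance of the minimum with a linear term is forced and not an artifact of the proof: because $g''(s)$ decays like $e^{-|s|}$, over a long segment the best curvature-type lower bound on $R$ can only be linear, not quadratic, in the displacement.

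I do not expect any serious obstacle: the whole statement is a one-variable calculus estimate. The only things requiring care are the bookkeeping of absolute constants through the two integrations, the $A\mapsto A+1$ enlargement used to control $g''$ on the slightly extended interval, and being explicit that the second case of $E$ should be read with $y$ — the base point of the linearization — as the bounded variable. I would write the two displayed case bounds out and verify each by the direct computation sketched above.
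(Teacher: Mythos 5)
Your proposal follows essentially the same route as the paper's proof: rewrite $-\log(1-\sigma(x))=\log(1+e^x)$, expand at the base point $y$ via the integral form of Taylor's theorem, and lower-bound the second derivative by $\sigma'(s)\ge \tfrac14 e^{-|s|}$; the only real difference is in the second case, where you truncate the remainder integral to $t\in[0,\min\{1,1/|x-y|\}]$ and use a uniform curvature bound there, whereas the paper evaluates $\int_0^1(1-t)e^{-t|x-y|}\,dt$ exactly, yielding $e^{-|y|}\bigl(|x-y|+e^{-|x-y|}-1\bigr)$ and then comparing this with $\tfrac14\min\{z^2,2|z|\}$. Two caveats on constants: your truncation loses a factor (roughly $e^{-1}$ and an extra $\tfrac12$), so it proves the second case only with a smaller absolute constant than the stated $\tfrac14 e^{-A}$, and in the first case both your computation and the paper's own display really give $\tfrac18 e^{-A}z^2$ rather than the stated $\tfrac12 e^{-A}z^2$ (harmless for all downstream uses, where only ``some constant times $e^{-A}$'' matters). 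Your decision to read the second case with the base point $y$ as the bounded variable is the right call: that is exactly how the lemma is invoked later (the bounded matrix is the point of linearization), and the ``only $|x|\le A$'' branch cannot hold as literally stated (e.g.\ $x=0$, $y\to\infty$ makes the Bregman remainder tend to $\log 2$ while the claimed bound grows linearly), a point the paper's WLOG glosses over.
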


\begin{proof}
    Note that by the integral version of Taylor's theorem,
    for a twice differentiable function $f$ one has for all $x, y \in \mathbb{R}$ that 
    \begin{align}
        f(x) = f(y) + f'(y)(x-y) + \int_0^1 (1-t) f''(tx + (1-t)y) (x - y)^2 \, dt.
    \end{align}
    Applying this to $f(x) = - \log \sigma(x)$ gives 
    \begin{equation}
        -\log\sigma(x) = -\log\sigma(y) + (-1 + \sigma(y))(x-y) + \int_0^1
        (1 - t) (x- y)^2 \sigma'(tx + (1-t)y) \, dt
    \end{equation}
    where $\sigma'(x) = e^{x}/(1+e^x)^2$. Applying 
    this to $f(x) = \log(1 - \sigma(x))$ gives
    \begin{equation}
        -\log(1-\sigma(x)) = - \log(1- \sigma(y)) + \sigma(y)(x - y) 
        + \int_0^1
        (1 - t) (x- y)^2 \sigma'(tx + (1-t)y) \, dt
    \end{equation}
    As the integral terms are the same, we concentrate on lower bounding
    this quantity. To do so, we make use of the lower bound 
    $\sigma'(x) \geq e^{-|x|}/4$ (Lemma 68 of \citet{davison_asymptotics_2023})
    which holds for all $x \in \mathbb{R}$. We then note that if
    $|x|, |y| \leq A$, then we have that
    \begin{align}
        -\log(1-\sigma(x)) & = - \log(1- \sigma(y)) + \sigma(y)(x - y) 
        + \int_0^1
        (1 - t) (x- y)^2 \sigma'(tx + (1-t)y) \, dt \\
        & \geq - \log(1- \sigma(y)) + \sigma(y)(x - y) + \frac{e^{-|A|}}{2} (x -y)^2.
    \end{align}
    Alternatively, if we only make use of the fact that $|x| \leq A$ (without loss of
    generality - the argument is essentially equivalent if we only
    assume that $|y| \leq A$), then we
    have that
    \begin{align}
        \int_0^1 (1 - t) \sigma'(tx + (1-t) y) (x - y)^2 \, dt 
        & \geq \int_0^1 (1-t) e^{-|tx + (1- t)y | } (x - y)^2 \, dt \\
        & \geq \int_0^1  (1- t) e^{-|x|} e^{-(1-t)|x-y| } ( x - y)^2 \, dt \\
        & = e^{-|x|} \big\{  |x - y| + e^{-|x-y| } - 1 \big\} \\
        & \geq \frac{1}{4} e^{-A } \min\{ (x - y)^2, 2|x- y| \},
    \end{align}
    and consequently we get that
    \begin{equation}
        -\log(1-\sigma(x)) \geq - \log(1- \sigma(y)) + \sigma(y)(x - y) 
        +  \frac{1}{4} e^{-A } \min\{ |x - y|^2, 2|x- y| \}
    \end{equation}
    as claimed.
\end{proof}

\begin{lemma}
    \label{thm:eg:other:curvature_lower_bound}
    Suppose that we have a function
    \begin{equation}
        f(X) = \frac{1}{m^2} \sum_{i, j = 1}^m 
        \min\{ X_{ij}^2, 2|X_{ij}| \}.
    \end{equation}
    Then if $f(X) \leq r$, we have that $m^{-2} \sum_{i, j = 1}^m |X_{ij}|
    \leq r + r^{1/2}$.
\end{lemma}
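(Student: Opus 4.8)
The plan is to partition the index set $[m]\times[m]$ according to which branch of the minimum is active, and to bound the two resulting sums separately. Concretely, write $a_{ij} = |X_{ij}|$ and set $S = \{(i,j) : a_{ij} \le 2\}$ and $L = \{(i,j) : a_{ij} > 2\}$. On $S$ one has $a_{ij}^2 \le 2a_{ij}$, so $\min\{a_{ij}^2, 2a_{ij}\} = a_{ij}^2$, whereas on $L$ one has $\min\{a_{ij}^2, 2a_{ij}\} = 2a_{ij}$. The hypothesis $f(X) \le r$ then splits into the two bounds $\frac{1}{m^2}\sum_{(i,j)\in S} a_{ij}^2 \le r$ and $\frac{1}{m^2}\sum_{(i,j)\in L} 2a_{ij} \le r$.

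For the large-entry part, the second bound immediately gives $\frac{1}{m^2}\sum_{(i,j)\in L} a_{ij} \le r/2 \le r$. For the small-entry part, I would apply the Cauchy--Schwarz inequality over the (at most $m^2$) indices in $S$: $\sum_{(i,j)\in S} a_{ij} \le |S|^{1/2}\big(\sum_{(i,j)\in S} a_{ij}^2\big)^{1/2} \le (m^2)^{1/2}(m^2 r)^{1/2} = m^2 r^{1/2}$, that is, $\frac{1}{m^2}\sum_{(i,j)\in S} a_{ij} \le r^{1/2}$. Adding the two contributions yields $\frac{1}{m^2}\sum_{i,j} a_{ij} \le r^{1/2} + r/2 \le r^{1/2} + r$, which is the claimed inequality (in fact with the slightly sharper constant $1/2$ on the linear term).

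No step here is a genuine obstacle; the only points to get right are choosing the threshold at $a_{ij}=2$ so that the two regimes of the minimum separate cleanly, and using Cauchy--Schwarz (rather than a cruder bound) on $S$ so that one obtains $r^{1/2}$ and not something worse. The same argument works verbatim with any finite index set of cardinality $N$ in place of $m^2$, which is the form in which the lemma is actually invoked.
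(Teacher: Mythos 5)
Your argument is correct and is essentially the paper's proof: the paper also splits at the threshold $|X_{ij}| = 2$ and bounds the small-entry contribution by $r^{1/2}$ via Jensen's inequality applied to the empirical measure on the $|X_{ij}|$, which is exactly your Cauchy--Schwarz step, and the large-entry contribution by $r$. Your version even keeps the factor $1/2$ on the linear term, a negligible sharpening.
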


\begin{proof}
    To proceed, note that if we have that
    \begin{equation}
        \mathbb{E}[ \min\{X^2, 2X \} ] \leq r
    \end{equation}
    for a non-negative random variable $X$, then by Jensen's inequality
    we get that 
    \begin{equation}
        \big( \mathbb{E}[X 1[X < 2 ]] \big)^2 + \mathbb{E}[X 1[X \geq 2]]
        \leq \mathbb{E}[ \min\{X^2, 2X \} ] \leq r
    \end{equation}
    and consequently $\mathbb{E}[X] \leq r + r^{1/2}$ by decomposing $\mathbb{E}[X]$
    into the parts where $X \geq 2$ and $X < 2$. Applying this result
    to the empirical measure on the $|X_{ij}|$ across indices $i, j \in [m]$
    gives the desired result.
\end{proof}

\section{Minimizers for degree corrected SBMs when $\alpha \neq 1$}
\label{app:sec:ugly_stuff}

In this section, we give an informal discussion of how to study
the minimizers of $\mcR_n(M)$ for degree corrected SBMs
when the unigram parameter $\alpha \neq 1$. We begin by highlighting
that $\mcR_n(M)$ does not concentrate around its expectation 
when averaging over only the degree heterogenity parameters $\theta_i$, 
which rules out using a similar proof approach as to what was carried out
earlier in Appendix~1. 

Recall that we were able to derive that the global minima of
$\mcR_n(M)$ was the matrix 
\begin{align}
    M_{ij}^* = \log\Big( \frac{ 2 \mcE_W(\alpha) }{ 
        (1 + k^{-1}) \mathbb{E}[\theta] \mathbb{E}[\theta]^{\alpha} } \cdot 
        \frac{  P_{c(i), c(j)} }{ 
        \widetilde{P}_{c(i)} \widetilde{P}_{c(j)}   
        \cdot \big(   
            \theta_i^{\alpha - 1} \widetilde{P}_{c(i)}^{\alpha - 1}
        + \theta_j^{\alpha - 1} \widetilde{P}_{c(j)}^{\alpha - 1}
        \big)}   \Big).
\end{align}
When $\alpha = 1$ or the $\theta_i$ are constant, this allows us 
to write $M^* = \Pi M \Pi^T$ where $\Pi$ is the matrix 
of community assignments for the network and $M$ is some matrix, which
allows us to simplify the problem. If we supposed that the $\theta$
actually had some dependence on the $c(i)$ and were discrete - in that
$\theta_i | c(i) = l \sim Q_l$ for some discrete distributions $Q_l$
for $l \in [\kappa]$, then we could in fact employ the same type of argument
as done throughout the paper. The major change is that then the
embedding vectors would each concentrate around a vector decided
by both a) their community assignment, and b) the particular degree
correction parameter they were assigned. This would then potentially effect
our ability to perform community detection depending on the underlying
geometry of these vectors. One possible idea would be to explore
$\mcR_n(M)$ partially averaged over the $\theta_i$ - we divide the $\theta_i$
into $B$ bins where $B = n^{\beta}$ for some $\beta \in (0, 1)$, and average over only over the refinement of the $\theta_i$
as belonging to the different bins. This would be similar to the
argument employed in \citet{davison_asymptotics_2023}.

An alternative perspective to give some type of guarantee on the 
concentration of the embedding vectors is to study the rank
of the matrix $M^*$. If we are able to prove that is of finite
rank $r$ even as $n$ grows large, then we are able
to give a convergence result for the embeddings as soon as the
embedding dimension $d$ is greater than or equal to $r$. To study this, it suffices
to look at the matrix
\begin{equation}
    (M_E^*)_{ij} = \log  \big(   
            \theta_i^{\alpha - 1} \widetilde{P}_{c(i)}^{\alpha - 1}
        + \theta_j^{\alpha - 1} \widetilde{P}_{c(j)}^{\alpha - 1}
        \big)
\end{equation}
and argue that this is low rank (due to the logarithm, we can write
$M^*$ as the difference between this matrix and a matrix of rank $\kappa$, 
which is therefore also low rank). The entry-wise logarithm is a complicating
factor here, as otherwise it is straightforward to argue that the entry-wise
exponential of this matrix is of rank 2. One can reduce studying the rank
of the matrix $M_E^*$ to studying the rank of the kernel
\begin{equation}
    K_M\big( (x, c_x), (y, c_y) \big) = \log\big( x^{\alpha - 1} \tilde{P}^{\alpha - 1}_{c_x} + y^{\alpha - 1} \tilde{P}^{\alpha - 1}_{c_y} \big) 
\end{equation}
of an operator $L^2(P) \to L^2(P)$, where $P$ is the product measure 
induced
by $\theta$ and the community assignment mechanism $c$. As $K_M$ is of
finite rank $r$ if and only if it can be written as
\begin{equation}
    K_M\big( (x, c_x), (y, c_y) \big) = 
    \sum_{i=1}^r \phi_i(x, c_x) \psi_i(y, c_y)
\end{equation}
for some functions $\phi_i, \psi_i$, it follows that the matrix
$(M_E^*)_{ij}$ will be of finite rank $r$ also. Indeed, this representation
forces that $M_E^* = \Phi \Psi^T$ for some matrices $\Phi, \Psi \in \mathbb{R}^{n \times r}$, meaning that $M_E^*$ is of rank $\leq r$; Corollary 5.5 of \citet{koltchinskii_random_2000} then guarantees convergence
of the eigenvalues of the matrix $M_E^*$ to the operator $K_M$ so that
$M_E^*$ is actually of full rank.

\clearpage
\subsubsection*{Supplement References}

\printbibliography[heading=none]

@article{gao-dcsbm,
author = {Chao Gao and Zongming Ma and Anderson Y. Zhang and Harrison H. Zhou},
title = {{Community detection in degree-corrected block models}},
volume = {46},
journal = {The Annals of Statistics},
number = {5},
publisher = {Institute of Mathematical Statistics},
pages = {2153 -- 2185},
keywords = {clustering, Minimax rates, network analysis, spectral clustering, Stochastic block model},
year = {2018},
doi = {10.1214/17-AOS1615},
URL = {https://doi.org/10.1214/17-AOS1615}
}

@article{jian2014giantcomponent,
    author = {Ding, Jian and Lubetzky, Eyal and Peres, Yuval},
    title = {Anatomy of the giant component: The strictly supercritical regime},
    year = {2014},
    issue_date = {January, 2014},
    publisher = {Academic Press Ltd.},
    address = {GBR},
    volume = {35},
    issn = {0195-6698},
    url = {https://doi.org/10.1016/j.ejc.2013.06.004},
    doi = {10.1016/j.ejc.2013.06.004},
    abstract = {In a recent work of the authors and Kim, we derived a complete description of the largest component of the Erdos-Renyi random graph G(n,p) as it emerges from the critical window, i.e. for p=(1+@e)/n where @e^3n->~ and @e=o(1), in terms of a tractable contiguous model. Here we provide the analogous description for the supercritical giant component, i.e. the largest component of G(n,p) for p=@l/n where @l>1 is fixed. The contiguous model is roughly as follows. Take a random degree sequence and sample a random multigraph with these degrees to arrive at the kernel; replace the edges by paths whose lengths are i.i.d. geometric variables to arrive at the 2-core; attach i.i.d. Poisson-Galton-Watson trees to the vertices for the final giant component. As in the case of the emerging giant, we obtain this result via a sequence of contiguity arguments at the heart of which are Kim's Poisson-cloning method and the Pittel-Wormald local limit theorems.},
    journal = {Eur. J. Comb.},
    month = jan,
    pages = {155–168},
    numpages = {14}
}

@article{zhang2023fundamental,
  title={Fundamental limits of spectral clustering in stochastic block models},
  author={Zhang, Anderson Ye},
  journal={arXiv preprint arXiv:2301.09289},
  year={2023}
}

@article{danon2005comparing,
  title={Comparing community structure identification},
  author={Danon, Leon and Diaz-Guilera, Albert and Duch, Jordi and Arenas, Alex},
  journal={Journal of statistical mechanics: Theory and experiment},
  volume={2005},
  number={09},
  pages={P09008},
  year={2005},
  publisher={IOP Publishing}
}

@misc{snapnets,
  author       = {Jure Leskovec and Andrej Krevl},
  title        = {{SNAP Datasets}: {Stanford} Large Network Dataset Collection},
  howpublished = {\url{http://snap.stanford.edu/data}},
  month        = jun,
  year         = 2014
}

@inproceedings{polblogs,
author = {Adamic, Lada A. and Glance, Natalie},
title = {The political blogosphere and the 2004 U.S. election: divided they blog},
year = {2005},
isbn = {1595932151},
publisher = {Association for Computing Machinery},
address = {New York, NY, USA},
url = {https://doi.org/10.1145/1134271.1134277},
doi = {10.1145/1134271.1134277},
booktitle = {Proceedings of the 3rd International Workshop on Link Discovery},
pages = {36–43},
numpages = {8},
keywords = {link analysis, political blogs, social networks},
location = {Chicago, Illinois},
series = {LinkKDD '05}
}

@article{robbins_stochastic_1951,
	title = {A {Stochastic} {Approximation} {Method}},
	volume = {22},
	issn = {0003-4851, 2168-8990},
	url = {https://projecteuclid.org/journals/annals-of-mathematical-statistics/volume-22/issue-3/A-Stochastic-Approximation-Method/10.1214/aoms/1177729586.full},
	doi = {10.1214/aoms/1177729586},
	abstract = {Let \$M(x)\$ denote the expected value at level \$x\$ of the response to a certain experiment. \$M(x)\$ is assumed to be a monotone function of \$x\$ but is unknown to the experimenter, and it is desired to find the solution \$x = {\textbackslash}theta\$ of the equation \$M(x) = {\textbackslash}alpha\$, where \${\textbackslash}alpha\$ is a given constant. We give a method for making successive experiments at levels \$x\_1,x\_2,{\textbackslash}cdots\$ in such a way that \$x\_n\$ will tend to \${\textbackslash}theta\$ in probability.},
	number = {3},
	urldate = {2022-03-15},
	journal = {The Annals of Mathematical Statistics},
	author = {Robbins, Herbert and Monro, Sutton},
	month = sep,
	year = {1951},
	note = {Number: 3
Publisher: Institute of Mathematical Statistics},
	pages = {400--407},
}

@article{holland_stochastic_1983,
  title={Stochastic blockmodels: First steps},
  author={Holland, Paul W and Laskey, Kathryn Blackmond and Leinhardt, Samuel},
  journal={Social networks},
  volume={5},
  number={2},
  pages={109--137},
  year={1983},
  publisher={Elsevier}
}

@article{abbe_community_2017,
  title={Community detection and stochastic block models: recent developments},
  author={Abbe, Emmanuel},
  journal={The Journal of Machine Learning Research},
  volume={18},
  number={1},
  pages={6446--6531},
  year={2017},
  publisher={JMLR. org}
}

@article{deng_strong_2021,
  title={Strong consistency, graph laplacians, and the stochastic block model},
  author={Deng, Shaofeng and Ling, Shuyang and Strohmer, Thomas},
  journal={The Journal of Machine Learning Research},
  volume={22},
  number={1},
  pages={5210--5253},
  year={2021},
  publisher={JMLRORG}
}

@article{ma_determining_2021,
  title={Determining the number of communities in degree-corrected stochastic block models},
  author={Ma, Shujie and Su, Liangjun and Zhang, Yichong},
  journal={The Journal of Machine Learning Research},
  volume={22},
  number={1},
  pages={3217--3279},
  year={2021},
  publisher={JMLRORG}
}

@article{davison_asymptotics_2023,
  title={Asymptotics of Network Embeddings Learned via Subsampling},
  author={Davison, Andrew and Austern, Morgane},
  journal={Journal of Machine Learning Research},
  volume={24},
  number={138},
  pages={1--120},
  year={2023}
}

@article{davison_asymptotics_2022,
  title={Asymptotics of $\ell\_2 $ Regularized Network Embeddings},
  author={Davison, Andrew},
  journal={Advances in Neural Information Processing Systems},
  volume={35},
  pages={24960--24974},
  year={2022}
}

@article{cui_survey_2018,
  title={A survey on network embedding},
  author={Cui, Peng and Wang, Xiao and Pei, Jian and Zhu, Wenwu},
  journal={IEEE transactions on knowledge and data engineering},
  volume={31},
  number={5},
  pages={833--852},
  year={2018},
  publisher={IEEE}
}

@ARTICLE{zhang_consistency_2024,
  author={Zhang, Yichi and Tang, Minh},
  journal={IEEE Transactions on Pattern Analysis and Machine Intelligence}, 
  title={A Theoretical Analysis of DeepWalk and Node2vec for Exact Recovery of Community Structures in Stochastic Blockmodels}, 
  year={2024},
  volume={46},
  number={2},
  pages={1065-1078},
  keywords={Stochastic processes;Symmetric matrices;Prediction algorithms;Matrix decomposition;Laplace equations;Inference algorithms;Task analysis;Stochastic blockmodel;network embedding;perfect community recovery;node2vec;DeepWalk;matrix factorization},
  doi={10.1109/TPAMI.2023.3327631}}

@article{vu_rank_random_graph_2008,
author = {Costello, Kevin P. and Vu, Van H.},
title = {The rank of random graphs},
journal = {Random Structures \& Algorithms},
volume = {33},
number = {3},
pages = {269-285},
keywords = {random matrix, random graph, Littlewood-Offord},
doi = {https://doi.org/10.1002/rsa.20219},
url = {https://onlinelibrary.wiley.com/doi/abs/10.1002/rsa.20219},
eprint = {https://onlinelibrary.wiley.com/doi/pdf/10.1002/rsa.20219},
abstract = {Abstract We show that almost surely the rank of the adjacency matrix of the Erdős-Rényi random graph G(n,p) equals the number of nonisolated vertices for any c ln n/n ≤ p ≤ 1/2, where c is an arbitrary positive constant larger than 1/2. In particular, the adjacency matrix of the giant component (a.s.) has full rank in this range. © 2008 Wiley Periodicals, Inc. Random Struct. Alg., 2008},
year = {2008}
}

@inproceedings{bollobas1981threshold,
  title={Threshold functions for small subgraphs},
  author={Bollob{\'a}s, B{\'e}la},
  booktitle={Mathematical Proceedings of the Cambridge Philosophical Society},
  volume={90},
  number={2},
  pages={197--206},
  year={1981},
  organization={Cambridge University Press}
}

@article{airoldi_mixed_2008,
  title={Mixed membership stochastic blockmodels},
  author={Airoldi, Edoardo M and Blei, David M. and Fienberg, Stephen E. and Xing, Eric P.},
  journal={Advances in neural information processing systems},
  volume={21},
  year={2008}
}

@misc{pitcan2019note,
      title={A Note on Concentration Inequalities for U-Statistics}, 
      author={Yannik Pitcan},
      year={2019},
      eprint={1712.06160},
      archivePrefix={arXiv},
      primaryClass={math.ST}
}

@article{vu_conc_2002,
author = {Vu, V. H.},
title = {Concentration of non-Lipschitz functions and applications},
journal = {Random Structures \& Algorithms},
volume = {20},
number = {3},
pages = {262-316},
doi = {https://doi.org/10.1002/rsa.10032},
url = {https://onlinelibrary.wiley.com/doi/abs/10.1002/rsa.10032},
eprint = {https://onlinelibrary.wiley.com/doi/pdf/10.1002/rsa.10032},
abstract = {Abstract Strong concentration results play a fundamental role in probabilistic combinatorics and theoretical computer science. In this paper, we present several new concentration results developed recently by the author and collaborators. To illustrate the power of these new results, we discuss applications in many different areas of mathematics, from combinatorial number theory to the theory of random graphs. © 2002 Wiley Periodicals, Inc. Random Struct. Alg., 20:262–316, 2002},
year = {2002}
}

@article{lei_consistency_2015,
author = {Jing Lei and Alessandro Rinaldo},
title = {{Consistency of spectral clustering in stochastic block models}},
volume = {43},
journal = {The Annals of Statistics},
number = {1},
publisher = {Institute of Mathematical Statistics},
pages = {215 -- 237},
year = {2015},
doi = {10.1214/14-AOS1274},
URL = {https://doi.org/10.1214/14-AOS1274}
}

@article{karrer_stochastic_2011,
  title={Stochastic blockmodels and community structure in networks},
  author={Karrer, Brian and Newman, Mark EJ},
  journal={Physical review E},
  volume={83},
  number={1},
  pages={016107},
  year={2011},
  publisher={APS}
}

@article{freeman_development_2004,
  title={The development of social network analysis},
  author={Freeman, Linton},
  journal={A Study in the Sociology of Science},
  volume={1},
  number={687},
  pages={159--167},
  year={2004}
}

@article{koltchinskii_random_2000,
	title = {Random {Matrix} {Approximation} of {Spectra} of {Integral} {Operators}},
	volume = {6},
	issn = {1350-7265},
	url = {http://www.jstor.org/stable/3318636},
	doi = {10.2307/3318636},
	abstract = {Let \$H{\textbackslash}colon L\_\{2\}(S,{\textbackslash}scr\{I\},P){\textbackslash}mapsto L\_\{2\}(S,{\textbackslash}scr\{I\},P)\$ be a compact integral operator with a symmetric kernel h. Let Xi, i∈ N, be independent S-valued random variables with common probability law P. Consider the n × n matrix H̃n with entries n-1h(Xi,Xj), 1 ≤ i, j ≤ n (this is the matrix of an empirical version of the operator H with P replaced by the empirical measure Pn), and let Hn denote the modification of H̃n, obtained by deleting its diagonal. It is proved that the l2 distance between the ordered spectrum of Hn and the ordered spectrum of H tends to zero a.s. if and only if H is Hilbert-Schmidt. Rates of convergence and distributional limit theorems for the difference between the ordered spectra of the operators Hn (or H̃n) and H are also obtained under somewhat stronger conditions. These results apply in particular to the kernels of certain functions H = φ(L) of partial differential operators L (heat kernels, Green functions).},
	number = {1},
	urldate = {2021-07-21},
	journal = {Bernoulli},
	author = {Koltchinskii, Vladimir and Giné, Evarist},
	year = {2000},
	note = {Number: 1
Publisher: International Statistical Institute (ISI) and Bernoulli Society for Mathematical Statistics and Probability},
	pages = {113--167},
}

@article{luo_modular_2007,
  title={Modular organization of protein interaction networks},
  author={Luo, Feng and Yang, Yunfeng and Chen, Chin-Fu and Chang, Roger and Zhou, Jizhong and Scheuermann, Richard H},
  journal={Bioinformatics},
  volume={23},
  number={2},
  pages={207--214},
  year={2007},
  publisher={Oxford University Press}
}

@article{hoff_latent_2002,
  title={Latent space approaches to social network analysis},
  author={Hoff, Peter D and Raftery, Adrian E and Handcock, Mark S},
  journal={Journal of the american Statistical association},
  volume={97},
  number={460},
  pages={1090--1098},
  year={2002},
  publisher={Taylor \& Francis}
}

@inproceedings{grover_node2vec_2016,
  title={node2vec: Scalable feature learning for networks},
  author={Grover, Aditya and Leskovec, Jure},
  booktitle={Proceedings of the 22nd ACM SIGKDD international conference on Knowledge discovery and data mining},
  pages={855--864},
  year={2016}
}

@article{zhang_graph_2019,
  title={Graph convolutional networks: a comprehensive review},
  author={Zhang, Si and Tong, Hanghang and Xu, Jiejun and Maciejewski, Ross},
  journal={Computational Social Networks},
  volume={6},
  number={1},
  pages={1--23},
  year={2019},
  publisher={SpringerOpen}
}

@article{velivckovic_graph_2017,
  title={Graph attention networks},
  author={Veli{\v{c}}kovi{\'c}, Petar and Cucurull, Guillem and Casanova, Arantxa and Romero, Adriana and Lio, Pietro and Bengio, Yoshua},
  journal={arXiv preprint arXiv:1710.10903},
  year={2017}
}

@article{ward_next_2021,
  title={Next waves in veridical network embedding},
  author={Ward, Owen G and Huang, Zhen and Davison, Andrew and Zheng, Tian},
  journal={Statistical Analysis and Data Mining: The ASA Data Science Journal},
  volume={14},
  number={1},
  pages={5--17},
  year={2021},
  publisher={Wiley Online Library}
}

@article{ng_spectral_2001,
  title={On spectral clustering: Analysis and an algorithm},
  author={Ng, Andrew and Jordan, Michael and Weiss, Yair},
  journal={Advances in neural information processing systems},
  volume={14},
  year={2001}
}

@inproceedings{perozzi_deepwalk_2014,
  title={Deepwalk: Online learning of social representations},
  author={Perozzi, Bryan and Al-Rfou, Rami and Skiena, Steven},
  booktitle={Proceedings of the 20th ACM SIGKDD international conference on Knowledge discovery and data mining},
  pages={701--710},
  year={2014}
}

@article{hamilton_inductive_2017,
  title={Inductive representation learning on large graphs},
  author={Hamilton, Will and Ying, Zhitao and Leskovec, Jure},
  journal={Advances in neural information processing systems},
  volume={30},
  year={2017}
}

@article{rubin-delanchy_statistical_2017,
  title={A statistical interpretation of spectral embedding: the generalised random dot product graph. arXiv e-prints},
  author={Rubin-Delanchy, P and Priebe, CE and Tang, M and Cape, J},
  journal={arXiv preprint arXiv:1709.05506},
  year={2017}
}

@article{lei_network_2021,
	author = {Jing Lei},
	journal = {The Annals of Statistics},
	number = {2},
	pages = {745 -- 768},
	title = {{Network representation using graph root distributions}},
	volume = {49},
	year = {2021}}

@article{levin_limit_2021,
  title={Limit theorems for out-of-sample extensions of the adjacency and Laplacian spectral embeddings},
  author={Levin, Keith D and Roosta, Fred and Tang, Minh and Mahoney, Michael W and Priebe, Carey E},
  journal={The Journal of Machine Learning Research},
  volume={22},
  number={1},
  pages={8707--8765},
  year={2021},
  publisher={JMLRORG}
}

@inproceedings{qiu_network_2018,
  title={Network embedding as matrix factorization: Unifying deepwalk, line, pte, and node2vec},
  author={Qiu, Jiezhong and Dong, Yuxiao and Ma, Hao and Li, Jian and Wang, Kuansan and Tang, Jie},
  booktitle={Proceedings of the eleventh ACM international conference on web search and data mining},
  pages={459--467},
  year={2018}
}

@article{latouche_blog_2011,
author = {Pierre Latouche and Etienne Birmel{\'e} and Christophe Ambroise},
title = {{Overlapping stochastic block models with application to the French political blogosphere}},
volume = {5},
journal = {The Annals of Applied Statistics},
number = {1},
publisher = {Institute of Mathematical Statistics},
pages = {309 -- 336},
keywords = {blockmodels, global and local variational techniques, overlapping clusters, Random graph models},
year = {2011},
doi = {10.1214/10-AOAS382},
URL = {https://doi.org/10.1214/10-AOAS382}
}

@article{legramanti_extended_2022,
  title={Extended stochastic block models with application to criminal networks},
  author={Legramanti, Sirio and Rigon, Tommaso and Durante, Daniele and Dunson, David B},
  journal={The Annals of Applied Statistics},
  volume={16},
  number={4},
  pages={2369},
  year={2022}
}

@inproceedings{airoldi_mixed_2006,
  title={Mixed membership stochastic block models for relational data with application to protein-protein interactions},
  author={Airoldi, Edoardo M and Blei, David M. and Fienberg, Stephen E. and Xing, Eric P. and Jaakkola, Tommi},
  booktitle={Proceedings of the international biometrics society annual meeting},
  volume={15},
  pages={1},
  year={2006}
}

@article{hartigan_kmeans_1979,
  title={Algorithm AS 136: A k-means clustering algorithm},
  author={Hartigan, John A and Wong, Manchek A},
  journal={Journal of the royal statistical society. series c (applied statistics)},
  volume={28},
  number={1},
  pages={100--108},
  year={1979},
  publisher={JSTOR}
}

@article{mikolov_distributed_2013,
  title={Distributed representations of words and phrases and their compositionality},
  author={Mikolov, Tomas and Sutskever, Ilya and Chen, Kai and Corrado, Greg S and Dean, Jeff},
  journal={Advances in neural information processing systems},
  volume={26},
  year={2013}
}

@article{aloise_np-hardness_2009,
title = {{NP}-hardness of {Euclidean} sum-of-squares clustering},
volume = {75},
issn = {1573-0565},
url = {https://doi.org/10.1007/s10994-009-5103-0},
doi = {10.1007/s10994-009-5103-0},
abstract = {A recent proof of NP-hardness of Euclidean sum-of-squares clustering, due to Drineas et al. (Mach. Learn. 56:9–33, 2004), is not valid. An alternate short proof is provided.},
language = {en},
number = {2},
urldate = {2023-10-16},
journal = {Machine Learning},
author = {Aloise, Daniel and Deshpande, Amit and Hansen, Pierre and Popat, Preyas},
month = may,
year = {2009},
keywords = {Clustering, Complexity, Sum-of-squares},
pages = {245--248},
file = {Full Text PDF:/Users/andrewdavison/Zotero/storage/7QLZUQZF/Aloise et al. - 2009 - NP-hardness of Euclidean sum-of-squares clustering.pdf:application/pdf},
}

@inproceedings{kumar_linear_2005,
address = {Berlin, Heidelberg},
series = {Lecture {Notes} in {Computer} {Science}},
title = {Linear {Time} {Algorithms} for {Clustering} {Problems} in {Any} {Dimensions}},
isbn = {978-3-540-31691-6},
doi = {10.1007/11523468_111},
abstract = {We generalize the k-means algorithm presented by the authors [14] and show that the resulting algorithm can solve a larger class of clustering problems that satisfy certain properties (existence of a random sampling procedure and tightness). We prove these properties for the k-median and the discrete k-means clustering problems, resulting in O(2(k/ε)O(1)dn) time (1+ε)-approximation algorithms for these problems. These are the first algorithms for these problems linear in the size of the input (nd for n points in d dimensions), independent of dimensions in the exponent, assuming k and ε to be fixed. A key ingredient of the k-median result is a (1+ε)-approximation algorithm for the 1-median problem which has running time O(2(1/ε)O(1)d). The previous best known algorithm for this problem had linear running time.},
language = {en},
booktitle = {Automata, {Languages} and {Programming}},
publisher = {Springer},
author = {Kumar, Amit and Sabharwal, Yogish and Sen, Sandeep},
editor = {Caires, Luís and Italiano, Giuseppe F. and Monteiro, Luís and Palamidessi, Catuscia and Yung, Moti},
year = {2005},
keywords = {Approximation Algorithm, Cluster Problem, Constant Probability, Linear Time Algorithm, Random Sampling Procedure},
pages = {1374--1385},
file = {Submitted Version:/Users/andrewdavison/Zotero/storage/3T7UWX3P/Kumar et al. - 2005 - Linear Time Algorithms for Clustering Problems in .pdf:application/pdf},
}

@inproceedings{veitch_empirical_2019,
  title={Empirical risk minimization and stochastic gradient descent for relational data},
  author={Veitch, Victor and Austern, Morgane and Zhou, Wenda and Blei, David M and Orbanz, Peter},
  booktitle={The 22nd International Conference on Artificial Intelligence and Statistics},
  pages={1733--1742},
  year={2019},
  organization={PMLR}
}

@article{zhu_global_2021,
  title={The global optimization geometry of low-rank matrix optimization},
  author={Zhu, Zhihui and Li, Qiuwei and Tang, Gongguo and Wakin, Michael B},
  journal={IEEE Transactions on Information Theory},
  volume={67},
  number={2},
  pages={1308--1331},
  year={2021},
  publisher={IEEE}
}

@article{ma_beyond_2021,
  title={Beyond Procrustes: Balancing-free gradient descent for asymmetric low-rank matrix sensing},
  author={Ma, Cong and Li, Yuanxin and Chi, Yuejie},
  journal={IEEE Transactions on Signal Processing},
  volume={69},
  pages={867--877},
  year={2021},
  publisher={IEEE}
}

@article{jin_optimal_2022,
  title={Optimal estimation of the number of network communities},
  author={Jin, Jiashun and Ke, Zheng Tracy and Luo, Shengming and Wang, Minzhe},
  journal={Journal of the American Statistical Association},
  pages={1--16},
  year={2022},
  publisher={Taylor \& Francis}
}

@article{le_estimating_2022,
  title={Estimating the number of communities by spectral methods},
  author={Le, Can M and Levina, Elizaveta},
  journal={Electronic Journal of Statistics},
  volume={16},
  number={1},
  pages={3315--3342},
  year={2022},
  publisher={The Institute of Mathematical Statistics and the Bernoulli Society}
}

@book{talagrand_upper_2014,
	address = {Berlin Heidelberg},
	series = {Ergebnisse der {Mathematik} und ihrer {Grenzgebiete}. 3. {Folge} / {A} {Series} of {Modern} {Surveys} in {Mathematics}},
	title = {Upper and {Lower} {Bounds} for {Stochastic} {Processes}: {Modern} {Methods} and {Classical} {Problems}},
	isbn = {978-3-642-54074-5},
	shorttitle = {Upper and {Lower} {Bounds} for {Stochastic} {Processes}},
	url = {https://www.springer.com/gp/book/9783642540745},
	abstract = {The book develops modern methods and in particular the "generic chaining" to bound stochastic processes. This methods allows in particular to get optimal bounds for Gaussian and Bernoulli processes. Applications are given to stable processes, infinitely divisible processes, matching theorems, the convergence of random Fourier series, of orthogonal series, and to functional analysis. The complete solution of a number of classical problems is given in complete detail, and an ambitious program for future research is laid out.},
	language = {en},
	urldate = {2020-05-07},
	publisher = {Springer-Verlag},
	author = {Talagrand, Michel},
	year = {2014},
	doi = {10.1007/978-3-642-54075-2},
}

@inproceedings{tu_low-rank_2016,
	title = {Low-rank {Solutions} of {Linear} {Matrix} {Equations} via {Procrustes} {Flow}},
	url = {https://proceedings.mlr.press/v48/tu16.html},
	abstract = {In this paper we study the problem of recovering a low-rank matrix from linear measurements. Our algorithm, which we call Procrustes Flow, starts from an initial estimate obtained by a thresholding scheme followed by gradient descent on a non-convex objective. We show that as long as the measurements obey a standard restricted isometry property, our algorithm converges to the unknown matrix at a geometric rate. In the case of Gaussian measurements, such convergence occurs for a n1 {\textbackslash}times n2 matrix of rank r when the number of measurements exceeds a constant times (n1 + n2)r.},
	language = {en},
	urldate = {2022-05-03},
	booktitle = {Proceedings of {The} 33rd {International} {Conference} on {Machine} {Learning}},
	publisher = {PMLR},
	author = {Tu, Stephen and Boczar, Ross and Simchowitz, Max and Soltanolkotabi, Mahdi and Recht, Ben},
	month = jun,
	year = {2016},
	note = {ISSN: 1938-7228},
	pages = {964--973},
	file = {Full Text PDF:/Users/andrewdavison/Library/CloudStorage/Dropbox/zotero/files/1148/Tu et al. - 2016 - Low-rank Solutions of Linear Matrix Equations via .pdf:application/pdf},
}

@book{horn_matrix_2012,
	address = {New York, NY, USA},
	edition = {2nd},
	title = {Matrix {Analysis}},
	isbn = {978-0-521-54823-6},
	publisher = {Cambridge University Press},
	author = {Horn, Roger A. and Johnson, Charles R.},
	year = {2012},
}

@article{velickovic_deep_2018,
	title = {Deep {Graph} {Infomax}},
	url = {http://arxiv.org/abs/1809.10341},
	abstract = {We present Deep Graph Infomax (DGI), a general approach for learning node representations within graph-structured data in an unsupervised manner. DGI relies on maximizing mutual information between patch representations and corresponding high-level summaries of graphs—both derived using established graph convolutional network architectures. The learnt patch representations summarize subgraphs centered around nodes of interest, and can thus be reused for downstream node-wise learning tasks. In contrast to most prior approaches to unsupervised learning with GCNs, DGI does not rely on random walk objectives, and is readily applicable to both transductive and inductive learning setups. We demonstrate competitive performance on a variety of node classiﬁcation benchmarks, which at times even exceeds the performance of supervised learning.},
	language = {en},
	urldate = {2019-11-24},
	journal = {arXiv:1809.10341 [cs, math, stat]},
	author = {Veličković, Petar and Fedus, William and Hamilton, William L. and Liò, Pietro and Bengio, Yoshua and Hjelm, R. Devon},
	month = dec,
	year = {2018},
	note = {arXiv: 1809.10341},
	keywords = {Computer Science - Machine Learning, Statistics - Machine Learning, Computer Science - Social and Information Networks, Computer Science - Information Theory},
	annote = {Comment: To appear at ICLR 2019. 17 pages, 8 figures},
	file = {Veličković et al. - 2018 - Deep Graph Infomax.pdf:C\:\\Users\\owenw\\Zotero\\storage\\A6YKPLEQ\\Veličković et al. - 2018 - Deep Graph Infomax.pdf:application/pdf}
}

\end{document}